\newcommand{\norm}[1]{\left\| #1 \right\|}
\newcommand{\inp}[2]{\left\langle#1,#2\right\rangle} 
\newcommand{\cC}{\mathcal{C}}
\newcommand{\cO}{\mathcal{O}}
\newcommand{\del}[1]{}
\newcommand{\R}{\mathbb{R}} 
\newcommand{\eqdef}{:=} 
\newcommand{\Prob}{\mathbf{Prob}} 
\newcommand{\Exp}[1]{{\rm E}\left[#1\right]}
\newcommand{\ExpSub}[2]{{\rm E}_{#1}\left[#2\right]}
\definecolor{mydarkgreen}{RGB}{39,130,67}
\definecolor{mydarkred}{RGB}{192,47,25}
\newcommand{\green}{\color{mydarkgreen}}
\newcommand{\red}{\color{mydarkred}}
\newcommand{\cmark}{\green\ding{51}}%
\newcommand{\xmark}{\red\ding{55}}%
\newcommand{\algname}[1]{{\green\small \sf #1}}
\newcommand{\algnamesmall}[1]{{\green\scriptsize \sf #1}}
\newtheorem{assumption}{Assumption}
\newtheorem{lemma}{Lemma}
\newtheorem{theorem}{Theorem}
\theoremstyle{plain}
\theoremstyle{definition}
\newtheorem{definition}[theorem]{Definition}
\newcommand{\alglinelabel}{%
  \addtocounter{ALC@line}{-1}
  \refstepcounter{ALC@line}
  \label
}
\newcommand{\algorithmname}{DASHA-PP}
\newcommand*{\probavailable}{\ensuremath{p_{\textnormal{a}}}}
\newcommand*{\probpairaa}{\ensuremath{p_{\textnormal{aa}}}}
\newcommand*{\probpairan}{\ensuremath{p_{\textnormal{an}}}}
\newcommand*{\probpairnn}{\ensuremath{p_{\textnormal{nn}}}}
\newcommand*{\probpage}{\ensuremath{p_{\text{page}}}}
\newcommand*{\probmega}{\ensuremath{p_{\text{mega}}}}
\newcommand{\vast}{\bBigg@{4}}
\newcommand{\Vast}{\bBigg@{5}}
\title{A Computation and Communication Efficient Method for Distributed Nonconvex Problems \\ in the Partial Participation Setting}
\author{%
  Alexander Tyurin\\
  KAUST\\
  Saudi Arabia\\
  \texttt{alexandertiurin@gmail.com} \\
  \And
  Peter Richt\'{a}rik \\
  KAUST\\
  Saudi Arabia\\
  \texttt{richtarik@gmail.com} \\
}
\begin{document}

\maketitle

\begin{abstract}
  We present a new method that includes three key components of distributed optimization and federated learning: variance reduction of stochastic gradients, partial participation, and compressed communication. We prove that the new method has optimal oracle complexity and state-of-the-art communication complexity in the partial participation setting. Regardless of the communication compression feature, our method successfully combines variance reduction and partial participation: we get the optimal oracle complexity, never need the participation of all nodes, and do not require the bounded gradients (dissimilarity) assumption.
\end{abstract}

\section{Introduction}
Federated and distributed learning have become very popular in recent years \citep{konevcny2016federated, mcmahan2017communication}. 
The current optimization tasks require much computational resources and machines. Such requirements emerge in machine learning, where massive datasets and computations are distributed between cluster nodes \citep{lin2017deep, ramesh2021zero}. In federated learning, nodes, represented by mobile phones, laptops, and desktops, do not send their data to a server due to privacy and their huge number \citep{ramaswamy2019federated}, and the server remotely orchestrates the nodes and communicates with them to solve an optimization problem.

As in classical optimization tasks, one of the main current challenges is to find \textbf{computationally efficient} optimization algorithms. However, the nature of distributed problems induces many other \citep{kairouz2021advances}, including i) \textbf{partial participation} of nodes in algorithm steps: due to stragglers \citep{li2020federated} or communication delays \citep{vogels2021relaysum}, ii) \textbf{communication bottleneck}: even if a node participates, it can be costly to transmit information to a server or other nodes \citep{alistarh2017qsgd, ramesh2021zero,kairouz2021advances, sapio2019scaling, narayanan2019pipedream}. It is necessary to develop a method that considers these problems.

\section{Optimization Problem}
\label{sec:opt_problem}
Let us consider the nonconvex distributed optimization problem
\begin{align}
    \label{eq:main_problem} 
    \min \limits_{x \in \R^d}\left\{f(x) \eqdef \frac{1}{n}\sum \limits_{i=1}^n f_i(x)\right\},
\end{align}
where $f_i\,:\,\R^d \rightarrow \R$ is a smooth nonconvex function for all $i \in [n] \eqdef \{1, \dots, n\}.$ The full information about function $f_i$ is stored on $i$\textsuperscript{th} node. The communication between nodes is maintained in the parameters server fashion \citep{kairouz2021advances}: we have a server that receives compressed information from nodes, updates a state, and broadcasts an updated model.\footnote{Note that this strategy can be used in peer-to-peer communication, assuming that the server is an abstraction and all its algorithmic steps are performed on each node.} Since we work in the nonconvex world, our goal is to find an $\varepsilon$-solution ($\varepsilon$-stationary point) of \eqref{eq:main_problem}: a (possibly random) point $\widehat{x}\in \R^d$, such that ${\rm E}\big[\norm{\nabla f(\widehat{x})}^2\big] \leq \varepsilon.$

We consider three settings: \\
1. \textbf{Gradient Setting.}
The $i$\textsuperscript{th} node has only access to the gradient $\nabla f_i \,:\,\R^d \rightarrow \R^d$ of function $f_i$. Moreover, the following assumptions for the functions $f_i$ hold.
\begin{assumption}
    \label{ass:lower_bound}
    There exists $f^* \in \R$ such that $f(x) \geq f^*$ for all $x \in \R$.
\end{assumption}
\begin{assumption}
    \label{ass:lipschitz_constant}
    The function $f$ is $L$--smooth, i.e., $\norm{\nabla f(x) - \nabla f(y)} \leq L \norm{x - y}$ for all $x, y \in \R^d.$
\end{assumption}
\begin{assumption} \leavevmode
    \label{ass:nodes_lipschitz_constant}
    The functions $f_i$ are $L_i$--smooth for all $i \in [n]$. Let us define $\widehat{L}^2 \eqdef \frac{1}{n} \sum_{i=1}^{n} L_i^2.$\footnote{Note that $L \leq \widehat{L},$ $\widehat{L} \leq L_{\max},$ and $\widehat{L} \leq L_{\sigma}.$}
\end{assumption}
2. \textbf{Finite-Sum Setting.}
The functions $\{f_i\}_{i=1}^n$ have the finite-sum form
\begin{align}
    \label{eq:task_minibatch} f_i(x) = \frac{1}{m}\sum \limits_{j=1}^m f_{ij}(x), \qquad \forall i \in [n],
\end{align}
where $f_{ij} :\R^d  \rightarrow \R$ is a smooth nonconvex  function for all $j \in [m].$

\begin{table*}
  \caption{\small Summary of methods that solve the problem \eqref{eq:main_problem} in the stochastic setting \eqref{eq:task_staochastic}. Abbr.: \emph{VR} (Variance Reduction) = Does a method have the optimal oracle complexity $\cO\left(\frac{\sigma^2}{\varepsilon} + \frac{\sigma}{\varepsilon^{\nicefrac{3}{2}}}\right)$?
  \emph{PP} (Partial Participation) = Does a method support partial participation from Section~\ref{sec:partial_participation}? \emph{CC} = Does a method have the communication complexity equals to $\cO\left(\frac{\omega}{\sqrt{n} \varepsilon}\right)$ \iffalse w.r.t. the dimension $d$, the number of nodes $n,$ and accuracy $\varepsilon$ \fi ?}
  \label{table:comparison}
  \centering 
  \scriptsize
  \begin{threeparttable}
    \begin{tabular}{ccccc}
      \midrule
    Method & \emph{VR} & 
    \emph{PP} & \emph{CC} & Limitations \\
     \midrule\midrule
     \begin{tabular}{@{}c@{}}\algnamesmall{SPIDER}, \algnamesmall{SARAH}, \algnamesmall{PAGE}, \algnamesmall{STORM} \\ \citep{SPIDER, SARAH} \\ \citep{PAGE, cutkosky2019momentum}\end{tabular} & \cmark & \xmark & \xmark & ---\\
     \midrule
     \begin{tabular}{@{}c@{}}\algnamesmall{MARINA} \\ \citep{MARINA} \end{tabular} & \cmark & \xmark\textsuperscript{{\color{blue}(a)}} & \cmark\textsuperscript{{\color{blue}(b)}} & \begin{tabular}{@{}c@{}}Suboptimal convergence rate \\ (see \citep{tyurin2022dasha}). \end{tabular}\\
     \midrule
     \begin{tabular}{@{}c@{}}\algnamesmall{FedPAGE} \\ \citep{zhao2021fedpage} \end{tabular} & \xmark & \xmark\textsuperscript{{\color{blue}(a)}} & \xmark & Suboptimal oracle complexity $\cO\left(\frac{\sigma^2}{\varepsilon^2}\right)$. \\
     \midrule
     \begin{tabular}{@{}c@{}}\algnamesmall{FRECON} \\ \citep{zhao2021faster} \end{tabular} & \xmark & \cmark & \cmark & --- \\
     \midrule
     \begin{tabular}{@{}c@{}}\algnamesmall{FedAvg} \\ \citep{mcmahan2017communication, karimireddy2020scaffold} \end{tabular} & \xmark & \cmark & \xmark & Bounded gradients (dissimilarity) assumption of $f_i$.\\
     \midrule
     \begin{tabular}{@{}c@{}}\algnamesmall{SCAFFOLD} \\ \citep{karimireddy2020scaffold} \end{tabular} & \xmark & \cmark & \xmark & Suboptimal convergence rate\textsuperscript{{\color{blue}(e)}}. \\
     \midrule
     \begin{tabular}{@{}c@{}}\algnamesmall{MIME}\textsuperscript{{\color{blue}(c)}} \\ \citep{karimireddy2020mime} \end{tabular} & \xmark\textsuperscript{{\color{blue}(d)}} & \cmark & \xmark & \begin{tabular}{@{}c@{}}Calculates full gradient. \\ Bounded gradients (dissimilarity) assumption of $f_i$. \\ Suboptimal oracle compl. $\cO\left(\nicefrac{1}{\varepsilon^{3/2}}\right)$ in the setting \eqref{eq:task_minibatch}.\end{tabular} \\
     \midrule
     \begin{tabular}{@{}c@{}}\algnamesmall{CE-LSGD} (for Partial Participation)\textsuperscript{{\color{blue}(c)}} \\ \citep{pateltowards} (concurrent work) \end{tabular} & \cmark & \cmark & \xmark & \begin{tabular}{@{}c@{}}Bounded gradients (dissimilarity) assumption of $f_i$. \\ Suboptimal oracle compl. $\cO\left(\nicefrac{1}{\varepsilon^{3/2}}\right)$ in the setting \eqref{eq:task_minibatch}.\end{tabular} \\
     \midrule
     \begin{tabular}{@{}c@{}}\algnamesmall{DASHA} \\ \citep{tyurin2022dasha} \end{tabular} & \begin{tabular}{@{}c@{}}\cmark \\ \phantom{or} \\ \xmark \end{tabular} & \begin{tabular}{@{}c@{}}\xmark \\ or\\ \cmark \end{tabular} & \begin{tabular}{@{}c@{}}\cmark \\ \phantom{or}\\ \cmark \end{tabular} & --- \\
     \midrule
     \begin{tabular}{@{}c@{}}\algnamesmall{DASHA-PP} \\ (new) \end{tabular} & \cmark & \cmark & \cmark & --- \\
    \midrule\midrule
    \end{tabular}
    \begin{tablenotes}
      \item [{\color{blue}(a)}] \algnamesmall{MARINA} and \algnamesmall{FedPAGE}, with a small probability, require the participation of all nodes so that they can not support partial participation from Section~\ref{sec:partial_participation}. Moreover, these methods provide suboptimal oracle complexities.
      \item [{\color{blue}(b)}] On average, \algnamesmall{MARINA} provides the compressed communication mechanism with complexity $\cO\left(\frac{\omega}{\sqrt{n} \varepsilon}\right).$ However, with a small probability, this method sends non-compressed vectors.
      \item [{\color{blue}(c)}] Note that \algnamesmall{MIME} and \algnamesmall{CE-LSGD} can not be directly compared with \algnamesmall{DASHA-PP} because \algnamesmall{MIME} and \algnamesmall{CE-LSGD} consider the online version of the problem \eqref{eq:main_problem}, and require more strict assumptions.
      \item [{\color{blue}(d)}] Although \algnamesmall{MIME} obtains the convergence rate $\cO\left(\frac{1}{\varepsilon^{3/2}}\right)$ of a variance reduced method, it requires the calculation of the full (exact) gradients.
      \item [{\color{blue}(e)}] It can be seen when $\sigma^2 = 0.$ Consider the $s$-nice sampling of the nodes, then \algnamesmall{SCAFFOLD} requires $\cO\left(\nicefrac{n^{3/2}}{\varepsilon s^{3/2}}\right)$ communication rounds to get $\varepsilon$-solution, while \algnamesmall{DASHA-PP} requires $\cO\left(\nicefrac{\sqrt{n}}{\varepsilon s}\right)$ communication rounds (see Theorem~\ref{theorem:stochastic} with $\omega = 0,$ $b = \nicefrac{\probavailable}{2 - \probavailable},$ and $\probavailable = \frac{s}{n}$).
    \end{tablenotes}
\end{threeparttable}
\end{table*}

We assume that Assumptions~\ref{ass:lower_bound}, \ref{ass:lipschitz_constant} and \ref{ass:nodes_lipschitz_constant} hold and the following assumption.
\begin{assumption}
  \label{ass:max_lipschitz_constant}
  The function $f_{ij}$ is $L_{ij}$-smooth for all $i \in [n], j \in [m].$ Let $L_{\max} \eqdef \max_{i \in [n], j \in [m]} L_{ij}.$
\end{assumption}
3. \textbf{Stochastic Setting.}
The function $f_i$ is an expectation of a stochastic function, 
\begin{align}
    \label{eq:task_staochastic}
    f_i(x) = \ExpSub{\xi}{f_i(x;\xi)}, \qquad \forall i \in [n],
\end{align}
where $f_i :\R^d \times \Omega_{\xi} \rightarrow \R.$ For a fixed $x \in \R,$ $f_i(x;\xi)$ is a random variable over some distribution $\mathcal{D}_i$,
and, for a fixed $\xi \in \Omega_{\xi},$ $f_i(x;\xi)$ is a smooth nonconvex function.
The $i$\textsuperscript{th} node has only access to a stochastic gradients $\nabla f_i(\cdot; \xi_{ij})$ 
of the function $f_i$ through the distribution $\mathcal{D}_i,$ where $\xi_{ij}$ is a sample from $\mathcal{D}_i.$
We assume that Assumptions~\ref{ass:lower_bound}, \ref{ass:lipschitz_constant} and \ref{ass:nodes_lipschitz_constant} hold and the following assumptions.
\begin{assumption}
  \label{ass:stochastic_unbiased_and_variance_bounded}
  For all $i \in [n]$ and for all $x \in \R^d,$ the stochastic gradient $\nabla f_i(x;\xi)$ is unbiased and has bounded variance, i.e., $\ExpSub{\xi}{\nabla f_i(x;\xi)} = \nabla f_i(x),$ and $\ExpSub{\xi}{\norm{\nabla f_i(x;\xi) - \nabla f_i(x)}^2} \leq \sigma^2,$ where $\sigma^2 \geq 0.$
\end{assumption}
\begin{assumption}
  \label{ass:mean_square_smoothness}
  For all $i \in [n]$ and for all $x, y \in \R,$ the stochastic gradient $\nabla f_i(x;\xi)$ satisfies the mean-squared smoothness property, i.e., $\ExpSub{\xi}{\norm{\nabla f_i(x;\xi) - \nabla f_i(y;\xi)}^2} \leq L_{\sigma}^2 \norm{x - y}^2.$
\end{assumption}
We compare algorithms using \textit{the oracle complexity}, i.e., the number of (stochastic) gradients that each node has to calculate to get $\varepsilon$-solution, and \textit{the communication complexity}, i.e., the number of bits that each node has to send to the server to get $\varepsilon$-solution.

\subsection{Unbiased Compressors}
We use the concept of unbiased compressors to alleviate the communication bottleneck. The unbiased compressors quantize and/or sparsify vectors that the nodes send to the server.
\begin{definition}
    \label{def:unbiased_compression}
    A stochastic mapping $\cC\,:\,\R^d \rightarrow \R^d$ is an \textit{unbiased compressor} if
    there exists $\omega \in \R$
    \begin{align}
        \label{eq:compressor}
        \textnormal{such that }\quad \Exp{\cC(x)} = x \quad \textnormal{ and } \quad \Exp{\norm{\cC(x) - x}^2} \leq \omega \norm{x}^2 \quad \forall x \in \R^d.
    \end{align}
\end{definition}
We denote a set of stochastic mappings that satisfy Definition~\ref{def:unbiased_compression} as $\mathbb{U}(\omega).$
In our methods, the nodes make use of unbiased compressors $\{\cC_i\}_{i=1}^n.$ 
The community developed a large number of unbiassed compressors, including Rand$K$ (see Definition~\ref{def:rand_k}) \citep{beznosikov2020biased, stich2018sparsified}, Adaptive sparsification \citep{wangni2018gradient} and Natural compression and dithering \citep{horvath2019natural}. We are aware of correlated compressors by \cite{szlendak2021permutation} and quantizers by \cite{suresh2022correlated} that help in the homogeneous regimes, but in this work, we are mainly concentrated on generic heterogeneous regimes, though, for simplicity, assume the independence of the compressors.
\begin{assumption}
\label{ass:compressors}
 $\cC_i \in \mathbb{U}(\omega)$ for all $i\in [n]$, and the compressors are \textit{statistically independent}.
\end{assumption}

\subsection{Nodes Partial Participation Assumptions}
\label{sec:partial_participation}

\newcommand\Item[1][]{%
  \ifx\relax#1\relax  \item \else \item[#1] \fi
  \abovedisplayskip=0pt\abovedisplayshortskip=0pt~\vspace*{-\baselineskip}}

We now try to formalize the notion of partial participation. Let us assume that we have $n$ events $\{i^{\textnormal{th}} \textnormal{ node is \textit{participating}}\}$ with the following properties.
\begin{assumption}
  \label{ass:partial_participation}
  The partial participation of nodes has the following distribution: exists constants $\probavailable \in (0, 1]$ and $\probpairaa \in [0, 1],$ such that
  \begin{enumerate}
    \Item \begin{align*}\Prob\left(i^{\textnormal{th}} \textnormal{ node is \textit{participating}}\right) = \probavailable \quad \forall i \in [n],\end{align*}
    \Item \begin{align*}\Prob\left(i^{\textnormal{th}} \textnormal{ and } j^{\textnormal{th}} \textnormal{ nodes are \textit{participating}}\right) = \probpairaa \quad \forall i \neq j \in [n].\end{align*}
    \Item \begin{align} \label{eq:partial_participation_constraint}
      \probpairaa \leq \probavailable^2,
    \end{align}
  \end{enumerate}
  and these events from different communication rounds are independent.
\end{assumption}

\begin{table*}
  \caption{\small Summary of methods that solve the problem \eqref{eq:main_problem} in the finite-sum setting \eqref{eq:task_minibatch}. Abbr.: \emph{VR} (Variance Reduction) = Does a method have the optimal oracle complexity $\cO\left(m + \frac{\sqrt{m}}{\varepsilon}\right)$? \emph{PP} and \emph{CC} are defined in Table~\ref{table:comparison}.}
  \label{table:comparison_finite}
  \centering 
  \scriptsize
  \begin{threeparttable}
    \begin{tabular}{ccccc}
      \midrule
    Method & \emph{VR} & 
    \emph{PP} & \emph{CC} & Limitations \\
     \midrule\midrule
     \begin{tabular}{@{}c@{}}\algnamesmall{SPIDER}, \algnamesmall{PAGE} \\ \citep{SPIDER, PAGE} \end{tabular} & \cmark & \xmark & \xmark & ---\\
     \midrule
     \begin{tabular}{@{}c@{}}\algnamesmall{MARINA} \\ \citep{MARINA} \end{tabular} & \cmark & \xmark\textsuperscript{{\color{blue}(a)}} & \cmark\textsuperscript{{\color{blue}(b)}} & \begin{tabular}{@{}c@{}}Suboptimal convergence rate \\ (see \citep{tyurin2022dasha}). \end{tabular}\\
     \midrule
     \begin{tabular}{@{}c@{}}\algnamesmall{ZeroSARAH} \\ \citep{li2021zerosarah} \end{tabular} & \cmark & \cmark & \xmark & Only homogeneous regime, i.e., the functions $f_i$ are equal. \\
     \midrule
     \begin{tabular}{@{}c@{}}\algnamesmall{FedPAGE} \\ \citep{zhao2021fedpage} \end{tabular} & \xmark & \xmark\textsuperscript{{\color{blue}(a)}} & \xmark & Suboptimal oracle complexity $\cO\left(\frac{m}{\varepsilon}\right)$. \\
     \midrule
     \begin{tabular}{@{}c@{}}\algnamesmall{DASHA} \\ \citep{tyurin2022dasha} \end{tabular} & \cmark & \xmark & \cmark & --- \\
     \midrule
     \begin{tabular}{@{}c@{}}\algnamesmall{DASHA-PP} \\ (new) \end{tabular} & \cmark & \cmark & \cmark & --- \\
    \midrule\midrule
    \end{tabular}
    \begin{tablenotes}
      \item [{\color{blue}(a)}, {\color{blue}(b)}]: see Table~\ref{table:comparison}.
    \end{tablenotes}
\end{threeparttable}
\end{table*}

We are not fighting for the full generality and believe that more complex sampling strategies can be considered in the analysis. For simplicity, we settle upon Assumption~\ref{ass:partial_participation}. Standard partial participation strategies, including $s$--nice sampling, where the server chooses uniformly $s$ nodes without replacement ($\probavailable = \nicefrac{s}{n}$ and $\probpairaa~=~\nicefrac{s(s-1)}{n(n-1)}$),
and independent participation, where each node independently participates with probability $\probavailable$ (due to independence, we have $\probpairaa = \probavailable^2$), satisfy Assumption~\ref{ass:partial_participation}. In the literature, $s$--nice sampling is one of the most popular strategies \citep{zhao2021faster, richtarik2021ef21, reddi2020adaptive, konevcny2016federated}. 

\section{Motivation and Related Work}
The main goal of our paper is to develop a method for the nonconvex distributed optimization that will include three key features: variance reduction of stochastic gradients, compressed communication, and partial participation. We now provide an overview of the literature (see also Table~\ref{table:comparison} and Table~\ref{table:comparison_finite}).

\textbf{1. Variance reduction of stochastic gradients}\\
It is important to consider finite-sum \eqref{eq:task_minibatch} and stochastic \eqref{eq:task_staochastic} settings because, in machine learning tasks, either the number of local functions $m$ is huge or the functions $f_i$ is an expectation of a stochastic function due to the batch normalization \citep{ioffe2015batch} or random augmentation \citep{goodfellow2016deep}, and it is infeasible to calculate the full gradients analytically. Let us recall the results from the nondistributed optimization. In the gradient setting, the optimal oracle complexity is $\cO\left(\nicefrac{1}{\varepsilon}\right)$, achieved by the vanilla gradient descent (\algname{GD}) \citep{carmon2020lower, nesterov2018lectures}. In the finite-sum setting and stochastic settings, the optimal oracle complexities are $\cO\left(m + \frac{\sqrt{m}}{\varepsilon}\right)$ and $\cO\left(\frac{\sigma^2}{\varepsilon} + \frac{\sigma}{\varepsilon^{\nicefrac{3}{2}}}\right)$ \citep{SPIDER, PAGE, arjevani2019lower}, accordingly, achieved by methods \algname{SPIDER}, \algname{SARAH}, \algname{PAGE}, and \algname{STORM} from \citep{SPIDER,SARAH,PAGE,cutkosky2019momentum}.

\textbf{2. Compressed communication} \\
In distributed optimization \citep{ramesh2021zero, xu2021grace}, lossy communication compression can be a powerful tool to increase the communication speed between the nodes and the server. Different types of compressors are considered in the literature, including unbiased compressors \citep{alistarh2017qsgd,beznosikov2020biased,szlendak2021permutation}, contractive (biased) compressors \citep{richtarik2021ef21}, 3PC compressors \citep{richtarik20223pc}. We will focus on unbiased compressors because methods \algname{DASHA} and \algname{MARINA} \citep{tyurin2022dasha, szlendak2021permutation, MARINA} that employ unbiased compressors provide the current theoretical state-of-the-art (SOTA) communication complexities.

Many methods analyzed optimization methods with the unbiased compressors \citep{alistarh2017qsgd, DIANA, horvath2019stochastic, MARINA, tyurin2022dasha}. In the gradient setting, the methods \algname{MARINA} and \algname{DASHA} by \cite{MARINA} and \cite{tyurin2022dasha} establish the current SOTA communication complexity, each method needs $\frac{1 + \nicefrac{\omega}{\sqrt{n}}}{\varepsilon}$ communication rounds to get an $\varepsilon$--solution. In the finite-sum and stochastic settings, the current SOTA communication complexity is attained by the \algname{DASHA} method, while maintaining the optimal oracle complexities $\cO\left(m + \frac{\sqrt{m}}{\varepsilon \sqrt{n}}\right)$ and $\cO\left(\frac{\sigma^2}{\varepsilon n} + \frac{\sigma}{\varepsilon^{\nicefrac{3}{2}} n}\right)$ per node.

\textbf{3. Partial participation} \\
From the beginning of federated learning era, the partial participation has been considered to be the essential feature of distributed optimization methods \citep{mcmahan2017communication, konevcny2016federated, kairouz2021advances}. 
However, previously proposed methods have limitations: i) methods \algname{MARINA} and \algname{FedPAGE} from \citep{MARINA, zhao2021fedpage} still require synchronization of all nodes with a small probability. ii) in the stochastic settings, methods \algname{FedAvg}, \algname{SCAFFOLD}, and \algname{FRECON} with the partial participation mechanism \citep{mcmahan2017communication, karimireddy2020scaffold, zhao2021faster} provide results without variance reduction techniques from \citep{SPIDER, PAGE, cutkosky2019momentum} and, therefore, get suboptimal oracle complexities. Note that \algname{FRECON} and \algname{DASHA} reduce the variance \textit{only from compressors} (in the partial participation and stochastic setting). iii) in the finite-sum setting, the \algname{ZeroSARAH} method by \cite{li2021zerosarah} focuses on the homogeneous regime only (the functions $f_i$ are equal). iv) The \algname{MIME} method by \cite{karimireddy2020mime} and the \algname{CE-LSGD} method (for Partial Participation) by the concurrent paper \citep{pateltowards} consider the online version of the problem \eqref{eq:main_problem}. Therefore, \algname{MIME} and \algname{CE-LSGD} (for Partial Participation) require stricter assumptions, including the bounded inter-client gradient variance assumption. In the finite-sum setting \eqref{eq:task_minibatch}, \algname{MIME} and \algname{CE-LSGD} obtain a suboptimal oracle complexity $\cO\left(\nicefrac{1}{\varepsilon^{3/2}}\right)$ while, in the full participation setting, it is possible to get the complexity $\cO\left(\nicefrac{1}{\varepsilon}\right)$.

\section{Contributions}
We propose a new method \algname{\algorithmname} for the nonconvex distributed optimization.

$\bullet$ As far as we know, this is the first method that includes three key ingredients of federated learning methods: \textit{variance reduction of stochastic gradients, compressed communication, and partial participation.} 

$\bullet$ Moreover, this is the first method that combines \textit{variance reduction of stochastic gradients and partial participation} flawlessly: i) it gets the optimal oracle complexity ii) does not require the participation of all nodes iii) does not require the bounded gradients assumption of the functions $f_i$.

$\bullet$ We prove convergence rates and show that this method has \textit{the optimal oracle complexity and the state-of-the-art communication complexity in the partial participation setting.} Moreover, in our work, we observe a nontrivial side-effect from mixing the variance reduction of stochastic gradients and partial participation. It is a general problem not related to our methods or analysis that we discuss in Section~\ref{sec:partial_participation_sampling}.

$\bullet$ In Section~\ref{sec:experiments}, we present experiments where we validate our theory and compare our new methods to previous ones.

\begin{algorithm*}
  \caption{\algname{\algorithmname}}
  \label{alg:main_algorithm}
  \begin{algorithmic}[1]
  \STATE \textbf{Input:} starting point $x^0 \in \R^d$, stepsize $\gamma > 0$, momentum $a \in (0, 1]$, 
  momentum $b \in (0, 1]$, 
  probability $\probpage \in (0, 1]$ (only in \algname{\algorithmname-PAGE}),
  batch size $B$ (only in \algname{\algorithmname-PAGE}, \algname{\algorithmname-FINITE-MVR} and \algname{\algorithmname-MVR}),
  probability $\probavailable \in (0, 1]$ that a node is \textit{participating}\textsuperscript{\red (a)},
  number of iterations~$T \geq 1$
  \STATE Initialize $g^0_i\in \R^d$, $h^0_i\in \R^d$ on the nodes and  $g^0 = \frac{1}{n}\sum_{i=1}^n g^0_i$ on the server
  \STATE Initialize $h^0_{ij}\in \R^d$ on the nodes and take $h^0_i = \frac{1}{m}\sum_{j=1}^m h^0_{ij}$ (only in \algname{\algorithmname-FINITE-MVR})
  \FOR{$t = 0, 1, \dots, T - 1$}
  \STATE $x^{t+1} = x^t - \gamma g^t$ \alglinelabel{alg:main_algorithm:x_update} 
  \STATE Broadcast $x^{t+1}, x^{t}$ to all \textit{participating}\textsuperscript{\red (a)} nodes
  \FOR{$i = 1, \dots, n$ in parallel}
  \IF{$i^{\textnormal{th}} \textnormal{ node is \textit{participating}}$\textsuperscript{\red (a)}}
      \STATE Calculate $k^{t+1}_i$ \alglinelabel{alg:calculate_k} using Algorithm~\ref{alg:main_algorithm:dasha_pp}, \ref{alg:main_algorithm:dasha_pp_page}, \ref{alg:main_algorithm:dasha_pp_finite_mvr} or \ref{alg:main_algorithm:dasha_pp_mvr}
      \STATE $h^{t+1}_i = h^t_i + \frac{1}{\probavailable}k^{t+1}_i$ 
      \STATE $m^{t+1}_i = \cC_i\left(\frac{1}{\probavailable}k^{t+1}_i - \frac{a}{\probavailable} \left(g^{t}_i - h^{t}_i\right)\right)$
      \STATE $g^{t+1}_i = g^{t}_i + m^{t+1}_i$
      \STATE Send $m^{t+1}_i$ to the server
  \ELSE
      \STATE $h^{t+1}_{ij} = h^{t}_{ij}$ (only in \algname{\algorithmname-FINITE-MVR})
      \STATE $h^{t+1}_i = h^{t}_i, \quad g^{t+1}_i = g^{t}_i, \quad m^{t+1}_i = 0$
  \ENDIF
  \ENDFOR
  \STATE $g^{t+1} = g^t + \frac{1}{n} \sum_{i=1}^n m^{t+1}_i$
  \ENDFOR
  \STATE \textbf{Output:} $\hat{x}^T$ chosen uniformly at random from $\{x^t\}_{k=0}^{T-1}$ 

  {\red (a)}: For the formal description see Section~\ref{sec:partial_participation}.
  \end{algorithmic}
\end{algorithm*}

\begin{algorithm*}
  \caption{Calculate $k^{t+1}_i$ for \algname{\algorithmname} in the gradient setting. See line~\ref{alg:calculate_k} in Alg.~\ref{alg:main_algorithm}}
  \label{alg:main_algorithm:dasha_pp}
  \begin{algorithmic}[1]
    \STATE $k^{t+1}_i = \nabla f_i(x^{t+1}) - \nabla f_i(x^{t}) - b \left(h^t_i - \nabla f_i(x^{t})\right)$ 
  \end{algorithmic}
\end{algorithm*}

\begin{algorithm*}
  \caption{Calculate $k^{t+1}_i$ for \algname{\algorithmname-PAGE} in the finite-sum setting. See line~\ref{alg:calculate_k} in Alg.~\ref{alg:main_algorithm}}
  \label{alg:main_algorithm:dasha_pp_page}
  \begin{algorithmic}[1]
    \STATE Generate a random set $I^t_i$ of size $B$ from $[m]$ \textit{with replacement}
    \STATE $k^{t+1}_i = 
        \begin{cases}
          \nabla f_i(x^{t+1}) - \nabla f_i(x^{t}) - \frac{b}{\probpage} \left(h^t_i - \nabla f_i(x^{t})\right),& \\ \qquad \textnormal{with probability $\probpage$ on all \textit{participating} nodes,} \\
          \frac{1}{B}\sum_{j \in I^t_i}\left(\nabla f_{ij}(x^{t+1}) - \nabla f_{ij}(x^{t})\right), & \\
          \qquad \textnormal{with probability $1 - \probpage$ on all \textit{participating} nodes} \end{cases}$
  \end{algorithmic}
\end{algorithm*}

\begin{algorithm*}
  \caption{Calc. $k^{t+1}_i$ for \algname{\algorithmname-FINITE-MVR} in the finite-sum setting. See line~\ref{alg:calculate_k} in Alg.~\ref{alg:main_algorithm}}
  \label{alg:main_algorithm:dasha_pp_finite_mvr}
  \begin{algorithmic}[1]
    \STATE Generate a random set $I^t_i$ of size $B$ from $[m]$ \textit{without replacement}
    \STATE $k^{t+1}_{ij} = 
          \begin{cases}
            \frac{m}{B}\left(\nabla f_{ij}(x^{t+1}) - \nabla f_{ij}(x^{t}) - b \left(h^t_{ij} - \nabla f_{ij}(x^{t})\right)\right), & j \in I^t_i,  \\
            0, & j \not\in I^t_i
          \end{cases}$
    \STATE $h^{t+1}_{ij} = h^t_{ij} + \frac{1}{\probavailable} k^{t+1}_{ij}$
    \STATE $k^{t+1}_i = \frac{1}{m}\sum_{j=1}^m k^{t+1}_{ij}$
  \end{algorithmic}
\end{algorithm*}

\begin{algorithm*}
  \caption{Calculate $k^{t+1}_i$ for \algname{\algorithmname-MVR} in the stochastic setting. See line~\ref{alg:calculate_k} in Alg.~\ref{alg:main_algorithm}}
  \label{alg:main_algorithm:dasha_pp_mvr}
  \begin{algorithmic}[1]
    \STATE Generate i.i.d.\,samples $\{\xi^{t+1}_{ij}\}_{j=1}^B$ of size $B$ from $\mathcal{D}_i.$
    \STATE $k^{t+1}_i = \frac{1}{B}\sum_{j=1}^B \nabla f_i(x^{t+1};\xi^{t+1}_{ij}) - \frac{1}{B}\sum_{j=1}^B \nabla f_i(x^{t};\xi^{t+1}_{ij}) - b \left(h^t_i - \frac{1}{B}\sum_{j=1}^B \nabla f_i(x^{t};\xi^{t+1}_{ij})\right)$
  \end{algorithmic}
\end{algorithm*}

\section{Algorithm Description and Main Challenges Towards Partial Participation}

We now present \algname{\algorithmname} (see Algorithm~\ref{alg:main_algorithm}), a family of methods to solve the optimization problem \eqref{eq:main_problem}. When we started investigating the problem, we took \algname{DASHA} as a baseline method for two reasons: the family of algorithms \algname{DASHA} provides the current state-of-the-art communication complexities in the \emph{non-partial participation} setting, and, unlike \algname{MARINA}, it does not send non-compressed gradients and does not synchronize all nodes. Let us briefly discuss the main idea of \algname{DASHA}, its problem in the \emph{partial participation} setting, and why the refinement of \algname{DASHA} is not an exercise.

In fact, the original \algname{DASHA} method supports the partial participation of nodes \emph{in the gradient setting}. Since the nodes only do the following steps (see full algorithm in Algorithm~\ref{alg:main_algorithm_dasha}):
\begin{align}
  \label{eq:explain_g_grad}
  g^{t+1}_i = g^{t}_i + \cC_i\left(\nabla f_i(x^{t+1}) - (1 - a)\nabla f_i(x^{t}) - a g^{t}_i\right).
\end{align}
The partial participation mechanism (independent participation from Section~\ref{sec:partial_participation}) can be easily implemented here if we temporally redefine the compressor and use another one\footnote{If $\cC_{i} \in \mathbb{U}\left(\omega\right),$ then $\cC_{i}^{\probavailable} \in \mathbb{U}\left(\nicefrac{\omega + 1}{\probavailable} - 1\right).$} instead:
\begin{align*}
  \cC_{i}^{\probavailable} \eqdef \begin{cases}
    \frac{1}{\probavailable}\cC_{i},\textnormal{w.p. } \probavailable,\\
    0, \hspace{0.2cm} \textnormal{w.p. } 1 - \probavailable.
  \end{cases} \overset{\eqref{eq:explain_g_grad}}{\Rightarrow} g^{t+1}_i = \begin{cases}
    g^{t}_i + \frac{1}{\probavailable}\cC_i\left(\nabla f_i(x^{t+1}) - (1 - a)\nabla f_i(x^{t}) - a g^{t}_i\right), \textnormal{w.p. } \probavailable\\
    g^{t}_i, \hspace{6.0cm}\textnormal{w.p. } 1 - \probavailable.
  \end{cases}
\end{align*}
With probability $1 - \probavailable,$ a node does not update $g_i^{t}$ and does not send anything to the server. The main observation is that we can do this trick since $g^{t+1}_i$ depends only on the vectors $x^{t+1},$ $x^{t},$ and $g^{t}_i$. The points $x^{t+1}$ and $x^{t}$ are only available in a node only during its participation.

However, we focus our attention on partial participation \emph{in the finite-sum and stochastic settings}. Consider the nodes' steps in \algname{DASHA-MVR} \citep{tyurin2022dasha} (see Algorithm~\ref{alg:main_algorithm_dasha_mvr}) that is designed for the stochastic setting:
\begin{align}
  &h^{t+1}_i = \nabla f_i(x^{t+1};\xi^{t+1}_{i}) + (1 - b) (h^t_i - \nabla f_i(x^{t};\xi^{t+1}_{i})), \label{eq:explain_h}\\
  &g^{t+1}_i = g^{t}_i + \cC_i\left(h^{t+1}_i - h^{t}_i - a \left(g^{t}_i - h^{t}_i\right)\right). \label{eq:explain_g}
\end{align}
Now we have two sequences $h^{t}_i$ and $g^{t}_i.$ Even if we use the same trick for \eqref{eq:explain_g}, we still have to update \eqref{eq:explain_h} in every iteration of the algorithm since $g^{t+1}_i$ additionally depends on $h^{t+1}_i$ and $h^{t}_i.$ In other words, if a node does not update $g_i^{t}$ and does not send anything to the server, it still has to update $h^{t}_i,$ what is impossible without the points $x^{t+1}$ and $x^{t}.$
One of the main challenges was to ``guess'' how to generalize \eqref{eq:explain_h} and \eqref{eq:explain_g} to the partial participation setting. We now provide a solution (\algname{\algorithmname-MVR} with the batch size $B = 1$):
\begin{eqnarray}
\begin{aligned}
  & h^{t+1}_i = h^t_i + \frac{1}{\probavailable}k^{t+1}_i, \,k^{t+1}_i = \nabla f_i(x^{t+1};\xi^{t+1}_{i}) - \nabla f_i(x^{t};\xi^{t+1}_{i}) - b \left(h^t_i - \nabla f_i(x^{t};\xi^{t+1}_{i})\right), \label{eq:explain_h_new}\\
  & g^{t+1}_i = g^{t}_i + \cC_i\left(\frac{1}{\probavailable}k^{t+1}_i - \frac{a}{\probavailable} \left(g^{t}_i - h^{t}_i\right)\right) \textnormal{with probability } \probavailable, \\
  & \textnormal{and } h^{t+1}_i = h^{t}_i,\, g^{t+1}_i = g^{t}_i \textnormal{ with probability } 1 - \probavailable.
\end{aligned}
\end{eqnarray}
Now both control variables $g_i^{t}$ and $h_i^{t}$ do not change with the probability $1 - \probavailable.$ When the $i$\textsuperscript{th} node participates, the update rules of $g_i^{t+1}$ and $h_i^{t+1}$ in \eqref{eq:explain_h_new} were adapted to make the proof work. When $\probavailable = 1$ (no partial participation), the update rules from \eqref{eq:explain_h_new} reduce to \eqref{eq:explain_h} and \eqref{eq:explain_g}.

The theoretical analysis of the new algorithm became more complicated: unlike \eqref{eq:explain_h} and \eqref{eq:explain_g}, the control variables $h^{t+1}_i$ and $g^{t+1}_i$ in \eqref{eq:explain_h_new} (see also main Algorithm~\ref{alg:main_algorithm}) are coupled by the randomness from the partial participation. 
Going deeper into details, for instance, one can compare Lemma~I.2 from \citep{tyurin2022dasha} and Lemma~\ref{lemma:g_h}, which both bound $\norm{g^{t+1}_i - h^{t+1}_i}^2.$ The former lemma does not use the knowledge about the update rules of $h^{t+1}_i,$ works with one expectation $\ExpSub{\cC}{\cdot},$ uses only \eqref{eq:compressor}, \eqref{auxiliary:jensen_inequality}, and \eqref{auxiliary:variance_decomposition}. The latter lemma additionally requires and uses the structure of the update rule of $h^{t+1}_i$ 
(the structure is very important in the lemma since the control variables $h^{t+1}_i$ and $g^{t+1}_i$ are coupled), 
surgically copes with the expectations $\ExpSub{\cC}{\cdot}$ and $\ExpSub{\probavailable}{\cdot}$ (for instance, it is not trivial in each order one should apply the expectations), and uses the sampling lemma (Lemma~\ref{lemma:sampling}). The same reasoning applies to other parts of the analysis and the finite-sum setting: the generalization of the previous algorithm and the additional randomness from the partial participation required us to rethink the previous proofs.

At the first reading of the proofs, we suggest the reader follow the proof of Theorem~\ref{theorem:gradient_oracle} in the gradient setting (\algname{\algorithmname}), which takes a small part of the paper. Although the appendix seems to be dense and large, the size is justified by the fact that we consider four different sub-algorithms, \algname{\algorithmname}, \algname{\algorithmname-PAGE}, \algname{\algorithmname-FINITE-MVR}, and \algname{\algorithmname-MVR}, and also P\L-condition (The theory is designed so that the proofs do not repeat steps of each other and use one framework).

\section{Theorems}

\label{sec:theorems}

We now present the convergence rates theorems of \algname{\algorithmname} in different settings. We will compare the theorems with the results of the current state-of-the-art methods, \algname{MARINA} and \algname{DASHA}, that work in the full participation setting. Suppose that \algname{MARINA} or \algname{DASHA} converges to $\varepsilon$-solution after $T$ communication rounds. Then, ideally, we would expect the convergence of the new algorithms to $\varepsilon$-solution after up to $\nicefrac{T}{\probavailable}$ communication rounds due to the partial participation constraints\footnote{We check this numerically in Section~\ref{sec:experiments}.}. The detailed analysis of the algorithms under Polyak-\L ojasiewicz condition we provide in Section~\ref{sec:pl_condition}. Let us define $\Delta_0 \eqdef f(x^0) - f^*.$

\subsection{Gradient Setting}

\label{sec:gradien_setting}

\begin{restatable}{theorem}{CONVERGENCE}
  \label{theorem:gradient_oracle}
  Suppose that Assumptions \ref{ass:lower_bound}, \ref{ass:lipschitz_constant}, \ref{ass:nodes_lipschitz_constant}, \ref{ass:compressors} and \ref{ass:partial_participation} hold. Let us take $a = \frac{\probavailable}{2 \omega + 1} ,$ $b = \frac{\probavailable}{2 - \probavailable},$ {\scriptsize \begin{align*}\gamma \leq \left(L + \left[\frac{48 \omega \left(2 \omega + 1\right)}{n \probavailable^2} + \frac{16}{n \probavailable^2}\left(1 - \frac{\probpairaa}{\probavailable}\right)\right]^{\nicefrac{1}{2}}\widehat{L}\right)^{-1},\end{align*}} and $g^{0}_i = h^{0}_i = \nabla f_i(x^0)$ for all $i \in [n]$
  in Algorithm~\ref{alg:main_algorithm} \algname{(\algorithmname)}, then $\Exp{\norm{\nabla f(\widehat{x}^T)}^2} \leq \frac{2 \Delta_0}{\gamma T}.$
\end{restatable}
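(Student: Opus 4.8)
The plan is to follow the standard descent-lemma + Lyapunov-function strategy used for \algname{MARINA}/\algname{DASHA}-type methods, but carefully tracking the extra variance introduced by partial participation. Since $f$ is $L$-smooth (Assumption~\ref{ass:lipschitz_constant}) and $x^{t+1} = x^t - \gamma g^t$, the descent lemma gives
\begin{align*}
\Exp{f(x^{t+1})} \leq \Exp{f(x^t)} - \frac{\gamma}{2}\Exp{\sqnorm{\nabla f(x^t)}} - \left(\frac{\gamma}{2} - \frac{L\gamma^2}{2}\right)\Exp{\sqnorm{g^t}} + \frac{\gamma}{2}\Exp{\sqnorm{g^t - \nabla f(x^t)}}.
\end{align*}
So the whole game is to control the mean-squared deviation $G^t \eqdef \Exp{\sqnorm{g^t - \nabla f(x^t)}}$. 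The first step is to prove the unbiasedness relation $\ExpCond{g^{t+1}_i}{\cdot} = h^{t+1}_i$ (in expectation over the compressor and participation, using the $\tfrac{1}{\probavailable}$ rescaling and $\Exp{\cC_i(\cdot)}=\cdot$), which then propagates to $\Exp{g^{t+1}} = \Exp{h^{t+1}}$ and, by the choice $k^{t+1}_i = \nabla f_i(x^{t+1}) - \nabla f_i(x^t) - b(h^t_i - \nabla f_i(x^t))$ together with $g^0_i=h^0_i=\nabla f_i(x^0)$, gives a recursion showing $h^t_i$ tracks $\nabla f_i(x^t)$.

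Next I would set up the two-part Lyapunov function used for \algname{DASHA}: one term measuring $\frac{1}{n}\sum_i \Exp{\sqnorm{g^t_i - h^t_i}}$ (the compression error, contracted by the momentum $a$) and one term measuring $\frac{1}{n}\sum_i\Exp{\sqnorm{h^t_i - \nabla f_i(x^t)}}$ (the ``gradient-learning'' error, contracted by the momentum $b$). The key recursions to establish are: (i) an $m^{t+1}_i$-bound, $\ExpCond{\sqnorm{m^{t+1}_i}}{\cdot} \lesssim (1+\omega)\,\tfrac{1}{\probavailable}\Exp{\sqnorm{k^{t+1}_i - a(g^t_i-h^t_i)}}$, where the $\tfrac{1}{\probavailable}$ is the inflation from partial participation — each node contributes only with probability $\probavailable$ but is rescaled by $\tfrac{1}{\probavailable}$; (ii) a contraction for $\sqnorm{g^{t+1}_i - h^{t+1}_i}$ with rate $(1-a)$ plus a term proportional to $\sqnorm{k^{t+1}_i}$; (iii) a contraction for $\sqnorm{h^{t+1}_i - \nabla f_i(x^{t+1})}$ with rate related to $(1-b)$ plus the $L_i$-smoothness term $L_i^2\sqnorm{x^{t+1}-x^t} = L_i^2\gamma^2\sqnorm{g^t}$, which after averaging produces the $\widehat L^2$. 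The crucial new ingredient relative to \algname{DASHA} is that step (ii) and (iii) pick up an \emph{additive} variance term of the form $\tfrac{1}{n\probavailable}\bigl(1 - \tfrac{\probpairaa}{\probavailable}\bigr)(\dots)$ coming from $\Var$ of the participation indicators: one computes $\Exp{\sqnorm{\tfrac{1}{n}\sum_i \tfrac{1}{\probavailable}\mathbbm{1}_i v_i - \tfrac{1}{n}\sum_i v_i}^2}$ and uses Assumption~\ref{ass:partial_participation} (in particular \eqref{eq:partial_participation_constraint}, which makes $1 - \tfrac{\probpairaa}{\probavailable}\geq 0$) to bound it by $\tfrac{1}{n\probavailable}(1-\tfrac{\probpairaa}{\probavailable})\tfrac1n\sum_i\sqnorm{v_i}$. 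This is exactly where the term $\tfrac{16}{n\probavailable^2}(1-\tfrac{\probpairaa}{\probavailable})$ in the stepsize bound is born, while $\tfrac{48\omega(2\omega+1)}{n\probavailable^2}$ comes from the compression variance with the $\tfrac1\probavailable$ inflation.

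Then I would combine: form $\Psi^t \eqdef f(x^t) - f^* + c_1\gamma\cdot\tfrac1n\sum_i\Exp{\sqnorm{g^t_i-h^t_i}} + c_2\gamma\cdot\tfrac1n\sum_i\Exp{\sqnorm{h^t_i-\nabla f_i(x^t)}}$ for appropriately chosen constants $c_1, c_2$, plug in $a = \tfrac{\probavailable}{2\omega+1}$, $b = \tfrac{\probavailable}{2-\probavailable}$, and verify that with the stated $\gamma$ the coefficient of $\Exp{\sqnorm{g^t}}$ in $\Psi^{t+1}-\Psi^t$ is nonpositive (this is the inequality the square-root expression in $\gamma$ is designed to satisfy — it balances $\tfrac12 - \tfrac{L\gamma}{2}$ against $c_2\gamma^2\widehat L^2$ plus the partial-participation cross terms), leaving $\Psi^{t+1} \leq \Psi^t - \tfrac{\gamma}{2}\Exp{\sqnorm{\nabla f(x^t)}}$. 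Telescoping over $t = 0,\dots,T-1$, using $\Psi^0 = \Delta_0$ (since $g^0_i = h^0_i = \nabla f_i(x^0)$ kills both extra terms) and $\Psi^T \geq 0$, and dividing by $T$ gives $\tfrac1T\sum_{t}\Exp{\sqnorm{\nabla f(x^t)}} \leq \tfrac{2\Delta_0}{\gamma T}$, which by the uniform-random choice of $\widehat x^T$ is exactly $\Exp{\sqnorm{\nabla f(\widehat x^T)}} \leq \tfrac{2\Delta_0}{\gamma T}$.

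The main obstacle I anticipate is step (ii)/(iii): correctly disentangling the three coupled sources of randomness (compressor, participation indicator, and — in the later settings — stochastic gradient sampling), since conditioning must be done in the right order, and making sure the cross terms between ``node $i$ participated'' events for $i\neq j$ are handled via $\probpairaa$ rather than being dropped. Getting the constants $c_1, c_2$ and the numerical factors ($48$, $16$, the $2\omega+1$) to line up so that the $\Exp{\sqnorm{g^t}}$ coefficient is exactly nonpositive under the stated $\gamma$ is the delicate bookkeeping part; conceptually it is routine, but it is where an error would most likely hide.
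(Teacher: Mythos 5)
Your overall strategy is the same as the paper's: descent lemma, split $\norm{g^t-\nabla f(x^t)}^2$ through $h^t$, a sampling lemma for the participation indicators that produces the $\frac{\probavailable-\probpairaa}{n^2\probavailable^2}\sum_i\norm{\Exp{s_i}}^2$ variance term, the $\nicefrac{1}{\probavailable}$ inflation from rescaling, and telescoping a Lyapunov function that vanishes at $t=0$ because $g^0_i=h^0_i=\nabla f_i(x^0)$. Your identification of where the $\frac{16}{n\probavailable^2}(1-\frac{\probpairaa}{\probavailable})$ and $\frac{48\omega(2\omega+1)}{n\probavailable^2}$ terms come from is correct.

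There is, however, one genuine structural gap: your Lyapunov function has only the two \emph{local-average} terms $\frac{1}{n}\sum_i\Exp{\sqnorm{g^t_i-h^t_i}}$ and $\frac{1}{n}\sum_i\Exp{\sqnorm{h^t_i-\nabla f_i(x^t)}}$, whereas the paper's proof (and \algname{DASHA}'s) tracks \emph{four} quantities: additionally the global errors $\Exp{\sqnorm{g^t-h^t}}$ and $\Exp{\sqnorm{h^t-\nabla f(x^t)}}$, with weights $\frac{\gamma(2\omega+1)}{\probavailable}$ and $\frac{\gamma}{b}$ respectively. The global terms are the ones that actually feed the descent lemma, and their one-step variance increments are a factor of $n$ smaller than those of the local sums, because the compressors are independent across nodes and the participation covariance satisfies $\probpairaa\le\probavailable^2$ (this is exactly what Lemma~\ref{lemma:sampling} delivers: a $\frac{1}{n^2\probavailable}\sum_i\Exp{\norm{s_i-\Exp{s_i}}^2}$ term rather than a $\frac{1}{n\probavailable}$ one). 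The local sums are still needed, but only to close the recursions for the global quantities, since the sampling lemma's bias term $\frac{\probavailable-\probpairaa}{n^2\probavailable^2}\sum_i\norm{\Exp{s_i}}^2$ couples the global error to the per-node errors. With only the local sums you are forced to bound $\sqnorm{g^t-h^t}\le\frac{1}{n}\sum_i\sqnorm{g^t_i-h^t_i}$ by convexity, which loses the factor of $n$ and yields a stepsize of order $\bigl(L+\frac{\omega}{\probavailable}\widehat{L}\bigr)^{-1}$ instead of the claimed $\bigl(L+\sqrt{\frac{48\omega(2\omega+1)}{n\probavailable^2}+\frac{16}{n\probavailable^2}(1-\frac{\probpairaa}{\probavailable})}\,\widehat{L}\bigr)^{-1}$. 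So the argument as sketched proves a valid but strictly weaker theorem; to recover the stated rate you must add the two global terms to $\Psi^t$ and run the contraction argument for all four quantities simultaneously, as in Lemmas~\ref{lemma:main_lemma} and~\ref{lemma:gradient} of the paper.
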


Let us recall the convergence rate of \algname{MARINA} or \algname{DASHA}, the number of communication rounds to get $\varepsilon$-solution equals
$\cO\left(\frac{\Delta_0}{\varepsilon}\left[L + \frac{\omega}{\sqrt{n}}\widehat{L}\right]\right),$ while the rate of \algname{\algorithmname} equals $\cO\left(\frac{\Delta_0}{\varepsilon}\left[L + \frac{\omega + 1}{\probavailable \sqrt{n}}\widehat{L}\right]\right)$. Up to Lipschitz constants factors, we get the degeneration up to $\nicefrac{1}{\probavailable}$ factor due to the partial participation. This is the expected result since each worker sends useful information only with the probability $\probavailable.$

\subsection{Finite-Sum Setting}

\label{sec:finite_sum_setting}

\begin{restatable}{theorem}{CONVERGENCEPAGE}
  \label{theorem:page}
  Suppose that Assumptions \ref{ass:lower_bound}, \ref{ass:lipschitz_constant}, \ref{ass:nodes_lipschitz_constant}, \ref{ass:max_lipschitz_constant}, \ref{ass:compressors}, and \ref{ass:partial_participation} hold. Let us take $a = \frac{\probavailable}{2 \omega + 1} ,$ $b = \frac{\probpage \probavailable}{2 - \probavailable},$ probability $\probpage \in (0, 1]$,
  {\scriptsize \begin{align*}&\gamma \leq \Bigg(L + \left[\frac{48 \omega (2 \omega + 1)}{n \probavailable^2} \left(\widehat{L}^2 + \frac{(1 - \probpage)L_{\max}^2}{B}\right) + \frac{16}{n \probavailable^2 \probpage} \left(\left(1 - \frac{\probpairaa}{\probavailable}\right)\widehat{L}^2 + \frac{(1 - \probpage)L_{\max}^2}{B}\right)\right]^{\nicefrac{1}{2}}\Bigg)^{-1}\end{align*}}
  and $g^{0}_i = h^{0}_i = \nabla f_i(x^0)$ for all $i \in [n]$ in Algorithm~\ref{alg:main_algorithm} \algname{(\algorithmname-PAGE)}
  then $\Exp{\norm{\nabla f(\widehat{x}^T)}^2} \leq \frac{2 \Delta_0}{\gamma T}.$
\end{restatable}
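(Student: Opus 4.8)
The plan is to follow the standard descent-lemma route used for \algname{DASHA}/\algname{PAGE}-type methods, but carefully tracking the three coupled sources of randomness (compressors, mini-batch sampling, partial participation). First I would set up a Lyapunov function of the form
\begin{align*}
  \Psi^t \eqdef f(x^t) - f^* + \frac{c_1 \gamma}{\;} \Exp{\norm{g^t - \nabla f(x^t)}^2} + c_2 \gamma \cdot \frac{1}{n}\sum_{i=1}^n \Exp{\norm{h^t_i - \nabla f_i(x^t)}^2}
\end{align*}
for constants $c_1, c_2 > 0$ to be chosen. The first bound is the $L$-smoothness descent inequality: since $x^{t+1} = x^t - \gamma g^t$, Assumption~\ref{ass:lipschitz_constant} gives $f(x^{t+1}) \leq f(x^t) - \frac{\gamma}{2}\norm{\nabla f(x^t)}^2 - (\frac{1}{2\gamma} - \frac{L}{2})\norm{x^{t+1} - x^t}^2 + \frac{\gamma}{2}\norm{g^t - \nabla f(x^t)}^2$, so everything reduces to controlling the ``gradient-estimation error'' $\Exp{\norm{g^t - \nabla f(x^t)}^2}$ and feeding it through the recursion.

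Next I would derive the two key recursions. For the server-side error $G^{t+1} \eqdef \Exp{\norm{g^{t+1} - \nabla f(x^{t+1})}^2}$, I write $g^{t+1} - \nabla f(x^{t+1}) = g^t - \nabla f(x^t) + \frac{1}{n}\sum_i m^{t+1}_i - (\nabla f(x^{t+1}) - \nabla f(x^t))$ and take conditional expectations in the right order: first over the compressors $\cC_i$ (using Assumption~\ref{ass:compressors} and $\Exp{\cC_i(v)} = v$, $\Exp{\norm{\cC_i(v) - v}^2} \leq \omega\norm{v}^2$, plus independence to kill cross terms), then over the \algname{PAGE} coin flip and the mini-batch $I^t_i$, then over the participation events (using $\probavailable$, $\probpairaa$ from Assumption~\ref{ass:partial_participation} and the scaling $1/\probavailable$ which makes $\frac{1}{\probavailable}k^{t+1}_i$ conditionally unbiased for $k^{t+1}_i$). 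This produces a contraction-type bound
\begin{align*}
  G^{t+1} \leq (1 - a + \text{small})\, G^t + (\text{coeff})\, \frac{1}{n}\sum_i \Exp{\norm{h^t_i - \nabla f_i(x^t)}^2} + (\text{coeff})\,\widehat{L}^2 \Exp{\norm{x^{t+1}-x^t}^2} + \dots
\end{align*}
where the $\widehat{L}^2$ term collects variance-of-increment contributions and the $L_{\max}^2/B$ terms appear because mini-batching $\nabla f_{ij}$ introduces error proportional to $L_{\max}^2\norm{x^{t+1}-x^t}^2 / B$ (via Assumption~\ref{ass:max_lipschitz_constant} and sampling without replacement). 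The factor $(1 - \probpairaa/\probavailable)$ is exactly the ``partial-participation variance'' term: it is the conditional variance of $\frac{1}{n}\sum_i \frac{1}{\probavailable}\mathbbm{1}_i k_i$ around $\frac{1}{n}\sum_i k_i$, and it vanishes when $\probpairaa = \probavailable$ (full participation). The division by $\probpage$ in the last bracket traces to the $b/\probpage$ rescaling inside Algorithm~\ref{alg:main_algorithm:dasha_pp_page}. For the shift error $H^{t+1}_i \eqdef \Exp{\norm{h^{t+1}_i - \nabla f_i(x^{t+1})}^2}$, I use $h^{t+1}_i = h^t_i + \frac{1}{\probavailable}k^{t+1}_i$ on participating nodes and $h^{t+1}_i = h^t_i$ otherwise; unrolling and again taking expectations over participation gives a recursion with contraction factor tied to $b$ (which for \algname{\algorithmname-PAGE} carries the extra $\probpage$ factor, i.e.\ $b = \probpage\probavailable/(2-\probavailable)$) plus a $\widehat{L}^2 \Exp{\norm{x^{t+1}-x^t}^2}$ term and the $L_{\max}^2/B$ mini-batch term.

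Then I would choose $c_1, c_2$ so that in the telescoped sum $\sum_t (\Psi^{t+1} - \Psi^t)$ the $G^t$ and $H^t_i$ terms cancel (this fixes $c_1 \asymp 1/a$, $c_2 \asymp 1/b$ up to constants), and choose $\gamma$ small enough that the accumulated coefficient in front of $\Exp{\norm{x^{t+1}-x^t}^2}$ stays nonpositive — this is precisely where the stated stepsize bound comes from: the term under the square root is $\frac{48\omega(2\omega+1)}{n\probavailable^2}(\widehat{L}^2 + \frac{(1-\probpage)L_{\max}^2}{B}) + \frac{16}{n\probavailable^2\probpage}((1-\frac{\probpairaa}{\probavailable})\widehat{L}^2 + \frac{(1-\probpage)L_{\max}^2}{B})$, and requiring $\gamma^{-2}$ to dominate $L/\gamma$ plus this quantity gives the displayed $\gamma$. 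With $g^0_i = h^0_i = \nabla f_i(x^0)$ the initial error terms in $\Psi^0$ vanish, so $\Psi^0 = \Delta_0$; telescoping and dividing by $T$ yields $\frac{1}{T}\sum_{t=0}^{T-1}\Exp{\norm{\nabla f(x^t)}^2} \leq \frac{2\Delta_0}{\gamma T}$, which equals $\Exp{\norm{\nabla f(\widehat{x}^T)}^2}$ by the uniform choice of $\widehat{x}^T$.

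The main obstacle is the bookkeeping in the $G^{t+1}$ recursion: because participation couples the compressor randomness and the stochastic-gradient randomness, the tower-of-expectations must be taken in a specific order and one must be careful that $\Exp{m^{t+1}_i \mid \text{past}} = \Exp{\frac{1}{\probavailable}k^{t+1}_i - \frac{a}{\probavailable}(g^t_i - h^t_i) \mid \text{past}}$ reduces correctly and that the cross terms $\sum_{i\neq j}$ contribute exactly the $\probpairaa$-dependent variance (and nothing more, by compressor independence). Getting the constants $48$ and $16$ and the precise combination of $\omega$, $\probavailable$, $\probpairaa$, $\probpage$, $B$ right in this step — rather than the overall structure — is the delicate part; the rest is the now-routine \algname{PAGE}/\algname{DASHA} Lyapunov argument. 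I would also need the inequality $\probpairaa \leq \probavailable^2$ from \eqref{eq:partial_participation_constraint} only implicitly (to know $1 - \probpairaa/\probavailable \geq 1 - \probavailable \geq 0$ so the square root is real), and $\widehat{L} \leq L_{\max}$ to simplify some intermediate bounds.
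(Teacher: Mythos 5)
There is a genuine gap in your choice of Lyapunov function. You propose to track the single quantity $G^t \eqdef \Exp{\norm{g^t - \nabla f(x^t)}^2}$ together with $\frac{1}{n}\sum_i \Exp{\norm{h^t_i - \nabla f_i(x^t)}^2}$, and you claim a contraction of the form $G^{t+1} \leq (1-a+\text{small})\,G^t + \dots$. This recursion does not close. The conditional mean of the update satisfies
\begin{align*}
\ExpSub{t+1}{g^{t+1} - \nabla f(x^{t+1})} \;=\; (1-a)\left(g^t - h^t\right) + (1-b)\left(h^t - \nabla f(x^t)\right),
\end{align*}
a mixture of two components that contract at \emph{different} rates ($a = \frac{\probavailable}{2\omega+1}$ versus $b = \frac{\probpage\probavailable}{2-\probavailable}$, which are unrelated in general). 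There is no inequality of the form $\norm{(1-a)u + (1-b)v}^2 \leq (1-c)\norm{u+v}^2$ with a usable $c>0$ (take $u \approx -v$ to see the sum can be tiny while the mixture is large), so the bias term cannot be re-expressed in terms of $G^t$ alone. This is exactly why the paper splits $\norm{g^t - \nabla f(x^t)}^2 \leq 2\norm{g^t - h^t}^2 + 2\norm{h^t - \nabla f(x^t)}^2$ \emph{once}, inside the descent inequality, and then maintains \emph{four} separate correction terms: $\norm{g^t - h^t}^2$, $\frac{1}{n}\sum_i\norm{g^t_i - h^t_i}^2$, $\norm{h^t - \nabla f(x^t)}^2$, and $\frac{1}{n}\sum_i\norm{h^t_i - \nabla f_i(x^t)}^2$, each with its own contraction rate and coefficient (Lemma~\ref{lemma:main_lemma} plus Lemma~\ref{lemma:gradient_page}).

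A second, related omission: your Lyapunov function has no term for the per-node compression errors $\frac{1}{n}\sum_i\norm{g^t_i - h^t_i}^2$. These are unavoidable because the compressor variance is $\omega\norm{\frac{1}{\probavailable}k^{t+1}_i - \frac{a}{\probavailable}(g^t_i - h^t_i)}^2$, so bounding $\Exp{\norm{g^{t+1}-h^{t+1}}^2}$ (see \eqref{eq:compressor_global_error}) produces a term proportional to $\sum_i\norm{g^t_i - h^t_i}^2$ on the right-hand side, which must be absorbed by its own Lyapunov component with coefficient $\eta = \frac{\gamma((2\omega+1)\probavailable - \probpairaa)}{n\probavailable^2}$. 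The rest of your outline (descent lemma, unbiasedness of $\frac{1}{\probavailable}k_i$ under participation, the $(1-\probpairaa/\probavailable)$ variance term, the $L_{\max}^2/B$ mini-batch term, the $1/\probpage$ factor from the $b/\probpage$ rescaling, the stepsize condition via the $\norm{x^{t+1}-x^t}^2$ coefficient, and the vanishing initialization) matches the paper's argument, but without the finer four-term decomposition the central recursion cannot be telescoped. (Minor: the PAGE mini-batch in Algorithm~\ref{alg:main_algorithm:dasha_pp_page} is drawn \emph{with} replacement, not without.)
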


We now choose $\probpage$ to balance heavy full gradient and light mini-batch calculations. Let us define $\mathbbm{1}_{\probavailable} \eqdef \sqrt{1 - \frac{\probpairaa}{\probavailable}} \in [0, 1].$ Note that if $\probavailable = 1$ then $\probpairaa = 1$ and $\mathbbm{1}_{\probavailable} = 0.$
\begin{restatable}{corollary}{COROLLARYPAGE}
    \label{cor:mini_batch_oracle}
Let the assumptions from Theorem~\ref{theorem:page} hold and $\probpage = \nicefrac{B}{(m + B)}.$ 
Then \algname{\algorithmname-PAGE}
    needs 
    \begin{align}
      \label{eq:rate_dasha_pp_page}
      T &\eqdef \cO\Bigg(\frac{\Delta_0}{\varepsilon}\Bigg[L + \frac{\omega}{\probavailable\sqrt{n}}\left(\widehat{L} + \frac{L_{\max}}{\sqrt{B}}\right) + \frac{1}{\probavailable}\sqrt{\frac{m}{n}}\left(\frac{\mathbbm{1}_{\probavailable}\widehat{L}}{\sqrt{B}} + \frac{L_{\max}}{B}\right)\Bigg]\Bigg)
    \end{align}
    communication rounds to get an $\varepsilon$-solution and the expected number of gradient calculations per node equals $\cO\left(m + B T\right).$
\end{restatable}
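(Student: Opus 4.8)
The plan is to derive Corollary~\ref{cor:mini_batch_oracle} directly from Theorem~\ref{theorem:page} by plugging in the prescribed value $\probpage = \nicefrac{B}{(m+B)}$ and simplifying the stepsize bound. First I would observe that with this choice, $1 - \probpage = \nicefrac{m}{(m+B)} \le 1$, so that $\frac{(1-\probpage)L_{\max}^2}{B} \le \frac{L_{\max}^2}{B}$, and that $\nicefrac{1}{\probpage} = \nicefrac{(m+B)}{B} \le 1 + \nicefrac{m}{B}$, so $\frac{1}{\probpage B} \le \frac{1}{B} + \frac{m}{B^2}$. The number of communication rounds needed is $T = \cO(\Delta_0/(\gamma\varepsilon))$ since Theorem~\ref{theorem:page} gives $\Exp{\norm{\nabla f(\widehat{x}^T)}^2} \le \frac{2\Delta_0}{\gamma T}$; thus I only need to upper bound $\gamma^{-1}$, i.e., the right-hand side of the stepsize restriction.

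The core computation is to split the square root using $\sqrt{u+v} \le \sqrt{u} + \sqrt{v}$ (valid for $u,v \ge 0$) repeatedly, so that
\begin{align*}
\gamma^{-1} &\le L + \sqrt{\tfrac{48\omega(2\omega+1)}{n\probavailable^2}}\sqrt{\widehat{L}^2 + \tfrac{L_{\max}^2}{B}} + \sqrt{\tfrac{16}{n\probavailable^2\probpage}}\left(\sqrt{\left(1-\tfrac{\probpairaa}{\probavailable}\right)}\widehat{L} + \tfrac{L_{\max}}{\sqrt{B}}\right) \\
&\le L + \cO\left(\tfrac{\omega}{\probavailable\sqrt{n}}\right)\left(\widehat{L} + \tfrac{L_{\max}}{\sqrt{B}}\right) + \cO\left(\tfrac{1}{\probavailable\sqrt{n}}\right)\sqrt{\tfrac{m+B}{B}}\left(\mathbbm{1}_{\probavailable}\widehat{L} + \tfrac{L_{\max}}{\sqrt{B}}\right),
\end{align*}
using $2\omega+1 \le 3\max\{1,\omega\}$ absorbed into the constant for the first term (and noting $\omega \ge 0$ so $\sqrt{\omega(2\omega+1)} = \cO(\omega)$ when $\omega \ge 1$, while for $\omega < 1$ this term is dominated by the third; this edge case needs a remark but not real work). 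For the last term, $\sqrt{\tfrac{m+B}{B}} \le \sqrt{\tfrac{m}{B}} + 1$, and distributing over the parenthesis gives exactly $\tfrac{\mathbbm{1}_{\probavailable}\widehat{L}\sqrt{m}}{\sqrt{B}\sqrt{B}} = \tfrac{\mathbbm{1}_{\probavailable}\widehat{L}\sqrt{m}}{B}$-type cross terms; collecting the $\sqrt{m/n}$ factor out front and checking that the leftover terms $\tfrac{1}{\probavailable\sqrt{n}}(\mathbbm{1}_{\probavailable}\widehat{L} + L_{\max}/\sqrt{B})$ are each dominated by terms already present in the stated bound (e.g., the $\mathbbm{1}_{\probavailable}\widehat{L}/(\probavailable\sqrt{n})$ piece is $\le \omega\widehat{L}/(\probavailable\sqrt{n}) + \widehat{L}/(\probavailable\sqrt{n})$ — the latter needing $m/B \ge 1$, i.e., $B \le m$, which is the only regime of interest) yields \eqref{eq:rate_dasha_pp_page}. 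Finally, the gradient-count claim follows because each iteration costs $B$ stochastic gradients on a participating node, except on the rare full-gradient steps occurring with probability $\probpage$, each costing $m$; the expected cost per node per iteration is $\probpage \cdot m + (1-\probpage)\cdot 2B = \cO(B)$ by the choice $\probpage m = \nicefrac{mB}{(m+B)} \le B$, so over $T$ iterations the expectation is $\cO(BT)$, plus the $\cO(m)$ for the initial full gradient $g_i^0 = \nabla f_i(x^0)$.

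The only genuine obstacle is bookkeeping: making sure every term produced by splitting the nested square roots is matched to one of the four groups in \eqref{eq:rate_dasha_pp_page} and that no term is dropped that could dominate. In particular one must be careful that $\probpairaa/\probavailable$ (rather than $\probpairaa$) appears inside $\mathbbm{1}_{\probavailable}$, and that the factor $\tfrac{1-\probpage}{B}$ versus $\tfrac{1}{\probpage B}$ is tracked correctly so the $L_{\max}/B$ term (not $L_{\max}/\sqrt{B}$) emerges from the second summand under the root after multiplying by $\sqrt{m/B}$. I would also state explicitly the mild assumption $B \le m$ (otherwise one simply computes a full gradient and the statement is trivial), since several dominations above rely on $m/B \ge 1$; this is standard for \algname{PAGE}-type results and can be noted in one sentence. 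No convexity, smoothness inequality, or probabilistic argument is needed beyond what Theorem~\ref{theorem:page} already provides — this is purely an algebraic specialization and an expectation computation.
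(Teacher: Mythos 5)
Your proposal is correct and follows essentially the same route as the paper: the corollary is obtained by substituting $\probpage = \nicefrac{B}{(m+B)}$ into the stepsize bound of Theorem~\ref{theorem:page}, splitting the nested square roots via $\sqrt{u+v}\le\sqrt{u}+\sqrt{v}$ so that $\nicefrac{1}{\sqrt{\probpage}}$ contributes the $\sqrt{\nicefrac{m}{B}}$ factor, and noting that the expected per-round gradient cost is $\probpage m + (1-\probpage)B = \nicefrac{2mB}{(m+B)} \le 2B$ plus the $\cO(m)$ initialization. The paper's own proof is a terser version of the same algebra (it silently writes $\omega(2\omega+1)$ as $\omega^2$ inside the $\cO$ and implicitly assumes $B\le m$), so the edge cases you flag are consistent with, not a departure from, the intended argument.
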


The convergence rate the rate of the current state-of-the-art method \algname{DASHA-PAGE} without partial participation equals
$\cO\left(\frac{\Delta_0}{\varepsilon}\left[L + \frac{\omega}{\sqrt{n}}\left(\widehat{L} + \frac{L_{\max}}{\sqrt{B}}\right) + \sqrt{\frac{m}{n}}\frac{L_{\max}}{B}\right]\right).$ Let us closer compare it with \eqref{eq:rate_dasha_pp_page}. As expected, we see that the second term w.r.t. $\omega$ degenerates up to $\nicefrac{1}{\probavailable}$. Surprisingly, the third term w.r.t. $\sqrt{\nicefrac{m}{n}}$ can degenerate up to $\nicefrac{\sqrt{B}}{\probavailable}$ when $\widehat{L} \approx L_{\max}.$ Hence, in order to keep degeneration up to $\nicefrac{1}{\probavailable},$ one should take the batch size $B = \cO\left(\nicefrac{L_{\max}^2}{\widehat{L}^2}\right).$ This interesting effect we analyze separately in Section~\ref{sec:partial_participation_sampling}. The fact that the degeneration is up to $\nicefrac{1}{\probavailable}$ we check numerically in Section~\ref{sec:experiments}.

In the following corollary, we consider Rand$K$ compressors\footnote{The choice of the compressor is driven by simplicity, and the following analysis can be used for other unbiased compressors.} (see Definition~\ref{def:rand_k}) and show that with the particular choice of parameters, up to the Lipschitz constants factors, \algname{\algorithmname-PAGE} gets the optimal oracle complexity and SOTA communication complexity. Indeed, comparing the following result with \citep[Corollary 6.6]{tyurin2022dasha}, one can see that we get the degeneration up to $\nicefrac{1}{\probavailable}$ factor, which is expected in the partial participation setting. 
Note that the complexities improve with the number of workers $n.$

\begin{restatable}{corollary}{COROLLARYPAGERANDK}
  \label{cor:mini_batch_oracle:randk}
  Suppose that assumptions of Corollary~\ref{cor:mini_batch_oracle} hold, $B \leq \min\left\{\frac{1}{\probavailable}\sqrt{\frac{m}{n}},\frac{L_{\max}^2}{\mathbbm{1}_{\probavailable}^2 \widehat{L}^2}\right\}$\footnote{If $\mathbbm{1}_{\probavailable} = 0,$ then $\frac{L_{\sigma}^2}{\mathbbm{1}_{\probavailable}^2 \widehat{L}^2} = +\infty$}, and we use the unbiased compressor Rand$K$ with $K = \Theta\left(\nicefrac{B d}{\sqrt{m}}\right).$ Then
  the communication complexity of Algorithm~\ref{alg:main_algorithm} is
  $
      \cO\left(d + \frac{L_{\max} \Delta_0 d}{\probavailable \varepsilon \sqrt{n}}\right),
  $
and the expected number of gradient calculations per node equals
  $
      \cO\left(m + \frac{L_{\max} \Delta_0 \sqrt{m}}{\probavailable \varepsilon \sqrt{n}} \right).
  $
\end{restatable}
The convergence rate of \algname{\algorithmname-FINITE-MVR} is provided in Section~\ref{sec:proof_finite_mvr}. 

\subsection{Stochastic Setting}

\label{sec:stochastic_setting}

We define $h^t \eqdef \frac{1}{n}\sum_{i=1}^n h^t_i$.

\begin{restatable}{theorem}{CONVERGENCEMVR}
  \label{theorem:stochastic}
  Suppose that Assumptions \ref{ass:lower_bound}, \ref{ass:lipschitz_constant}, \ref{ass:nodes_lipschitz_constant}, \ref{ass:stochastic_unbiased_and_variance_bounded}, \ref{ass:mean_square_smoothness}, \ref{ass:compressors} and \ref{ass:partial_participation} hold. Let us take $a = \frac{\probavailable}{2\omega + 1}$, $b~\in~\left(0, \frac{\probavailable}{2 - \probavailable}\right]$, 
  {\scriptsize $\gamma \leq \Bigg(L + \left[\frac{48 \omega (2 \omega + 1)}{n \probavailable^2} \left(\widehat{L}^2 + \frac{(1 - b)^2 L_{\sigma}^2}{B}\right) + \frac{12}{n \probavailable b}\left(\left(1 - \frac{\probpairaa}{\probavailable}\right) \widehat{L}^2 + \frac{(1 - b)^2 L_{\sigma}^2}{B}\right)\right]^{\nicefrac{1}{2}}\Bigg)^{-1},$}
  and $g^{0}_i = h^{0}_i$ for all $i \in [n]$
  in Algorithm~\ref{alg:main_algorithm} \algname{(\algorithmname-MVR)}.
  Then 
{\scriptsize\begin{align*}
  &\Exp{\norm{\nabla f(\widehat{x}^T)}^2} \leq \frac{1}{T}\vast[\frac{2 \Delta_0}{\gamma} + \frac{2}{b} \norm{h^{0} - \nabla f(x^{0})}^2 + \left(\frac{32 b \omega (2 \omega + 1)}{n \probavailable^2} + \frac{4 \left(1 - \frac{\probpairaa}{\probavailable}\right)}{n \probavailable}\right)\left(\frac{1}{n}\sum_{i=1}^n\norm{h^{0}_i - \nabla f_i(x^{0})}^2\right)\vast] \\
  &+ \left(\frac{48 b^2 \omega (2 \omega + 1)}{\probavailable^2} + \frac{12 b}{\probavailable}\right) \frac{\sigma^2}{n B}.
\end{align*}}
\end{restatable}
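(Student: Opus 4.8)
The plan is to adapt the descent-lemma-plus-Lyapunov-function scheme used for \algname{DASHA}, enlarging the potential so that it tracks the extra randomness coming from partial participation. \textbf{Step 1 (descent lemma).} Using $L$-smoothness of $f$ (Assumption~\ref{ass:lipschitz_constant}) and $x^{t+1} = x^t - \gamma g^t$ together with the polarization identity, one gets
\begin{align*}
  \Exp{f(x^{t+1})} \leq \Exp{f(x^t)} - \frac{\gamma}{2}\Exp{\sqnorm{\nabla f(x^t)}} - \frac{\gamma}{2}(1 - \gamma L)\Exp{\sqnorm{g^t}} + \frac{\gamma}{2}\Exp{\sqnorm{g^t - \nabla f(x^t)}}.
\end{align*}
Writing $g^t - \nabla f(x^t) = (g^t - h^t) + (h^t - \nabla f(x^t))$ and using $\sqnorm{g^t - h^t} \leq \frac1n\sum_{i=1}^n \sqnorm{g^t_i - h^t_i}$ (convexity), the problem reduces to controlling $E_1^t \eqdef \Exp{\sqnorm{g^t - h^t}}$ (compression/communication lag), $E_2^t \eqdef \Exp{\sqnorm{h^t - \nabla f(x^t)}}$ (aggregated MVR-shift error), and $E_3^t \eqdef \frac1n\sum_{i=1}^n \Exp{\sqnorm{h^t_i - \nabla f_i(x^t)}}$ (per-node shift error).

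\textbf{Step 2 (recursions for $E_1,E_2,E_3$).} For each error term I would condition on $x^t,x^{t+1}$ and all $g^t_i,h^t_i$, then take expectations in the fixed nested order: stochastic samples $\xi^{t+1}_{ij}$ (Assumptions~\ref{ass:stochastic_unbiased_and_variance_bounded},~\ref{ass:mean_square_smoothness}), then participation indicators (Assumption~\ref{ass:partial_participation}), then compressors (Assumption~\ref{ass:compressors}), using a bias--variance split at each layer. The structural facts to exploit are: (i) the scaling by $1/\probavailable$ makes the $h$-update conditionally unbiased for the MVR target, so $\Exp{h^{t+1}_i}-\nabla f_i(x^{t+1}) = (1-b)(h^t_i - \nabla f_i(x^t))$, a contraction of $E_2,E_3$ with factor $(1-b)$; (ii) the choice $a = \probavailable/(2\omega+1)$ gives $\Exp{g^{t+1}_i - h^{t+1}_i} = (1-a)(g^t_i - h^t_i)$ while the compressor variance is $\leq \omega\norm{\cdot}^2$, yielding a contraction of $E_1$ with rate $\asymp a$; (iii) the compressors are independent across nodes, so their variance contribution to $E_1$ is damped by $1/n$, and likewise the independent stochastic noises give the $\sigma^2/(nB)$ (rather than $\sigma^2/B$) floor in $E_2$ -- whereas the participation indicators are controlled only pairwise through $\probpairaa \leq \probavailable^2$, which is exactly what produces the $(1-\probpairaa/\probavailable)$ factors in the cross terms of $E_2$ and in the coefficient of $E_3$. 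Each recursion then picks up (a) a multiple of $\Exp{\sqnorm{x^{t+1}-x^t}} = \gamma^2\Exp{\sqnorm{g^t}}$ via the smoothness constants $\widehat{L}^2$ and $L_\sigma^2/B$, (b) feedback from the other error terms -- in particular $E_1$ needs $E_3$, since $\norm{k^{t+1}_i}^2 \lesssim \norm{x^{t+1}-x^t}^2 + \norm{h^t_i - \nabla f_i(x^t)}^2 + \sigma^2/B$ -- and (c) the irreducible stochastic-gradient floor.

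\textbf{Step 3 (Lyapunov function and telescoping).} I would set $\Psi^t \eqdef f(x^t) - f^* + c_1\gamma E_1^t + c_2\gamma E_2^t + c_3\gamma E_3^t$ with $c_1,c_2,c_3$ depending on $\omega,n,\probavailable,\probpairaa,b$, chosen so that the contractions and cross-feedback from Step~2 yield
\begin{align*}
  \Psi^{t+1} \leq \Psi^t - \frac{\gamma}{2}\Exp{\sqnorm{\nabla f(x^t)}} - \frac{\gamma}{2}\left(1 - \gamma L - \gamma^2 S\right)\Exp{\sqnorm{g^t}} + \gamma\cdot(\mathrm{floor}),
\end{align*}
where $S$ is, up to absolute constants, the quantity under the square root in the theorem's stepsize bound, namely $\frac{48\omega(2\omega+1)}{n\probavailable^2}(\widehat{L}^2 + \frac{(1-b)^2 L_\sigma^2}{B}) + \frac{12}{n\probavailable b}((1-\frac{\probpairaa}{\probavailable})\widehat{L}^2 + \frac{(1-b)^2 L_\sigma^2}{B})$; the restriction $\gamma \leq (L + \sqrt{S})^{-1}$ is then precisely $\gamma L + \gamma^2 S \leq 1$, which makes the $\Exp{\sqnorm{g^t}}$ term vanish. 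Summing over $t=0,\dots,T-1$, dropping the nonnegative $\Psi^T$ and $\Exp{\sqnorm{g^t}}$ terms, dividing by $\gamma T/2$, and using $\frac1T\sum_{t=0}^{T-1}\Exp{\sqnorm{\nabla f(x^t)}} = \Exp{\sqnorm{\nabla f(\widehat{x}^T)}}$ gives the claim: $\Psi^0$ contributes the $\frac{2\Delta_0}{\gamma}$, $\frac{2}{b}\sqnorm{h^0 - \nabla f(x^0)}$ and $E_3^0$ initial terms (from $c_2\gamma E_2^0$ and $c_3\gamma E_3^0$), and the floor contributes the final $(\frac{48b^2\omega(2\omega+1)}{\probavailable^2} + \frac{12b}{\probavailable})\frac{\sigma^2}{nB}$ term.

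\textbf{Main obstacle.} The crux is Step~2: because the compressor, stochastic-gradient and participation randomnesses are coupled within a single round (unlike in \algname{DASHA}, where the first two are independent), the conditional-expectation bookkeeping must be carried out in a fixed nested order and each layer's variance re-expressed in terms of quantities the potential already tracks. The genuinely delicate point is to separate the part of the variance that is damped by $1/n$ (independent compressors and independent stochastic noise) from the part controlled only pairwise (the participation indicators, hence the $(1-\probpairaa/\probavailable)$ factors), and then to pick $c_1,c_2,c_3$ so that all cross-feedback between $E_1,E_2,E_3$ closes under exactly the advertised stepsize.
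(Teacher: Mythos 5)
Your overall scheme --- the descent lemma, per-quantity contraction recursions obtained by nested conditional expectations over samples, participation and compressors, a weighted Lyapunov function, and the stepsize condition $\gamma \leq (L+\sqrt{S})^{-1}$ used to absorb the coefficient of $\Exp{\norm{x^{t+1}-x^t}^2}$ --- is exactly the route the paper takes (Lemma~\ref{lemma:main_lemma}, Lemma~\ref{lemma:gradient_mvr}, Lemma~\ref{lemma:gamma} and Lemma~\ref{lemma:good_recursion}). Your identification of where the $\left(1-\frac{\probpairaa}{\probavailable}\right)$ factors come from (pairwise-only control of the participation indicators via the sampling lemma, versus the $1/n$ damping from independence of compressors and stochastic noise), of the feedback $\norm{k^{t+1}_i}^2 \lesssim \norm{x^{t+1}-x^t}^2 + \norm{h^t_i-\nabla f_i(x^t)}^2 + \sigma^2/B$, and of the origin of the initial terms and of the $\sigma^2/(nB)$ floor all match the paper's mechanism.

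The concrete gap is in the choice of potential. You track three error quantities, $E_1^t=\Exp{\norm{g^t-h^t}^2}$, $E_2^t=\Exp{\norm{h^t-\nabla f(x^t)}^2}$ and $E_3^t=\frac1n\sum_i\Exp{\norm{h^t_i-\nabla f_i(x^t)}^2}$, but the one-step recursion for the \emph{aggregated} compression error necessarily feeds back on the \emph{per-node} compression errors: taking expectations over compressors and participation gives (see \eqref{eq:compressor_global_error})
\begin{align*}
\Exp{\norm{g^{t+1}-h^{t+1}}^2} \leq \frac{2\omega}{n^2\probavailable}\sum_{i=1}^n\norm{k^{t+1}_i}^2 + \frac{a^2\left(\left(2\omega+1\right)\probavailable-\probpairaa\right)}{n^2\probavailable^2}\sum_{i=1}^n\norm{g^t_i-h^t_i}^2 + (1-a)^2\norm{g^t-h^t}^2,
\end{align*}
and the middle term involves $\frac1n\sum_i\norm{g^t_i-h^t_i}^2$, which is \emph{not} dominated by $E_1^t$ --- the convexity inequality $\norm{g^t-h^t}^2\leq\frac1n\sum_i\norm{g^t_i-h^t_i}^2$ that you invoke goes in the opposite direction --- nor by $E_2^t$ or $E_3^t$. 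So the three-term potential does not close. Replacing $E_1^t$ by the per-node average throughout (which your convexity remark would permit in the descent step) also fails: it forfeits the $1/n$ damping of the compressor variance in the aggregated recursion and yields an $\omega$- rather than $\nicefrac{\omega}{\sqrt{n}}$-dependence, hence a strictly smaller admissible stepsize than the theorem claims. The fix is what the paper does: carry a fourth term $\frac1n\sum_i\Exp{\norm{g^t_i-h^t_i}^2}$ in the Lyapunov function with coefficient $\eta=\gamma(\left(2\omega+1\right)\probavailable-\probpairaa)/(n\probavailable^2)$, whose own recursion \eqref{eq:compressor_local_error} contracts at rate $a$ and feeds only on $\norm{k^{t+1}_i}^2$; this term vanishes at $t=0$ because $g^0_i=h^0_i$. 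With that extra term the rest of your plan goes through as described.
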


In the next corollary, we choose momentum $b$ and initialize vectors $h^{0}_i$ to get $\varepsilon$-solution. 

\begin{restatable}{corollary}{COROLLARYSTOCHASTIC}
  \label{cor:stochastic}
  Suppose that assumptions from Theorem~\ref{theorem:stochastic} hold, momentum $b = \Theta\left(\min \left\{\ \frac{\probavailable}{\omega}\sqrt{\frac{n \varepsilon B}{\sigma^2}}, \frac{\probavailable n \varepsilon B}{\sigma^2}\right\}\right),$ $\frac{\sigma^2}{n \varepsilon B} \geq 1,$
  and $h^{0}_i = \frac{1}{B_{\textnormal{init}}} \sum_{k = 1}^{B_{\textnormal{init}}} \nabla f_i(x^0; \xi^0_{ik})$ for all $i \in [n],$ and batch size $B_{\textnormal{init}} = \Theta\left(\frac{\sqrt{\probavailable} B}{b}\right),$ then Algorithm~\ref{alg:main_algorithm} \algname{(\algorithmname-MVR)} needs
  \begin{align*}&T \eqdef \cO\Bigg(\frac{\Delta_0}{\varepsilon}\Bigg[L + \frac{\omega}{\probavailable\sqrt{n}}\left(\widehat{L} + \frac{L_{\sigma}}{\sqrt{B}}\right) + \frac{\sigma}{\probavailable \sqrt{\varepsilon} n}\left(\frac{\mathbbm{1}_{\probavailable}\widehat{L}}{\sqrt{B}} + \frac{L_\sigma}{B}\right)\Bigg] + \frac{\sigma^2}{\sqrt{\probavailable} n \varepsilon B}\Bigg)\end{align*}
  communication rounds to get an $\varepsilon$-solution and the number of stochastic gradient calculations per node equals $\cO(B_{\textnormal{init}} + BT).$
\end{restatable}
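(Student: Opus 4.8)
The plan is to specialize Theorem~\ref{theorem:stochastic} to the stated parameters and read off the number of rounds. Write its guarantee as $\Exp{\norm{\nabla f(\widehat x^T)}^2}\le \tfrac1T S_0 + R$, where $R=\left(\tfrac{48 b^2\omega(2\omega+1)}{\probavailable^2}+\tfrac{12b}{\probavailable}\right)\tfrac{\sigma^2}{nB}$ is the non-vanishing residual and $S_0=\tfrac{2\Delta_0}{\gamma}+\tfrac2b\norm{h^0-\nabla f(x^0)}^2+\left(\tfrac{32 b\omega(2\omega+1)}{n\probavailable^2}+\tfrac{4\mathbbm{1}_{\probavailable}^2}{n\probavailable}\right)\!\left(\tfrac1n\sum_i\norm{h^0_i-\nabla f_i(x^0)}^2\right)$. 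I would first impose $R\le\varepsilon/2$: requiring each of its two summands to be $\lesssim\varepsilon$ and using $\omega(2\omega+1)=\cO((\omega+1)^2)$ yields $b\lesssim\tfrac{\probavailable}{\omega}\sqrt{\tfrac{nB\varepsilon}{\sigma^2}}$ and $b\lesssim\tfrac{\probavailable nB\varepsilon}{\sigma^2}$, which is precisely why $b$ is chosen as $\Theta$ of the minimum of the two; since $\tfrac{\sigma^2}{nB\varepsilon}\ge 1$ this $b$ also lies in the admissible window $\left(0,\tfrac{\probavailable}{2-\probavailable}\right]$ for a small enough constant.

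Next I would estimate the initialization error. Because each $h^0_i$ is the average of $B_{\textnormal{init}}$ i.i.d.\ unbiased stochastic gradients, Assumption~\ref{ass:stochastic_unbiased_and_variance_bounded} gives $\Exp{\norm{h^0_i-\nabla f_i(x^0)}^2}\le\tfrac{\sigma^2}{B_{\textnormal{init}}}$, and independence across $i$ upgrades the average to $\Exp{\norm{h^0-\nabla f(x^0)}^2}\le\tfrac{\sigma^2}{nB_{\textnormal{init}}}$; a tower-property argument (apply the theorem conditionally on $h^0$, then take expectation) handles the randomness of $h^0$. Plugging $B_{\textnormal{init}}=\Theta\!\left(\tfrac{\sqrt{\probavailable}B}{b}\right)$ into $S_0$, the term $\tfrac1T\cdot\tfrac2b\Exp{\norm{h^0-\nabla f(x^0)}^2}$ becomes $\Theta\!\left(\tfrac{\sigma^2}{\sqrt{\probavailable}nB}\right)/T$, which is what produces the additive term $\tfrac{\sigma^2}{\sqrt{\probavailable}nB\varepsilon}$ in the bound on $T$; the term carrying $\tfrac1n\sum_i\Exp{\norm{h^0_i-\nabla f_i(x^0)}^2}$, after substituting the value of $b$, is $\cO(\varepsilon/\sqrt{\probavailable})/T$ and is absorbed because $\tfrac{\sigma^2}{\sqrt{\probavailable}nB\varepsilon}\ge\tfrac1{\sqrt{\probavailable}}$.

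It then remains to bound $\tfrac{2\Delta_0}{\gamma\varepsilon}$. Using $\tfrac1\gamma=L+\sqrt{T_1+T_2}$ with $T_1,T_2$ the two summands under the root in the stepsize, $\sqrt{x+y}\le\sqrt x+\sqrt y$, and $(1-b)^2\le1$, the contribution of $L$ and of $\sqrt{T_1}$ gives the $L$ term and the $\tfrac{\omega}{\probavailable\sqrt n}\big(\widehat L+\tfrac{L_\sigma}{\sqrt B}\big)$ group. The delicate pieces come from $\sqrt{T_2}\le\sqrt{\tfrac{12}{n\probavailable b}}\,\mathbbm{1}_{\probavailable}\widehat L+\sqrt{\tfrac{12}{n\probavailable bB}}\,L_\sigma$: here I substitute the chosen $b$ and argue in the two branches of the defining minimum. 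When $b=\Theta\!\left(\tfrac{\probavailable nB\varepsilon}{\sigma^2}\right)$, these collapse exactly to $\tfrac{\sigma}{\probavailable\sqrt\varepsilon n}\cdot\tfrac{\mathbbm{1}_{\probavailable}\widehat L}{\sqrt B}$ and $\tfrac{\sigma}{\probavailable\sqrt\varepsilon n}\cdot\tfrac{L_\sigma}{B}$; when $b=\Theta\!\left(\tfrac{\probavailable}{\omega}\sqrt{\tfrac{nB\varepsilon}{\sigma^2}}\right)$, the inequality $\tfrac1\omega\sqrt{\tfrac{\sigma^2}{nB\varepsilon}}\le1$ — which characterizes exactly that branch — shows these pieces are dominated by $\tfrac{\omega}{\probavailable\sqrt n}\widehat L$ and $\tfrac{\omega}{\probavailable\sqrt n}\tfrac{L_\sigma}{\sqrt B}$. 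Summing the requirements $T\ge\tfrac2\varepsilon S_0$ and $R\le\tfrac\varepsilon2$ and collecting terms gives the displayed bound on $T$; the oracle count is then immediate, $B_{\textnormal{init}}$ stochastic gradients at initialization and at most $B$ per round over $T$ rounds, hence $\cO(B_{\textnormal{init}}+BT)$ in expectation. The main obstacle is exactly the regime-splitting bookkeeping for the $\mathbbm{1}_{\probavailable}$-term, since that is the only place the coupling between partial-participation variance and stochastic-gradient variance, together with the hypothesis $\tfrac{\sigma^2}{nB\varepsilon}\ge1$, is genuinely needed; the rest is substitution and $\sqrt{x+y}\le\sqrt x+\sqrt y$.
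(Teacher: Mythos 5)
Your proposal is correct and follows essentially the same route as the paper's proof: specialize Theorem~\ref{theorem:stochastic}, pick $b$ so the residual $\left(\tfrac{48 b^2\omega(2\omega+1)}{\probavailable^2}+\tfrac{12b}{\probavailable}\right)\tfrac{\sigma^2}{nB}=\Theta(\varepsilon)$, bound the initialization terms by $\tfrac{\sigma^2}{nB_{\textnormal{init}}}$ and $\tfrac{\sigma^2}{B_{\textnormal{init}}}$, substitute $B_{\textnormal{init}}=\Theta(\sqrt{\probavailable}B/b)$, and absorb the leftover $\cO(1/\sqrt{\probavailable})$ using $\tfrac{\sigma^2}{n\varepsilon B}\ge 1$. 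Your explicit two-branch case analysis for the $\sqrt{\tfrac{12}{n\probavailable b}}$ factor is just an unpacked version of the paper's bound $\tfrac1b=\Theta\!\left(\max\left\{\tfrac{\omega}{\probavailable}\sqrt{\tfrac{\sigma^2}{n\varepsilon B}},\tfrac{\sigma^2}{\probavailable n\varepsilon B}\right\}\right)\le\Theta\!\left(\max\left\{\tfrac{\omega^2}{\probavailable},\tfrac{\sigma^2}{\probavailable n\varepsilon B}\right\}\right)$, so the two arguments coincide.
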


The convergence rate of the \algname{DASHA-SYNC-MVR}, the state-of-the-art method without partial participation, equals $\cO\left(\frac{\Delta_0}{\varepsilon}\left[L + \frac{\omega}{\sqrt{n}}\left(\widehat{L} + \frac{L_{\sigma}}{\sqrt{B}}\right) + \frac{\sigma}{\sqrt{\varepsilon} n}\frac{L_\sigma}{B}\right] + \frac{\sigma^2}{n \varepsilon B}\right).$ Similar to Section~\ref{sec:finite_sum_setting}, we see that in the regimes when $\widehat{L} \approx L_{\sigma}$ the third term w.r.t. $\nicefrac{1}{\varepsilon^{3/2}}$ can degenerate up to $\nicefrac{\sqrt{B}}{\probavailable}.$ However, if we take $B = \cO\left(\nicefrac{L_{\sigma}^2}{\widehat{L}^2}\right),$ then the degeneration of the third term will be up to $\nicefrac{1}{\probavailable}.$ This effect we analyze in Section~\ref{sec:partial_participation_sampling}. The fact that the degeneration is up to $\nicefrac{1}{\probavailable}$ we check numerically in Section~\ref{sec:experiments}.

In the following corollary, we consider Rand$K$ compressors (see Definition~\ref{def:rand_k}) and show that with the particular choice of parameters, up to the Lipschitz constants factors, \algname{\algorithmname-MVR} gets the optimal oracle complexity and SOTA communication complexity of \algname{DASHA-SYNC-MVR} method. Indeed, comparing the following result with \citep[Corollary 6.9]{tyurin2022dasha}, one can see that we get the degeneration up to $\nicefrac{1}{\probavailable}$ factor, which is expected in the partial participation setting. Note that the complexities improve with the number of workers $n.$

\begin{restatable}{corollary}{COROLLARYSTOCHASTICRANDK}
  \label{cor:stochastic:randk}
  Suppose that assumptions of Corollary~\ref{cor:stochastic} hold, batch size $B \leq \min\left\{\frac{\sigma}{\probavailable\sqrt{\varepsilon} n}, \frac{L_{\sigma}^2}{\mathbbm{1}_{\probavailable}^2 \widehat{L}^2}\right\},$ we take Rand$K$ compressors with $K = \Theta\left(\frac{B d \sqrt{\varepsilon n}}{\sigma}\right).$ Then
  the communication complexity equals 
  $
      \cO\left(\frac{d \sigma}{\sqrt{\probavailable} \sqrt{n \varepsilon}} + \frac{L_{\sigma} \Delta_0 d}{\probavailable \sqrt{n} \varepsilon}\right),
  $
  and the expected number of stochastic gradient calculations per node equals
  $
      \cO\left(\frac{\sigma^2}{\sqrt{\probavailable} n \varepsilon} + \frac{L_{\sigma} \Delta_0 \sigma}{\probavailable \varepsilon^{\nicefrac{3}{2}} n}\right).
  $
\end{restatable}

We are aware that the initial batch size $B_{\textnormal{init}}$ can be suboptimal w.r.t.\,$\omega$ in \algname{\algorithmname-MVR} in some regimes (see also \citep{tyurin2022dasha}). This is a side effect of mixing the variance reduction of stochastic gradients and compression. However, Corollary~\ref{cor:stochastic:randk} reveals that we can escape these regimes by choosing the parameter $K$ of Rand$K$ compressors in a particular way. To get the complete picture, we analyze the same phenomenon under P\L\, condition (see Section~\ref{sec:pl_condition}) and provide a new method \algname{\algorithmname-SYNC-MVR} (see Section~\ref{sec:main_algorithm_mvr_sync}).

\subsection*{Acknowledgements}
This work of P. Richt\'{a}rik and A. Tyurin was supported by the KAUST Baseline Research Scheme (KAUST BRF) and the KAUST Extreme Computing Research Center (KAUST ECRC), and the work of P. Richt\'{a}rik was supported by the SDAIA-KAUST Center of Excellence in Data Science and Artificial Intelligence (SDAIA-KAUST AI).

\bibliography{neurips_2023}
\bibliographystyle{apalike}

\clearpage
\newpage
\appendix
\onecolumn
\tableofcontents
\newpage

\section{Numerical Verification of Theoretical Dependencies}
\label{sec:experiments}

\usetikzlibrary{decorations.pathreplacing}
\begin{figure}[h]
  \centering
  \begin{subfigure}{.5\textwidth}
    \centering
    \begin{tikzpicture}
      \node[anchor=south west,inner sep=0] at (0,0) {\includegraphics[width=\textwidth]{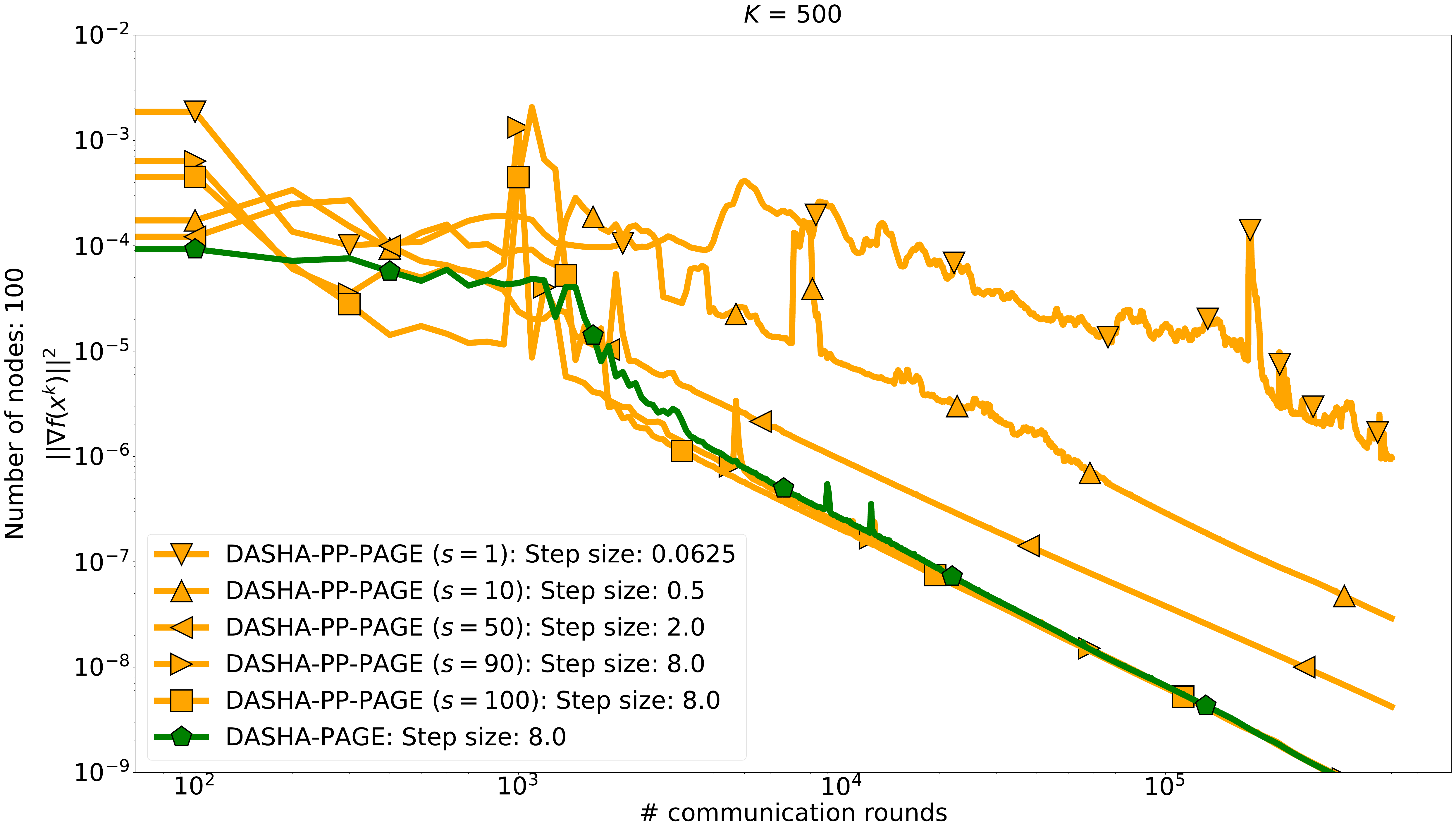}};
    \end{tikzpicture}
    \caption{Finite-sum setting, $K = 500$ in Rand$K$.}
    \label{fig:real_sim_finite_sum}
  \end{subfigure}\hfill
  \begin{subfigure}{.5\textwidth}
    \centering
    \vspace{0.3cm}
    \begin{tikzpicture}
      \node[anchor=south west,inner sep=0] at (0,0) {\includegraphics[width=\textwidth]{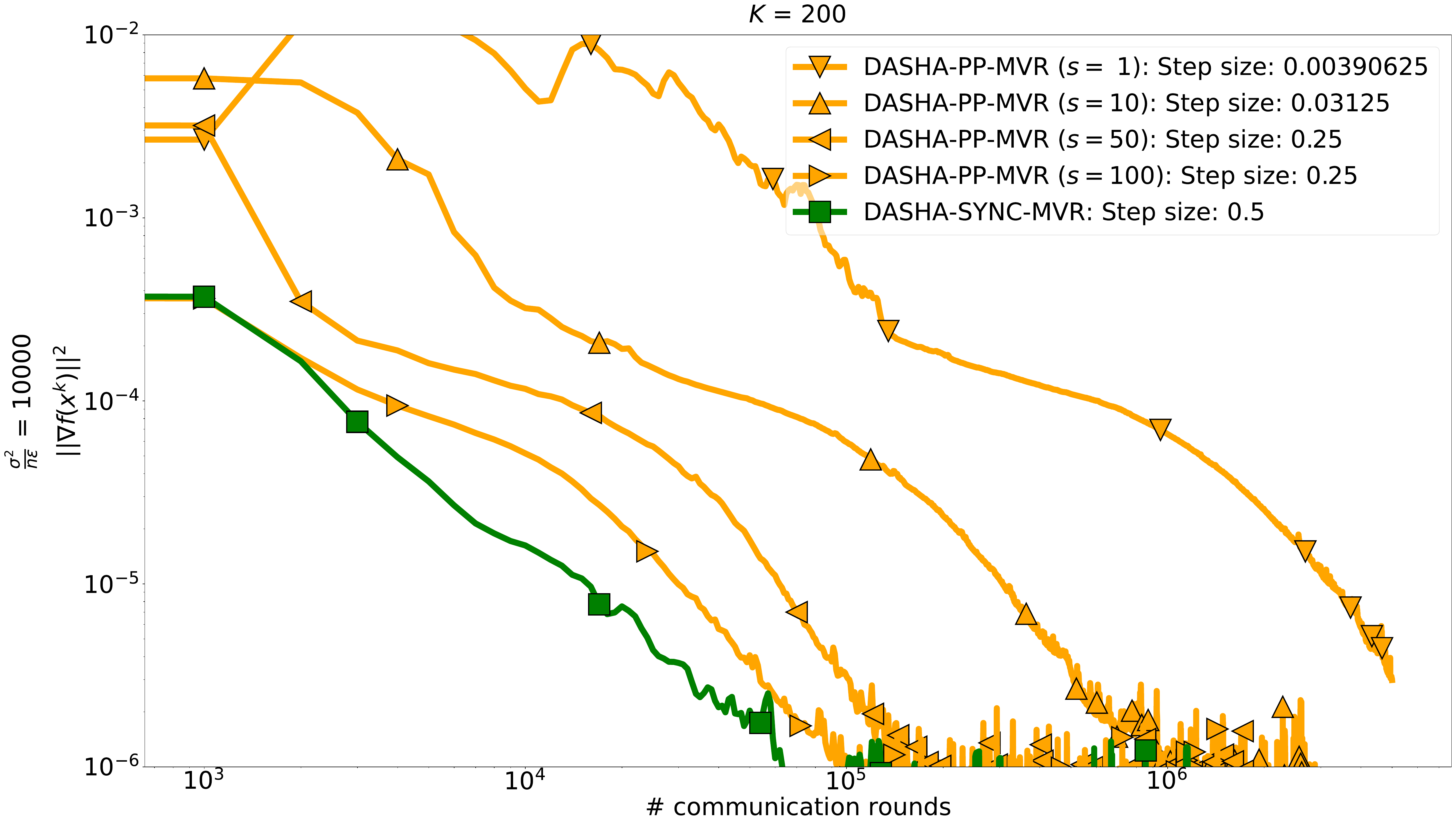}};
    \end{tikzpicture}
    \caption{Stochastic setting, $\nicefrac{\sigma^2}{n \varepsilon B} = 10000,$ and $K = 200$ in Rand$K$.}
    \label{fig:real_sim_stochastic}
  \end{subfigure}
  \caption{Classification task with the \textit{real-sim} dataset.}
  \label{fig:test}
  \end{figure}

Our main goal is to verify the dependeces from the theory. We compare \algname{\algorithmname} with \algname{DASHA}. Clearly, \algname{\algorithmname} can not generally perform better than \algname{DASHA}.
In different settings, we verify that the bigger $\probavailable$, the closer \algname{\algorithmname} is to \algname{DASHA}, i.e., \algname{\algorithmname} converges no slower than $\nicefrac{1}{\probavailable}$ times.

In all experiments, we take the \textit{real-sim} dataset with dimension $d = \num[group-separator={,}]{20958}$ and the number of samples equals $\num[group-separator={,}]{72309}$ from LIBSVM datasets \citep{chang2011libsvm} (under the 3-clause BSD license), and randomly split the dataset between $n = 100$ nodes equally, ignoring residual samples.
In the finite-sum setting, we solve a classification problem with functions
\begin{align}
  f_i(x) \eqdef \frac{1}{m}\sum_{j=1}^m \left(1 - \frac{1}{1 + \exp(y_{ij} a_{ij}^\top x)}\right)^2,
  \label{eq:func_finite}
\end{align}
where $a_{ij} \in \R^{d}$ is the feature vector of a sample on the $i$\textsuperscript{th} node, $y_{ij} \in \{-1, 1\}$ is the corresponding label,  and $m$ is the number of samples on the $i$\textsuperscript{th} node for all $i \in [n].$
In the stochastic setting, we consider functions
\begin{align}
  f_i(x_1, x_2) \eqdef {\rm E}_{j \sim [m]}\vast[-\log\left(\frac{\exp\left(a_{ij}^\top x_{y_{ij}}\right)}{\sum_{y \in \{1, 2\}} \exp\left(a_{ij}^\top x_{y}\right)}\right) + \lambda \sum_{y \in \{1, 2\}} \sum_{k = 1}^d \frac{\{x_{y}\}_k^2}{1 + \{x_{y}\}_k^2}\vast],
  \label{eq:func_stoch}
\end{align}
where $x_1, x_2 \in \R^{d}$, $\{\cdot\}_k$ is an indexing operation,
$a_{ij} \in \R^{d}$ is a feature of a sample on the $i$\textsuperscript{th} node, $y_{ij} \in \{1, 2\}$ is a corresponding label, $m$ is the number of samples located on the $i$\textsuperscript{th} node, constant $\lambda = 0.001$ for all $i \in [n].$

The code was written in Python 3.6.8 using PyTorch 1.9 \citep{paszke2019pytorch}. A distributed environment was emulated on a machine with Intel(R) Xeon(R) Gold 6226R CPU @ 2.90GHz and 64 cores.

We use the standard setting in experiments\footnote{Code: https://github.com/mysteryresearcher/dasha-partial-participation} where all parameters except step sizes are taken as suggested in theory. Step sizes are finetuned from a set~$\{2^{i}\,|\,i \in [-10, 10]\}.$ We emulate the partial participation setting using $s$-nice sampling with the number of nodes $n = 100$. We consider the Rand$K$ compressor and take the batch size $B = 1$. 
We plot the relation between communication rounds and values of the norm of gradients at each communication round.

In the finite-sum (Figure~\ref{fig:real_sim_finite_sum}) and in the stochastic setting (Figure~\ref{fig:real_sim_stochastic}), 
we see that the bigger probability $\probavailable = \nicefrac{s}{n}$ to $1$, the closer \algname{\algorithmname} to \algname{DASHA}. Moreover, \algname{\algorithmname} with $s = 10$ and $s = 1$ converges approximately $\times 10$ ($= \nicefrac{1}{\probavailable}$) and $\times 100$ ($= \nicefrac{1}{\probavailable}$) times slower, accordingly. Our theory predicts such behavior.

\subsection{Experiments in Partial Participation Setting}
In this experiments, we compare our new algorithm \algname{DASHA-PP} with previous baselines \algname{MARINA} and \algname{FRECON} in the partial participation setting. We consider \algname{MARINA} and \algname{FRECON} because they are the previous SOTA methods in the \emph{partial participation setting with compression}. We investigate the same optimization problem and setup as in Section~\ref{sec:experiments} of the paper. 
All methods use the Rand$K$ compressor in these experiments.

{1. \bf Finite-Sum Setting.} We now consider the function from \eqref{eq:func_finite}. In Figures~\ref{fig:finite-sum-real-sim} and \ref{fig:finite-sum-mnist}, we compare all three methods in the finite-sum setting on two different datasets: \textit{real-sim} and \textit{MNIST}. The parameter $s$ is the number of clients participating in each round that are selected randomly using the $s$-nice sampling (server chooses uniformly $s$ nodes without replacement).
We can see that \algname{DASHA-PP} converges faster than \algname{MARINA}. Since \algname{FRECON} does not support variance reduction of stochastic gradients, it converges to less accurate solutions.
\begin{figure}[H]
  \centering
  \begin{subfigure}{.28\textwidth}
      \includegraphics[width=\textwidth]{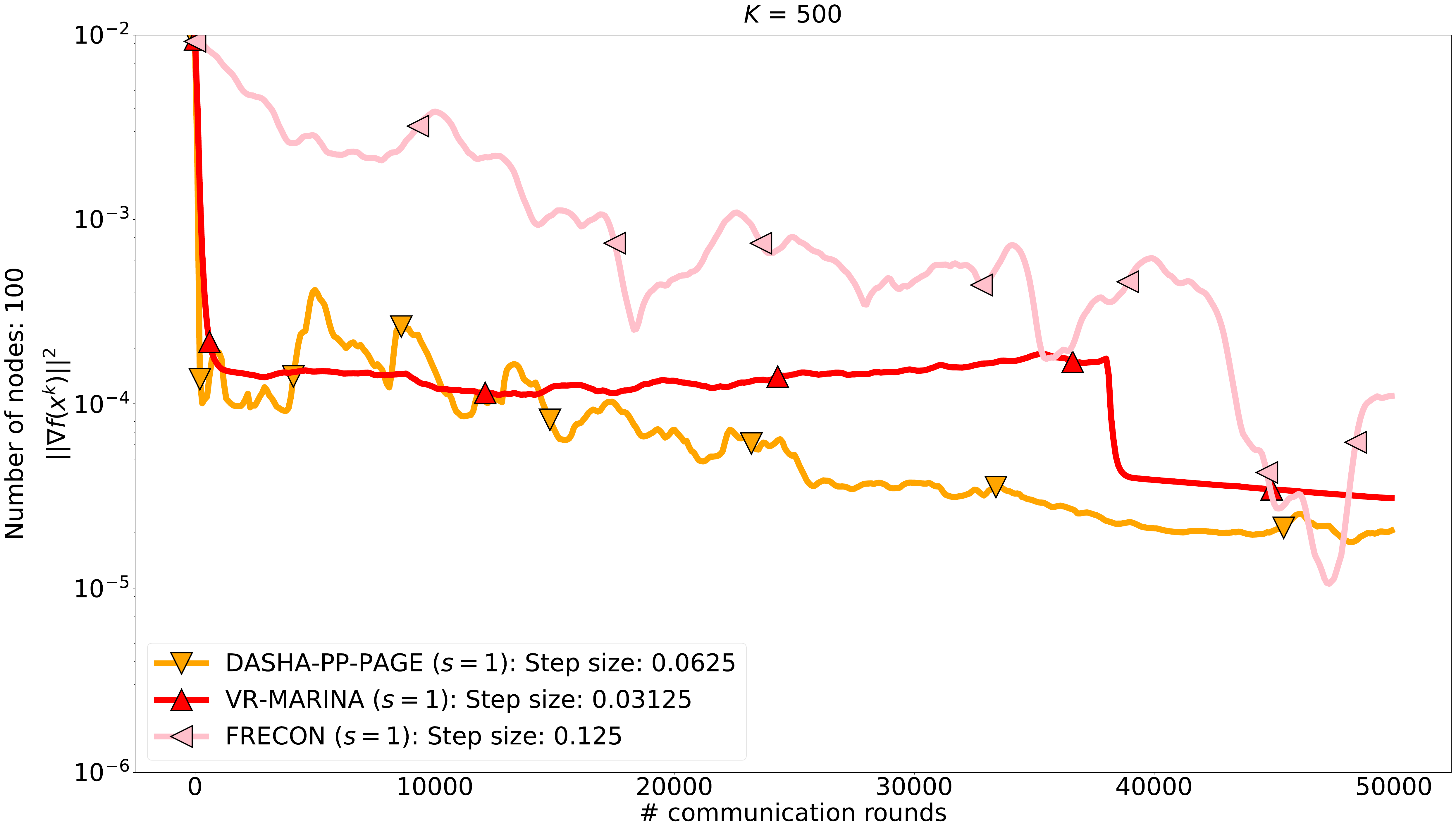}
      \caption{1 \% of nodes participating}
  \end{subfigure}
  \begin{subfigure}{.28\textwidth}
      \includegraphics[width=\textwidth]{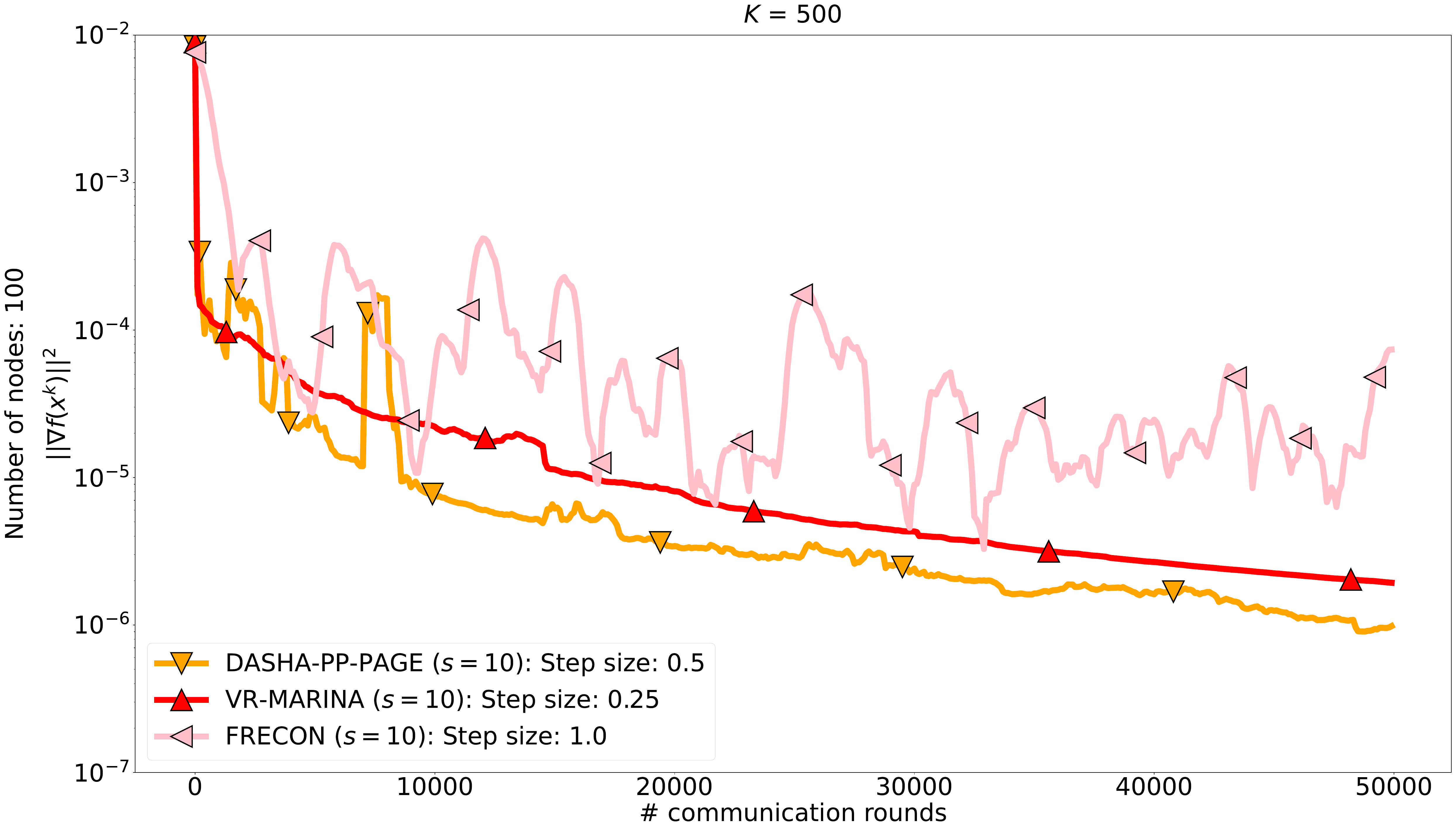}
      \caption{10 \% of nodes participating}
  \end{subfigure}
  \begin{subfigure}{.28\textwidth}
      \includegraphics[width=\textwidth]{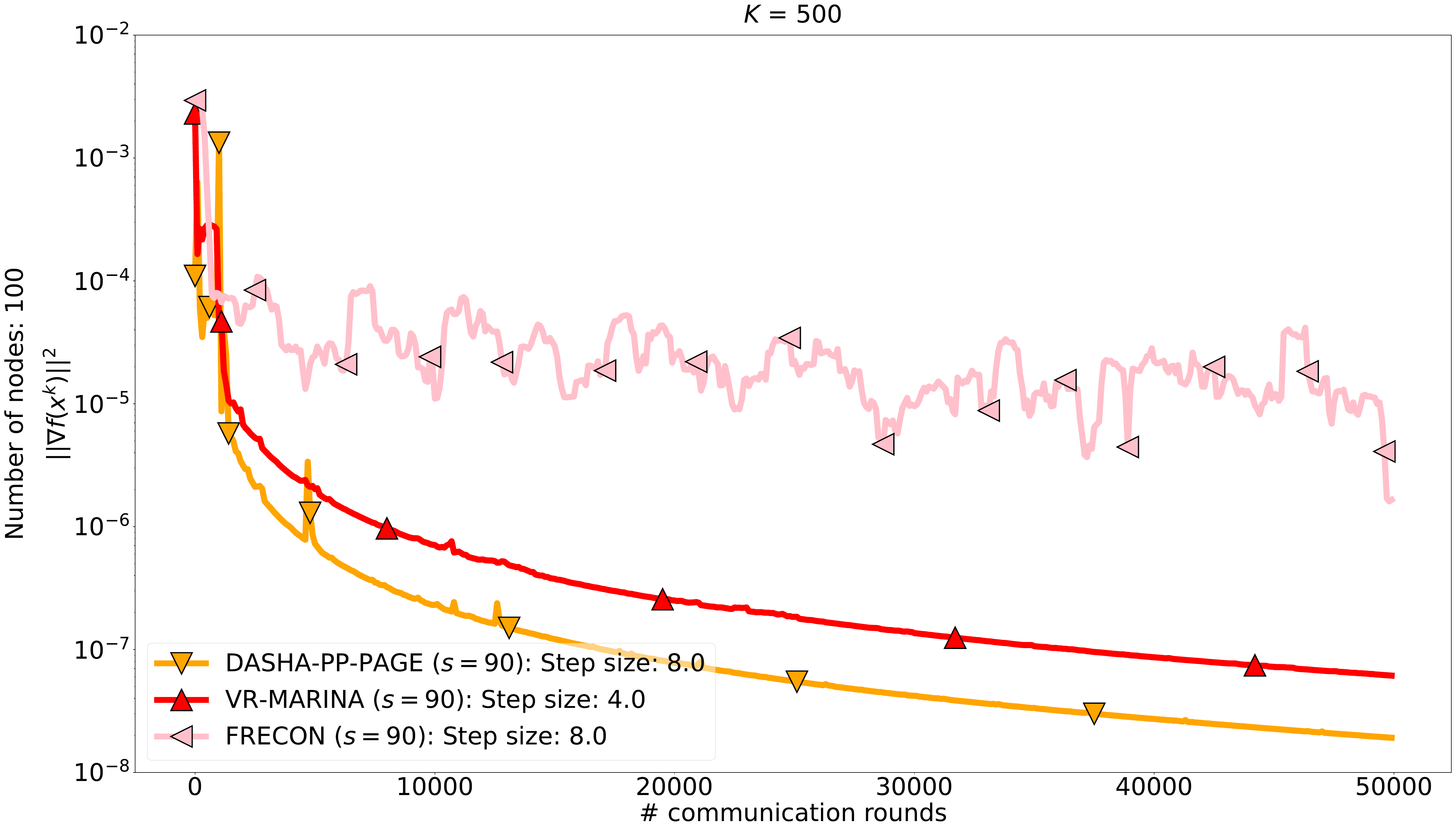}
      \caption{90 \% of nodes participating}
  \end{subfigure}
\caption{Classification task on \textit{real-sim}}
\label{fig:finite-sum-real-sim}
\end{figure}
\vspace{-0.5cm}
\begin{figure}[H]
  \centering
  \begin{subfigure}{.28\textwidth}
      \includegraphics[width=\textwidth]{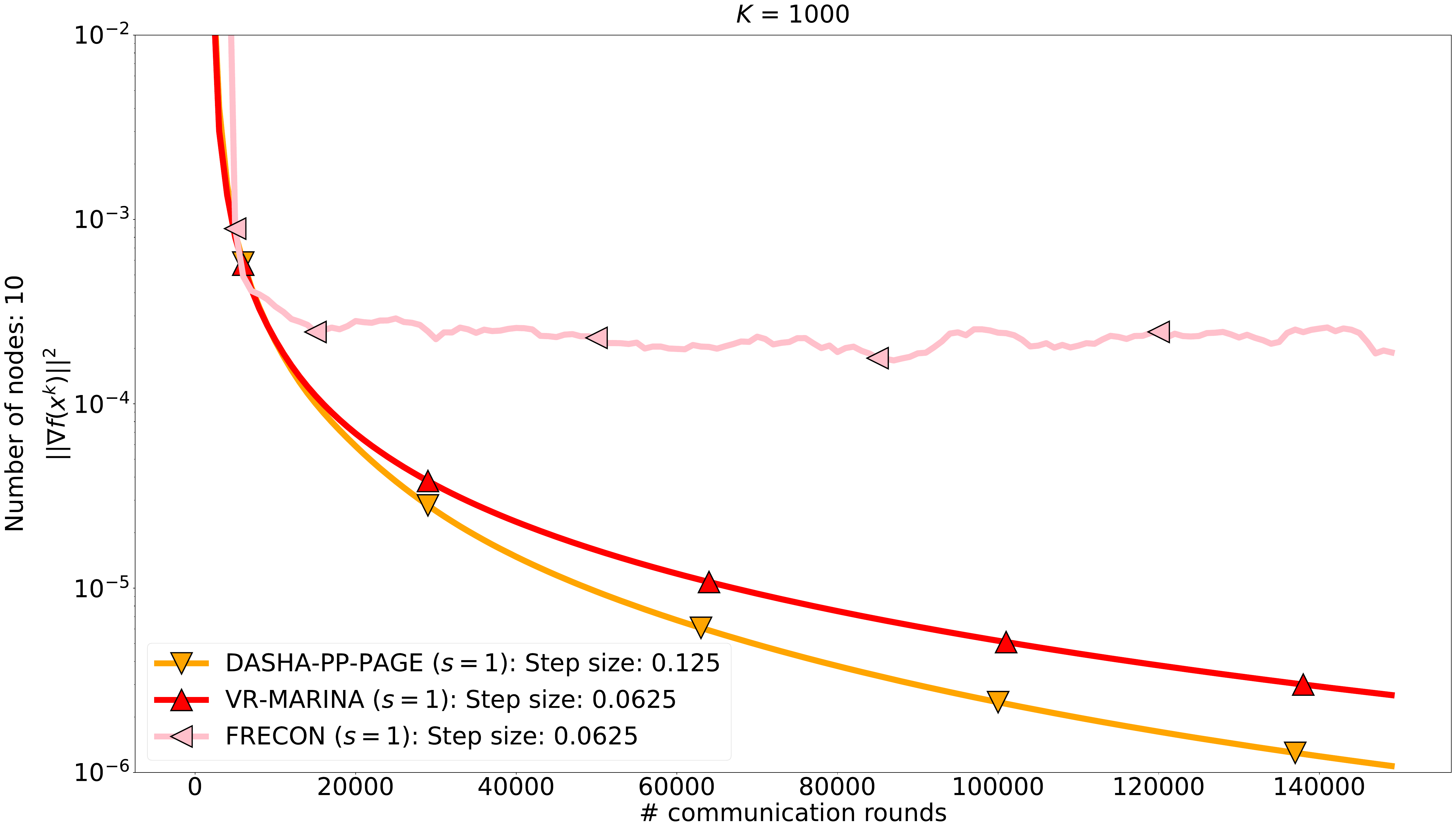}
      \caption{10 \% of nodes participating}
  \end{subfigure}
  \begin{subfigure}{.28\textwidth}
      \includegraphics[width=\textwidth]{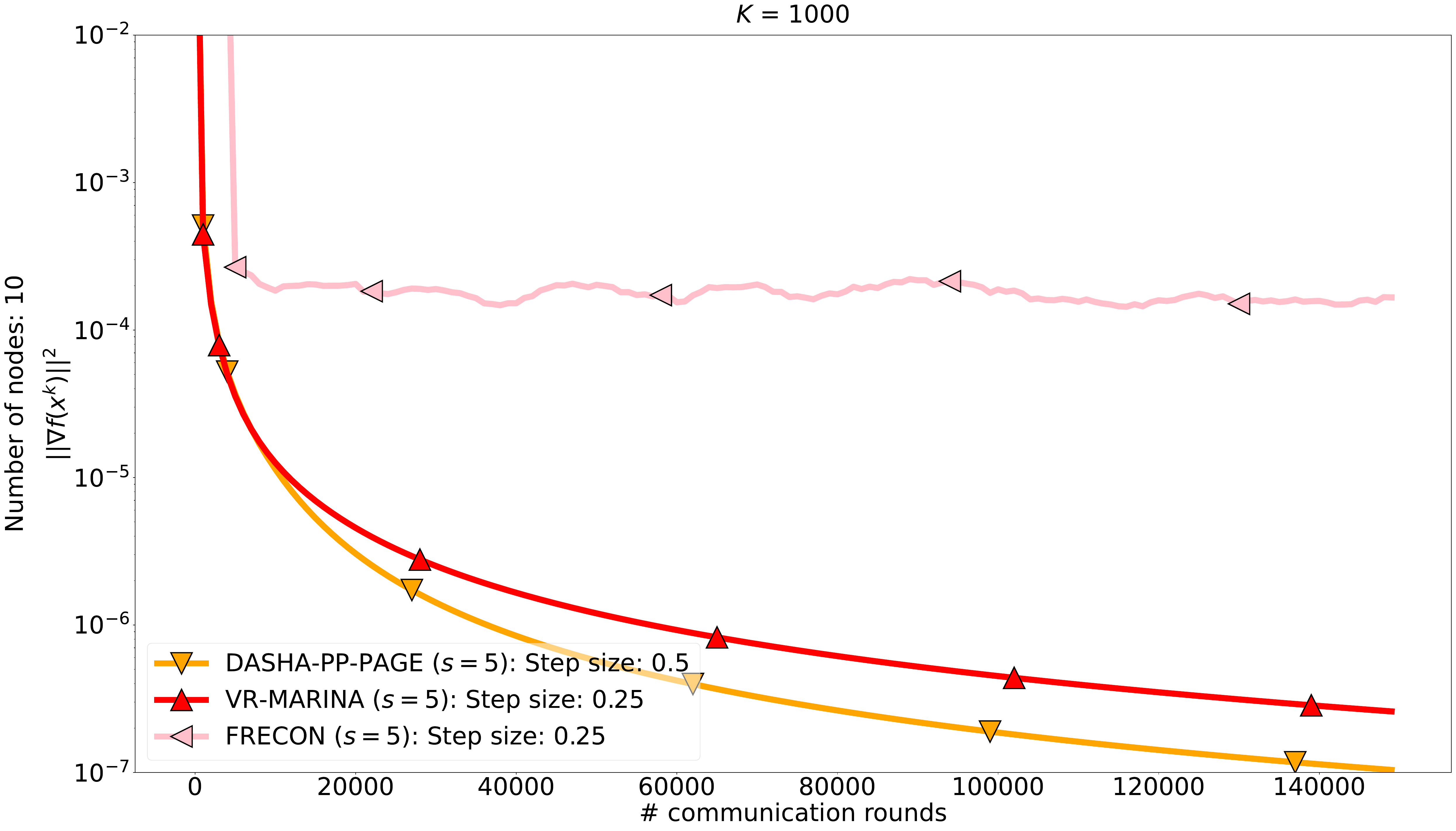}
      \caption{50 \% of nodes participating}
  \end{subfigure}
  \begin{subfigure}{.28\textwidth}
      \includegraphics[width=\textwidth]{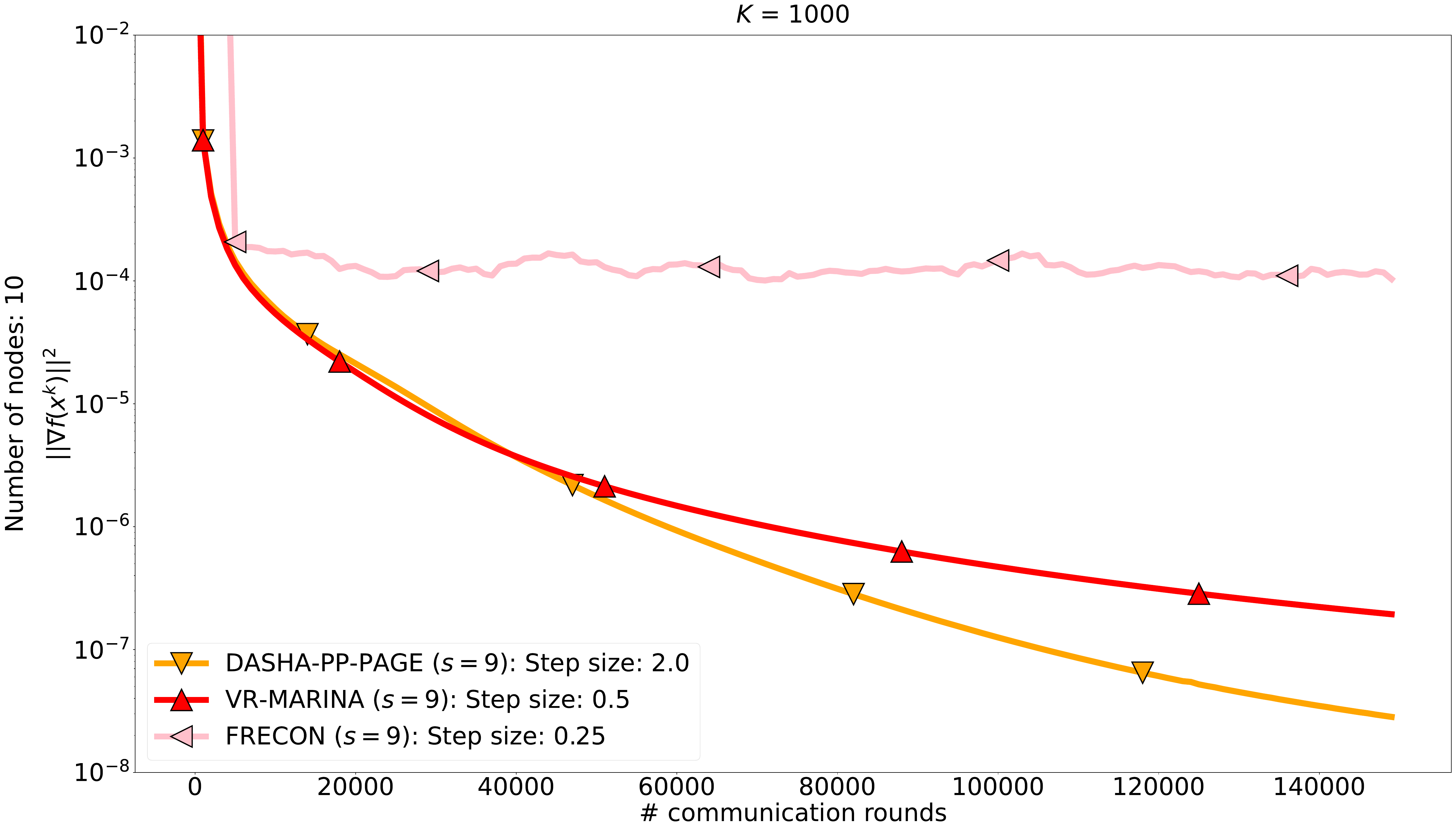}
      \caption{90 \% of nodes participating}
  \end{subfigure}
  \caption{Classification task on \textit{MNIST}}
\label{fig:finite-sum-mnist}
\end{figure}

{2. \bf Stochastic Setting.} In Figures~\ref{fig:stoch-real-sim} and \ref{fig:stoch-sum-mnist}, we consider the stochastic setting with the function from \eqref{eq:func_finite}. We can see that \algname{DASHA-PP} convergences to high accuracy solutions, unlike \algname{FRECON}. Moreover, \algname{DASHA-PP} improves the convergence rates of \algname{MARINA}.

\begin{figure}[H]
  \centering
  \begin{subfigure}{.28\textwidth}
      \includegraphics[width=\textwidth]{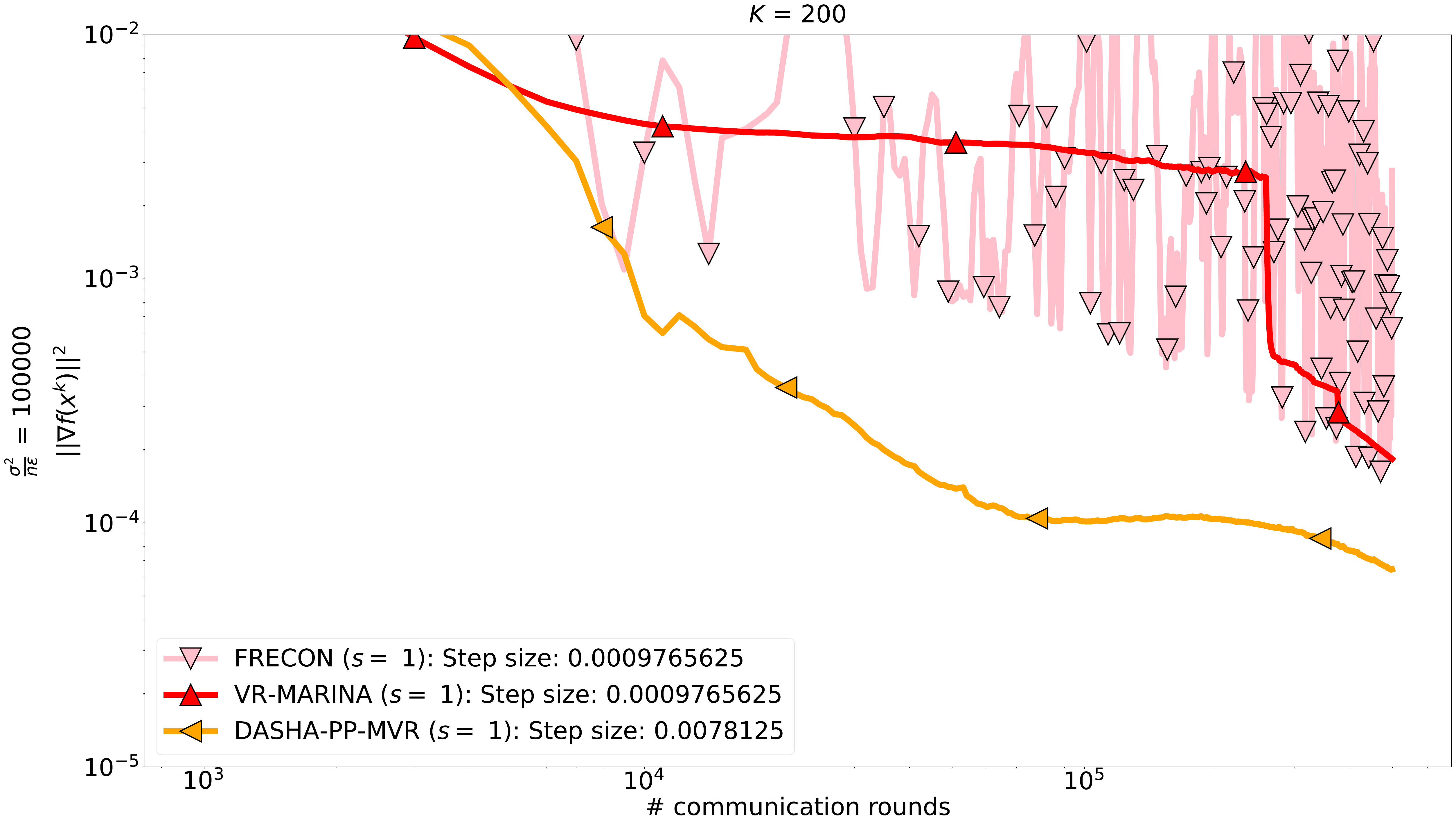}
      \caption{10 \% of nodes participating}
  \end{subfigure}
  \begin{subfigure}{.28\textwidth}
      \includegraphics[width=\textwidth]{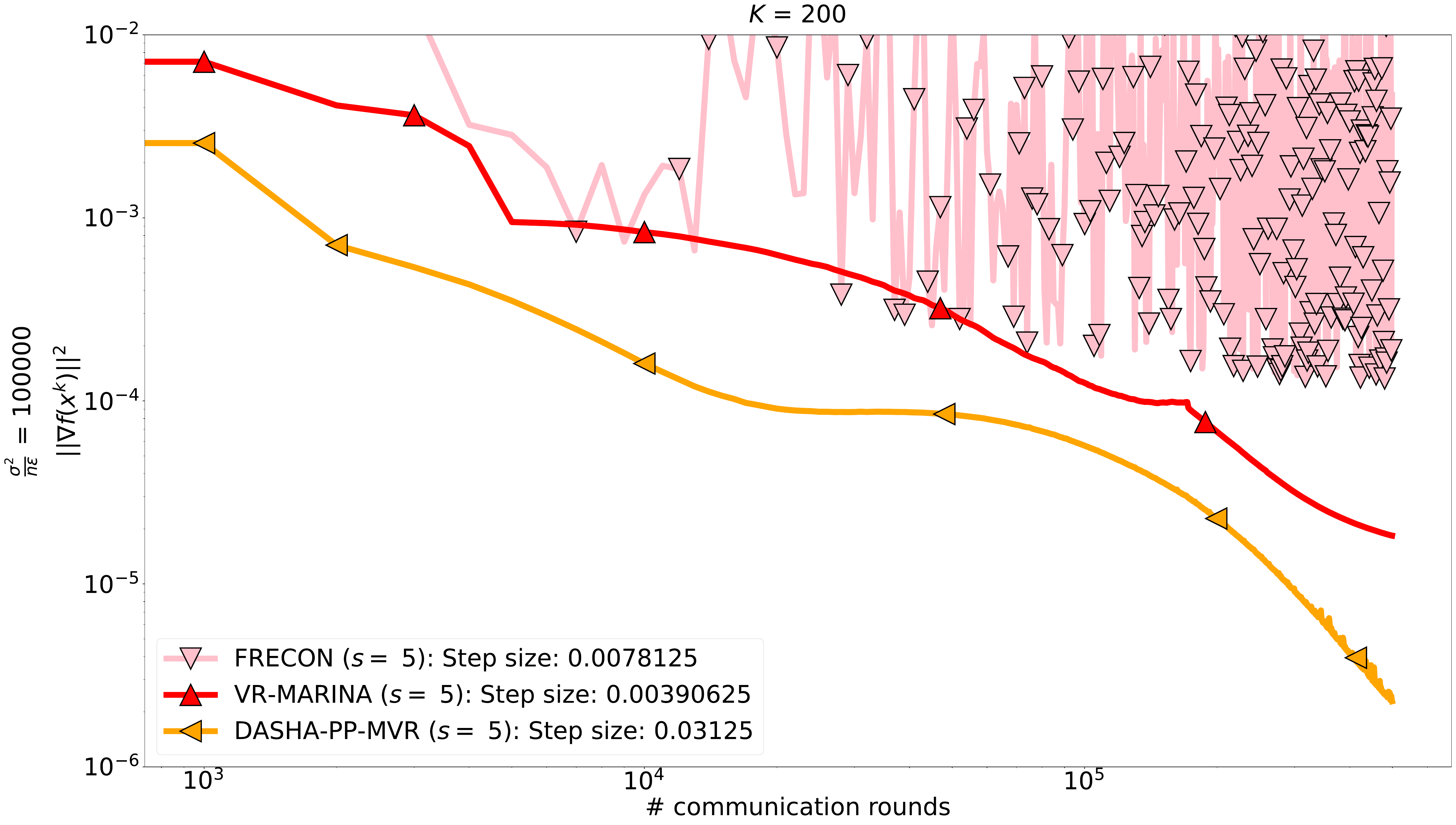}
      \caption{50 \% of nodes participating}
  \end{subfigure}
  \begin{subfigure}{.28\textwidth}
      \includegraphics[width=\textwidth]{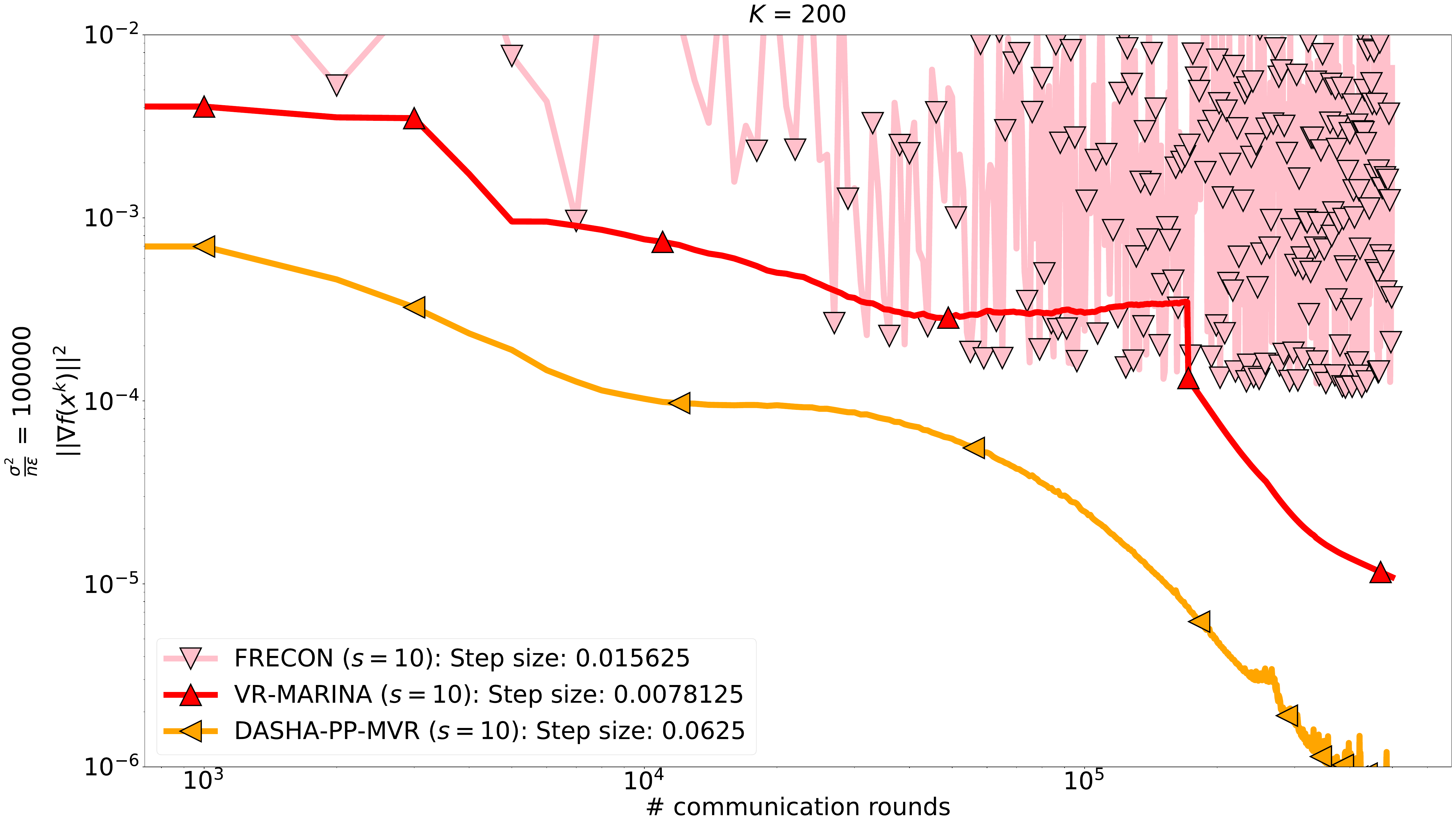}
      \caption{100 \% of nodes participating}
  \end{subfigure}
\caption{Classification task on \textit{real-sim}}
\label{fig:stoch-real-sim}
\end{figure}
\vspace{-0.5cm}
\begin{figure}[H]
  \centering
  \begin{subfigure}{.28\textwidth}
      \includegraphics[width=\textwidth]{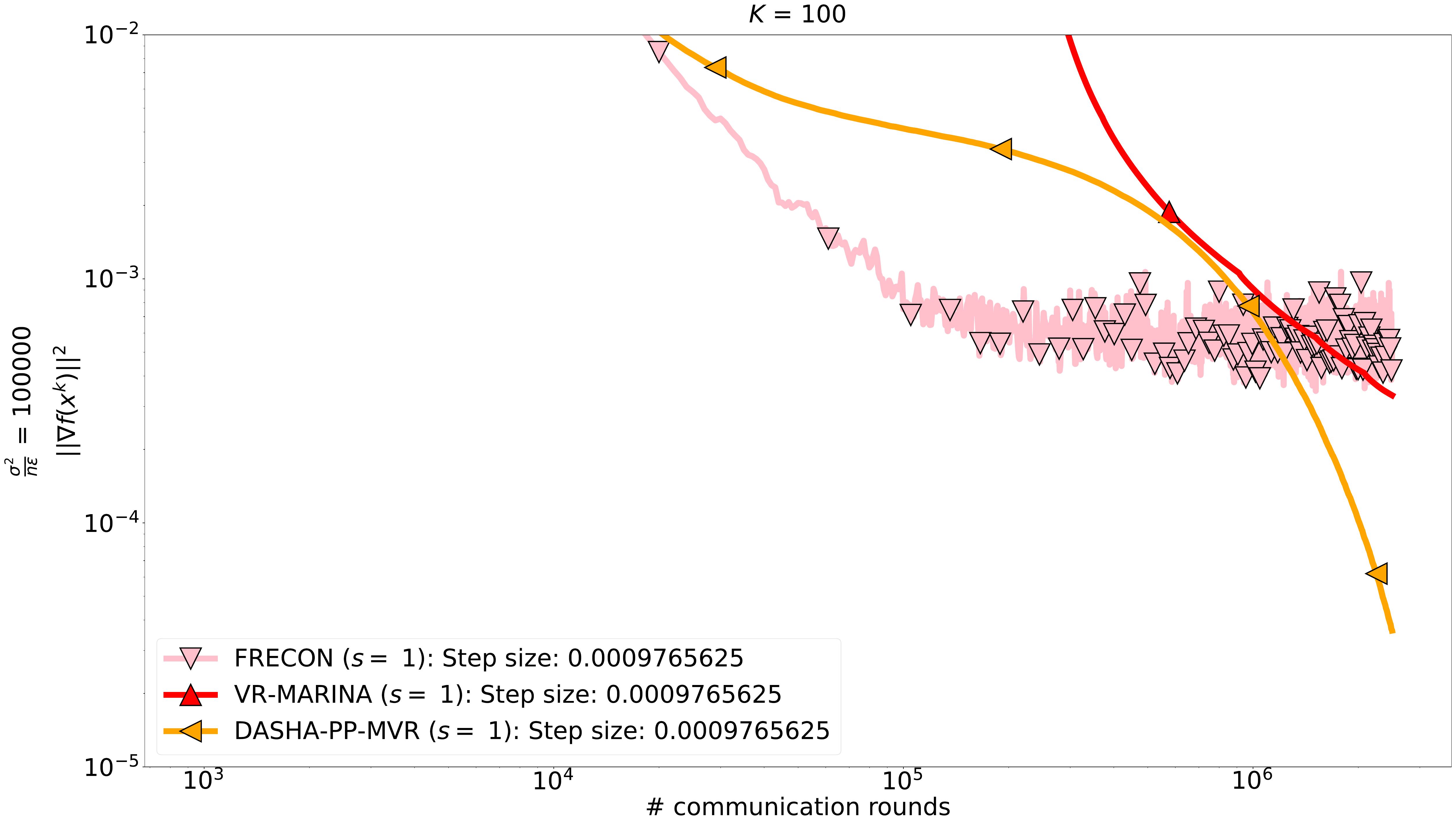}
      \caption{10 \% of nodes participating}
  \end{subfigure}
  \begin{subfigure}{.28\textwidth}
      \includegraphics[width=\textwidth]{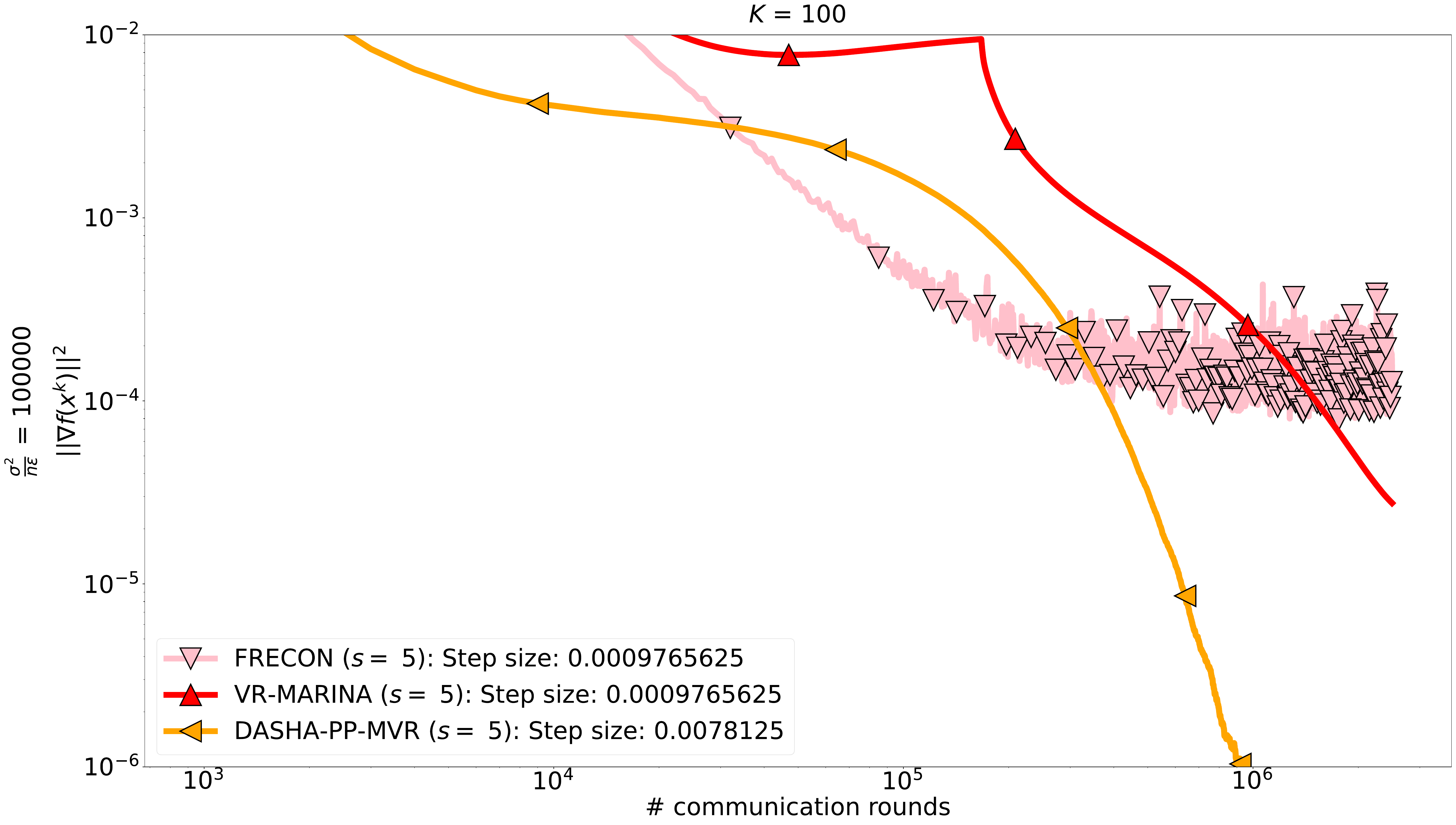}
      \caption{50 \% of nodes participating}
  \end{subfigure}
  \begin{subfigure}{.28\textwidth}
      \includegraphics[width=\textwidth]{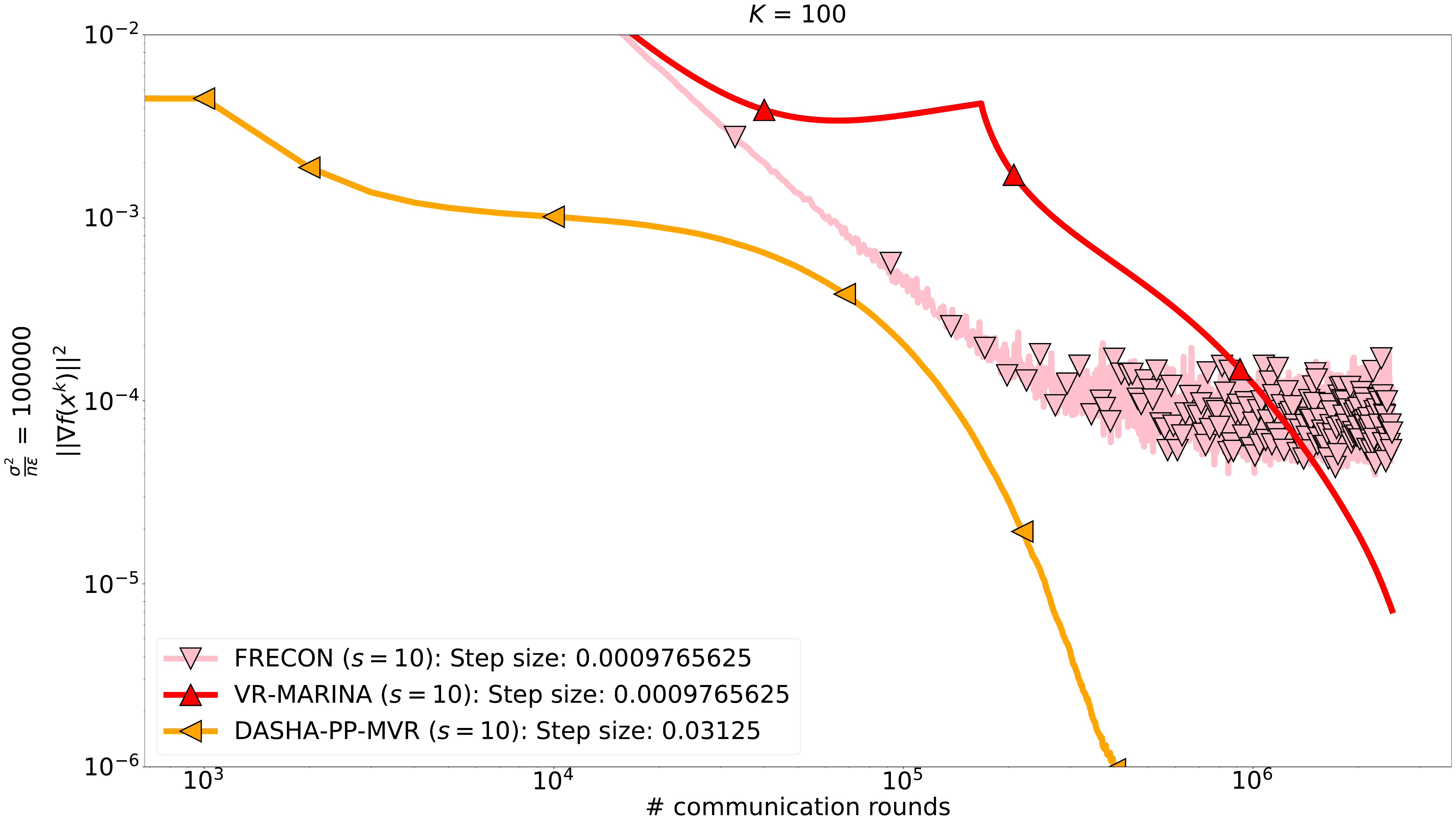}
      \caption{100 \% of nodes participating}
  \end{subfigure}
  \caption{Classification task on \textit{MNIST}}
\label{fig:stoch-sum-mnist}
\end{figure}

\newpage

\section{Original \algname{DASHA} and \algname{DASHA-MVR} Methods}
To simplify the discussion and explanation from the main part, we present the algorithms from \citep{tyurin2022dasha}

\begin{algorithm*}
  \caption{\algname{DASHA}}
  \label{alg:main_algorithm_dasha}
  \begin{algorithmic}[1]
  \STATE \textbf{Input:} starting point $x^0 \in \R^d$, stepsize $\gamma > 0$, momentum $a \in (0, 1]$, 
  number of iterations~$T \geq 1$
  \STATE Initialize $g^0_i\in \R^d$ on the nodes and  $g^0 = \frac{1}{n}\sum_{i=1}^n g^0_i$ on the server
  \FOR{$t = 0, 1, \dots, T - 1$}
  \STATE $x^{t+1} = x^t - \gamma g^t$
  \STATE Broadcast $x^{t+1}$ and $x^{t}$
  \FOR{$i = 1, \dots, n$ in parallel}
  \STATE {\green $m^{t+1}_i = \cC_i\left(\nabla f_i(x^{t+1}) - \nabla f_i(x^{t}) - a \left(g^{t}_i - \nabla f_i(x^{t})\right)\right)$}
  \STATE $g^{t+1}_i = g^{t}_i + m^{t+1}_i$
  \STATE Send $m^{t+1}_i$ to the server
  \ENDFOR
  \STATE $g^{t+1} = g^t + \frac{1}{n} \sum_{i=1}^n m^{t+1}_i$
  \ENDFOR
  \STATE \textbf{Output:} $\hat{x}^T$ chosen uniformly at random from $\{x^t\}_{k=0}^{T-1}$ 
  \end{algorithmic}
\end{algorithm*}

\begin{algorithm*}
  \caption{\algname{DASHA-MVR} (with batch size $B = 1$)}
  \label{alg:main_algorithm_dasha_mvr}
  \begin{algorithmic}[1]
  \STATE \textbf{Input:} starting point $x^0 \in \R^d$, stepsize $\gamma > 0$, momentums $a, b \in (0, 1]$, 
  number of iterations~$T \geq 1$
  \STATE Initialize $g^0_i\in \R^d$ on the nodes and  $g^0 = \frac{1}{n}\sum_{i=1}^n g^0_i$ on the server
  \FOR{$t = 0, 1, \dots, T - 1$}
  \STATE $x^{t+1} = x^t - \gamma g^t$
  \STATE Broadcast $x^{t+1}$ and $x^{t}$
  \FOR{$i = 1, \dots, n$ in parallel}
  \STATE {\green $h^{t+1}_i = \nabla f_i(x^{t+1};\xi^{t+1}_{i}) + (1 - b) (h^t_i - \nabla f_i(x^{t};\xi^{t+1}_{i})), \quad \xi^{t+1}_{i} \sim \mathcal{D}_i$}
  \STATE {\green $m^{t+1}_i = \cC_i\left(h^{t+1}_i - h^{t}_i - a \left(g^{t}_i - h^{t}_i\right)\right)$}
  \STATE $g^{t+1}_i = g^{t}_i + m^{t+1}_i$
  \STATE Send $m^{t+1}_i$ to the server
  \ENDFOR
  \STATE $g^{t+1} = g^t + \frac{1}{n} \sum_{i=1}^n m^{t+1}_i$
  \ENDFOR
  \STATE \textbf{Output:} $\hat{x}^T$ chosen uniformly at random from $\{x^t\}_{k=0}^{T-1}$ 
  \end{algorithmic}
\end{algorithm*}

\newpage

\section{Problem of Estimating the Mean in the Partial Participation Setting}
\label{sec:partial_participation_sampling}
We now provide the example to explain why the only choice of $B = \cO\left(\min\left\{\frac{1}{\probavailable}\sqrt{\frac{m}{n}},\frac{L_{\max}^2}{\mathbbm{1}_{\probavailable}^2 \widehat{L}^2}\right\}\right)$ and $B = \cO\left(\min\left\{\frac{\sigma}{\probavailable \sqrt{\varepsilon} n}, \frac{L_{\sigma}^2}{\mathbbm{1}_{\probavailable}^2 \widehat{L}^2}\right\}\right)$ in \algname{\algorithmname-PAGE} and \algname{\algorithmname-MVR}, accordingly, guarantees the degeneration up to $\nicefrac{1}{\probavailable}.$ This is surprising, because in methods with the variance reduction of stochastic gradients \citep{PAGE, tyurin2022dasha} we can take the size of batch size $B = \cO\left(\sqrt{\frac{m}{n}}\right)$ and $B = \cO\left(\frac{\sigma}{\sqrt{\varepsilon} n}\right)$ and guarantee the optimality. Note that the smaller the batch size $B$, the more the server and the nodes have to communicate to get $\varepsilon$-solution.

Let us consider the task of estimating the mean of vectors in the distributed setting. Suppose that we have $n$ nodes, and each of them contains $m$ vectors $\{x_{ij}\}_{j=1}^m$, where $x_{ij} \in \R^d$ for all $i \in [n], j \in [m].$ First, let us consider that each node samples a mini-batch $I^i$ of size $B$ with replacement and sends it to the server. Then the server calculates the mean of the mini-batches from nodes. One can easily show that the variance of the estimator is
\begin{align}
  \label{eq:partial_participation_sampling:all_nodes}
  &\Exp{\norm{\frac{1}{n B}\sum_{i=1}^n \sum_{j \in I^i} x_{ij} - \frac{1}{n m}\sum_{i=1}^n \sum_{j=1}^m x_{ij}}^2} \\
  &= \frac{1}{n B} \frac{1}{n m} \sum_{i=1}^n \sum_{j=1}^m \norm{x_{ij} - \frac{1}{m}\sum_{j=1}^m x_{ij}}^2.\nonumber
\end{align}
Next, we consider the same task in the partial participation setting with $s$--nice sampling, i.e., we sample a random set $S \subset [n]$ of $s \in [n]$ nodes without replacement and receive the mini-batches only from the sampled nodes. Such sampling of nodes satisfy Assumption~\ref{ass:partial_participation} with $\probavailable = \nicefrac{s}{n}$ and $\probavailable = \nicefrac{s (s-1)}{n (n-1)}$. In this case, the variance of the estimator (See Lemma~\ref{lemma:sampling} with $r_i = 0$ and $s_i = \sum_{j \in I^i} x_{ij}$) is
\begin{align}
  \label{eq:partial_participation_sampling:s_nodes}
  &\Exp{\norm{\frac{1}{s B}\sum_{i \in S} \sum_{j \in I^i} x_{ij} - \frac{1}{n m}\sum_{i=1}^n \sum_{j=1}^m x_{ij}}^2} \\
  &= \frac{1}{s B}\underbrace{\frac{1}{n m} \sum_{i=1}^n \sum_{j=1}^m \norm{x_{ij} - \frac{1}{m}\sum_{j=1}^m x_{ij}}^2}_{\mathcal{L}_{\max}^2} \nonumber\\
  &\quad+\frac{n - s}{s (n - 1)} \underbrace{\frac{1}{n} \sum_{i=1}^n \norm{\frac{1}{m}\sum_{j=1}^m x_{ij} - \frac{1}{n m}\sum_{i=1}^n \sum_{j=1}^m x_{ij}}^2}_{\widehat{\mathcal{L}}^2} \nonumber.
\end{align}
Let us assume that $s \leq \nicefrac{n}{2}.$ Note that \eqref{eq:partial_participation_sampling:all_nodes} scales with any $B \geq 1,$ while \eqref{eq:partial_participation_sampling:s_nodes} only scales when $B = \cO\left(\nicefrac{\mathcal{L}_{\max}^2}{\widehat{\mathcal{L}}^2}\right).$ In other words, for large enough $B,$ the variance in \eqref{eq:partial_participation_sampling:s_nodes} does not significantly improves with the growth of $B$ due to the term $\widehat{\mathcal{L}}^2$. In our proof, due to partial participation, the variance from \eqref{eq:partial_participation_sampling:s_nodes} naturally appears, and we get the same effect. As was mentioned in Sections~\ref{sec:finite_sum_setting} and \ref{sec:stochastic_setting}, it can be seen in our convergence rate bounds.

\newpage

\section{Auxiliary facts}
We list auxiliary facts that we use in our proofs:
\begin{enumerate}
    \item 
        For all $x, y \in \R^d,$ we have
        \begin{align}
            \norm{x + y}^2 \leq 2\norm{x}^2 + 2\norm{y}^2
            \label{auxiliary:jensen_inequality}
        \end{align}
    \item
        Let us take a \textit{random vector} $\xi \in \R^d$, then
        \begin{align}
            \Exp{\norm{\xi}^2} = \Exp{\norm{\xi - \Exp{\xi}}^2} + \norm{\Exp{\xi}}^2.
            \label{auxiliary:variance_decomposition}
        \end{align}
\end{enumerate}

\subsection{Sampling Lemma}
This section provides a lemma that we regularly use in our proofs, and it is useful for samplings that satisfy Assumption~\ref{ass:partial_participation}.
\begin{lemma}
  \label{lemma:sampling}
  Suppose that a set $S$ is a random subset of a set $[n]$ such that
  \begin{enumerate}
    \Item \begin{align*}\Prob\left(i \in S\right) = \probavailable, \quad \forall i \in [n],\end{align*}
    \Item \begin{align*}\Prob\left(i \in S, j \in S\right) = \probpairaa, \quad \forall i \neq j \in [n],\end{align*}
    \Item \begin{align*}
      \probpairaa \leq \probavailable^2,
    \end{align*}
  \end{enumerate}
  where $\probavailable \in (0, 1]$ and $\probpairaa \in [0, 1].$ Let us take random \textit{independent} vectors $s_i \in \R^d$ for all $i \in [n]$, nonrandom vector $r_i \in \R^d$ for all $i \in [n],$ and random vectors
  \begin{align*}
    v_i = \begin{cases}
      r_i + \frac{1}{\probavailable} s_i, i \in S, \\
      r_i, i \not\in S,
    \end{cases}
  \end{align*}
  then
  \begin{align*}
    &\Exp{\norm{\frac{1}{n}\sum_{i=1}^n v_i - \Exp{\frac{1}{n}\sum_{i=1}^n v_i}}^2} \\
    &= \frac{1}{n^2 \probavailable}\sum_{i=1}^n\Exp{\norm{s_i - \Exp{s_i}}^2} +\frac{\probavailable - \probpairaa}{n^2 \probavailable^2}\sum_{i=1}^n\norm{\Exp{s_i}}^2 + \frac{\probpairaa - \probavailable^2}{\probavailable^2}\norm{\frac{1}{n}\sum_{i=1}^n\Exp{s_i}}\\
    &\leq \frac{1}{n^2 \probavailable}\sum_{i=1}^n\Exp{\norm{s_i - \Exp{s_i}}^2} +\frac{\probavailable - \probpairaa}{n^2 \probavailable^2}\sum_{i=1}^n\norm{\Exp{s_i}}^2.
  \end{align*}
\end{lemma}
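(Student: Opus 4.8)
The plan is to peel off the deterministic part, reduce the claim to computing the second moment of a centered sum of the independent contributions $\chi_i s_i$, where $\chi_i \eqdef \mathbbm{1}[i \in S]$, and then expand that squared norm into diagonal and off-diagonal terms, evaluating each with the first- and second-order moment information in the hypotheses.

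First I would write $v_i = r_i + \frac{\chi_i}{\probavailable} s_i$. Since the $r_i$ are nonrandom they cancel against their own expectation, and using that $\chi_i$ is independent of $s_i$ (so $\Exp{\chi_i s_i} = \probavailable\Exp{s_i}$) we get $\frac{1}{n}\sum_i v_i - \Exp{\frac{1}{n}\sum_i v_i} = \frac{1}{n\probavailable}\sum_i\left(\chi_i s_i - \Exp{\chi_i s_i}\right)$. It therefore suffices to compute $\Exp{\norm{\sum_i(\chi_i s_i - \Exp{\chi_i s_i})}^2}$ and divide by $n^2\probavailable^2$. I would then expand this as $\sum_i \Exp{\norm{\chi_i s_i - \Exp{\chi_i s_i}}^2} + \sum_{i\neq j}\Exp{\inp{\chi_i s_i - \Exp{\chi_i s_i}}{\chi_j s_j - \Exp{\chi_j s_j}}}$.

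For the diagonal terms I would use $\chi_i^2 = \chi_i$ together with independence of $S$ and $\{s_i\}$ to get $\Exp{\norm{\chi_i s_i}^2} = \probavailable\Exp{\norm{s_i}^2}$, apply the variance decomposition \eqref{auxiliary:variance_decomposition} to write $\Exp{\norm{s_i}^2} = \Exp{\norm{s_i - \Exp{s_i}}^2} + \norm{\Exp{s_i}}^2$, and subtract $\norm{\Exp{\chi_i s_i}}^2 = \probavailable^2\norm{\Exp{s_i}}^2$; this yields $\probavailable\Exp{\norm{s_i - \Exp{s_i}}^2} + (\probavailable - \probavailable^2)\norm{\Exp{s_i}}^2$ per term. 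For each off-diagonal term with $i\neq j$, independence of $S$ from the $s_i$ gives $\Exp{\chi_i\chi_j\inp{s_i}{s_j}} = \probpairaa\Exp{\inp{s_i}{s_j}}$, and mutual independence of $s_i,s_j$ gives $\Exp{\inp{s_i}{s_j}} = \inp{\Exp{s_i}}{\Exp{s_j}}$; subtracting $\inp{\Exp{\chi_i s_i}}{\Exp{\chi_j s_j}} = \probavailable^2\inp{\Exp{s_i}}{\Exp{s_j}}$ leaves $(\probpairaa - \probavailable^2)\inp{\Exp{s_i}}{\Exp{s_j}}$.

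Finally I would sum the off-diagonal contributions via the identity $\sum_{i\neq j}\inp{\Exp{s_i}}{\Exp{s_j}} = \norm{\sum_i\Exp{s_i}}^2 - \sum_i\norm{\Exp{s_i}}^2$, collect the coefficient of $\sum_i\norm{\Exp{s_i}}^2$ (which simplifies to $\probavailable - \probavailable^2 - (\probpairaa - \probavailable^2) = \probavailable - \probpairaa$), and divide by $n^2\probavailable^2$ to recover the stated equality (with the last term being the \emph{squared} norm $\frac{\probpairaa - \probavailable^2}{\probavailable^2}\norm{\frac{1}{n}\sum_i\Exp{s_i}}^2$); the displayed inequality then follows by discarding this last term, which is nonpositive by the constraint \eqref{eq:partial_participation_constraint}. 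I do not expect a genuine computational obstacle here — the whole argument is a bias–variance expansion — but the step to be careful about, and to state explicitly, is the independence bookkeeping: one needs that the sampling $S$ is independent of the family $\{s_i\}$ and that the $s_i$ are mutually independent, which is precisely what allows the mixed moments $\Exp{\chi_i s_i}$, $\Exp{\chi_i\norm{s_i}^2}$, and $\Exp{\chi_i\chi_j\inp{s_i}{s_j}}$ to factor as used above.
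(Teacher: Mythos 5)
Your proposal is correct and amounts to the same second-moment computation as the paper's proof: the paper conditions on the four joint participation patterns (introducing auxiliary probabilities $\probpairan$, $\probpairnn$) while you reach the identical diagonal and cross terms more directly via indicators $\chi_i$ with $\Exp{\chi_i}=\probavailable$ and $\Exp{\chi_i\chi_j}=\probpairaa$. You also correctly observe that the final term in the stated equality should be the \emph{squared} norm $\frac{\probpairaa - \probavailable^2}{\probavailable^2}\norm{\frac{1}{n}\sum_{i=1}^n\Exp{s_i}}^2$ (the missing exponent in the lemma statement is a typo), and the discarding of this nonpositive term via $\probpairaa \leq \probavailable^2$ matches the paper.
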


\begin{proof}
  Let us define additional constants $\probpairan$ and $\probpairnn$, such that
  \begin{enumerate}
    \Item \begin{align*}\Prob\left(i \in S, j \not\in S\right) = \probpairan, \quad \forall i \neq j \in [n],\end{align*}
    \Item \begin{align*}\Prob\left(i \not\in S, j \not\in S\right) = \probpairnn, \quad \forall i \neq j \in [n].\end{align*}
  \end{enumerate}
  Note, that 
  \begin{align}
    \label{auxiliary_facts_partial_participation:one}
    \probavailable = \probpairaa + \probpairan
  \end{align}
  and 
  \begin{align}
    \label{auxiliary_facts_partial_participation:two}
    \probpairnn = 1 - \probpairaa - 2 \probpairan.
  \end{align}
  Using the law of total expectation and 
  \begin{align*}
    \Exp{v_i} = \probavailable \left(r_i + \Exp{\frac{1}{\probavailable} s_i}\right) + (1 - \probavailable) r_i = r_i + \Exp{s_i},
  \end{align*}
  we have
  \begin{align*}
    &\Exp{\norm{\frac{1}{n}\sum_{i=1}^n v_i - \Exp{\frac{1}{n}\sum_{i=1}^n v_i}}^2} \\
    &=\frac{1}{n^2}\sum_{i=1}^n\Exp{\norm{v_i - \left(r_i + \Exp{s_i}\right)}^2} \\
    &\quad +\frac{1}{n^2}\sum_{i\neq j}^n\Exp{\inp{v_i - \left(r_i + \Exp{s_i}\right)}{v_j - \left(r_j + \Exp{s_j}\right)}} \\
    &=\frac{\probavailable}{n^2}\sum_{i=1}^n\Exp{\norm{r_i + \frac{1}{\probavailable} s_i - \left(r_i + \Exp{s_i}\right)}^2} \\
    &\quad +\frac{1 - \probavailable}{n^2}\sum_{i=1}^n\norm{r_i - \left(r_i + \Exp{s_i}\right)}^2 \\
    &\quad +\frac{\probpairaa}{n^2}\sum_{i\neq j}^n\Exp{\inp{r_i + \frac{1}{\probavailable} s_i - \left(r_i + \Exp{s_i}\right)}{r_j + \frac{1}{\probavailable} s_j - \left(r_j + \Exp{s_j}\right)}} \\
    &\quad +\frac{2 \probpairan}{n^2}\sum_{i\neq j}^n\Exp{\inp{r_i + \frac{1}{\probavailable} s_i - \left(r_i + \Exp{s_i}\right)}{r_j - \left(r_j + \Exp{s_j}\right)}} \\
    &\quad +\frac{\probpairnn}{n^2}\sum_{i\neq j}^n\inp{r_i - \left(r_i + \Exp{s_i}\right)}{r_j - \left(r_j + \Exp{s_j}\right)}.
  \end{align*}
  From the independence of random vectors $s_i$, we obtain
  \begin{align*}
    &\Exp{\norm{\frac{1}{n}\sum_{i=1}^n v_i - \Exp{\frac{1}{n}\sum_{i=1}^n v_i}}^2} \\
    &=\frac{\probavailable}{n^2}\sum_{i=1}^n\Exp{\norm{\frac{1}{\probavailable} s_i - \Exp{s_i}}^2} \\
    &\quad +\frac{1 - \probavailable}{n^2}\sum_{i=1}^n\norm{\Exp{s_i}}^2 \\
    &\quad +\frac{\probpairaa (1 - \probavailable)^2}{n^2 \probavailable^2}\sum_{i\neq j}^n\inp{\Exp{s_i}}{\Exp{s_j}} \\
    &\quad +\frac{2 \probpairan (\probavailable - 1)}{n^2 \probavailable}\sum_{i\neq j}^n\inp{\Exp{s_i}}{\Exp{s_j}} \\
    &\quad +\frac{\probpairnn}{n^2}\sum_{i\neq j}^n\inp{\Exp{s_i}}{\Exp{s_j}}.
  \end{align*}
  Using \eqref{auxiliary_facts_partial_participation:one} and \eqref{auxiliary_facts_partial_participation:two}, we have
  \begin{align*}
    &\Exp{\norm{\frac{1}{n}\sum_{i=1}^n v_i - \Exp{\frac{1}{n}\sum_{i=1}^n v_i}}^2} \\
    &=\frac{\probavailable}{n^2}\sum_{i=1}^n\Exp{\norm{\frac{1}{\probavailable} s_i - \Exp{s_i}}^2} \\
    &\quad +\frac{1 - \probavailable}{n^2}\sum_{i=1}^n\norm{\Exp{s_i}}^2 \\
    &\quad +\frac{\probpairaa - \probavailable^2}{n^2 \probavailable^2}\sum_{i\neq j}^n\inp{\Exp{s_i}}{\Exp{s_j}} \\
    &\overset{\eqref{auxiliary:variance_decomposition}}{=}\frac{1}{n^2 \probavailable}\sum_{i=1}^n\Exp{\norm{s_i - \Exp{s_i}}^2} \\
    &\quad +\frac{1 - \probavailable}{n^2 \probavailable}\sum_{i=1}^n\norm{\Exp{s_i}}^2 \\
    &\quad +\frac{\probpairaa - \probavailable^2}{n^2 \probavailable^2}\sum_{i\neq j}^n\inp{\Exp{s_i}}{\Exp{s_j}} \\
    &=\frac{1}{n^2 \probavailable}\sum_{i=1}^n\Exp{\norm{s_i - \Exp{s_i}}^2} \\
    &\quad +\frac{\probavailable - \probpairaa}{n^2 \probavailable^2}\sum_{i=1}^n\norm{\Exp{s_i}}^2 \\
    &\quad +\frac{\probpairaa - \probavailable^2}{\probavailable^2}\norm{\frac{1}{n}\sum_{i=1}^n\Exp{s_i}}.
  \end{align*}
  Finally, using that $\probpairaa \leq \probavailable^2$, we have
  \begin{align*}
    &\Exp{\norm{\frac{1}{n}\sum_{i=1}^n v_i - \Exp{\frac{1}{n}\sum_{i=1}^n v_i}}^2} \\
    &\leq\frac{1}{n^2 \probavailable}\sum_{i=1}^n\Exp{\norm{s_i - \Exp{s_i}}^2} +\frac{\probavailable - \probpairaa}{n^2 \probavailable^2}\sum_{i=1}^n\norm{\Exp{s_i}}^2.
  \end{align*}
\end{proof}

\subsection{Compressors Facts}

We define the Rand$K$ compressor that chooses without replacement $K$ coordinates, scales them by a constant factor to preserve unbiasedness and zero-out other coordinates.
\begin{definition}
    \label{def:rand_k}
    Let us take a random subset $S$ from $[d],$ $|S| = K,$ $K \in [d].$ We say that a stochastic mapping $\cC\,:\, \R^d \rightarrow \R^d$ is Rand$K$ if
    $$\cC(x) = \frac{d}{K} \sum_{j \in S} x_j e_j,$$ where $\{e_i\}_{i=1}^d$ is the standard unit basis.
\end{definition}

\begin{theorem}
    \label{theorem:rand_k}
    If $\cC$ is Rand$K$, then $\cC \in \mathbb{U}\left(\frac{d}{k} - 1\right).$
\end{theorem}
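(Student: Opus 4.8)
The plan is to verify the two defining properties of an unbiased compressor from Definition~\ref{def:unbiased_compression} directly, with $\omega = \frac{d}{K} - 1$. The only probabilistic input needed is the marginal inclusion probability $\Prob(j \in S) = \nicefrac{K}{d}$ for every coordinate $j \in [d]$, which follows from the symmetry of sampling a uniformly random $K$-subset of $[d]$ (the number of $K$-subsets containing a fixed $j$ is $\binom{d-1}{K-1}$, and $\binom{d-1}{K-1}/\binom{d}{K} = \nicefrac{K}{d}$).

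First I would establish unbiasedness. Since $\cC(x) = \frac{d}{K}\sum_{j \in S} x_j e_j = \frac{d}{K}\sum_{j=1}^d \mathbbm{1}[j \in S] x_j e_j$, linearity of expectation gives $\Exp{\cC(x)} = \frac{d}{K}\sum_{j=1}^d \Prob(j \in S)\, x_j e_j = \frac{d}{K}\cdot\frac{K}{d}\sum_{j=1}^d x_j e_j = x$.

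Next I would compute the second moment. Because $\{e_j\}$ is orthonormal, $\norm{\cC(x)}^2 = \frac{d^2}{K^2}\sum_{j \in S} x_j^2 = \frac{d^2}{K^2}\sum_{j=1}^d \mathbbm{1}[j \in S]\, x_j^2$, so taking expectations, $\Exp{\norm{\cC(x)}^2} = \frac{d^2}{K^2}\sum_{j=1}^d \Prob(j \in S)\, x_j^2 = \frac{d}{K}\norm{x}^2$. Then by the variance decomposition \eqref{auxiliary:variance_decomposition} applied to the random vector $\cC(x)$, together with the unbiasedness just shown, $\Exp{\norm{\cC(x) - x}^2} = \Exp{\norm{\cC(x) - \Exp{\cC(x)}}^2} = \Exp{\norm{\cC(x)}^2} - \norm{\Exp{\cC(x)}}^2 = \frac{d}{K}\norm{x}^2 - \norm{x}^2 = \left(\frac{d}{K} - 1\right)\norm{x}^2$. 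This matches \eqref{eq:compressor} with $\omega = \frac{d}{K} - 1$, and since $x$ was arbitrary we conclude $\cC \in \mathbb{U}\left(\frac{d}{K} - 1\right)$.

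There is no real obstacle here; the proof is a short direct calculation. The only point requiring mild care is justifying $\Prob(j \in S) = \nicefrac{K}{d}$ and noting that the orthogonality of the coordinate basis eliminates cross terms in $\norm{\cC(x)}^2$, so that $\Exp{\norm{\cC(x)}^2}$ reduces to a coordinatewise sum; everything else is linearity of expectation plus the bias–variance identity.
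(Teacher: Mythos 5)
Your proof is correct and complete: the marginal inclusion probability $\Prob(j \in S) = \nicefrac{K}{d}$, linearity of expectation for unbiasedness, and the bias--variance identity \eqref{auxiliary:variance_decomposition} yield $\Exp{\norm{\cC(x) - x}^2} = \left(\frac{d}{K} - 1\right)\norm{x}^2$ exactly, which certifies $\cC \in \mathbb{U}\left(\frac{d}{K}-1\right)$. The paper does not reproduce a proof but defers to \citep{beznosikov2020biased}, and your argument is precisely the standard calculation given there, so there is nothing to add.
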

See the proof in \citep{beznosikov2020biased}.

\section{Proofs of Theorems}
\label{sec:proof_of_theorems}

There are three different sources of randomness in Algorithm~\ref{alg:main_algorithm}: the first one from vectors $\{k_i^{t+1}\}_{i=1}^n$, the second one from compressors $\{\cC_i\}_{i=1}^n$, and the third one from availability of nodes. We define $\ExpSub{k}{\cdot}$, $\ExpSub{\cC}{\cdot}$ and $\ExpSub{\probavailable}{\cdot}$ to be conditional expectations w.r.t.\,$\{k_i^{t+1}\}_{i=1}^n$, $\{\cC_i\}_{i=1}^n, $ and availability, accordingly, conditioned on all previous randomness. Moreover, we define $\ExpSub{t+1}{\cdot}$ to be a conditional expectation w.r.t. all randomness in iteration $t+1$ conditioned on all previous randomness. Note, that $\ExpSub{t+1}{\cdot} = \ExpSub{k}{\ExpSub{\cC}{\ExpSub{\probavailable}{\cdot}}}.$

In the case of \algname{\algorithmname-PAGE}, there are two different sources of randomness from $\{k_i^{t+1}\}_{i=1}^n$. We define $\ExpSub{\probpage}{\cdot}$ and $\ExpSub{B}{\cdot}$ to be conditional expectations w.r.t.\, the probabilistic switching and mini-batch indices $I_{i}^t$, accordingly, conditioned on all previous randomness. Note, that $\ExpSub{t+1}{\cdot} = \ExpSub{B}{\ExpSub{\cC}{\ExpSub{\probavailable}{\ExpSub{\probpage}{\cdot}}}}$ and $\ExpSub{t+1}{\cdot} = \ExpSub{B}{\ExpSub{\probpage}{\ExpSub{\cC}{\ExpSub{\probavailable}{\cdot}}}}.$

\subsection{Standard Lemmas in the Nonconvex Setting}

We start the proof of theorems by providing standard lemmas from the nonconvex optimization.

\begin{lemma}
  \label{lemma:page_lemma}
  Suppose that Assumption~\ref{ass:lipschitz_constant} holds and let $x^{t+1} = x^{t} - \gamma g^{t}$. Then for any $g^{t} \in \R^d$ and $\gamma > 0$, we have
  \begin{eqnarray}
    \label{eq:page_lemma}
    f(x^{t + 1}) \leq f(x^t) - \frac{\gamma}{2}\norm{\nabla f(x^t)}^2 - \left(\frac{1}{2\gamma} - \frac{L}{2}\right)
    \norm{x^{t+1} - x^t}^2 + \frac{\gamma}{2}\norm{g^{t} - \nabla f(x^t)}^2.
  \end{eqnarray}
\end{lemma}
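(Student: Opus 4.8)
The plan is to derive \eqref{eq:page_lemma} directly from the $L$-smoothness descent inequality (a consequence of Assumption~\ref{ass:lipschitz_constant}) combined with the substitution $x^{t+1}-x^t = -\gamma g^t$ and an elementary polarization identity; no probabilistic structure of $g^t$ is needed, so the argument is purely deterministic and works for every $g^t\in\R^d$ and $\gamma>0$.

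First I would write the standard quadratic upper bound implied by $L$-smoothness of $f$,
\begin{align*}
f(x^{t+1}) \leq f(x^t) + \inp{\nabla f(x^t)}{x^{t+1}-x^t} + \frac{L}{2}\norm{x^{t+1}-x^t}^2 .
\end{align*}
Then, using $x^{t+1}-x^t = -\gamma g^t$, the linear term becomes $-\gamma\inp{\nabla f(x^t)}{g^t}$, and I would apply the identity $\inp{a}{b} = \tfrac12\norm{a}^2 + \tfrac12\norm{b}^2 - \tfrac12\norm{a-b}^2$ with $a = \nabla f(x^t)$ and $b = g^t$ to get
\begin{align*}
-\gamma\inp{\nabla f(x^t)}{g^t} = -\frac{\gamma}{2}\norm{\nabla f(x^t)}^2 - \frac{\gamma}{2}\norm{g^t}^2 + \frac{\gamma}{2}\norm{g^t - \nabla f(x^t)}^2 .
\end{align*}

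Finally I would rewrite $-\tfrac{\gamma}{2}\norm{g^t}^2 = -\tfrac{1}{2\gamma}\norm{\gamma g^t}^2 = -\tfrac{1}{2\gamma}\norm{x^{t+1}-x^t}^2$, substitute back into the smoothness bound, and merge the two multiples of $\norm{x^{t+1}-x^t}^2$ into $-\left(\tfrac{1}{2\gamma}-\tfrac{L}{2}\right)\norm{x^{t+1}-x^t}^2$, which is exactly \eqref{eq:page_lemma}. There is no genuine obstacle: this is the standard one-step descent lemma used in analyses such as \citep{PAGE}, and the only nontrivial choice is how to split the inner product so that the $\norm{g^t-\nabla f(x^t)}^2$ error term appears with coefficient $\tfrac{\gamma}{2}$ — which the polarization identity above accomplishes automatically.
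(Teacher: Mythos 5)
Your proposal is correct and follows essentially the same route as the paper's proof: the $L$-smoothness quadratic upper bound, the substitution $x^{t+1}-x^t=-\gamma g^t$, and the polarization identity $-\inp{a}{b}=\tfrac12\norm{a-b}^2-\tfrac12\norm{a}^2-\tfrac12\norm{b}^2$ to produce the $\tfrac{\gamma}{2}\norm{g^t-\nabla f(x^t)}^2$ term. The algebra checks out, so nothing further is needed.
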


\begin{proof}
  Using $L-$smoothness, we have 
  \begin{align*}
    f(x^{t+1}) &\leq f(x^t) + \inp{\nabla f(x^t)}{x^{t+1} - x^{t}} + \frac{L}{2} \norm{x^{t+1} - x^{t}}^2 \\
    &= f(x^t) - \gamma \inp{\nabla f(x^t)}{g^t} + \frac{L}{2} \norm{x^{t+1} - x^{t}}^2.
  \end{align*}
  Next, due to $-\inp{x}{y} = \frac{1}{2}\norm{x - y}^2 - \frac{1}{2}\norm{x}^2 - \frac{1}{2}\norm{y}^2,$ we obtain 
  \begin{align*}
    f(x^{t+1}) \leq f(x^t) -\frac{\gamma}{2} \norm{\nabla f(x^t)}^2 - \left(\frac{1}{2\gamma} - \frac{L}{2}\right) \norm{x^{t+1} - x^{t}}^2 + \frac{\gamma}{2}\norm{g^t - \nabla f(x^t)}^2.
  \end{align*}
\end{proof}

\begin{lemma}
  \label{lemma:good_recursion}
  Suppose that Assumption \ref{ass:lower_bound} holds and
  \begin{align}
      \label{eq:private:good_recursion}
      \Exp{f(x^{t+1})} + \gamma \Psi^{t+1} \leq \Exp{f(x^t)} - \frac{\gamma}{2}\Exp{\norm{\nabla f(x^t)}^2} + \gamma \Psi^{t} + \gamma C,
  \end{align}
  where $\Psi^{t}$ is a sequence of numbers, $\Psi^{t} \geq 0$ for all $t \in [T]$, constant $C \geq 0$, and constant $\gamma > 0.$ Then 
  \begin{align}
      \label{eq:good_recursion}
      \Exp{\norm{\nabla f(\widehat{x}^T)}^2} \leq \frac{2 \Delta_0}{\gamma T} + \frac{2\Psi^{0}}{T} + 2 C,
  \end{align}
  where a point $\widehat{x}^T$ is chosen uniformly from a set of points $\{x^t\}_{t=0}^{T-1}.$
\end{lemma}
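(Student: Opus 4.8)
The plan is to run the standard telescoping argument on the one-step inequality \eqref{eq:private:good_recursion}. First I would introduce the Lyapunov-type sequence $\Phi^t \eqdef \Exp{f(x^t)} + \gamma \Psi^t$, so that the hypothesis becomes $\Phi^{t+1} \leq \Phi^t - \frac{\gamma}{2}\Exp{\norm{\nabla f(x^t)}^2} + \gamma C$ for every $t \in \{0, 1, \dots, T-1\}$; rearranging gives $\frac{\gamma}{2}\Exp{\norm{\nabla f(x^t)}^2} \leq \Phi^t - \Phi^{t+1} + \gamma C$. Note that $\Psi^t$ enters with the \emph{same} factor $\gamma$ on both sides, which is exactly what makes this a genuine telescoping recursion.

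Next I would sum over $t = 0, \dots, T-1$, so the right-hand side collapses to $\Phi^0 - \Phi^T + \gamma C T$. Since $x^0$ is deterministic, $\Phi^0 = f(x^0) + \gamma \Psi^0$; and since $\Psi^T \geq 0$ by assumption and $f(x^T) \geq f^*$ by Assumption~\ref{ass:lower_bound}, we have $\Phi^T \geq f^*$. Hence $\frac{\gamma}{2}\sum_{t=0}^{T-1}\Exp{\norm{\nabla f(x^t)}^2} \leq f(x^0) - f^* + \gamma\Psi^0 + \gamma C T = \Delta_0 + \gamma\Psi^0 + \gamma C T$.

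Finally I would divide through by $\gamma T / 2$ to get $\frac{1}{T}\sum_{t=0}^{T-1}\Exp{\norm{\nabla f(x^t)}^2} \leq \frac{2\Delta_0}{\gamma T} + \frac{2\Psi^0}{T} + 2C$, and then identify the left-hand side with $\Exp{\norm{\nabla f(\widehat{x}^T)}^2}$: conditioning on all the algorithm's randomness, the uniform draw of $\widehat{x}^T$ from $\{x^t\}_{t=0}^{T-1}$ yields $\ExpCond{\norm{\nabla f(\widehat{x}^T)}^2}{\{x^t\}_{t=0}^{T-1}} = \frac{1}{T}\sum_{t=0}^{T-1}\norm{\nabla f(x^t)}^2$, and the tower rule gives the claimed \eqref{eq:good_recursion}.

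There is essentially no real obstacle here. The only points worth care are that the nonnegativity $\Psi^T \geq 0$ is precisely what licenses dropping $\gamma\Psi^T$ when lower-bounding $\Phi^T$, and that the expectation in \eqref{eq:private:good_recursion} is already over all randomness, so no additional conditioning bookkeeping is needed before averaging over the output index $\widehat{x}^T$.
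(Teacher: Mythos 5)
Your proof is correct and follows essentially the same telescoping argument as the paper: sum \eqref{eq:private:good_recursion} over $t=0,\dots,T-1$, drop $\gamma\Psi^T\ge 0$ and lower-bound $\Exp{f(x^T)}$ by $f^*$, divide by $\nicefrac{\gamma T}{2}$, and identify the average with the uniformly chosen output point. The Lyapunov-style packaging $\Phi^t=\Exp{f(x^t)}+\gamma\Psi^t$ is only a cosmetic difference.
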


\begin{proof}
  By unrolling \eqref{eq:private:good_recursion} for $t$ from $0$ to $T - 1$, we obtain
  \begin{align*}
      \frac{\gamma}{2}\sum_{t = 0}^{T - 1}\Exp{\norm{\nabla f(x^t)}^2} + \Exp{f(x^{T})} + \gamma \Psi^{T} \leq f(x^0) + \gamma \Psi^{0} + \gamma T C.
  \end{align*}
  We subtract $f^*$, divide inequality by $\frac{\gamma T}{2},$ and take into account that $f(x) \geq f^*$ for all $x \in \R$, and $\Psi^{t} \geq 0$ for all $t \in [T],$ to get the following inequality:
  \begin{align*}
      \frac{1}{T}\sum_{t = 0}^{T - 1}\Exp{\norm{\nabla f(x^t)}^2} \leq \frac{2 \Delta_0}{\gamma T} + \frac{2\Psi^{0}}{T} + 2 C.
  \end{align*}
  It is left to consider the choice of a point $\widehat{x}^T$ to complete the proof of the lemma.
\end{proof}

\begin{lemma}
  \label{lemma:gamma}
  If $0 < \gamma \leq (L + \sqrt{A})^{-1},$ $L > 0$, and $A \geq 0,$ then $$\frac{1}{2\gamma} - \frac{L}{2} - \frac{\gamma A}{2} \geq 0.$$
\end{lemma}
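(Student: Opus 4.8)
The plan is to reduce the claimed inequality to the hypothesis by clearing the (positive) denominator $2\gamma$. Since $\gamma>0$, the inequality $\frac{1}{2\gamma}-\frac{L}{2}-\frac{\gamma A}{2}\ge 0$ is equivalent, after multiplying through by $2\gamma$, to $1-\gamma L-\gamma^2 A\ge 0$, i.e.\ $\gamma^2 A+\gamma L\le 1$. So it suffices to prove this last inequality from $0<\gamma\le(L+\sqrt{A})^{-1}$.

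First I would dispose of the degenerate case $A=0$: there the target is just $\frac{1}{2\gamma}-\frac{L}{2}\ge 0$, which is immediate from $\gamma\le(L+\sqrt{A})^{-1}=1/L$ and $L>0$. For $A>0$, I would start from the step-size bound in the inverted form $\frac{1}{\gamma}\ge L+\sqrt{A}$, which gives $\frac{1}{2\gamma}-\frac{L}{2}\ge\frac{\sqrt{A}}{2}$. It then remains to check $\frac{\sqrt{A}}{2}\ge\frac{\gamma A}{2}$, equivalently $\gamma\sqrt{A}\le 1$; and this holds because $\gamma\sqrt{A}\le\gamma(L+\sqrt{A})\le 1$, using $L>0$ and the hypothesis. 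Chaining the two inequalities yields $\frac{1}{2\gamma}-\frac{L}{2}\ge\frac{\sqrt{A}}{2}\ge\frac{\gamma A}{2}$, which is exactly the claim.

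There is essentially no hard step here; the only points requiring a little care are keeping track of the sign of $\gamma$ (so that multiplying/dividing by $2\gamma$ preserves the inequality direction) and separating out the $A=0$ case so that $\sqrt{A}$ and the manipulation $\gamma\sqrt{A}\le\gamma(L+\sqrt{A})$ are harmless. An equivalent one-line route, if preferred, is to note $\gamma^2 A=(\gamma\sqrt{A})^2\le\gamma\sqrt{A}$ since $\gamma\sqrt{A}\le 1$, so $\gamma^2 A+\gamma L\le\gamma\sqrt{A}+\gamma L=\gamma(L+\sqrt{A})\le 1$; I would likely present whichever of these two forms reads more cleanly alongside the surrounding lemmas.
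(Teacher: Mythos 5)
Your proof is correct, and it is exactly the ``direct calculation'' the paper alludes to without writing out: clearing the denominator reduces the claim to $\gamma^2 A+\gamma L\le 1$, which follows from $\gamma(L+\sqrt{A})\le 1$ via $\gamma^2A=(\gamma\sqrt{A})^2\le\gamma\sqrt{A}$. Nothing to add.
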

The lemma can be easily checked with the direct calculation.

\subsection{Generic Lemmas}
\begin{lemma}
  \label{lemma:g_h}
  Suppose that Assumptions~\ref{ass:compressors} and \ref{ass:partial_participation} hold and let us consider sequences $g^{t+1}_i$, $h^{t+1}_i,$ and $k^{t+1}_i$ from Algorithm~\ref{alg:main_algorithm}, then
  \begin{align}
      \label{eq:compressor_global_error}
      &\ExpSub{\cC}{\ExpSub{\probavailable}{\norm{g^{t+1} - h^{t+1}}^2}} \nonumber \\
      &\leq \frac{2 \omega}{n^2 \probavailable}\sum_{i=1}^n\norm{k^{t+1}_i}^2 +\frac{a^2 (\left(2 \omega + 1\right)\probavailable - \probpairaa)}{n^2 \probavailable^2}\sum_{i=1}^n\norm{g^{t}_i - h^{t}_i}^2 + (1 - a)^2\norm{g^{t} - h^{t}}^2,
  \end{align}
  and
  \begin{align}
      \label{eq:compressor_local_error}
      &\ExpSub{\cC}{\ExpSub{\probavailable}{\norm{g^{t+1}_i - h^{t+1}_i}^2}} \nonumber \\
      &\leq \frac{2 \omega}{\probavailable}\norm{k^{t+1}_i}^2 + \left(\frac{a^2(2\omega + 1 - \probavailable)}{\probavailable} + (1 - a)^2\right)\norm{g^{t}_i - h^{t}_i}^2\quad \forall i \in [n].
  \end{align}
\end{lemma}
\begin{proof}
  First, we estimate $\ExpSub{\cC}{\ExpSub{\probavailable}{\norm{g^{t+1} - h^{t+1}}^2}}$:
  \begin{align*}
      &\ExpSub{\cC}{\ExpSub{\probavailable}{\norm{g^{t+1} - h^{t+1}}^2}} \\
      &=\ExpSub{\cC}{\ExpSub{\probavailable}{\norm{g^{t+1} - h^{t+1} - \ExpSub{\cC}{\ExpSub{\probavailable}{g^{t+1} - h^{t+1}}}}^2}} + \norm{\ExpSub{\cC}{\ExpSub{\probavailable}{g^{t+1} - h^{t+1}}}}^2,
  \end{align*}
  where we used \eqref{auxiliary:variance_decomposition}.
  Due to Assumption~\ref{ass:partial_participation}, we have
  \begin{align*}
      &\ExpSub{\cC}{\ExpSub{\probavailable}{g^{t+1}_i}} \\
      &=\probavailable \ExpSub{\cC}{g^{t}_i + \cC_i\Bigg(\frac{1}{\probavailable}k^{t+1}_i - \frac{a}{\probavailable} \left(g^{t}_i - h^{t}_i\right)\Bigg)} + (1 - \probavailable) g^{t}_i \\
      &=g^{t}_i  + \probavailable\ExpSub{\cC}{\cC_i\Bigg(\frac{1}{\probavailable}k^{t+1}_i - \frac{a}{\probavailable} \left(g^{t}_i - h^{t}_i\right)\Bigg)} \\
      &=g^{t}_i  + k^{t+1}_i - a \left(g^{t}_i - h^{t}_i\right), \\
  \end{align*}
  and 
  \begin{align*}
      &\ExpSub{\cC}{\ExpSub{\probavailable}{h^{t+1}_i}} = \probavailable \ExpSub{\cC}{h^t_i + \frac{1}{\probavailable}k^{t+1}_i} + (1 - \probavailable) h^{t}_i = h^{t}_i + k^{t+1}_i. \\
  \end{align*}
  Thus, we can get
  \begin{align*}
      &\ExpSub{\cC}{\ExpSub{\probavailable}{\norm{g^{t+1} - h^{t+1}}^2}} \\
      &=\ExpSub{\cC}{\ExpSub{\probavailable}{\norm{g^{t+1} - h^{t+1} - \ExpSub{\cC}{\ExpSub{\probavailable}{g^{t+1} - h^{t+1}}}}^2}} + (1 - a)^2\norm{g^{t} - h^{t}}^2.
  \end{align*}
  Due to the independence of compressors, we can use Lemma~\ref{lemma:sampling} with $r_i = g^{t}_i - h^{t}_i$ and $s_i = \probavailable\cC_i\Bigg(\frac{1}{\probavailable}k^{t+1}_i - \frac{a}{\probavailable} \left(g^{t}_i - h^{t}_i\right)\Bigg) - k^{t+1}_i,$ and obtain
  \begin{align*}
    &\ExpSub{\cC}{\ExpSub{\probavailable}{\norm{g^{t+1} - h^{t+1}}^2}} \\
    &\leq \frac{1}{n^2 \probavailable}\sum_{i=1}^n\ExpSub{\cC}{\norm{\probavailable\cC_i\Bigg(\frac{1}{\probavailable}k^{t+1}_i - \frac{a}{\probavailable} \left(g^{t}_i - h^{t}_i\right)\Bigg) - k^{t+1}_i - \ExpSub{\cC}{\probavailable\cC_i\Bigg(\frac{1}{\probavailable}k^{t+1}_i - \frac{a}{\probavailable} \left(g^{t}_i - h^{t}_i\right)\Bigg) - k^{t+1}_i}}^2} \\
    &\quad +\frac{\probavailable - \probpairaa}{n^2 \probavailable^2}\sum_{i=1}^n\norm{\ExpSub{\cC}{\probavailable\cC_i\Bigg(\frac{1}{\probavailable}k^{t+1}_i - \frac{a}{\probavailable} \left(g^{t}_i - h^{t}_i\right)\Bigg) - k^{t+1}_i}}^2 \\
    &\quad + (1 - a)^2\norm{g^{t} - h^{t}}^2 \\
    &= \frac{\probavailable}{n^2}\sum_{i=1}^n\ExpSub{\cC}{\norm{\cC_i\Bigg(\frac{1}{\probavailable}k^{t+1}_i - \frac{a}{\probavailable} \left(g^{t}_i - h^{t}_i\right)\Bigg) - \Bigg(\frac{1}{\probavailable}k^{t+1}_i - \frac{a}{\probavailable} \left(g^{t}_i - h^{t}_i\right)\Bigg)}^2} \\
    &\quad +\frac{a^2 \left(\probavailable - \probpairaa\right)}{n^2 \probavailable^2}\sum_{i=1}^n\norm{g^{t}_i - h^{t}_i}^2 + (1 - a)^2\norm{g^{t} - h^{t}}^2.
  \end{align*}
  From Assumption~\ref{ass:compressors}, we have
  \begin{align*}
    &\ExpSub{\cC}{\ExpSub{\probavailable}{\norm{g^{t+1} - h^{t+1}}^2}} \\
    &\leq \frac{\omega \probavailable}{n^2}\sum_{i=1}^n\norm{\frac{1}{\probavailable}k^{t+1}_i - \frac{a}{\probavailable} \left(g^{t}_i - h^{t}_i\right)}^2 +\frac{a^2 \left(\probavailable - \probpairaa\right)}{n^2 \probavailable^2}\sum_{i=1}^n\norm{g^{t}_i - h^{t}_i}^2 + (1 - a)^2\norm{g^{t} - h^{t}}^2 \\
    &= \frac{\omega}{n^2 \probavailable}\sum_{i=1}^n\norm{k^{t+1}_i - a \left(g^{t}_i - h^{t}_i\right)}^2 +\frac{a^2 \left(\probavailable - \probpairaa\right)}{n^2 \probavailable^2}\sum_{i=1}^n\norm{g^{t}_i - h^{t}_i}^2 + (1 - a)^2\norm{g^{t} - h^{t}}^2 \\
    &\overset{\eqref{auxiliary:jensen_inequality}}{\leq} \frac{2 \omega}{n^2 \probavailable}\sum_{i=1}^n\norm{k^{t+1}_i}^2 +\frac{a^2 \left((2\omega + 1)\probavailable - \probpairaa\right)}{n^2 \probavailable^2}\sum_{i=1}^n\norm{g^{t}_i - h^{t}_i}^2 + (1 - a)^2\norm{g^{t} - h^{t}}^2.
  \end{align*}
  The second inequality can be proved almost in the same way:
  \begin{align*}
    &\ExpSub{\cC}{\ExpSub{\probavailable}{\norm{g^{t+1}_i - h^{t+1}_i}^2}} \\
    &=\ExpSub{\cC}{\ExpSub{\probavailable}{\norm{g^{t+1}_i - h^{t+1}_i - \ExpSub{\cC}{\ExpSub{\probavailable}{g^{t+1}_i - h^{t+1}_i}}}^2}} + \norm{\ExpSub{\cC}{\ExpSub{\probavailable}{g^{t+1}_i - h^{t+1}_i}}}^2 \\
    &=\ExpSub{\cC}{\ExpSub{\probavailable}{\norm{g^{t+1}_i - h^{t+1}_i - g^{t}_i + a \left(g^{t}_i - h^{t}_i\right) + h^{t}_i}^2}} + (1 - a)^2\norm{g^{t}_i - h^{t}_i}^2 \\
    &=\probavailable\ExpSub{\cC}{\norm{\cC_i\Bigg(\frac{1}{\probavailable}k^{t+1}_i - \frac{a}{\probavailable} \left(g^{t}_i - h^{t}_i\right)\Bigg) - \frac{1}{\probavailable}k^{t+1}_i + a \left(g^{t}_i - h^{t}_i\right)}^2} \\
    &\quad + a^2(1 - \probavailable)\norm{g^{t}_i - h^{t}_i}^2 + (1 - a)^2\norm{g^{t}_i - h^{t}_i}^2 \\
    &\overset{\eqref{auxiliary:variance_decomposition}}{=}\probavailable\ExpSub{\cC}{\norm{\cC_i\Bigg(\frac{1}{\probavailable}k^{t+1}_i - \frac{a}{\probavailable} \left(g^{t}_i - h^{t}_i\right)\Bigg) - \left(\frac{1}{\probavailable}k^{t+1}_i - \frac{a}{\probavailable} \left(g^{t}_i - h^{t}_i\right)\right)}^2} \\
    &\quad + a^2 \frac{(1 - \probavailable)^2}{\probavailable} \norm{g^{t}_i - h^{t}_i}^2 \\
    &\quad + a^2(1 - \probavailable)\norm{g^{t}_i - h^{t}_i}^2 + (1 - a)^2\norm{g^{t}_i - h^{t}_i}^2 \\
    &\leq \frac{\omega}{\probavailable}\norm{k^{t+1}_i - a \left(g^{t}_i - h^{t}_i\right)}^2 \\
    &\quad + \frac{a^2(1 - \probavailable)}{\probavailable} \norm{g^{t}_i - h^{t}_i}^2 + (1 - a)^2\norm{g^{t}_i - h^{t}_i}^2 \\
    &\overset{\eqref{auxiliary:jensen_inequality}}{\leq} \frac{2 \omega}{\probavailable}\norm{k^{t+1}_i}^2 + \frac{a^2(2\omega + 1 - \probavailable)}{\probavailable} \norm{g^{t}_i - h^{t}_i}^2 + (1 - a)^2\norm{g^{t}_i - h^{t}_i}^2.
  \end{align*}
\end{proof}

\begin{lemma}
  \label{lemma:main_lemma}
  Suppose that Assumptions \ref{ass:lipschitz_constant}, \ref{ass:compressors}, and \ref{ass:partial_participation} hold and let us take $a = \frac{\probavailable}{2 \omega + 1},$ then
  \begin{align*}
    &\Exp{f(x^{t + 1})} + \frac{\gamma (2 \omega + 1)}{\probavailable} \Exp{\norm{g^{t+1} - h^{t+1}}^2} + \frac{\gamma (\left(2 \omega + 1\right)\probavailable - \probpairaa)}{n \probavailable^2} \Exp{\frac{1}{n}\sum_{i=1}^n\norm{g^{t+1}_i - h^{t+1}_i}^2}\\
    &\leq \Exp{f(x^t) - \frac{\gamma}{2}\norm{\nabla f(x^t)}^2 - \left(\frac{1}{2\gamma} - \frac{L}{2}\right)
    \norm{x^{t+1} - x^t}^2 + \gamma \norm{h^{t} - \nabla f(x^t)}^2}\nonumber\\
    &\quad + \frac{\gamma (2 \omega + 1)}{\probavailable}\Exp{\norm{g^{t} - h^t}^2}+ \frac{\gamma (\left(2 \omega + 1\right)\probavailable - \probpairaa)}{n \probavailable^2}\Exp{\frac{1}{n} \sum_{i=1}^n\norm{g^t_i - h^{t}_i}^2} + \frac{4 \gamma \omega (2 \omega + 1)}{n \probavailable^2} \Exp{\frac{1}{n} \sum_{i=1}^n\norm{k^{t+1}_i}^2}.
  \end{align*}
\end{lemma}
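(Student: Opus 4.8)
The plan is a one-step Lyapunov (potential) argument: combine the descent inequality of Lemma~\ref{lemma:page_lemma} with the two error-contraction bounds \eqref{eq:compressor_global_error} and \eqref{eq:compressor_local_error}, using as potential the weights $c_1 \eqdef \frac{\gamma(2\omega+1)}{\probavailable}$ and $c_2 \eqdef \frac{\gamma((2\omega+1)\probavailable-\probpairaa)}{n\probavailable^2}$ that appear in the statement. Concretely, I would add $c_1\norm{g^{t+1}-h^{t+1}}^2 + c_2\cdot\frac1n\sum_{i=1}^n\norm{g^{t+1}_i-h^{t+1}_i}^2$ to both sides of \eqref{eq:page_lemma} and then apply $\ExpSub{\cC}{\ExpSub{\probavailable}{\cdot}}$. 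Since $g^t$, $h^t$, $g^t_i$, $h^t_i$, $x^t$ and $x^{t+1}=x^t-\gamma g^t$ are all deterministic given the randomness preceding iteration $t+1$, the $f$-terms and the $\norm{h^t-\nabla f(x^t)}^2$-type term on the right of \eqref{eq:page_lemma} pass through this conditional expectation unchanged, while the two potential terms at step $t+1$ get bounded by \eqref{eq:compressor_global_error} (for the global one) and by \eqref{eq:compressor_local_error} averaged over $i$ and multiplied by $c_2$ (for the local one). Finally I take the full expectation by the tower property.

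The key steps, in order: First, rewrite the stochastic-gradient term of \eqref{eq:page_lemma} with \eqref{auxiliary:jensen_inequality}, $\frac{\gamma}{2}\norm{g^t-\nabla f(x^t)}^2 \le \gamma\norm{g^t-h^t}^2 + \gamma\norm{h^t-\nabla f(x^t)}^2$, which produces the $\gamma\norm{h^t-\nabla f(x^t)}^2$ of the claim together with a spare $\gamma\norm{g^t-h^t}^2$. After substituting the two estimates, the right-hand side is a sum of nonnegative multiples of $\norm{g^t-h^t}^2$, $\frac1n\sum_i\norm{g^t_i-h^t_i}^2$ and $\frac1n\sum_i\norm{k^{t+1}_i}^2$ (plus the $f$-terms), and it only remains to check each coefficient against the target. (i) The coefficient of $\norm{g^t-h^t}^2$ is $\gamma + c_1(1-a)^2 = c_1 - \gamma(1-a) \le c_1$, using $c_1 a = \gamma$ and $a\le 1$. (ii) Folding the contribution coming from $c_1$ into $c_2$ via $c_1=\frac{\gamma(2\omega+1)}{\probavailable}$, the coefficient of $\frac1n\sum_i\norm{g^t_i-h^t_i}^2$ equals $c_2\left(\frac{a^2(2(2\omega+1)-\probavailable)}{\probavailable} + (1-a)^2\right)$; the bracket is $\le 1$ precisely when $a \le \frac{\probavailable}{2\omega+1}$, and for the prescribed $a$ it equals $1$. (iii) The coefficient of $\frac1n\sum_i\norm{k^{t+1}_i}^2$ is $\frac{2\omega c_1}{n\probavailable} + \frac{2\omega c_2}{\probavailable}$; bounding $c_2 \le \frac{\gamma(2\omega+1)}{n\probavailable}$ via $\probpairaa\ge 0$ gives $\le \frac{4\gamma\omega(2\omega+1)}{n\probavailable^2}$, which is the constant in the statement. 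Taking $\Exp{\cdot}$ then yields the claimed inequality.

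Nothing here is deep; the real work is entirely in the coefficient bookkeeping. The step I expect to be the main obstacle — and the one that forces the choice $a=\frac{\probavailable}{2\omega+1}$ — is item (ii): the $\frac1n\sum_i\norm{g^t_i-h^t_i}^2$ recursion closes only if that bracket is at most $1$, and the prescribed $a$ is exactly the value making it close with equality, leaving no slack to spare. Keeping straight which error bound feeds which coefficient (the global estimate \eqref{eq:compressor_global_error} contributes to \emph{both} $\norm{g^t-h^t}^2$ and $\frac1n\sum_i\norm{g^t_i-h^t_i}^2$, whereas the local one \eqref{eq:compressor_local_error} contributes only to the latter) is the other place where care is needed.
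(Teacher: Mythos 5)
Your proposal is correct and follows essentially the same route as the paper: a one-step Lyapunov argument combining Lemma~\ref{lemma:page_lemma} with \eqref{eq:compressor_global_error} and \eqref{eq:compressor_local_error}, with the same potential weights (the paper introduces them as generic $\kappa,\eta$ and then fixes $\kappa=\gamma/a$, $\eta=\gamma((2\omega+1)\probavailable-\probpairaa)/(n\probavailable^2)$, which coincide with your $c_1,c_2$ at the prescribed $a$) and the same final bound on the $k$-coefficient via $\probpairaa\geq 0$. Your coefficient bookkeeping in items (i)--(iii), including the observation that the bracket in (ii) closes with equality exactly at $a=\probavailable/(2\omega+1)$, checks out.
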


\begin{proof}
  Due to Lemma \ref{lemma:page_lemma} and the update step from Line~\ref{alg:main_algorithm:x_update} in Algorithm~\ref{alg:main_algorithm}, we have
  \begin{align*}
    &\ExpSub{t+1}{f(x^{t + 1})} \nonumber\\
    &\leq \ExpSub{t+1}{f(x^t) - \frac{\gamma}{2}\norm{\nabla f(x^t)}^2 - \left(\frac{1}{2\gamma} - \frac{L}{2}\right)
      \norm{x^{t+1} - x^t}^2 + \frac{\gamma}{2}\norm{g^{t} - \nabla f(x^t)}^2} \nonumber \\
      &= \ExpSub{t+1}{f(x^t) - \frac{\gamma}{2}\norm{\nabla f(x^t)}^2 - \left(\frac{1}{2\gamma} - \frac{L}{2}\right)
      \norm{x^{t+1} - x^t}^2 + \frac{\gamma}{2}\norm{g^{t} - h^t + h^t - \nabla f(x^t)}^2} \nonumber \\
      &\overset{\eqref{auxiliary:variance_decomposition}}{\leq} \ExpSub{t+1}{f(x^t) - \frac{\gamma}{2}\norm{\nabla f(x^t)}^2 - \left(\frac{1}{2\gamma} - \frac{L}{2}\right)
      \norm{x^{t+1} - x^t}^2 + \gamma\left(\norm{g^{t} - h^t}^2 + \norm{h^t - \nabla f(x^t)}^2}\right). \nonumber \\
      \nonumber
  \end{align*}
  Let us fix some constants $\kappa, \eta \in [0,\infty)$ that we will define later. Combining the last inequality, bounds \eqref{eq:compressor_global_error}, \eqref{eq:compressor_local_error} and using the law of total expectation, we get
  \begin{align*}
      &\Exp{f(x^{t + 1})} \nonumber \\
      &\quad  + \kappa \Exp{\norm{g^{t+1} - h^{t+1}}^2} + \eta \Exp{\frac{1}{n}\sum_{i=1}^n\norm{g^{t+1}_i - h^{t+1}_i}^2} \nonumber\\
      &=\Exp{\ExpSub{t+1}{f(x^{t + 1})}} \nonumber \\
      &\quad  + \kappa \Exp{\ExpSub{\cC}{\ExpSub{\probavailable}{\norm{g^{t+1} - h^{t+1}}^2}}} + \eta \Exp{\ExpSub{\cC}{\ExpSub{\probavailable}{\frac{1}{n}\sum_{i=1}^n\norm{g^{t+1}_i - h^{t+1}_i}^2}}} \nonumber\\
      &\leq \Exp{f(x^t) - \frac{\gamma}{2}\norm{\nabla f(x^t)}^2 - \left(\frac{1}{2\gamma} - \frac{L}{2}\right)
      \norm{x^{t+1} - x^t}^2 + \gamma\left(\norm{g^{t} - h^t}^2 + \norm{h^{t} - \nabla f(x^t)}^2\right)} \nonumber\\
      &\quad +\kappa \Exp{\frac{2 \omega}{n^2 \probavailable}\sum_{i=1}^n\norm{k^{t+1}_i}^2 +\frac{a^2 (\left(2 \omega + 1\right)\probavailable - \probpairaa)}{n^2 \probavailable^2}\sum_{i=1}^n\norm{g^{t}_i - h^{t}_i}^2 + (1 - a)^2\norm{g^{t} - h^{t}}^2} \nonumber\\
      &\quad +\eta \Exp{\frac{2 \omega}{n \probavailable} \sum_{i=1}^n \norm{k^{t+1}_i}^2 + \left(\frac{a^2(2\omega + 1 - \probavailable)}{\probavailable} + (1 - a)^2\right)\frac{1}{n}\sum_{i=1}^n\norm{g^{t}_i - h^{t}_i}^2} \\
      &= \Exp{f(x^t) - \frac{\gamma}{2}\norm{\nabla f(x^t)}^2 - \left(\frac{1}{2\gamma} - \frac{L}{2}\right)
      \norm{x^{t+1} - x^t}^2 + \gamma \norm{h^{t} - \nabla f(x^t)}^2}\nonumber\\
      &\quad + \left(\gamma + \kappa \left(1 - a\right)^2\right)\Exp{\norm{g^{t} - h^t}^2}\nonumber\\
      &\quad + \left(\frac{\kappa a^2 (\left(2 \omega + 1\right)\probavailable - \probpairaa)}{n \probavailable^2} + \eta\left(\frac{a^2(2\omega + 1 - \probavailable)}{\probavailable} + (1 - a)^2\right)\right)\Exp{\frac{1}{n} \sum_{i=1}^n\norm{g^t_i - h^{t}_i}^2}\nonumber\\
      &\quad + \left(\frac{2 \kappa \omega}{n \probavailable} + \frac{2 \eta \omega}{\probavailable}\right)\Exp{\frac{1}{n} \sum_{i=1}^n\norm{k^{t+1}_i}^2}.
  \end{align*}
  Now, by taking $\kappa = \frac{\gamma}{a}$, we can see that $\gamma + \kappa \left(1 - a\right)^2 \leq \kappa, $ and thus
  \begin{align*}
      &\Exp{f(x^{t + 1})} \\
      &\quad  + \frac{\gamma}{a} \Exp{\norm{g^{t+1} - h^{t+1}}^2} + \eta \Exp{\frac{1}{n}\sum_{i=1}^n\norm{g^{t+1}_i - h^{t+1}_i}^2}\\
      &\leq \Exp{f(x^t) - \frac{\gamma}{2}\norm{\nabla f(x^t)}^2 - \left(\frac{1}{2\gamma} - \frac{L}{2}\right)
      \norm{x^{t+1} - x^t}^2 + \gamma \norm{h^{t} - \nabla f(x^t)}^2}\nonumber\\
      &\quad + \frac{\gamma}{a}\Exp{\norm{g^{t} - h^t}^2}\nonumber\\
      &\quad + \left(\frac{\gamma a (\left(2 \omega + 1\right)\probavailable - \probpairaa)}{n \probavailable^2} + \eta\left(\frac{a^2(2\omega + 1 - \probavailable)}{\probavailable} + (1 - a)^2\right)\right)\Exp{\frac{1}{n} \sum_{i=1}^n\norm{g^t_i - h^{t}_i}^2}\nonumber\\
      &\quad + \left(\frac{2 \gamma \omega}{a n \probavailable} + \frac{2 \eta \omega}{\probavailable}\right)\Exp{\frac{1}{n} \sum_{i=1}^n\norm{k^{t+1}_i}^2}.
  \end{align*}
  Next, by taking $\eta = \frac{\gamma (\left(2 \omega + 1\right)\probavailable - \probpairaa)}{n \probavailable^2}$ and considering the choice of $a$, one can show that $\left(\frac{\gamma a (\left(2 \omega + 1\right)\probavailable - \probpairaa)}{n \probavailable^2} + \eta\left(\frac{a^2(2\omega + 1 - \probavailable)}{\probavailable} + (1 - a)^2\right)\right) \leq \eta.$ Thus

  \begin{align*}
    &\Exp{f(x^{t + 1})} \\
    &\quad  + \frac{\gamma (2 \omega + 1)}{\probavailable} \Exp{\norm{g^{t+1} - h^{t+1}}^2} + \frac{\gamma (\left(2 \omega + 1\right)\probavailable - \probpairaa)}{n \probavailable^2} \Exp{\frac{1}{n}\sum_{i=1}^n\norm{g^{t+1}_i - h^{t+1}_i}^2}\\
    &\leq \Exp{f(x^t) - \frac{\gamma}{2}\norm{\nabla f(x^t)}^2 - \left(\frac{1}{2\gamma} - \frac{L}{2}\right)
    \norm{x^{t+1} - x^t}^2 + \gamma \norm{h^{t} - \nabla f(x^t)}^2}\nonumber\\
    &\quad + \frac{\gamma (2 \omega + 1)}{\probavailable}\Exp{\norm{g^{t} - h^t}^2}+ \frac{\gamma (\left(2 \omega + 1\right)\probavailable - \probpairaa)}{n \probavailable^2}\Exp{\frac{1}{n} \sum_{i=1}^n\norm{g^t_i - h^{t}_i}^2}\nonumber\\
    &\quad + \left(\frac{2 \gamma (2 \omega + 1)\omega}{n \probavailable^2} + \frac{2 \gamma (\left(2 \omega + 1\right)\probavailable - \probpairaa) \omega}{n \probavailable^3}\right)\Exp{\frac{1}{n} \sum_{i=1}^n\norm{k^{t+1}_i}^2}.
  \end{align*}
  Considering that $\probpairaa \geq 0,$ we can simplify the last term and get
  \begin{align*}
    &\Exp{f(x^{t + 1})} \\
    &\quad  + \frac{\gamma (2 \omega + 1)}{\probavailable} \Exp{\norm{g^{t+1} - h^{t+1}}^2} + \frac{\gamma (\left(2 \omega + 1\right)\probavailable - \probpairaa)}{n \probavailable^2} \Exp{\frac{1}{n}\sum_{i=1}^n\norm{g^{t+1}_i - h^{t+1}_i}^2}\\
    &\leq \Exp{f(x^t) - \frac{\gamma}{2}\norm{\nabla f(x^t)}^2 - \left(\frac{1}{2\gamma} - \frac{L}{2}\right)
    \norm{x^{t+1} - x^t}^2 + \gamma \norm{h^{t} - \nabla f(x^t)}^2}\nonumber\\
    &\quad + \frac{\gamma (2 \omega + 1)}{\probavailable}\Exp{\norm{g^{t} - h^t}^2}+ \frac{\gamma (\left(2 \omega + 1\right)\probavailable - \probpairaa)}{n \probavailable^2}\Exp{\frac{1}{n} \sum_{i=1}^n\norm{g^t_i - h^{t}_i}^2}\nonumber\\
    &\quad + \frac{4 \gamma (2 \omega + 1)\omega}{n \probavailable^2} \Exp{\frac{1}{n} \sum_{i=1}^n\norm{k^{t+1}_i}^2}.
  \end{align*}
\end{proof}

\subsection{Proof for \algname{\algorithmname}}

\begin{lemma}
  \label{lemma:gradient}
  Suppose that Assumptions~\ref{ass:nodes_lipschitz_constant} and \ref{ass:partial_participation} hold. For $h^{t+1}_i$ and $k^{t+1}_i$ from Algorithm~\ref{alg:main_algorithm} (\algname{\algorithmname}) we have
  \begin{enumerate}
  \item
      \begin{align*}
          &\ExpSub{\probavailable}{\norm{h^{t+1} - \nabla f(x^{t+1})}^2} \\
          & \leq \frac{2\left(\probavailable - \probpairaa\right)\widehat{L}^2}{n \probavailable^2} \norm{x^{t+1} - x^{t}}^2 + \frac{2 b^2 \left(\probavailable - \probpairaa\right)}{n^2 \probavailable^2} \sum_{i=1}^n \norm{h^t_i - \nabla f_i(x^{t})}^2 + \left(1 - b\right)^2 \norm{h^{t} - \nabla f(x^{t})}^2.
      \end{align*}
  \item
      \begin{align*}
          &\ExpSub{\probavailable}{\norm{h^{t+1}_i - \nabla f_i(x^{t+1})}^2} \\
          & \leq \frac{2(1 - \probavailable)}{\probavailable} L^2_i \norm{x^{t+1} - x^{t}}^2 + \left(\frac{2 b^2 (1 - \probavailable)}{\probavailable} + (1 - b)^2\right) \norm{h^{t}_i - \nabla f_i(x^{t})}^2, \quad \forall i \in [n].
      \end{align*}
  \item
      \begin{align*}
        &\norm{k^{t+1}_i}^2 \leq 2L^2_i\norm{x^{t+1} - x^{t}}^2 + 2b^2 \norm{h^t_i - \nabla f_i(x^{t})}^2, \quad \forall i \in [n].
      \end{align*}
  \end{enumerate}
\end{lemma}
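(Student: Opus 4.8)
The plan is to peel off the three sources of structure one at a time, and it is cleanest to establish the bounds in the reverse of the stated order, since (3) feeds into both (2) and (1). Throughout, write $h^{t+1}_i = h^t_i + \tfrac{\mathbbm{1}_i}{\probavailable} k^{t+1}_i$, where $\mathbbm{1}_i$ is the indicator of the event ``node $i$ is participating'', and record the algebraic identity obtained by plugging the definition of $k^{t+1}_i$ from Algorithm~\ref{alg:main_algorithm:dasha_pp}:
\begin{align*}
h^t_i + k^{t+1}_i = \nabla f_i(x^{t+1}) + (1 - b)\left(h^t_i - \nabla f_i(x^{t})\right).
\end{align*}
For (3), I would split $k^{t+1}_i = \left(\nabla f_i(x^{t+1}) - \nabla f_i(x^{t})\right) - b\left(h^t_i - \nabla f_i(x^{t})\right)$, apply \eqref{auxiliary:jensen_inequality}, and bound $\norm{\nabla f_i(x^{t+1}) - \nabla f_i(x^{t})}^2 \leq L_i^2 \norm{x^{t+1} - x^{t}}^2$ using Assumption~\ref{ass:nodes_lipschitz_constant}.

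For (2), observe that inside $\ExpSub{\probavailable}{\cdot}$ everything except $\mathbbm{1}_i$ is deterministic, so $\ExpSub{\probavailable}{h^{t+1}_i} = h^t_i + k^{t+1}_i$, and the identity above gives $\ExpSub{\probavailable}{h^{t+1}_i} - \nabla f_i(x^{t+1}) = (1-b)(h^t_i - \nabla f_i(x^{t}))$. The variance decomposition \eqref{auxiliary:variance_decomposition} with respect to the participation randomness then yields
\begin{align*}
\ExpSub{\probavailable}{\norm{h^{t+1}_i - \nabla f_i(x^{t+1})}^2} = \ExpSub{\probavailable}{\left(\tfrac{\mathbbm{1}_i}{\probavailable} - 1\right)^2}\norm{k^{t+1}_i}^2 + (1-b)^2\norm{h^t_i - \nabla f_i(x^{t})}^2,
\end{align*}
and since $\mathbbm{1}_i$ is Bernoulli$(\probavailable)$ we have $\ExpSub{\probavailable}{(\mathbbm{1}_i/\probavailable - 1)^2} = (1-\probavailable)/\probavailable$; substituting the bound from (3) and collecting terms gives (2).

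For (1), I would invoke Lemma~\ref{lemma:sampling} with $S$ the random set of participating nodes, $r_i = h^t_i - \nabla f_i(x^{t+1})$ and $s_i = k^{t+1}_i$ — a constant vector given the past, hence an admissible (degenerate) choice with $\Exp{\norm{s_i - \Exp{s_i}}^2} = 0$. Then $\tfrac{1}{n}\sum_i v_i = h^{t+1} - \nabla f(x^{t+1})$, its conditional mean equals $(1-b)(h^t - \nabla f(x^{t}))$ by averaging the identity, and the variance bound of Lemma~\ref{lemma:sampling} collapses to $\tfrac{\probavailable - \probpairaa}{n^2 \probavailable^2}\sum_i \norm{k^{t+1}_i}^2$. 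Applying \eqref{auxiliary:variance_decomposition}, inserting (3), and using $\sum_{i=1}^n L_i^2 = n\widehat{L}^2$ produces exactly the claimed inequality.

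The computations are routine; the only places needing care are (a) checking that $\ExpSub{\probavailable}{\cdot}$ genuinely treats $x^{t+1}, x^t, h^t_i, k^{t+1}_i$ as deterministic — true in the gradient setting, where $k^{t+1}_i$ carries no stochasticity — so that Lemma~\ref{lemma:sampling} applies with degenerate $s_i$ and the first term there vanishes, and (b) carrying the factor $1-b$ through the conditional-mean identity, which is precisely what lets the momentum $b$ contract the error sequences $h^t - \nabla f(x^t)$ and $h^t_i - \nabla f_i(x^t)$. I do not anticipate a genuine obstacle here; the lemma is essentially bookkeeping in preparation for the descent argument.
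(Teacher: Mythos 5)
Your proposal is correct and follows essentially the same route as the paper: variance decomposition with respect to the participation randomness, Lemma~\ref{lemma:sampling} with deterministic $s_i = k^{t+1}_i$ (so only the $\frac{\probavailable-\probpairaa}{n^2\probavailable^2}\sum_i\norm{k^{t+1}_i}^2$ term survives), the Bernoulli variance $\frac{1-\probavailable}{\probavailable}$ for the per-node bound, and Jensen plus $L_i$-smoothness to control $\norm{k^{t+1}_i}^2$. The only differences — proving item (3) first and shifting $r_i$ by the constant $-\nabla f_i(x^{t+1})$ — are cosmetic and do not change the computation.
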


\begin{proof}
  First, let us proof the bound for $\ExpSub{k}{\ExpSub{\probavailable}{\norm{h^{t+1} - \nabla f(x^{t+1})}^2}}$:
  \begin{align*}
      &\ExpSub{\probavailable}{\norm{h^{t+1} - \nabla f(x^{t+1})}^2} \\
      &=\ExpSub{\probavailable}{\norm{h^{t+1} - \ExpSub{\probavailable}{h^{t+1}}}^2} + \norm{\ExpSub{\probavailable}{h^{t+1}} - \nabla f(x^{t+1})}^2.
  \end{align*}
  Using
  \begin{align*}
      \ExpSub{\probavailable}{h^{t+1}_i} = h^{t}_i + \nabla f_i(x^{t+1}) - \nabla f_i(x^{t}) - b(h^t_i - \nabla f_i(x^{t}))
  \end{align*}
  and \eqref{auxiliary:variance_decomposition}, we have
  \begin{align*}
    &\ExpSub{\probavailable}{\norm{h^{t+1} - \nabla f(x^{t+1})}^2} \\
    &=\ExpSub{\probavailable}{\norm{h^{t+1} - \ExpSub{\probavailable}{h^{t+1}}}^2} + \left(1 - b\right)^2 \norm{h^{t} - \nabla f(x^{t})}^2.
  \end{align*}
  We can use Lemma~\ref{lemma:sampling} with $r_i = h^{t}_i$ and $s_i = k^{t+1}_i$ to obtain
  \begin{align*}
    &\ExpSub{\probavailable}{\norm{h^{t+1} - \nabla f(x^{t+1})}^2} \\
    &\leq \frac{1}{n^2 \probavailable}\sum_{i=1}^n\norm{k^{t+1}_i - k^{t+1}_i}^2 +\frac{\probavailable - \probpairaa}{n^2 \probavailable^2}\sum_{i=1}^n\norm{k^{t+1}_i}^2 + \left(1 - b\right)^2 \norm{h^{t} - \nabla f(x^{t})}^2\\
    &= \frac{\probavailable - \probpairaa}{n^2 \probavailable^2}\sum_{i=1}^n\norm{\nabla f_i(x^{t+1}) - \nabla f_i(x^{t}) - b \left(h^t_i - \nabla f_i(x^{t})\right)}^2 + \left(1 - b\right)^2 \norm{h^{t} - \nabla f(x^{t})}^2\\
    &\overset{\eqref{auxiliary:jensen_inequality}}{\leq} \frac{2\left(\probavailable - \probpairaa\right)}{n^2 \probavailable^2}\sum_{i=1}^n\norm{\nabla f_i(x^{t+1}) - \nabla f_i(x^{t})}^2 + \frac{2 b^2 \left(\probavailable - \probpairaa\right)}{n^2 \probavailable^2}\sum_{i=1}^n\norm{h^t_i - \nabla f_i(x^{t})}^2 + \left(1 - b\right)^2 \norm{h^{t} - \nabla f(x^{t})}^2\\
    &\leq \frac{2\left(\probavailable - \probpairaa\right)\widehat{L}^2}{n \probavailable^2} \norm{x^{t+1} - x^{t}}^2 + \frac{2 b^2 \left(\probavailable - \probpairaa\right)}{n^2 \probavailable^2}\sum_{i=1}^n\norm{h^t_i - \nabla f_i(x^{t})}^2 + \left(1 - b\right)^2 \norm{h^{t} - \nabla f(x^{t})}^2.
  \end{align*}
  In the last in inequality, we used Assumption~\ref{ass:nodes_lipschitz_constant}. Now, we prove the second inequality:
  \begin{align*}
    &\ExpSub{\probavailable}{\norm{h^{t+1}_i - \nabla f_i(x^{t+1})}^2} \\
    &=\ExpSub{\probavailable}{\norm{h^{t+1}_i - \ExpSub{\probavailable}{h^{t+1}_i}}^2} + \norm{\ExpSub{\probavailable}{h^{t+1}_i  } - \nabla f_i(x^{t+1})}^2 \\
    &=\ExpSub{\probavailable}{\norm{h^{t+1}_i - \left(h^{t}_i + \nabla f_i(x^{t+1}) - \nabla f_i(x^{t}) - b(h^t_i - \nabla f_i(x^{t}))\right)}^2} + (1 - b)^2 \norm{h^{t}_i - \nabla f_i(x^{t})}^2 \\
    &=\frac{(1 - \probavailable)^2}{\probavailable} \norm{\nabla f_i(x^{t+1}) - \nabla f_i(x^{t}) - b(h^t_i - \nabla f_i(x^{t}))}^2 \\
    &\quad + (1 - \probavailable)\norm{\nabla f_i(x^{t+1}) - \nabla f_i(x^{t}) - b(h^t_i - \nabla f_i(x^{t}))}^2 + (1 - b)^2 \norm{h^{t}_i - \nabla f_i(x^{t})}^2 \\
    &=\frac{(1 - \probavailable)}{\probavailable} \norm{\nabla f_i(x^{t+1}) - \nabla f_i(x^{t}) - b(h^t_i - \nabla f_i(x^{t}))}^2 + (1 - b)^2 \norm{h^{t}_i - \nabla f_i(x^{t})}^2 \\
    &\leq \frac{2(1 - \probavailable)}{\probavailable} L^2_i \norm{x^{t+1} - x^{t}}^2 + \left(\frac{2 b^2 (1 - \probavailable)}{\probavailable} + (1 - b)^2\right) \norm{h^{t}_i - \nabla f_i(x^{t})}^2.
  \end{align*}
  Finally, the third inequality of the theorem follows from \eqref{auxiliary:jensen_inequality} and Assumption~\ref{ass:nodes_lipschitz_constant}.
\end{proof}

\CONVERGENCE*

\begin{proof}
  Let us fix constants $\nu, \rho \in [0,\infty)$ that we will define later. Considering Lemma~\ref{lemma:main_lemma}, Lemma~\ref{lemma:gradient}, and the law of total expectation, we obtain
    \begin{align*}
      &\Exp{f(x^{t + 1})} + \frac{\gamma (2 \omega + 1)}{\probavailable} \Exp{\norm{g^{t+1} - h^{t+1}}^2} + \frac{\gamma (\left(2 \omega + 1\right)\probavailable - \probpairaa)}{n \probavailable^2} \Exp{\frac{1}{n}\sum_{i=1}^n\norm{g^{t+1}_i - h^{t+1}_i}^2}\\
      &\quad  + \nu \Exp{\norm{h^{t+1} - \nabla f(x^{t+1})}^2} + \rho \Exp{\frac{1}{n}\sum_{i=1}^n\norm{h^{t+1}_i - \nabla f_i(x^{t+1})}^2}\\
      &=\Exp{f(x^{t + 1})} + \frac{\gamma (2 \omega + 1)}{\probavailable} \Exp{\norm{g^{t+1} - h^{t+1}}^2} + \frac{\gamma (\left(2 \omega + 1\right)\probavailable - \probpairaa)}{n \probavailable^2} \Exp{\frac{1}{n}\sum_{i=1}^n\norm{g^{t+1}_i - h^{t+1}_i}^2}\\
      &\quad  + \nu \Exp{\ExpSub{\probavailable}{\norm{h^{t+1} - \nabla f(x^{t+1})}^2}} + \rho \Exp{\ExpSub{\probavailable}{\frac{1}{n}\sum_{i=1}^n\norm{h^{t+1}_i - \nabla f_i(x^{t+1})}^2}}\\
      &\leq \Exp{f(x^t) - \frac{\gamma}{2}\norm{\nabla f(x^t)}^2 - \left(\frac{1}{2\gamma} - \frac{L}{2}\right)
      \norm{x^{t+1} - x^t}^2 + \gamma \norm{h^{t} - \nabla f(x^t)}^2}\nonumber\\
      &\quad + \frac{\gamma (2 \omega + 1)}{\probavailable}\Exp{\norm{g^{t} - h^t}^2}+ \frac{\gamma (\left(2 \omega + 1\right)\probavailable - \probpairaa)}{n \probavailable^2}\Exp{\frac{1}{n} \sum_{i=1}^n\norm{g^t_i - h^{t}_i}^2} \\
      &\quad + \frac{4 \gamma \omega (2 \omega + 1)}{n \probavailable^2} \Exp{2\widehat{L}^2\norm{x^{t+1} - x^{t}}^2 + 2b^2 \frac{1}{n}\sum_{i=1}^n \norm{h^t_i - \nabla f_i(x^{t})}^2} \\
      &\quad + \nu \Exp{\frac{2\left(\probavailable - \probpairaa\right)\widehat{L}^2}{n \probavailable^2} \norm{x^{t+1} - x^{t}}^2 + \frac{2 b^2 \left(\probavailable - \probpairaa\right)}{n^2 \probavailable^2} \sum_{i=1}^n \norm{h^t_i - \nabla f_i(x^{t})}^2 + \left(1 - b\right)^2 \norm{h^{t} - \nabla f(x^{t})}^2} \\
      &\quad + \rho \Exp{\frac{2(1 - \probavailable)}{\probavailable} \widehat{L}^2 \norm{x^{t+1} - x^{t}}^2 + \left(\frac{2 b^2 (1 - \probavailable)}{\probavailable} + (1 - b)^2\right) \frac{1}{n}\sum_{i=1}^n \norm{h^{t}_i - \nabla f_i(x^{t})}^2}.
    \end{align*}
    After rearranging the terms, we get
    \begin{align*}
      &\Exp{f(x^{t + 1})} + \frac{\gamma (2 \omega + 1)}{\probavailable} \Exp{\norm{g^{t+1} - h^{t+1}}^2} + \frac{\gamma (\left(2 \omega + 1\right)\probavailable - \probpairaa)}{n \probavailable^2} \Exp{\frac{1}{n}\sum_{i=1}^n\norm{g^{t+1}_i - h^{t+1}_i}^2}\\
      &\quad  + \nu \Exp{\norm{h^{t+1} - \nabla f(x^{t+1})}^2} + \rho \Exp{\frac{1}{n}\sum_{i=1}^n\norm{h^{t+1}_i - \nabla f_i(x^{t+1})}^2}\\
      &\leq \Exp{f(x^t)} - \frac{\gamma}{2}\Exp{\norm{\nabla f(x^t)}^2} \\
      &\quad + \frac{\gamma (2 \omega + 1)}{\probavailable} \Exp{\norm{g^{t} - h^{t}}^2} + \frac{\gamma (\left(2 \omega + 1\right)\probavailable - \probpairaa)}{n \probavailable^2} \Exp{\frac{1}{n}\sum_{i=1}^n\norm{g^{t}_i - h^{t}_i}^2} \\
      &\quad - \left(\frac{1}{2\gamma} - \frac{L}{2} - \frac{8 \gamma \omega \left(2 \omega + 1\right) \widehat{L}^2}{n \probavailable^2} - \nu \frac{2\left(\probavailable - \probpairaa\right)\widehat{L}^2}{n \probavailable^2} - \rho \frac{2(1 - \probavailable) \widehat{L}^2}{\probavailable} \right) \Exp{\norm{x^{t+1} - x^t}^2} \\
      &\quad + \left(\gamma + \nu (1 - b)^2\right) \Exp{\norm{h^{t} - \nabla f(x^{t})}^2} \\
      &\quad + \left(\frac{8 b^2 \gamma \omega (2 \omega + 1)}{n \probavailable^2} + \nu \frac{2 b^2 \left(\probavailable - \probpairaa\right)}{n \probavailable^2} + \rho \left(\frac{2 b^2 (1 - \probavailable)}{\probavailable} + (1 - b)^2\right) \right)\Exp{\frac{1}{n}\sum_{i=1}^n\norm{h^{t}_i - \nabla f_i(x^{t})}^2}.
    \end{align*}
    By taking $\nu = \frac{\gamma}{b},$ one can show that $\left(\gamma + \nu (1 - b)^2\right) \leq \nu,$ and
    \begin{align*}
      &\Exp{f(x^{t + 1})} + \frac{\gamma (2 \omega + 1)}{\probavailable} \Exp{\norm{g^{t+1} - h^{t+1}}^2} + \frac{\gamma (\left(2 \omega + 1\right)\probavailable - \probpairaa)}{n \probavailable^2} \Exp{\frac{1}{n}\sum_{i=1}^n\norm{g^{t+1}_i - h^{t+1}_i}^2}\\
      &\quad  + \frac{\gamma}{b} \Exp{\norm{h^{t+1} - \nabla f(x^{t+1})}^2} + \rho \Exp{\frac{1}{n}\sum_{i=1}^n\norm{h^{t+1}_i - \nabla f_i(x^{t+1})}^2}\\
      &\leq \Exp{f(x^t)} - \frac{\gamma}{2}\Exp{\norm{\nabla f(x^t)}^2} \\
      &\quad + \frac{\gamma (2 \omega + 1)}{\probavailable} \Exp{\norm{g^{t} - h^{t}}^2} + \frac{\gamma (\left(2 \omega + 1\right)\probavailable - \probpairaa)}{n \probavailable^2} \Exp{\frac{1}{n}\sum_{i=1}^n\norm{g^{t}_i - h^{t}_i}^2} \\
      &\quad - \left(\frac{1}{2\gamma} - \frac{L}{2} - \frac{8 \gamma \omega \left(2 \omega + 1\right) \widehat{L}^2}{n \probavailable^2} - \frac{2\gamma \left(\probavailable - \probpairaa\right)\widehat{L}^2}{b n \probavailable^2} - \rho \frac{2(1 - \probavailable) \widehat{L}^2}{\probavailable} \right) \Exp{\norm{x^{t+1} - x^t}^2} \\
      &\quad + \frac{\gamma}{b} \Exp{\norm{h^{t} - \nabla f(x^{t})}^2} \\
      &\quad + \left(\frac{8 b^2 \gamma \omega (2 \omega + 1)}{n \probavailable^2} + \frac{2 \gamma b \left(\probavailable - \probpairaa\right)}{n \probavailable^2} + \rho \left(\frac{2 b^2 (1 - \probavailable)}{\probavailable} + (1 - b)^2\right) \right)\Exp{\frac{1}{n}\sum_{i=1}^n\norm{h^{t}_i - \nabla f_i(x^{t})}^2}.
    \end{align*}
    Note that $b = \frac{\probavailable}{2 - \probavailable},$ thus
    \begin{align*}
      &\left(\frac{8 b^2 \gamma \omega (2 \omega + 1)}{n \probavailable^2} + \frac{2 \gamma b \left(\probavailable - \probpairaa\right)}{n \probavailable^2} + \rho \left(\frac{2 b^2 (1 - \probavailable)}{\probavailable} + (1 - b)^2\right) \right) \\
      &\leq \left(\frac{8 b^2 \gamma \omega (2 \omega + 1)}{n \probavailable^2} + \frac{2 \gamma b \left(\probavailable - \probpairaa\right)}{n \probavailable^2} + \rho \left(1 - b\right) \right).
    \end{align*}
    And if we take $\rho = \frac{8 b \gamma \omega (2 \omega + 1)}{n \probavailable^2} + \frac{2 \gamma \left(\probavailable - \probpairaa\right)}{n \probavailable^2},$ then
    \begin{align*}
      \left(\frac{8 b^2 \gamma \omega (2 \omega + 1)}{n \probavailable^2} + \frac{2 \gamma b \left(\probavailable - \probpairaa\right)}{n \probavailable^2} + \rho \left(1 - b\right) \right) \leq \rho,
    \end{align*}
    and 
    \begin{align*}
      &\Exp{f(x^{t + 1})} + \frac{\gamma (2 \omega + 1)}{\probavailable} \Exp{\norm{g^{t+1} - h^{t+1}}^2} + \frac{\gamma (\left(2 \omega + 1\right)\probavailable - \probpairaa)}{n \probavailable^2} \Exp{\frac{1}{n}\sum_{i=1}^n\norm{g^{t+1}_i - h^{t+1}_i}^2}\\
      &\quad  + \frac{\gamma}{b} \Exp{\norm{h^{t+1} - \nabla f(x^{t+1})}^2} + \left(\frac{8 b \gamma \omega (2 \omega + 1)}{n \probavailable^2} + \frac{2 \gamma \left(\probavailable - \probpairaa\right)}{n \probavailable^2}\right) \Exp{\frac{1}{n}\sum_{i=1}^n\norm{h^{t+1}_i - \nabla f_i(x^{t+1})}^2}\\
      &\leq \Exp{f(x^t)} - \frac{\gamma}{2}\Exp{\norm{\nabla f(x^t)}^2} \\
      &\quad + \frac{\gamma (2 \omega + 1)}{\probavailable} \Exp{\norm{g^{t} - h^{t}}^2} + \frac{\gamma (\left(2 \omega + 1\right)\probavailable - \probpairaa)}{n \probavailable^2} \Exp{\frac{1}{n}\sum_{i=1}^n\norm{g^{t}_i - h^{t}_i}^2} \\
      &\quad - \Bigg(\frac{1}{2\gamma} - \frac{L}{2} - \frac{8 \gamma \omega \left(2 \omega + 1\right) \widehat{L}^2}{n \probavailable^2} - \frac{2\gamma \left(\probavailable - \probpairaa\right)\widehat{L}^2}{b n \probavailable^2} \\
      &\quad\qquad - \frac{16 b \gamma \omega (2 \omega + 1) (1 - \probavailable) \widehat{L}^2}{n \probavailable^3} - \frac{4 \gamma \left(\probavailable - \probpairaa\right) (1 - \probavailable) \widehat{L}^2}{n \probavailable^3} \Bigg) \Exp{\norm{x^{t+1} - x^t}^2} \\
      &\quad + \frac{\gamma}{b} \Exp{\norm{h^{t} - \nabla f(x^{t})}^2} + \left(\frac{8 b \gamma \omega (2 \omega + 1)}{n \probavailable^2} + \frac{2 \gamma \left(\probavailable - \probpairaa\right)}{n \probavailable^2}\right)\Exp{\frac{1}{n}\sum_{i=1}^n\norm{h^{t}_i - \nabla f_i(x^{t})}^2}.
    \end{align*}
    Let us simplify the last inequality. First, note that
    \begin{align*}
      &\frac{16 b \gamma \omega (2 \omega + 1) (1 - \probavailable) \widehat{L}^2}{n \probavailable^3} \leq \frac{16 \gamma \omega (2 \omega + 1) \widehat{L}^2}{n \probavailable^2},
    \end{align*}
    due to $b \leq \probavailable.$ Second,
    \begin{align*}
      \frac{2\gamma \left(\probavailable - \probpairaa\right)\widehat{L}^2}{b n \probavailable^2} \leq \frac{4\gamma \left(\probavailable - \probpairaa\right)\widehat{L}^2}{n \probavailable^3},
    \end{align*}
    due to $b \geq \frac{\probavailable}{2}.$ All in all, we have
    \begin{align*}
      &\Exp{f(x^{t + 1})} + \frac{\gamma (2 \omega + 1)}{\probavailable} \Exp{\norm{g^{t+1} - h^{t+1}}^2} + \frac{\gamma (\left(2 \omega + 1\right)\probavailable - \probpairaa)}{n \probavailable^2} \Exp{\frac{1}{n}\sum_{i=1}^n\norm{g^{t+1}_i - h^{t+1}_i}^2}\\
      &\quad  + \frac{\gamma}{b} \Exp{\norm{h^{t+1} - \nabla f(x^{t+1})}^2} + \left(\frac{8 b \gamma \omega (2 \omega + 1)}{n \probavailable^2} + \frac{2 \gamma \left(\probavailable - \probpairaa\right)}{n \probavailable^2}\right) \Exp{\frac{1}{n}\sum_{i=1}^n\norm{h^{t+1}_i - \nabla f_i(x^{t+1})}^2}\\
      &\leq \Exp{f(x^t)} - \frac{\gamma}{2}\Exp{\norm{\nabla f(x^t)}^2} \\
      &\quad + \frac{\gamma (2 \omega + 1)}{\probavailable} \Exp{\norm{g^{t} - h^{t}}^2} + \frac{\gamma (\left(2 \omega + 1\right)\probavailable - \probpairaa)}{n \probavailable^2} \Exp{\frac{1}{n}\sum_{i=1}^n\norm{g^{t}_i - h^{t}_i}^2} \\
      &\quad - \Bigg(\frac{1}{2\gamma} - \frac{L}{2} - \frac{24 \gamma \omega \left(2 \omega + 1\right) \widehat{L}^2}{n \probavailable^2} - \frac{8\gamma \left(\probavailable - \probpairaa\right)\widehat{L}^2}{n \probavailable^3} \Bigg) \Exp{\norm{x^{t+1} - x^t}^2} \\
      &\quad + \frac{\gamma}{b} \Exp{\norm{h^{t} - \nabla f(x^{t})}^2} + \left(\frac{8 b \gamma \omega (2 \omega + 1)}{n \probavailable^2} + \frac{2 \gamma \left(\probavailable - \probpairaa\right)}{n \probavailable^2}\right)\Exp{\frac{1}{n}\sum_{i=1}^n\norm{h^{t}_i - \nabla f_i(x^{t})}^2}.
    \end{align*}
    Using Lemma~\ref{lemma:gamma} and the assumption about $\gamma,$ we get
    \begin{align*}
      &\Exp{f(x^{t + 1})} + \frac{\gamma (2 \omega + 1)}{\probavailable} \Exp{\norm{g^{t+1} - h^{t+1}}^2} + \frac{\gamma (\left(2 \omega + 1\right)\probavailable - \probpairaa)}{n \probavailable^2} \Exp{\frac{1}{n}\sum_{i=1}^n\norm{g^{t+1}_i - h^{t+1}_i}^2}\\
      &\quad  + \frac{\gamma}{b} \Exp{\norm{h^{t+1} - \nabla f(x^{t+1})}^2} + \left(\frac{8 b \gamma \omega (2 \omega + 1)}{n \probavailable^2} + \frac{2 \gamma \left(\probavailable - \probpairaa\right)}{n \probavailable^2}\right) \Exp{\frac{1}{n}\sum_{i=1}^n\norm{h^{t+1}_i - \nabla f_i(x^{t+1})}^2}\\
      &\leq \Exp{f(x^t)} - \frac{\gamma}{2}\Exp{\norm{\nabla f(x^t)}^2} \\
      &\quad + \frac{\gamma (2 \omega + 1)}{\probavailable} \Exp{\norm{g^{t} - h^{t}}^2} + \frac{\gamma (\left(2 \omega + 1\right)\probavailable - \probpairaa)}{n \probavailable^2} \Exp{\frac{1}{n}\sum_{i=1}^n\norm{g^{t}_i - h^{t}_i}^2} \\
      &\quad + \frac{\gamma}{b} \Exp{\norm{h^{t} - \nabla f(x^{t})}^2} + \left(\frac{8 b \gamma \omega (2 \omega + 1)}{n \probavailable^2} + \frac{2 \gamma \left(\probavailable - \probpairaa\right)}{n \probavailable^2}\right)\Exp{\frac{1}{n}\sum_{i=1}^n\norm{h^{t}_i - \nabla f_i(x^{t})}^2}.
    \end{align*}
    It is left to apply Lemma~\ref{lemma:good_recursion} with 
    \begin{eqnarray*}
      \Psi^t &=& \frac{(2 \omega + 1)}{\probavailable} \Exp{\norm{g^{t} - h^{t}}^2} + \frac{(\left(2 \omega + 1\right)\probavailable - \probpairaa)}{n \probavailable^2} \Exp{\frac{1}{n}\sum_{i=1}^n\norm{g^{t}_i - h^{t}_i}^2} \\
        &\quad +& \frac{1}{b} \Exp{\norm{h^{t} - \nabla f(x^{t})}^2} + \left(\frac{8 b \omega (2 \omega + 1)}{n \probavailable^2} + \frac{2 \left(\probavailable - \probpairaa\right)}{n \probavailable^2}\right)\Exp{\frac{1}{n}\sum_{i=1}^n\norm{h^{t}_i - \nabla f_i(x^{t})}^2}
    \end{eqnarray*}
    to conclude the proof.
\end{proof}

\subsection{Proof for \algname{\algorithmname-PAGE}}

Let us denote
\begin{align*}
    &k^{t+1}_{i, 1} \eqdef \nabla f_i(x^{t+1}) - \nabla f_i(x^{t}) - \frac{b}{\probpage} \left(h^t_i - \nabla f_i(x^{t})\right), \\
    &k^{t+1}_{i, 2} \eqdef \frac{1}{B}\sum_{j \in I^t_i}\left(\nabla f_{ij}(x^{t+1}) - \nabla f_{ij}(x^{t})\right), \\
    &h^{t+1}_{i,1} \eqdef \begin{cases}
        h^t_i + \frac{1}{\probavailable} k^{t+1}_{i, 1},& i^{\textnormal{th}} \textnormal{ node is \textit{participating}}, \\
        h^t_i, & \textnormal{otherwise,} 
    \end{cases}  \\
    &h^{t+1}_{i,2} \eqdef \begin{cases}
        h^t_i + \frac{1}{\probavailable} k^{t+1}_{i, 2},& i^{\textnormal{th}} \textnormal{ node is \textit{participating}}, \\
        h^t_i, & \textnormal{otherwise,}
    \end{cases}  \\
\end{align*}
$h^{t+1}_{1} \eqdef \frac{1}{n}\sum_{i=1}^n h^{t+1}_{i,1},$ and $h^{t+1}_{2} \eqdef \frac{1}{n}\sum_{i=1}^n h^{t+1}_{i,2}.$ Note, that
\begin{align*}
  &h^{t+1} = \begin{cases}
    h^{t+1}_{1},& \textnormal{with probability $\probpage$,} \\
    h^{t+1}_{2},& \textnormal{with probability $1 - \probpage$.} 
  \end{cases}
\end{align*}

\begin{lemma}
  \label{lemma:gradient_page}
  Suppose that Assumptions \ref{ass:nodes_lipschitz_constant}, \ref{ass:max_lipschitz_constant}, and \ref{ass:partial_participation} hold. For $h^{t+1}_i$ and $k^{t+1}_i$ from Algorithm~\ref{alg:main_algorithm} (\algname{\algorithmname-PAGE}) we have
  \begin{enumerate}
  \item
      \begin{align*}
          &\ExpSub{B}{\ExpSub{\probavailable}{\ExpSub{\probpage}{\norm{h^{t+1} - \nabla f(x^{t+1})}^2}}} \\
          & \leq \left(\frac{2 \left(\probavailable - \probpairaa\right) \widehat{L}^2}{n \probavailable^2} + \frac{(1 - \probpage)L_{\max}^2}{n \probavailable B}\right) \norm{x^{t+1} - x^{t}}^2\\
          &\quad + \frac{2\left(\probavailable - \probpairaa\right) b^2}{n^2 \probavailable^2 \probpage}\sum_{i=1}^n\norm{ h^t_i - \nabla f_i(x^{t})}^2 + \left(\probpage\left(1 - \frac{b}{\probpage}\right)^2 + (1 - \probpage)\right)\norm{h^{t} - \nabla f(x^{t})}^2.
      \end{align*}
  \item
      \begin{align*}
          &\ExpSub{B}{\ExpSub{\probavailable}{\ExpSub{\probpage}{\norm{h^{t+1}_i - \nabla f_i(x^{t+1})}^2}}} \\
          & \leq \left(\frac{2\left(1 - \probavailable\right)L_i^2}{\probavailable} + \frac{(1 - \probpage)L_{\max}^2}{\probavailable B}\right) \norm{x^{t+1} - x^{t}}^2 \\
          &\quad +\left(\frac{2\left(1 - \probavailable\right)b^2}{\probavailable \probpage} + \probpage\left(1 - \frac{b}{\probpage}\right)^2 + (1 - \probpage)\right)\norm{h^{t}_i - \nabla f_i(x^{t})}^2, \quad \forall i \in [n].
      \end{align*}
  \item
      \begin{align*}
        &\ExpSub{B}{\ExpSub{\probpage}{\norm{k^{t+1}_i}^2}} \\
        &\leq \left(2 L_i^2 + \frac{(1 - \probpage)L_{\max}^2}{B}\right)\norm{x^{t+1} - x^{t}}^2 +  \frac{2b^2}{\probpage} \norm{h^t_i - \nabla f_i(x^{t})}^2, \quad \forall i \in [n].
      \end{align*}
  \end{enumerate}
\end{lemma}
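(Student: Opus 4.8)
The plan is to establish the three inequalities separately, in increasing order of difficulty, reusing the third (simplest) bound inside the second one. The only tools needed are the bias--variance split \eqref{auxiliary:variance_decomposition}, Young's inequality \eqref{auxiliary:jensen_inequality}, the smoothness Assumptions~\ref{ass:nodes_lipschitz_constant} and \ref{ass:max_lipschitz_constant}, and the Sampling Lemma~\ref{lemma:sampling}; the one genuinely new ingredient compared with the proof of Lemma~\ref{lemma:gradient} is that the \algname{PAGE}-type Bernoulli switch is \emph{shared} by all participating nodes, so one has to condition on it before touching the participation or mini-batch randomness.

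For part~3 I would condition on the switch and write $\ExpSub{B}{\ExpSub{\probpage}{\norm{k^{t+1}_i}^2}} = \probpage\,\norm{k^{t+1}_{i,1}}^2 + (1-\probpage)\,\ExpSub{B}{\norm{k^{t+1}_{i,2}}^2}$. The first term is bounded by $2L_i^2\norm{x^{t+1}-x^{t}}^2 + \tfrac{2b^2}{\probpage^2}\norm{h^t_i-\nabla f_i(x^{t})}^2$ via \eqref{auxiliary:jensen_inequality} and Assumption~\ref{ass:nodes_lipschitz_constant}; for the second, since $I^t_i$ is sampled with replacement, $\ExpSub{B}{k^{t+1}_{i,2}} = \nabla f_i(x^{t+1})-\nabla f_i(x^{t})$ and, by \eqref{auxiliary:variance_decomposition}, $\ExpSub{B}{\norm{k^{t+1}_{i,2}}^2} \le \norm{\nabla f_i(x^{t+1})-\nabla f_i(x^{t})}^2 + \tfrac{1}{B}\tfrac1m\sum_{j}\norm{\nabla f_{ij}(x^{t+1})-\nabla f_{ij}(x^{t})}^2 \le (L_i^2 + \tfrac{L_{\max}^2}{B})\norm{x^{t+1}-x^{t}}^2$ by Assumption~\ref{ass:max_lipschitz_constant}. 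Adding the weighted terms and using $2\probpage + (1-\probpage) \le 2$ gives part~3. Part~2 follows by first taking the expectation over node $i$'s participation: since $\ExpSub{\probavailable}{h^{t+1}_i} = h^t_i + k^{t+1}_i$, \eqref{auxiliary:variance_decomposition} gives $\ExpSub{\probavailable}{\norm{h^{t+1}_i - \nabla f_i(x^{t+1})}^2} = \tfrac{1-\probavailable}{\probavailable}\norm{k^{t+1}_i}^2 + \norm{h^t_i + k^{t+1}_i - \nabla f_i(x^{t+1})}^2$. The first summand is controlled by part~3; for the second I would split on the switch — in the full-gradient branch it collapses exactly to $(1-\tfrac{b}{\probpage})^2\norm{h^t_i-\nabla f_i(x^{t})}^2$, and in the mini-batch branch, applying $\ExpSub{B}$ and \eqref{auxiliary:variance_decomposition} again gives $\tfrac{L_{\max}^2}{B}\norm{x^{t+1}-x^{t}}^2 + \norm{h^t_i-\nabla f_i(x^{t})}^2$. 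Recombining with weights $\probpage,1-\probpage$ and simplifying with $\tfrac{1-\probavailable}{\probavailable}+1 = \tfrac1\probavailable$ and $2\probpage + (1-\probpage)\le 2$ produces the stated coefficients.

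For part~1 I would condition on the shared switch first. On the full-gradient branch (probability $\probpage$) the only remaining randomness is the participation set, the vectors $s_i := k^{t+1}_{i,1}$ are deterministic, and $\ExpSub{\probavailable}{h^{t+1}_1} = h^t + \tfrac1n\sum_i k^{t+1}_{i,1} = \nabla f(x^{t+1}) + (1-\tfrac{b}{\probpage})(h^t - \nabla f(x^{t}))$; then \eqref{auxiliary:variance_decomposition} followed by Lemma~\ref{lemma:sampling} with $r_i=h^t_i$, $s_i=k^{t+1}_{i,1}$, together with \eqref{auxiliary:jensen_inequality}, Assumption~\ref{ass:nodes_lipschitz_constant} and $\tfrac1n\sum_i L_i^2 = \widehat{L}^2$, bounds this branch by $\tfrac{2(\probavailable-\probpairaa)\widehat{L}^2}{n\probavailable^2}\norm{x^{t+1}-x^{t}}^2 + \tfrac{2(\probavailable-\probpairaa)b^2}{n^2\probavailable^2\probpage^2}\sum_i\norm{h^t_i-\nabla f_i(x^{t})}^2 + (1-\tfrac{b}{\probpage})^2\norm{h^t-\nabla f(x^{t})}^2$. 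On the mini-batch branch (probability $1-\probpage$) the participation set and the mini-batches $\{I^t_i\}$ are independent, so I would apply Lemma~\ref{lemma:sampling} with the \emph{random} independent vectors $s_i := k^{t+1}_{i,2}$, preceded by one bias--variance split to re-center the average at $\nabla f(x^{t+1})$ (here $\ExpSub{\probavailable,B}{h^{t+1}_2} = h^t + \nabla f(x^{t+1}) - \nabla f(x^{t})$, so the bias term is exactly $\norm{h^t-\nabla f(x^{t})}^2$); bounding $\ExpSub{B}{\norm{s_i - \ExpSub{B}{s_i}}^2}\le \tfrac{L_{\max}^2}{B}\norm{x^{t+1}-x^{t}}^2$ and $\norm{\ExpSub{B}{s_i}}^2 \le L_i^2\norm{x^{t+1}-x^{t}}^2$ gives the contribution $\bigl(\tfrac{L_{\max}^2}{n\probavailable B} + \tfrac{(\probavailable-\probpairaa)\widehat{L}^2}{n\probavailable^2}\bigr)\norm{x^{t+1}-x^{t}}^2 + \norm{h^t-\nabla f(x^{t})}^2$. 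Taking the $\probpage$-weighted average of the two branches and absorbing constants ($1+\probpage\le 2$) yields the claimed inequality.

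The main obstacle I expect is the bookkeeping in part~1: getting the order of conditioning right (switch $\to$ participation / mini-batches), checking that Lemma~\ref{lemma:sampling} is legitimately applicable with mini-batch-dependent $s_i$ (independence across $i$ and independence from the participation set), and keeping the two re-centering steps straight — the one around $\nabla f(x^{t+1})$ and the one implicit in Lemma~\ref{lemma:sampling} around the conditional mean — so that the $(1-\tfrac{b}{\probpage})^2$ and $\norm{h^t-\nabla f(x^{t})}^2$ terms come out with the correct weights after averaging over the switch. Everything else is a routine application of \eqref{auxiliary:jensen_inequality}, \eqref{auxiliary:variance_decomposition} and the smoothness assumptions.
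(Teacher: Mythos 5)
Your proposal is correct and follows essentially the same route as the paper: condition on the shared \algname{PAGE} switch first, apply the bias--variance split \eqref{auxiliary:variance_decomposition} and Lemma~\ref{lemma:sampling} (with deterministic $s_i=k^{t+1}_{i,1}$ on the full-gradient branch and random, independent $s_i=k^{t+1}_{i,2}$ on the mini-batch branch), and finish with \eqref{auxiliary:jensen_inequality} and Assumptions~\ref{ass:nodes_lipschitz_constant}--\ref{ass:max_lipschitz_constant}; all the coefficients you report check out. The only (harmless) deviation is in part~2, where you integrate out the participation first via $\ExpSub{\probavailable}{\norm{h^{t+1}_i-\nabla f_i(x^{t+1})}^2}=\tfrac{1-\probavailable}{\probavailable}\norm{k^{t+1}_i}^2+\norm{h^t_i+k^{t+1}_i-\nabla f_i(x^{t+1})}^2$ and reuse part~3, which slightly streamlines the paper's branch-by-branch computation while yielding identical bounds.
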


\begin{proof}
  First, we prove the first inequality of the theorem:
  \begin{align*}
    &\ExpSub{B}{\ExpSub{\probavailable}{\ExpSub{\probpage}{\norm{h^{t+1} - \nabla f(x^{t+1})}^2}}} \\
    &=\probpage\ExpSub{B}{\ExpSub{\probavailable}{\norm{h^{t+1}_{1} - \nabla f(x^{t+1})}^2}} + (1 - \probpage)\ExpSub{B}{\ExpSub{\probavailable}{\norm{h^{t+1}_{2} - \nabla f(x^{t+1})}^2}}.\\
  \end{align*}
  Using 
  \begin{align*}
    &\ExpSub{B}{\ExpSub{\probavailable}{h^{t+1}_{i,1}}} = \\
    &=\probavailable h^t_i +  \nabla f_i(x^{t+1}) - \nabla f_i(x^{t}) - \frac{b}{\probpage} \left(h^t_i - \nabla f_i(x^{t})\right) + (1 - \probavailable) h^t_i \\
    &= h^t_i + \nabla f_i(x^{t+1}) - \nabla f_i(x^{t}) - \frac{b}{\probpage} \left(h^t_i - \nabla f_i(x^{t})\right).
  \end{align*}
  and 
  \begin{align*}
    &\ExpSub{B}{\ExpSub{\probavailable}{h^{t+1}_{i,2}}} = \\
    &=\probavailable h^t_i +  \ExpSub{B}{\frac{1}{B}\sum_{j \in I^t_i}\left(\nabla f_{ij}(x^{t+1}) - \nabla f_{ij}(x^{t})\right)} + (1 - \probavailable) h^t_i \\
    &= h^t_i + \nabla f_i(x^{t+1}) - \nabla f_i(x^{t}),
  \end{align*}
  we obtain
  \begin{align}
    &\ExpSub{B}{\ExpSub{\probavailable}{\ExpSub{\probpage}{\norm{h^{t+1} - \nabla f(x^{t+1})}^2}}} \nonumber\\
    &\overset{\eqref{auxiliary:variance_decomposition}}{=}\probpage\ExpSub{\probavailable}{\norm{h^{t+1}_{1} - \ExpSub{\probavailable}{h^{t+1}_{1}}}^2} + (1 - \probpage)\ExpSub{B}{\ExpSub{\probavailable}{\norm{h^{t+1}_{2} - \ExpSub{B}{\ExpSub{\probavailable}{h^{t+1}_{2}}}}^2}} \nonumber\\
    &\quad +\probpage\norm{\ExpSub{\probavailable}{h^{t+1}_{1}} - \nabla f(x^{t+1})}^2 + (1 - \probpage)\norm{\ExpSub{B}{\ExpSub{\probavailable}{h^{t+1}_{2}}} - \nabla f(x^{t+1})}^2 \nonumber \\
    &=\probpage\ExpSub{\probavailable}{\norm{h^{t+1}_{1} - \ExpSub{\probavailable}{h^{t+1}_{1}}}^2} + (1 - \probpage)\ExpSub{B}{\ExpSub{\probavailable}{\norm{h^{t+1}_{2} - \ExpSub{B}{\ExpSub{\probavailable}{h^{t+1}_{2}}}}^2}} \nonumber\\
    &\quad +\left(\probpage\left(1 - \frac{b}{\probpage}\right)^2 + (1 - \probpage)\right)\norm{h^{t} - \nabla f(x^{t})}^2. \label{eq:page_proof:orig}
  \end{align}
  Next, we consider $\ExpSub{\probavailable}{\norm{h^{t+1}_{1} - \ExpSub{\probavailable}{h^{t+1}_{1}}}^2}$. We can use Lemma~\ref{lemma:sampling} with $r_i = h^{t}_i$ and $s_i = k^{t+1}_{i,1}$ to obtain
  \begin{align*}
    &\ExpSub{\probavailable}{\norm{h^{t+1}_{1} - \ExpSub{\probavailable}{h^{t+1}_{1}}}^2}\\
    &\leq \frac{1}{n^2 \probavailable}\sum_{i=1}^n\norm{k^{t+1}_{i,1} - k^{t+1}_{i,1}}^2 +\frac{\probavailable - \probpairaa}{n^2 \probavailable^2}\sum_{i=1}^n\norm{k^{t+1}_{i,1}}^2 \\
    &= \frac{\probavailable - \probpairaa}{n^2 \probavailable^2}\sum_{i=1}^n\norm{\nabla f_i(x^{t+1}) - \nabla f_i(x^{t}) - \frac{b}{\probpage} \left(h^t_i - \nabla f_i(x^{t})\right)}^2 \\
    &\overset{\eqref{auxiliary:jensen_inequality}}{\leq} \frac{2\left(\probavailable - \probpairaa\right)}{n^2 \probavailable^2}\sum_{i=1}^n\norm{\nabla f_i(x^{t+1}) - \nabla f_i(x^{t})}^2 + \frac{2 \left(\probavailable - \probpairaa\right) b^2}{n^2 \probavailable^2 \probpage^2}\sum_{i=1}^n\norm{h^t_i - \nabla f_i(x^{t})}^2.
  \end{align*}
  From Assumption~\ref{ass:nodes_lipschitz_constant}, we have
  \begin{align}
    &\ExpSub{\probavailable}{\norm{h^{t+1}_{1} - \ExpSub{\probavailable}{h^{t+1}_{1}}}^2} \nonumber\\
    &\leq \frac{2\left(\probavailable - \probpairaa\right) \widehat{L}^2}{n \probavailable^2}\norm{x^{t+1} - x^{t}}^2 + \frac{2 \left(\probavailable - \probpairaa\right)b^2}{n^2 \probavailable^2 \probpage^2}\sum_{i=1}^n\norm{h^t_i - \nabla f_i(x^{t})}^2 \label{eq:page_proof:h_1}.
  \end{align}
  Now, we prove the bound for $\ExpSub{B}{\ExpSub{\probavailable}{\norm{h^{t+1}_{2} - \ExpSub{B}{\ExpSub{\probavailable}{h^{t+1}_{2}}}}^2}}.$
  Considering that mini-batches in the algorithm are independent, we can use Lemma~\ref{lemma:sampling} with $r_i = h^{t}_i$ and $s_i = k^{t+1}_{i,2}$ to obtain
  \begin{align*}
    &\ExpSub{B}{\ExpSub{\probavailable}{\norm{h^{t+1}_{2} - \ExpSub{B}{\ExpSub{\probavailable}{h^{t+1}_{2}}}}^2}}\\
    &\leq \frac{1}{n^2 \probavailable}\sum_{i=1}^n\ExpSub{B}{\norm{k^{t+1}_{i,2} - \ExpSub{B}{k^{t+1}_{i,2}}}^2} +\frac{\probavailable - \probpairaa}{n^2 \probavailable^2}\sum_{i=1}^n\norm{\ExpSub{B}{k^{t+1}_{i,2}}}^2 \\
    &= \frac{1}{n^2 \probavailable}\sum_{i=1}^n\ExpSub{B}{\norm{\frac{1}{B}\sum_{j \in I^t_i}\left(\nabla f_{ij}(x^{t+1}) - \nabla f_{ij}(x^{t})\right) - \left(\nabla f_{i}(x^{t+1}) - \nabla f_{i}(x^{t})\right)}^2} \\
    &\quad +\frac{\probavailable - \probpairaa}{n^2 \probavailable^2}\sum_{i=1}^n\norm{\nabla f_{i}(x^{t+1}) - \nabla f_{i}(x^{t})}^2 \\
    &= \frac{1}{n^2 \probavailable B^2}\sum_{i=1}^n\ExpSub{B}{\sum_{j \in I^t_i}\norm{\left(\nabla f_{ij}(x^{t+1}) - \nabla f_{ij}(x^{t})\right) - \left(\nabla f_{i}(x^{t+1}) - \nabla f_{i}(x^{t})\right)}^2} \\
    &\quad +\frac{\probavailable - \probpairaa}{n^2 \probavailable^2}\sum_{i=1}^n\norm{\nabla f_{i}(x^{t+1}) - \nabla f_{i}(x^{t})}^2 \\
    &= \frac{1}{n^2 \probavailable B m}\sum_{i=1}^n \sum_{j=1}^m \norm{\left(\nabla f_{ij}(x^{t+1}) - \nabla f_{ij}(x^{t})\right) - \left(\nabla f_{i}(x^{t+1}) - \nabla f_{i}(x^{t})\right)}^2 \\
    &\quad +\frac{\probavailable - \probpairaa}{n^2 \probavailable^2}\sum_{i=1}^n\norm{\nabla f_{i}(x^{t+1}) - \nabla f_{i}(x^{t})}^2 \\
    &\leq \frac{1}{n^2 \probavailable B m}\sum_{i=1}^n \sum_{j=1}^m \norm{\nabla f_{ij}(x^{t+1}) - \nabla f_{ij}(x^{t})}^2 +\frac{\probavailable - \probpairaa}{n^2 \probavailable^2}\sum_{i=1}^n\norm{\nabla f_{i}(x^{t+1}) - \nabla f_{i}(x^{t})}^2.
  \end{align*}
  Next, we use Assumptions \ref{ass:nodes_lipschitz_constant} and \ref{ass:max_lipschitz_constant} to get
  \begin{align}
    \ExpSub{B}{\ExpSub{\probavailable}{\norm{h^{t+1}_{2} - \ExpSub{B}{\ExpSub{\probavailable}{h^{t+1}_{2}}}}^2}} \leq \left(\frac{L_{\max}^2}{n \probavailable B} +\frac{\left(\probavailable - \probpairaa\right) \widehat{L}^2}{n \probavailable^2}\right)\norm{x^{t+1} - x^{t}}^2. \label{eq:page_proof:h_2}
  \end{align}
  Applying \eqref{eq:page_proof:h_1} and \eqref{eq:page_proof:h_2} into \eqref{eq:page_proof:orig}, we get
  \begin{align*}
    &\ExpSub{B}{\ExpSub{\probavailable}{\ExpSub{\probpage}{\norm{h^{t+1} - \nabla f(x^{t+1})}^2}}} \\
    &\leq\probpage\left(\frac{2 \left(\probavailable - \probpairaa\right) \widehat{L}^2}{n \probavailable^2} \norm{x^{t+1} - x^{t}}^2 + \frac{2\left(\probavailable - \probpairaa\right) b^2}{n^2 \probavailable^2 \probpage^2}\sum_{i=1}^n\norm{ h^t_i - \nabla f_i(x^{t})}^2\right) + \\
    &\quad + (1 - \probpage)\left(\frac{L_{\max}^2}{n \probavailable B} +\frac{\left(\probavailable - \probpairaa\right) \widehat{L}^2}{n \probavailable^2}\right)\norm{x^{t+1} - x^{t}}^2 \\
    &\quad +\left(\probpage\left(1 - \frac{b}{\probpage}\right)^2 + (1 - \probpage)\right)\norm{h^{t} - \nabla f(x^{t})}^2\\
    &\leq\left(\frac{2 \left(\probavailable - \probpairaa\right) \widehat{L}^2}{n \probavailable^2} + \frac{(1 - \probpage)L_{\max}^2}{n \probavailable B}\right) \norm{x^{t+1} - x^{t}}^2\\
    &\quad + \frac{2\left(\probavailable - \probpairaa\right) b^2}{n^2 \probavailable^2 \probpage}\sum_{i=1}^n\norm{ h^t_i - \nabla f_i(x^{t})}^2 + \left(\probpage\left(1 - \frac{b}{\probpage}\right)^2 + (1 - \probpage)\right)\norm{h^{t} - \nabla f(x^{t})}^2.
  \end{align*}
  The proof of the second inequality almost repeats the previous one:
  \begin{align}
    &\ExpSub{B}{\ExpSub{\probavailable}{\ExpSub{\probpage}{\norm{h^{t+1}_i - \nabla f_i(x^{t+1})}^2}}} \nonumber \\
    &=\probpage\ExpSub{B}{\ExpSub{\probavailable}{\norm{h^{t+1}_{i,1} - \nabla f_i(x^{t+1})}^2}} + (1 - \probpage)\ExpSub{B}{\ExpSub{\probavailable}{\norm{h^{t+1}_{i,2} - \nabla f_i(x^{t+1})}^2}} \nonumber\\
    &\overset{\eqref{auxiliary:variance_decomposition}}{=}\probpage\ExpSub{B}{\ExpSub{\probavailable}{\norm{h^{t+1}_{i,1} - \ExpSub{B}{\ExpSub{\probavailable}{h^{t+1}_{i,1}}}}^2}} + (1 - \probpage)\ExpSub{B}{\ExpSub{\probavailable}{\norm{h^{t+1}_{i,2} - \ExpSub{B}{\ExpSub{\probavailable}{h^{t+1}_{i,2}}}}^2}} \nonumber\\
    &\quad +\probpage\norm{\ExpSub{B}{\ExpSub{\probavailable}{h^{t+1}_{i,1}}} - \nabla f_i(x^{t+1})}^2 + (1 - \probpage)\norm{\ExpSub{B}{\ExpSub{\probavailable}{h^{t+1}_{i,2}}} - \nabla f_i(x^{t+1})}^2 \nonumber \\
    &=\probpage\ExpSub{B}{\ExpSub{\probavailable}{\norm{h^{t+1}_{i,1} - \ExpSub{B}{\ExpSub{\probavailable}{h^{t+1}_{i,1}}}}^2}} + (1 - \probpage)\ExpSub{B}{\ExpSub{\probavailable}{\norm{h^{t+1}_{i,2} - \ExpSub{B}{\ExpSub{\probavailable}{h^{t+1}_{i,2}}}}^2}} \nonumber\\
    &\quad +\left(\probpage\left(1 - \frac{b}{\probpage}\right)^2 + (1 - \probpage)\right)\norm{h^{t}_i - \nabla f_i(x^{t})}^2 \label{eq:page_proof:mini:orig}.
  \end{align}
  Let us consider $\ExpSub{B}{\ExpSub{\probavailable}{\norm{h^{t+1}_{i,1} - \ExpSub{B}{\ExpSub{\probavailable}{h^{t+1}_{i,1}}}}^2}}$:
  \begin{align*}
    &\ExpSub{B}{\ExpSub{\probavailable}{\norm{h^{t+1}_{i,1} - \ExpSub{B}{\ExpSub{\probavailable}{h^{t+1}_{i,1}}}}^2}}\\
    &=\ExpSub{\probavailable}{\norm{h^{t+1}_{i,1} - \ExpSub{B}{\ExpSub{\probavailable}{h^{t+1}_{i,1}}}}^2} \\
    &=\probavailable \norm{h^t_i + \frac{1}{\probavailable} k^{t+1}_{i, 1} - \left(h^t_i + \nabla f_i(x^{t+1}) - \nabla f_i(x^{t}) - \frac{b}{\probpage} \left(h^t_i - \nabla f_i(x^{t})\right)\right)}^2 \\
    &\quad + (1 - \probavailable) \norm{h^t_i - \left(h^t_i + \nabla f_i(x^{t+1}) - \nabla f_i(x^{t}) - \frac{b}{\probpage} \left(h^t_i - \nabla f_i(x^{t})\right)\right)}^2 \\
    &=\frac{(1 - \probavailable)^2}{\probavailable} \norm{\nabla f_i(x^{t+1}) - \nabla f_i(x^{t}) - \frac{b}{\probpage} \left(h^t_i - \nabla f_i(x^{t})\right)}^2 \\
    &\quad + (1 - \probavailable) \norm{\nabla f_i(x^{t+1}) - \nabla f_i(x^{t}) - \frac{b}{\probpage} \left(h^t_i - \nabla f_i(x^{t})\right)}^2 \\
    &=\frac{1 - \probavailable}{\probavailable} \norm{\nabla f_i(x^{t+1}) - \nabla f_i(x^{t}) - \frac{b}{\probpage} \left(h^t_i - \nabla f_i(x^{t})\right)}^2.
  \end{align*}
  Considering \eqref{auxiliary:jensen_inequality} and Assumption~\ref{ass:nodes_lipschitz_constant}, we obtain
  \begin{align}
    &\ExpSub{B}{\ExpSub{\probavailable}{\norm{h^{t+1}_{i,1} - \ExpSub{B}{\ExpSub{\probavailable}{h^{t+1}_{i,1}}}}^2}} \nonumber\\
    &\leq \frac{2\left(1 - \probavailable\right)L_i^2}{\probavailable} \norm{x^{t+1} - x^{t}}^2 + \frac{2\left(1 - \probavailable\right)b^2}{\probavailable \probpage^2} \norm{h^t_i - \nabla f_i(x^{t})}^2 \label{eq:page_proof:mini:h_1}.
  \end{align}
  Next, we obtain the bound for $\ExpSub{B}{\ExpSub{\probavailable}{\norm{h^{t+1}_{i,2} - \ExpSub{B}{\ExpSub{\probavailable}{h^{t+1}_{i,2}}}}^2}}$:
  \begin{align}
    &\ExpSub{B}{\ExpSub{\probavailable}{\norm{h^{t+1}_{i,2} - \ExpSub{B}{\ExpSub{\probavailable}{h^{t+1}_{i,2}}}}^2}} \nonumber \\
    &= \probavailable \ExpSub{B}{\norm{h^t_i + \frac{1}{\probavailable} k^{t+1}_{i, 2} - \left(h^t_i + \nabla f_i(x^{t+1}) - \nabla f_i(x^{t})\right)}^2} \nonumber\\
    &\quad +(1 - \probavailable) \ExpSub{B}{\norm{h^{t}_{i} - \left(h^t_i + \nabla f_i(x^{t+1}) - \nabla f_i(x^{t})\right)}^2}\nonumber \\
    &= \probavailable \ExpSub{B}{\norm{\frac{1}{\probavailable} k^{t+1}_{i, 2} - \left(\nabla f_i(x^{t+1}) - \nabla f_i(x^{t})\right)}^2}\nonumber \\
    &\quad +(1 - \probavailable) \norm{\nabla f_i(x^{t+1}) - \nabla f_i(x^{t})}^2\nonumber \\
    &\overset{\eqref{auxiliary:variance_decomposition}}{=} \frac{1}{\probavailable} \ExpSub{B}{\norm{k^{t+1}_{i, 2} - \left(\nabla f_i(x^{t+1}) - \nabla f_i(x^{t})\right)}^2} + \frac{(1 - \probavailable)^2}{\probavailable} \norm{\nabla f_i(x^{t+1}) - \nabla f_i(x^{t})}^2\nonumber \\
    &\quad +(1 - \probavailable) \norm{\nabla f_i(x^{t+1}) - \nabla f_i(x^{t})}^2 \nonumber \\
    &= \frac{1}{\probavailable} \ExpSub{B}{\norm{k^{t+1}_{i, 2} - \left(\nabla f_i(x^{t+1}) - \nabla f_i(x^{t})\right)}^2} + \frac{1 - \probavailable}{\probavailable} \norm{\nabla f_i(x^{t+1}) - \nabla f_i(x^{t})}^2 \nonumber \\
    &\leq \frac{1}{\probavailable} \ExpSub{B}{\norm{k^{t+1}_{i, 2} - \left(\nabla f_i(x^{t+1}) - \nabla f_i(x^{t})\right)}^2} + \frac{\left(1 - \probavailable\right)L^2_i}{\probavailable} \norm{x^{t+1} - x^{t}}^2, \label{eq:page_proof:mini:h_2}
  \end{align}
  where we used Assumption~\ref{ass:nodes_lipschitz_constant}. By plugging \eqref{eq:page_proof:mini:h_1} and \eqref{eq:page_proof:mini:h_2} into \eqref{eq:page_proof:mini:orig}, we get
  \begin{align*}
    &\ExpSub{B}{\ExpSub{\probavailable}{\ExpSub{\probpage}{\norm{h^{t+1}_i - \nabla f_i(x^{t+1})}^2}}} \nonumber \\
    &\leq\probpage\left(\frac{2\left(1 - \probavailable\right)L_i^2}{\probavailable} \norm{x^{t+1} - x^{t}}^2 + \frac{2\left(1 - \probavailable\right)b^2}{\probavailable \probpage^2} \norm{h^t_i - \nabla f_i(x^{t})}^2\right) \\
    &\quad + (1 - \probpage)\left(\frac{1}{\probavailable} \ExpSub{B}{\norm{k^{t+1}_{i, 2} - \left(\nabla f_i(x^{t+1}) - \nabla f_i(x^{t})\right)}^2} + \frac{\left(1 - \probavailable\right)L^2_i}{\probavailable} \norm{x^{t+1} - x^{t}}^2\right) \\
    &\quad +\left(\probpage\left(1 - \frac{b}{\probpage}\right)^2 + (1 - \probpage)\right)\norm{h^{t}_i - \nabla f_i(x^{t})}^2 \\
    &\leq \frac{2\left(1 - \probavailable\right)L_i^2}{\probavailable} \norm{x^{t+1} - x^{t}}^2 + \frac{1 - \probpage}{\probavailable} \ExpSub{B}{\norm{k^{t+1}_{i, 2} - \left(\nabla f_i(x^{t+1}) - \nabla f_i(x^{t})\right)}^2} \\
    &\quad +\left(\frac{2\left(1 - \probavailable\right)b^2}{\probavailable \probpage} + \probpage\left(1 - \frac{b}{\probpage}\right)^2 + (1 - \probpage)\right)\norm{h^{t}_i - \nabla f_i(x^{t})}^2.
  \end{align*}
  From the independence of elements in the mini-batch, we obtain
  \begin{align*}
    &\ExpSub{B}{\ExpSub{\probavailable}{\ExpSub{\probpage}{\norm{h^{t+1}_i - \nabla f_i(x^{t+1})}^2}}} \nonumber \\
    &\leq \frac{2\left(1 - \probavailable\right)L_i^2}{\probavailable} \norm{x^{t+1} - x^{t}}^2 + \frac{1 - \probpage}{\probavailable} \ExpSub{B}{\norm{\frac{1}{B}\sum_{j \in I^t_i}\left(\nabla f_{ij}(x^{t+1}) - \nabla f_{ij}(x^{t})\right) - \left(\nabla f_i(x^{t+1}) - \nabla f_i(x^{t})\right)}^2} \\
    &\quad +\left(\frac{2\left(1 - \probavailable\right)b^2}{\probavailable \probpage} + \probpage\left(1 - \frac{b}{\probpage}\right)^2 + (1 - \probpage)\right)\norm{h^{t}_i - \nabla f_i(x^{t})}^2 \\
    &= \frac{2\left(1 - \probavailable\right)L_i^2}{\probavailable} \norm{x^{t+1} - x^{t}}^2 + \frac{1 - \probpage}{\probavailable B^2} \ExpSub{B}{\sum_{j \in I^t_i} \norm{\left(\nabla f_{ij}(x^{t+1}) - \nabla f_{ij}(x^{t})\right) - \left(\nabla f_i(x^{t+1}) - \nabla f_i(x^{t})\right)}^2} \\
    &\quad +\left(\frac{2\left(1 - \probavailable\right)b^2}{\probavailable \probpage} + \probpage\left(1 - \frac{b}{\probpage}\right)^2 + (1 - \probpage)\right)\norm{h^{t}_i - \nabla f_i(x^{t})}^2 \\
    &= \frac{2\left(1 - \probavailable\right)L_i^2}{\probavailable} \norm{x^{t+1} - x^{t}}^2 + \frac{1 - \probpage}{m \probavailable B} \sum_{j=1}^m \norm{\left(\nabla f_{ij}(x^{t+1}) - \nabla f_{ij}(x^{t})\right) - \left(\nabla f_i(x^{t+1}) - \nabla f_i(x^{t})\right)}^2 \\
    &\quad +\left(\frac{2\left(1 - \probavailable\right)b^2}{\probavailable \probpage} + \probpage\left(1 - \frac{b}{\probpage}\right)^2 + (1 - \probpage)\right)\norm{h^{t}_i - \nabla f_i(x^{t})}^2 \\
    &\leq \frac{2\left(1 - \probavailable\right)L_i^2}{\probavailable} \norm{x^{t+1} - x^{t}}^2 + \frac{1 - \probpage}{m \probavailable B} \sum_{j=1}^m \norm{\nabla f_{ij}(x^{t+1}) - \nabla f_{ij}(x^{t})}^2 \\
    &\quad +\left(\frac{2\left(1 - \probavailable\right)b^2}{\probavailable \probpage} + \probpage\left(1 - \frac{b}{\probpage}\right)^2 + (1 - \probpage)\right)\norm{h^{t}_i - \nabla f_i(x^{t})}^2 \\
    &\leq \left(\frac{2\left(1 - \probavailable\right)L_i^2}{\probavailable} + \frac{(1 - \probpage)L_{\max}^2}{\probavailable B}\right) \norm{x^{t+1} - x^{t}}^2 \\
    &\quad +\left(\frac{2\left(1 - \probavailable\right)b^2}{\probavailable \probpage} + \probpage\left(1 - \frac{b}{\probpage}\right)^2 + (1 - \probpage)\right)\norm{h^{t}_i - \nabla f_i(x^{t})}^2, \\
  \end{align*}
  where we used Assumption~\ref{ass:max_lipschitz_constant}. Finally, we prove the last inequality:
  \begin{align*}
    &\ExpSub{B}{\ExpSub{\probpage}{\norm{k^{t+1}_i}^2}} \\
    &=\probpage \norm{\nabla f_i(x^{t+1}) - \nabla f_i(x^{t}) - \frac{b}{\probpage} \left(h^t_i - \nabla f_i(x^{t})\right)}^2 \\
    &\quad + (1 - \probpage) \ExpSub{B}{\norm{\frac{1}{B}\sum_{j \in I^t_i}\left(\nabla f_{ij}(x^{t+1}) - \nabla f_{ij}(x^{t})\right)}^2} \\
    &\overset{\eqref{auxiliary:variance_decomposition}}{=}\probpage \norm{\nabla f_i(x^{t+1}) - \nabla f_i(x^{t}) - \frac{b}{\probpage} \left(h^t_i - \nabla f_i(x^{t})\right)}^2 \\
    &\quad + (1 - \probpage) \ExpSub{B}{\norm{\frac{1}{B}\sum_{j \in I^t_i}\left(\nabla f_{ij}(x^{t+1}) - \nabla f_{ij}(x^{t})\right) - \left(\nabla f_{i}(x^{t+1}) - \nabla f_{i}(x^{t})\right)}^2} \\
    &\quad + (1 - \probpage) \norm{\nabla f_{i}(x^{t+1}) - \nabla f_{i}(x^{t})}^2\\
    &\overset{\eqref{auxiliary:jensen_inequality}}{\leq}2 \probpage \norm{\nabla f_i(x^{t+1}) - \nabla f_i(x^{t})}^2 +  \frac{2b^2}{\probpage} \norm{h^t_i - \nabla f_i(x^{t})}^2 \\
    &\quad + (1 - \probpage) \ExpSub{B}{\norm{\frac{1}{B}\sum_{j \in I^t_i}\left(\nabla f_{ij}(x^{t+1}) - \nabla f_{ij}(x^{t})\right) - \left(\nabla f_{i}(x^{t+1}) - \nabla f_{i}(x^{t})\right)}^2} \\
    &\quad + (1 - \probpage) \norm{\nabla f_{i}(x^{t+1}) - \nabla f_{i}(x^{t})}^2\\
    &\leq2 \norm{\nabla f_i(x^{t+1}) - \nabla f_i(x^{t})}^2 +  \frac{2b^2}{\probpage} \norm{h^t_i - \nabla f_i(x^{t})}^2 \\
    &\quad + (1 - \probpage) \ExpSub{B}{\norm{\frac{1}{B}\sum_{j \in I^t_i}\left(\nabla f_{ij}(x^{t+1}) - \nabla f_{ij}(x^{t})\right) - \left(\nabla f_{i}(x^{t+1}) - \nabla f_{i}(x^{t})\right)}^2}.
  \end{align*}
  Using the independence of elements in the mini-batch, we have
  \begin{align*}
    &\ExpSub{B}{\ExpSub{\probpage}{\norm{k^{t+1}_i}^2}} \\
    &\leq 2 \norm{\nabla f_i(x^{t+1}) - \nabla f_i(x^{t})}^2 +  \frac{2b^2}{\probpage} \norm{h^t_i - \nabla f_i(x^{t})}^2 \\
    &\quad + \frac{1 - \probpage}{B^2} \ExpSub{B}{\sum_{j \in I^t_i}\norm{\left(\nabla f_{ij}(x^{t+1}) - \nabla f_{ij}(x^{t})\right) - \left(\nabla f_{i}(x^{t+1}) - \nabla f_{i}(x^{t})\right)}^2} \\ 
    &= 2 \norm{\nabla f_i(x^{t+1}) - \nabla f_i(x^{t})}^2 +  \frac{2b^2}{\probpage} \norm{h^t_i - \nabla f_i(x^{t})}^2 \\
    &\quad + \frac{1 - \probpage}{B m} \sum_{j=1}^m \norm{\left(\nabla f_{ij}(x^{t+1}) - \nabla f_{ij}(x^{t})\right) - \left(\nabla f_{i}(x^{t+1}) - \nabla f_{i}(x^{t})\right)}^2 \\ 
    &\leq 2 \norm{\nabla f_i(x^{t+1}) - \nabla f_i(x^{t})}^2 +  \frac{2b^2}{\probpage} \norm{h^t_i - \nabla f_i(x^{t})}^2 \\
    &\quad + \frac{1 - \probpage}{B m} \sum_{j=1}^m \norm{\nabla f_{ij}(x^{t+1}) - \nabla f_{ij}(x^{t})}^2 \\.
  \end{align*}
  It it left to consider Assumptions~\ref{ass:nodes_lipschitz_constant} and \ref{ass:max_lipschitz_constant} to get
  \begin{align*}
    &\ExpSub{B}{\ExpSub{\probpage}{\norm{k^{t+1}_i}^2}} \\
    &\leq \left(2 L_i^2 + \frac{(1 - \probpage)L_{\max}^2}{B}\right)\norm{x^{t+1} - x^{t}}^2 +  \frac{2b^2}{\probpage} \norm{h^t_i - \nabla f_i(x^{t})}^2.
  \end{align*}
\end{proof}

\CONVERGENCEPAGE*

\begin{proof}
  Let us fix constants $\nu, \rho \in [0,\infty)$ that we will define later. Considering Lemma~\ref{lemma:main_lemma}, Lemma~\ref{lemma:gradient_page}, and the law of total expectation, we obtain
    \begin{align*}
      &\Exp{f(x^{t + 1})} + \frac{\gamma (2 \omega + 1)}{\probavailable} \Exp{\norm{g^{t+1} - h^{t+1}}^2} + \frac{\gamma (\left(2 \omega + 1\right)\probavailable - \probpairaa)}{n \probavailable^2} \Exp{\frac{1}{n}\sum_{i=1}^n\norm{g^{t+1}_i - h^{t+1}_i}^2}\\
      &\quad  + \nu \Exp{\norm{h^{t+1} - \nabla f(x^{t+1})}^2} + \rho \Exp{\frac{1}{n}\sum_{i=1}^n\norm{h^{t+1}_i - \nabla f_i(x^{t+1})}^2}\\
      &\leq \Exp{f(x^t) - \frac{\gamma}{2}\norm{\nabla f(x^t)}^2 - \left(\frac{1}{2\gamma} - \frac{L}{2}\right)
      \norm{x^{t+1} - x^t}^2 + \gamma \norm{h^{t} - \nabla f(x^t)}^2}\nonumber\\
      &\quad + \frac{\gamma (2 \omega + 1)}{\probavailable}\Exp{\norm{g^{t} - h^t}^2}+ \frac{\gamma (\left(2 \omega + 1\right)\probavailable - \probpairaa)}{n \probavailable^2}\Exp{\frac{1}{n} \sum_{i=1}^n\norm{g^t_i - h^{t}_i}^2} \\
      &\quad + \frac{4 \gamma \omega (2 \omega + 1)}{n \probavailable^2} \Exp{\frac{1}{n} \sum_{i=1}^n\norm{k^{t+1}_i}^2} \\
      &\quad  + \nu \Exp{\norm{h^{t+1} - \nabla f(x^{t+1})}^2} + \rho \Exp{\frac{1}{n}\sum_{i=1}^n\norm{h^{t+1}_i - \nabla f_i(x^{t+1})}^2}\\
      &= \Exp{f(x^t) - \frac{\gamma}{2}\norm{\nabla f(x^t)}^2 - \left(\frac{1}{2\gamma} - \frac{L}{2}\right)
      \norm{x^{t+1} - x^t}^2 + \gamma \norm{h^{t} - \nabla f(x^t)}^2}\nonumber\\
      &\quad + \frac{\gamma (2 \omega + 1)}{\probavailable}\Exp{\norm{g^{t} - h^t}^2}+ \frac{\gamma (\left(2 \omega + 1\right)\probavailable - \probpairaa)}{n \probavailable^2}\Exp{\frac{1}{n} \sum_{i=1}^n\norm{g^t_i - h^{t}_i}^2} \\
      &\quad + \frac{4 \gamma \omega (2 \omega + 1)}{n \probavailable^2} \Exp{\ExpSub{B}{\ExpSub{\probpage}{\frac{1}{n} \sum_{i=1}^n\norm{k^{t+1}_i}^2}}} \\
      &\quad  + \nu \Exp{\ExpSub{B}{\ExpSub{\probavailable}{\ExpSub{\probpage}{\norm{h^{t+1} - \nabla f(x^{t+1})}^2}}}} \\
      &\quad + \rho \Exp{\ExpSub{B}{\ExpSub{\probavailable}{\ExpSub{\probpage}{\frac{1}{n}\sum_{i=1}^n\norm{h^{t+1}_i - \nabla f_i(x^{t+1})}^2}}}}\\
      &\leq \Exp{f(x^t) - \frac{\gamma}{2}\norm{\nabla f(x^t)}^2 - \left(\frac{1}{2\gamma} - \frac{L}{2}\right)
      \norm{x^{t+1} - x^t}^2 + \gamma \norm{h^{t} - \nabla f(x^t)}^2}\nonumber\\
      &\quad + \frac{\gamma (2 \omega + 1)}{\probavailable}\Exp{\norm{g^{t} - h^t}^2}+ \frac{\gamma (\left(2 \omega + 1\right)\probavailable - \probpairaa)}{n \probavailable^2}\Exp{\frac{1}{n} \sum_{i=1}^n\norm{g^t_i - h^{t}_i}^2} \\
      &\quad + \frac{4 \gamma \omega (2 \omega + 1)}{n \probavailable^2} \Exp{\left(2 \widehat{L}^2 + \frac{(1 - \probpage)L_{\max}^2}{B}\right)\norm{x^{t+1} - x^{t}}^2 +  \frac{2b^2}{\probpage} \frac{1}{n}\sum_{i=1}^n \norm{h^t_i - \nabla f_i(x^{t})}^2} \\
      &\quad  + \nu {\rm E}\Bigg(\left(\frac{2 \left(\probavailable - \probpairaa\right) \widehat{L}^2}{n \probavailable^2} + \frac{(1 - \probpage)L_{\max}^2}{n \probavailable B}\right) \norm{x^{t+1} - x^{t}}^2\\
      &\qquad\quad + \frac{2\left(\probavailable - \probpairaa\right) b^2}{n^2 \probavailable^2 \probpage}\sum_{i=1}^n\norm{ h^t_i - \nabla f_i(x^{t})}^2 + \left(\probpage\left(1 - \frac{b}{\probpage}\right)^2 + (1 - \probpage)\right)\norm{h^{t} - \nabla f(x^{t})}^2\Bigg) \\
      &\quad + \rho {\rm E}\Bigg(\left(\frac{2\left(1 - \probavailable\right)\widehat{L}^2}{\probavailable} + \frac{(1 - \probpage)L_{\max}^2}{\probavailable B}\right) \norm{x^{t+1} - x^{t}}^2 \\
      &\qquad\quad +\left(\frac{2\left(1 - \probavailable\right)b^2}{\probavailable \probpage} + \probpage\left(1 - \frac{b}{\probpage}\right)^2 + (1 - \probpage)\right)\frac{1}{n}\sum_{i=1}^n\norm{h^{t}_i - \nabla f_i(x^{t})}^2\Bigg)\\
    \end{align*}
    After rearranging the terms, we get
    \begin{align*}
      &\Exp{f(x^{t + 1})} + \frac{\gamma (2 \omega + 1)}{\probavailable} \Exp{\norm{g^{t+1} - h^{t+1}}^2} + \frac{\gamma (\left(2 \omega + 1\right)\probavailable - \probpairaa)}{n \probavailable^2} \Exp{\frac{1}{n}\sum_{i=1}^n\norm{g^{t+1}_i - h^{t+1}_i}^2}\\
      &\quad  + \nu \Exp{\norm{h^{t+1} - \nabla f(x^{t+1})}^2} + \rho \Exp{\frac{1}{n}\sum_{i=1}^n\norm{h^{t+1}_i - \nabla f_i(x^{t+1})}^2}\\
      &\leq \Exp{f(x^t)} - \frac{\gamma}{2}\Exp{\norm{\nabla f(x^t)}^2} \\
      &\quad + \frac{\gamma (2 \omega + 1)}{\probavailable} \Exp{\norm{g^{t} - h^{t}}^2} + \frac{\gamma (\left(2 \omega + 1\right)\probavailable - \probpairaa)}{n \probavailable^2} \Exp{\frac{1}{n}\sum_{i=1}^n\norm{g^{t}_i - h^{t}_i}^2} \\
      &\quad - \Bigg(\frac{1}{2\gamma} - \frac{L}{2} - \frac{4 \gamma \omega (2 \omega + 1)}{n \probavailable^2} \left(2 \widehat{L}^2 + \frac{(1 - \probpage)L_{\max}^2}{B}\right) \\
      &\qquad\quad - \nu\left(\frac{2 \left(\probavailable - \probpairaa\right) \widehat{L}^2}{n \probavailable^2} + \frac{(1 - \probpage)L_{\max}^2}{n \probavailable B}\right) - \rho \left(\frac{2\left(1 - \probavailable\right)\widehat{L}^2}{\probavailable} + \frac{(1 - \probpage)L_{\max}^2}{\probavailable B}\right)\Bigg) \Exp{\norm{x^{t+1} - x^t}^2} \\
      &\quad + \left(\gamma + \nu \left(\probpage\left(1 - \frac{b}{\probpage}\right)^2 + (1 - \probpage)\right)\right) \Exp{\norm{h^{t} - \nabla f(x^{t})}^2} \\
      &\quad + \Bigg(\frac{8 b^2 \gamma \omega (2 \omega + 1)}{n \probavailable^2 \probpage} + \frac{2 \nu \left(\probavailable - \probpairaa\right) b^2}{n \probavailable^2 \probpage} \\
      &\qquad\quad+ \rho \left(\frac{2\left(1 - \probavailable\right)b^2}{\probavailable \probpage} + \probpage\left(1 - \frac{b}{\probpage}\right)^2 + (1 - \probpage)\right)\Bigg)\Exp{\frac{1}{n}\sum_{i=1}^n\norm{h^{t}_i - \nabla f_i(x^{t})}^2}.
    \end{align*}
    Due to $b = \frac{\probpage \probavailable}{2 - \probavailable} \leq \probpage,$ one can show that $\left(\probpage\left(1 - \frac{b}{\probpage}\right)^2 + (1 - \probpage)\right) \leq 1 - b.$ 
    Thus, if we take $\nu = \frac{\gamma}{b},$ then
    $$\left(\gamma + \nu \left(\probpage\left(1 - \frac{b}{\probpage}\right)^2 + (1 - \probpage)\right)\right) \leq \gamma + \nu (1 - b) = \nu,$$ therefore
    \begin{align*}
      &\Exp{f(x^{t + 1})} + \frac{\gamma (2 \omega + 1)}{\probavailable} \Exp{\norm{g^{t+1} - h^{t+1}}^2} + \frac{\gamma (\left(2 \omega + 1\right)\probavailable - \probpairaa)}{n \probavailable^2} \Exp{\frac{1}{n}\sum_{i=1}^n\norm{g^{t+1}_i - h^{t+1}_i}^2}\\
      &\quad  + \frac{\gamma}{b} \Exp{\norm{h^{t+1} - \nabla f(x^{t+1})}^2} + \rho \Exp{\frac{1}{n}\sum_{i=1}^n\norm{h^{t+1}_i - \nabla f_i(x^{t+1})}^2}\\
      &\leq \Exp{f(x^t)} - \frac{\gamma}{2}\Exp{\norm{\nabla f(x^t)}^2} \\
      &\quad + \frac{\gamma (2 \omega + 1)}{\probavailable} \Exp{\norm{g^{t} - h^{t}}^2} + \frac{\gamma (\left(2 \omega + 1\right)\probavailable - \probpairaa)}{n \probavailable^2} \Exp{\frac{1}{n}\sum_{i=1}^n\norm{g^{t}_i - h^{t}_i}^2} \\
      &\quad - \Bigg(\frac{1}{2\gamma} - \frac{L}{2} - \frac{4 \gamma \omega (2 \omega + 1)}{n \probavailable^2} \left(2 \widehat{L}^2 + \frac{(1 - \probpage)L_{\max}^2}{B}\right) \\
      &\qquad\quad - \frac{\gamma}{b}\left(\frac{2 \left(\probavailable - \probpairaa\right) \widehat{L}^2}{n \probavailable^2} + \frac{(1 - \probpage)L_{\max}^2}{n \probavailable B}\right) - \rho \left(\frac{2\left(1 - \probavailable\right)\widehat{L}^2}{\probavailable} + \frac{(1 - \probpage)L_{\max}^2}{\probavailable B}\right)\Bigg) \Exp{\norm{x^{t+1} - x^t}^2} \\
      &\quad + \frac{\gamma}{b}\Exp{\norm{h^{t} - \nabla f(x^{t})}^2} \\
      &\quad + \Bigg(\frac{8 b^2 \gamma \omega (2 \omega + 1)}{n \probavailable^2 \probpage} + \frac{2 \gamma \left(\probavailable - \probpairaa\right) b}{n \probavailable^2 \probpage} \\
      &\qquad\quad+ \rho \left(\frac{2\left(1 - \probavailable\right)b^2}{\probavailable \probpage} + \probpage\left(1 - \frac{b}{\probpage}\right)^2 + (1 - \probpage)\right)\Bigg)\Exp{\frac{1}{n}\sum_{i=1}^n\norm{h^{t}_i - \nabla f_i(x^{t})}^2}.
    \end{align*}
    Next, with the choice of $b = \frac{\probpage \probavailable}{2 - \probavailable},$ we ensure that
    $$\left(\frac{2\left(1 - \probavailable\right)b^2}{\probavailable \probpage} + \probpage\left(1 - \frac{b}{\probpage}\right)^2 + (1 - \probpage)\right) \leq 1 - b.$$ If we take $\rho = \frac{8 b \gamma \omega (2 \omega + 1)}{n \probavailable^2 \probpage} + \frac{2 \gamma \left(\probavailable - \probpairaa\right)}{n \probavailable^2 \probpage},$ then
    $$\Bigg(\frac{8 b^2 \gamma \omega (2 \omega + 1)}{n \probavailable^2 \probpage} + \frac{2 \gamma \left(\probavailable - \probpairaa\right) b}{n \probavailable^2 \probpage}+ \rho \left(\frac{2\left(1 - \probavailable\right)b^2}{\probavailable \probpage} + \probpage\left(1 - \frac{b}{\probpage}\right)^2 + (1 - \probpage)\right)\Bigg) \leq \rho,$$ therefore
    \begin{align*}
      &\Exp{f(x^{t + 1})} + \frac{\gamma (2 \omega + 1)}{\probavailable} \Exp{\norm{g^{t+1} - h^{t+1}}^2} + \frac{\gamma (\left(2 \omega + 1\right)\probavailable - \probpairaa)}{n \probavailable^2} \Exp{\frac{1}{n}\sum_{i=1}^n\norm{g^{t+1}_i - h^{t+1}_i}^2}\\
      &\quad  + \frac{\gamma}{b} \Exp{\norm{h^{t+1} - \nabla f(x^{t+1})}^2} + \left(\frac{8 b \gamma \omega (2 \omega + 1)}{n \probavailable^2 \probpage} + \frac{2 \gamma \left(\probavailable - \probpairaa\right)}{n \probavailable^2 \probpage}\right) \Exp{\frac{1}{n}\sum_{i=1}^n\norm{h^{t+1}_i - \nabla f_i(x^{t+1})}^2}\\
      &\leq \Exp{f(x^t)} - \frac{\gamma}{2}\Exp{\norm{\nabla f(x^t)}^2} \\
      &\quad + \frac{\gamma (2 \omega + 1)}{\probavailable} \Exp{\norm{g^{t} - h^{t}}^2} + \frac{\gamma (\left(2 \omega + 1\right)\probavailable - \probpairaa)}{n \probavailable^2} \Exp{\frac{1}{n}\sum_{i=1}^n\norm{g^{t}_i - h^{t}_i}^2} \\
      &\quad - \Bigg(\frac{1}{2\gamma} - \frac{L}{2} - \frac{4 \gamma \omega (2 \omega + 1)}{n \probavailable^2} \left(2 \widehat{L}^2 + \frac{(1 - \probpage)L_{\max}^2}{B}\right) \\
      &\qquad\quad - \frac{\gamma}{b n \probavailable}\left(2 \left(1 - \frac{\probpairaa}{\probavailable}\right) \widehat{L}^2 + \frac{(1 - \probpage)L_{\max}^2}{B}\right) \\
      &\qquad\quad- \left(\frac{8 b \gamma \omega (2 \omega + 1)}{n \probavailable^3 \probpage} + \frac{2 \gamma \left(1 - \frac{\probpairaa}{\probavailable}\right)}{n \probavailable^2 \probpage}\right) \left(2\left(1 - \probavailable\right)\widehat{L}^2 + \frac{(1 - \probpage)L_{\max}^2}{B}\right)\Bigg) \Exp{\norm{x^{t+1} - x^t}^2} \\
      &\quad + \frac{\gamma}{b}\Exp{\norm{h^{t} - \nabla f(x^{t})}^2} + \left(\frac{8 b \gamma \omega (2 \omega + 1)}{n \probavailable^2 \probpage} + \frac{2 \gamma \left(\probavailable - \probpairaa\right)}{n \probavailable^2 \probpage}\right)\Exp{\frac{1}{n}\sum_{i=1}^n\norm{h^{t}_i - \nabla f_i(x^{t})}^2}.
    \end{align*}
    Let us simplify the inequality. First, due to $b \geq \frac{\probpage \probavailable}{2},$ we have
    $$\frac{\gamma}{b n \probavailable}\left(2 \left(1 - \frac{\probpairaa}{\probavailable}\right) \widehat{L}^2 + \frac{(1 - \probpage)L_{\max}^2}{B}\right) \leq \frac{4 \gamma}{n \probavailable^2 \probpage}\left(\left(1 - \frac{\probpairaa}{\probavailable}\right) \widehat{L}^2 + \frac{(1 - \probpage)L_{\max}^2}{B}\right).$$
    Second, due to $b \leq \probavailable \probpage$ and $\probpairaa \leq \probavailable^2$, we get
    \begin{align*}
      &\left(\frac{8 b \gamma \omega (2 \omega + 1)}{n \probavailable^3 \probpage} + \frac{2 \gamma \left(1 - \frac{\probpairaa}{\probavailable}\right)}{n \probavailable^2 \probpage}\right) \left(2\left(1 - \probavailable\right)\widehat{L}^2 + \frac{(1 - \probpage)L_{\max}^2}{B}\right) \\
      &\leq \left(\frac{8 \gamma \omega (2 \omega + 1)}{n \probavailable^2} + \frac{2 \gamma \left(1 - \frac{\probpairaa}{\probavailable}\right)}{n \probavailable^2 \probpage}\right) \left(2\left(1 - \frac{\probpairaa}{\probavailable}\right)\widehat{L}^2 + \frac{(1 - \probpage)L_{\max}^2}{B}\right) \\
      &\leq \frac{16 \gamma \omega (2 \omega + 1)}{n \probavailable^2} \left(\left(1 - \frac{\probpairaa}{\probavailable}\right)\widehat{L}^2 + \frac{(1 - \probpage)L_{\max}^2}{B}\right) \\
      &\quad + \frac{4 \gamma \left(1 - \frac{\probpairaa}{\probavailable}\right)}{n \probavailable^2 \probpage} \left(\left(1 - \frac{\probpairaa}{\probavailable}\right)\widehat{L}^2 + \frac{(1 - \probpage)L_{\max}^2}{B}\right) \\
      &\leq \frac{16 \gamma \omega (2 \omega + 1)}{n \probavailable^2} \left(\widehat{L}^2 + \frac{(1 - \probpage)L_{\max}^2}{B}\right) \\
      &\quad + \frac{4 \gamma}{n \probavailable^2 \probpage} \left(\left(1 - \frac{\probpairaa}{\probavailable}\right)\widehat{L}^2 + \frac{(1 - \probpage)L_{\max}^2}{B}\right).
    \end{align*}
    Combining all bounds together, we obtain the following simplified inequality:
    \begin{align*}
      &\Exp{f(x^{t + 1})} + \frac{\gamma (2 \omega + 1)}{\probavailable} \Exp{\norm{g^{t+1} - h^{t+1}}^2} + \frac{\gamma (\left(2 \omega + 1\right)\probavailable - \probpairaa)}{n \probavailable^2} \Exp{\frac{1}{n}\sum_{i=1}^n\norm{g^{t+1}_i - h^{t+1}_i}^2}\\
      &\quad  + \frac{\gamma}{b} \Exp{\norm{h^{t+1} - \nabla f(x^{t+1})}^2} + \left(\frac{8 b \gamma \omega (2 \omega + 1)}{n \probavailable^2 \probpage} + \frac{2 \gamma \left(\probavailable - \probpairaa\right)}{n \probavailable^2 \probpage}\right) \Exp{\frac{1}{n}\sum_{i=1}^n\norm{h^{t+1}_i - \nabla f_i(x^{t+1})}^2}\\
      &\leq \Exp{f(x^t)} - \frac{\gamma}{2}\Exp{\norm{\nabla f(x^t)}^2} \\
      &\quad + \frac{\gamma (2 \omega + 1)}{\probavailable} \Exp{\norm{g^{t} - h^{t}}^2} + \frac{\gamma (\left(2 \omega + 1\right)\probavailable - \probpairaa)}{n \probavailable^2} \Exp{\frac{1}{n}\sum_{i=1}^n\norm{g^{t}_i - h^{t}_i}^2} \\
      &\quad - \Bigg(\frac{1}{2\gamma} - \frac{L}{2} - \frac{24 \gamma \omega (2 \omega + 1)}{n \probavailable^2} \left(\widehat{L}^2 + \frac{(1 - \probpage)L_{\max}^2}{B}\right) \\
      &\qquad\quad - \frac{8 \gamma}{n \probavailable^2 \probpage} \left(\left(1 - \frac{\probpairaa}{\probavailable}\right)\widehat{L}^2 + \frac{(1 - \probpage)L_{\max}^2}{B}\right)\Bigg) \Exp{\norm{x^{t+1} - x^t}^2} \\
      &\quad + \frac{\gamma}{b}\Exp{\norm{h^{t} - \nabla f(x^{t})}^2} + \left(\frac{8 b \gamma \omega (2 \omega + 1)}{n \probavailable^2 \probpage} + \frac{2 \gamma \left(\probavailable - \probpairaa\right)}{n \probavailable^2 \probpage}\right)\Exp{\frac{1}{n}\sum_{i=1}^n\norm{h^{t}_i - \nabla f_i(x^{t})}^2}.
    \end{align*}
    Using Lemma~\ref{lemma:gamma} and the assumption about $\gamma,$ we get
    \begin{align*}
      &\Exp{f(x^{t + 1})} + \frac{\gamma (2 \omega + 1)}{\probavailable} \Exp{\norm{g^{t+1} - h^{t+1}}^2} + \frac{\gamma (\left(2 \omega + 1\right)\probavailable - \probpairaa)}{n \probavailable^2} \Exp{\frac{1}{n}\sum_{i=1}^n\norm{g^{t+1}_i - h^{t+1}_i}^2}\\
      &\quad  + \frac{\gamma}{b} \Exp{\norm{h^{t+1} - \nabla f(x^{t+1})}^2} + \left(\frac{8 b \gamma \omega (2 \omega + 1)}{n \probavailable^2 \probpage} + \frac{2 \gamma \left(\probavailable - \probpairaa\right)}{n \probavailable^2 \probpage}\right) \Exp{\frac{1}{n}\sum_{i=1}^n\norm{h^{t+1}_i - \nabla f_i(x^{t+1})}^2}\\
      &\leq \Exp{f(x^t)} - \frac{\gamma}{2}\Exp{\norm{\nabla f(x^t)}^2} \\
      &\quad + \frac{\gamma (2 \omega + 1)}{\probavailable} \Exp{\norm{g^{t} - h^{t}}^2} + \frac{\gamma (\left(2 \omega + 1\right)\probavailable - \probpairaa)}{n \probavailable^2} \Exp{\frac{1}{n}\sum_{i=1}^n\norm{g^{t}_i - h^{t}_i}^2} \\
      &\quad + \frac{\gamma}{b}\Exp{\norm{h^{t} - \nabla f(x^{t})}^2} + \left(\frac{8 b \gamma \omega (2 \omega + 1)}{n \probavailable^2 \probpage} + \frac{2 \gamma \left(\probavailable - \probpairaa\right)}{n \probavailable^2 \probpage}\right)\Exp{\frac{1}{n}\sum_{i=1}^n\norm{h^{t}_i - \nabla f_i(x^{t})}^2}.
    \end{align*}
    It is left to apply Lemma~\ref{lemma:good_recursion} with 
    \begin{eqnarray*}
      \Psi^t &=& \frac{(2 \omega + 1)}{\probavailable} \Exp{\norm{g^{t} - h^{t}}^2} + \frac{(\left(2 \omega + 1\right)\probavailable - \probpairaa)}{n \probavailable^2} \Exp{\frac{1}{n}\sum_{i=1}^n\norm{g^{t}_i - h^{t}_i}^2} \\
      &\quad& + \frac{1}{b}\Exp{\norm{h^{t} - \nabla f(x^{t})}^2} + \left(\frac{8 b \omega (2 \omega + 1)}{n \probavailable^2 \probpage} + \frac{2 \left(\probavailable - \probpairaa\right)}{n \probavailable^2 \probpage}\right)\Exp{\frac{1}{n}\sum_{i=1}^n\norm{h^{t}_i - \nabla f_i(x^{t})}^2}
    \end{eqnarray*}
    to conclude the proof.
  \end{proof}

  \COROLLARYPAGE*

  \begin{proof}
    In the view of Theorem~\ref{theorem:page}, it is enough to do
    \begin{align*}
      T \eqdef \cO\left(\frac{\Delta_0}{\varepsilon}\left[L + \sqrt{\frac{\omega^2}{n \probavailable^2} \left(\widehat{L}^2 + \frac{(1 - \probpage)L_{\max}^2}{B}\right) + \frac{1}{n \probavailable^2 \probpage} \left(\left(1 - \frac{\probpairaa}{\probavailable}\right)\widehat{L}^2 + \frac{(1 - \probpage)L_{\max}^2}{B}\right)}\right]\right)
    \end{align*}
    steps to get $\varepsilon$-solution. Using the choice of $\probmega$ and the definition of $\mathbbm{1}_{\probavailable}$, we can get \eqref{eq:rate_dasha_pp_page}. 

    Note that the expected number of gradients calculations at each communication round equals $\probmega m + (1 - \probmega) B = \frac{2 m B}{m + B} \leq 2 B.$
  \end{proof}

  \COROLLARYPAGERANDK*

  \begin{proof}
    The communication complexity equals
    \begin{eqnarray*}
        \cO\left(d + K T\right) &=& \cO\left(d + \frac{\Delta_0}{\varepsilon}\left[K L + K \frac{\omega}{\probavailable\sqrt{n}}\left(\widehat{L} + \frac{L_{\max}}{\sqrt{B}}\right) + K \frac{1}{\probavailable}\sqrt{\frac{m}{n}}\left(\frac{\mathbbm{1}_{\probavailable}\widehat{L}}{\sqrt{B}} + \frac{L_{\max}}{B}\right)\right]\right).
    \end{eqnarray*}
    Since $B \leq \frac{L_{\max}^2}{\mathbbm{1}_{\probavailable}^2 \widehat{L}^2},$ we have $\frac{\mathbbm{1}_{\probavailable}\widehat{L}}{\sqrt{B}} + \frac{L_{\max}}{B} \leq \frac{2 L_{\max}}{B}$ and 
    \begin{eqnarray*}
      \cO\left(d + K T\right) &=& \cO\left(d + \frac{\Delta_0}{\varepsilon}\left[K L + K \frac{\omega}{\probavailable\sqrt{n}}\left(\widehat{L} + \frac{L_{\max}}{\sqrt{B}}\right) + K \frac{1}{\probavailable}\sqrt{\frac{m}{n}}\frac{L_{\max}}{B}\right]\right).
    \end{eqnarray*}
    Note that $K = \Theta\left(\frac{B d}{\sqrt{m}}\right) = \cO\left(\frac{d}{\probavailable\sqrt{n}}\right)$ and $\omega + 1 = \frac{d}{K}$ due to Theorem~\ref{theorem:rand_k}, thus 
    \begin{eqnarray*}
      \cO\left(d + K T\right) &=& \cO\left(d + \frac{\Delta_0}{\varepsilon}\left[\frac{d}{\probavailable\sqrt{n}} L + \frac{d}{\probavailable\sqrt{n}}\left(\widehat{L} + \frac{L_{\max}}{\sqrt{B}}\right) + \frac{d}{\probavailable \sqrt{n}}L_{\max}\right]\right) \\
      &=& \cO\left(d + \frac{L_{\max} \Delta_0 d}{\probavailable \varepsilon \sqrt{n}}\right).
    \end{eqnarray*}
    Using the same reasoning, the expected number of gradient calculations per node equals
    \begin{eqnarray*}
        \cO\left(m + B T\right) &=& \cO\left(m + \frac{\Delta_0}{\varepsilon}\left[B L + B \frac{\omega}{\probavailable\sqrt{n}}\left(\widehat{L} + \frac{L_{\max}}{\sqrt{B}}\right) + B \frac{1}{\probavailable}\sqrt{\frac{m}{n}}\left(\frac{\mathbbm{1}_{\probavailable}\widehat{L}}{\sqrt{B}} + \frac{L_{\max}}{B}\right)\right]\right) \\
        &=& \cO\left(m + \frac{\Delta_0}{\varepsilon}\left[B L + B \frac{d}{K \probavailable\sqrt{n}}\left(\widehat{L} + \frac{L_{\max}}{\sqrt{B}}\right) + B \frac{1}{\probavailable}\sqrt{\frac{m}{n}}\frac{L_{\max}}{B}\right]\right) \\
        &=& \cO\left(m + \frac{\Delta_0}{\varepsilon}\left[\frac{1}{\probavailable}\sqrt{\frac{m}{n}} L + \frac{\sqrt{m}}{\probavailable\sqrt{n}}\left(\widehat{L} + \frac{L_{\max}}{\sqrt{B}}\right) + \frac{1}{\probavailable}\sqrt{\frac{m}{n}}L_{\max}\right]\right) \\
        &=& \cO\left(m + \frac{L_{\max} \Delta_0 \sqrt{m}}{\probavailable \varepsilon \sqrt{n}}\right).
    \end{eqnarray*}
  \end{proof}

\subsection{Proof for \algname{\algorithmname-FINITE-MVR}}

\label{sec:proof_finite_mvr}

\begin{lemma}
  \label{lemma:finite_mvr}
  Suppose that Assumptions \ref{ass:nodes_lipschitz_constant}, \ref{ass:max_lipschitz_constant}, and \ref{ass:partial_participation} hold. For $h^{t+1}_i$, $h^{t+1}_{ij}$ and $k^{t+1}_i$ from Algorithm~\ref{alg:main_algorithm} (\algname{\algorithmname-FINITE-MVR}) we have
  \begin{enumerate}
  \item
      \begin{align*}
          &\ExpSub{B}{\ExpSub{\probavailable}{\norm{h^{t+1} - \nabla f(x^{t+1})}^2}} \\
          & \leq \left(\frac{2 L_{\max}^2}{n \probavailable B} + \frac{2 \left(\probavailable - \probpairaa\right) \widehat{L}^2}{n \probavailable^2}\right)\norm{x^{t+1} - x^{t}}^2  \\
          &\quad + \frac{2\left(\probavailable - \probpairaa\right)b^2}{n^2 \probavailable^2}\sum_{i=1}^n\norm{h^t_{i} - \nabla f_{i}(x^{t})}^2 + \frac{2b^2}{n^2 \probavailable B m}\sum_{i=1}^n\sum_{j=1}^m\norm{h^t_{ij} - \nabla f_{ij}(x^{t})}^2 \\
          &\quad + (1 - b)^2\norm{h^{t} - \nabla f(x^{t})}^2.
      \end{align*}
  \item
      \begin{align*}
          &\ExpSub{B}{\ExpSub{\probavailable}{\norm{h^{t+1}_i - \nabla f_i(x^{t+1})}^2}} \\
          & \leq \left(\frac{2 L_{\max}^2}{\probavailable B} +\frac{2(1 - \probavailable) L_i^2}{\probavailable}\right) \norm{x^{t+1} - x^{t}}^2 \\
          &\quad + \frac{2 b^2}{\probavailable B m} \sum_{j=1}^m\norm{h^t_{ij} - \nabla f_{ij}(x^{t})}^2 +\left(\frac{2 \left(1 - \probavailable\right) b^2}{\probavailable} + (1 - b)^2\right)\norm{h^{t}_i - \nabla f_i(x^{t})}^2, \quad \forall i \in [n].
      \end{align*}
  \item
      \begin{align*}
          &\ExpSub{B}{\ExpSub{\probavailable}{\norm{h^{t+1}_{ij} - \nabla f_{ij}(x^{t+1})}^2}} \\
          & \leq \frac{2\left(1 - \frac{\probavailable B}{m}\right) L_{\max}^2}{\frac{\probavailable B}{m}} \norm{x^{t+1} - x^{t}}^2 \\
          &\quad + \left(\frac{2\left(1 - \frac{\probavailable B}{m}\right) b^2}{\frac{\probavailable B}{m}} + (1 - b)^2\right) \norm{h^t_{ij} - \nabla f_{ij}(x^{t})}^2, \quad \forall i \in [n], \forall j \in [m].
      \end{align*}
  \item
      \begin{align*}
        &\ExpSub{B}{\norm{k^{t+1}_i}^2} \\
        &\leq \left(\frac{2 L_{\max}^2}{B} + 2 L_i^2\right)\norm{x^{t+1} - x^{t}}^2 \\
        &\quad + \frac{2 b^2}{B m}\sum_{j=1}^m\norm{h^t_{ij} - \nabla f_{ij}(x^{t})}^2 + 2 b^2 \norm{h^t_i - \nabla f_i(x^{t})}^2, \quad \forall i \in [n].
      \end{align*}
  \end{enumerate}
\end{lemma}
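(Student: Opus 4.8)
The plan is to establish all four bounds from one template: for each estimator, split it into its conditional mean plus a zero-mean fluctuation using \eqref{auxiliary:variance_decomposition}, read off the bias from the structure of the updates, and bound the fluctuation either by Lemma~\ref{lemma:sampling} (for the aggregated quantity $h^{t+1}$) or by an elementary two-point calculation (for the per-node $h^{t+1}_i$ and per-coordinate $h^{t+1}_{ij}$). As a preliminary step I would record the statistics of the per-node mini-batch estimator. Write $v^t_{ij} \eqdef \nabla f_{ij}(x^{t+1}) - \nabla f_{ij}(x^{t}) - b\left(h^t_{ij} - \nabla f_{ij}(x^{t})\right)$, so that $k^{t+1}_i = \frac{1}{B}\sum_{j \in I^t_i} v^t_{ij}$. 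Since uniform sampling of $B$ indices out of $[m]$ without replacement is itself a sampling of the kind covered by Lemma~\ref{lemma:sampling} (taking $B/m$ for $\probavailable$, $B(B-1)/(m(m-1))$ for $\probpairaa$, $r_j = 0$, and $s_j = v^t_{ij}$ there), that lemma gives $\ExpSub{B}{k^{t+1}_i} = \frac{1}{m}\sum_{j=1}^m v^t_{ij} = \nabla f_i(x^{t+1}) - \nabla f_i(x^{t}) - b\left(h^t_i - \nabla f_i(x^{t})\right)$ — using the invariant $h^t_i = \frac{1}{m}\sum_j h^t_{ij}$, which the updates preserve — together with $\ExpSub{B}{\norm{k^{t+1}_i - \ExpSub{B}{k^{t+1}_i}}^2} \le \frac{1}{B m}\sum_{j=1}^m \norm{v^t_{ij}}^2$. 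Then \eqref{auxiliary:jensen_inequality} and Assumptions~\ref{ass:nodes_lipschitz_constant}, \ref{ass:max_lipschitz_constant} give $\norm{v^t_{ij}}^2 \le 2 L_{\max}^2\norm{x^{t+1}-x^t}^2 + 2 b^2 \norm{h^t_{ij}-\nabla f_{ij}(x^t)}^2$ and $\norm{\ExpSub{B}{k^{t+1}_i}}^2 \le 2 L_i^2\norm{x^{t+1}-x^t}^2 + 2 b^2 \norm{h^t_i-\nabla f_i(x^t)}^2$. Part~4 follows at once from $\ExpSub{B}{\norm{k^{t+1}_i}^2} = \ExpSub{B}{\norm{k^{t+1}_i - \ExpSub{B}{k^{t+1}_i}}^2} + \norm{\ExpSub{B}{k^{t+1}_i}}^2$ and these three estimates.

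For part~3 the key observation is that coordinate $(i,j)$ is refreshed exactly when node $i$ is participating \emph{and} $j \in I^t_i$; because the mini-batch is drawn independently of the participation events, this occurs with probability $p \eqdef \probavailable B / m$, and on that event $h^{t+1}_{ij} = h^t_{ij} + \frac{1}{p} v^t_{ij}$ (the combined scaling $\frac{m}{\probavailable B}$ equals $\frac1p$). Hence $h^{t+1}_{ij}$ is a two-point random vector with mean $h^t_{ij} + v^t_{ij}$, so by \eqref{auxiliary:variance_decomposition} its bias against $\nabla f_{ij}(x^{t+1})$ is $(1-b)\left(h^t_{ij}-\nabla f_{ij}(x^t)\right)$ and its variance is $\frac{1-p}{p}\norm{v^t_{ij}}^2$; bounding $\norm{v^t_{ij}}^2$ as above yields the claimed inequality with the factor $\frac{1-\probavailable B/m}{\probavailable B/m}$. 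Part~2 is the same computation one level up: $h^{t+1}_i$ equals $h^t_i + \frac{1}{\probavailable}k^{t+1}_i$ with probability $\probavailable$ and $h^t_i$ otherwise, so the law of total variance splits its conditional variance as $\frac{1}{\probavailable}\ExpSub{B}{\norm{k^{t+1}_i - \ExpSub{B}{k^{t+1}_i}}^2} + \frac{1-\probavailable}{\probavailable}\norm{\ExpSub{B}{k^{t+1}_i}}^2$, into which I substitute the preliminary bounds, and the bias contributes the extra $(1-b)^2 \norm{h^t_i - \nabla f_i(x^t)}^2$.

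Finally, for part~1 write $h^{t+1} = \frac{1}{n}\sum_i h^{t+1}_i$ in the form required by Lemma~\ref{lemma:sampling} with $r_i = h^t_i$ and $s_i = k^{t+1}_i$; the $s_i$ are independent across $i$ because the mini-batches on distinct nodes are independent, so the lemma bounds the variance of $h^{t+1}$ by $\frac{1}{n^2\probavailable}\sum_i \ExpSub{B}{\norm{k^{t+1}_i - \ExpSub{B}{k^{t+1}_i}}^2} + \frac{\probavailable-\probpairaa}{n^2\probavailable^2}\sum_i \norm{\ExpSub{B}{k^{t+1}_i}}^2$, into which the preliminary estimates (and $\frac1n\sum_i L_i^2 = \widehat{L}^2$) are inserted, while \eqref{auxiliary:variance_decomposition} contributes the bias $(1-b)^2\norm{h^t - \nabla f(x^t)}^2$. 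The only genuinely delicate point is the bookkeeping of the two independent randomnesses — partial participation and the without-replacement mini-batch — so that the refresh probability of a coordinate is correctly identified as $\probavailable B/m$, the variance-reducing factor $\frac1B$ is retained instead of being coarsened to $\frac1m\sum_j\norm{\cdot}^2$, and Lemma~\ref{lemma:sampling} is applied with genuinely independent $s_i$; past that, the argument simply parallels the proofs of Lemmas~\ref{lemma:gradient} and \ref{lemma:gradient_page}.
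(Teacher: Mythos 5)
Your proposal is correct and follows essentially the same route as the paper: variance decomposition via \eqref{auxiliary:variance_decomposition}, a double application of Lemma~\ref{lemma:sampling} (once for node participation with $r_i=h^t_i$, $s_i=k^{t+1}_i$, and once for the without-replacement mini-batch with parameters $\nicefrac{B}{m}$ and $\nicefrac{B(B-1)}{m(m-1)}$), and elementary two-point computations for the per-node and per-coordinate quantities, with the refresh probability $\probavailable B/m$ and the invariant $h^t_i=\frac{1}{m}\sum_j h^t_{ij}$ identified exactly as the paper uses them. The only difference is organizational — you pre-compute the mean and variance of $k^{t+1}_i$ once and reuse them in all four parts, whereas the paper re-derives the inner mini-batch bound within each part — and all the resulting constants match.
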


\begin{proof}
  We start by proving the first inequality. Note that
  \begin{align*}
    &\ExpSub{B}{\ExpSub{\probavailable}{h^{t+1}_i}} \\
    & =\probavailable \left(h^t_i + \frac{1}{\probavailable}\ExpSub{B}{k^{t+1}_i}\right) + (1 - \probavailable) h^t_i \\
    & =h^t_i + \frac{1}{m}\sum_{j=1}^m \frac{B}{m} \cdot \frac{m}{B}\left(\nabla f_{ij}(x^{t+1}) - \nabla f_{ij}(x^{t}) - b \left(h^t_{ij} - \nabla f_{ij}(x^{t})\right)\right) + \left(1 - \frac{B}{m}\right) \cdot 0 \\
    & =\nabla f_{i}(x^{t+1}) + (1 - b) \left(h^t_{i} - \nabla f_{i}(x^{t})\right),
  \end{align*}
  thus
  \begin{align*}
    &\ExpSub{B}{\ExpSub{\probavailable}{\norm{h^{t+1} - \nabla f(x^{t+1})}^2}} \\
    &\overset{\eqref{auxiliary:variance_decomposition}}{=}\ExpSub{B}{\ExpSub{\probavailable}{\norm{h^{t+1} - \ExpSub{B}{\ExpSub{\probavailable}{h^{t+1}}}}^2}} + (1 - b)^2\norm{h^{t} - \nabla f(x^{t})}^2.
  \end{align*}
  We can use Lemma~\ref{lemma:sampling} with $r_i = h^{t}_i$ and $s_i = k^{t+1}_i$ to obtain
  \begin{align*}
    &\ExpSub{B}{\ExpSub{\probavailable}{\norm{h^{t+1} - \nabla f(x^{t+1})}^2}} \\
    &\leq \frac{1}{n^2 \probavailable}\sum_{i=1}^n\ExpSub{B}{\norm{k^{t+1}_i - \ExpSub{B}{k^{t+1}_i}}^2} +\frac{\probavailable - \probpairaa}{n^2 \probavailable^2}\sum_{i=1}^n\norm{\ExpSub{B}{k^{t+1}_i}}^2 \\
    &\quad + (1 - b)^2\norm{h^{t} - \nabla f(x^{t})}^2\\
    &= \frac{1}{n^2 \probavailable}\sum_{i=1}^n\ExpSub{B}{\norm{\frac{1}{m}\sum_{j=1}^m k^{t+1}_{ij} - \left(\nabla f_{i}(x^{t+1}) - \nabla f_{i}(x^{t}) - b \left(h^t_{i} - \nabla f_{i}(x^{t})\right)\right)}^2} \\
    &\quad +\frac{\probavailable - \probpairaa}{n^2 \probavailable^2}\sum_{i=1}^n\norm{\nabla f_{i}(x^{t+1}) - \nabla f_{i}(x^{t}) - b \left(h^t_{i} - \nabla f_{i}(x^{t})\right)}^2 \\
    &\quad + (1 - b)^2\norm{h^{t} - \nabla f(x^{t})}^2.
  \end{align*}
  Next, we again use Lemma~\ref{lemma:sampling} with $r_i = 0,$ $s_i = \nabla f_{ij}(x^{t+1}) - \nabla f_{ij}(x^{t}) - b \left(h^t_{ij} - \nabla f_{ij}(x^{t})\right), $ $\probavailable = \frac{B}{m},$ and $\probpairaa = \frac{B(B-1)}{m(m-1)}$:
  \begin{align*}
    &\ExpSub{B}{\ExpSub{\probavailable}{\norm{h^{t+1} - \nabla f(x^{t+1})}^2}} \\
    &\leq \frac{1}{n^2 \probavailable}\sum_{i=1}^n\left(\frac{m - B}{B m (m - 1)}\sum_{j=1}^m\norm{\nabla f_{ij}(x^{t+1}) - \nabla f_{ij}(x^{t}) - b \left(h^t_{ij} - \nabla f_{ij}(x^{t})\right)}^2\right) \\
    &\quad +\frac{\probavailable - \probpairaa}{n^2 \probavailable^2}\sum_{i=1}^n\norm{\nabla f_{i}(x^{t+1}) - \nabla f_{i}(x^{t}) - b \left(h^t_{i} - \nabla f_{i}(x^{t})\right)}^2 \\
    &\quad + (1 - b)^2\norm{h^{t} - \nabla f(x^{t})}^2\\
    &\leq \frac{1}{n^2 \probavailable B m}\sum_{i=1}^n\sum_{j=1}^m\norm{\nabla f_{ij}(x^{t+1}) - \nabla f_{ij}(x^{t}) - b \left(h^t_{ij} - \nabla f_{ij}(x^{t})\right)}^2 \\
    &\quad +\frac{\probavailable - \probpairaa}{n^2 \probavailable^2}\sum_{i=1}^n\norm{\nabla f_{i}(x^{t+1}) - \nabla f_{i}(x^{t}) - b \left(h^t_{i} - \nabla f_{i}(x^{t})\right)}^2 \\
    &\quad + (1 - b)^2\norm{h^{t} - \nabla f(x^{t})}^2\\
    &\overset{\eqref{auxiliary:jensen_inequality}}{\leq} \frac{2}{n^2 \probavailable B m}\sum_{i=1}^n\sum_{j=1}^m\norm{\nabla f_{ij}(x^{t+1}) - \nabla f_{ij}(x^{t})}^2 + \frac{2b^2}{n^2 \probavailable B m}\sum_{i=1}^n\sum_{j=1}^m\norm{h^t_{ij} - \nabla f_{ij}(x^{t})}^2 \\
    &\quad +\frac{2 \left(\probavailable - \probpairaa\right)}{n^2 \probavailable^2}\sum_{i=1}^n\norm{\nabla f_{i}(x^{t+1}) - \nabla f_{i}(x^{t})}^2 + \frac{2\left(\probavailable - \probpairaa\right)b^2}{n^2 \probavailable^2}\sum_{i=1}^n\norm{h^t_{i} - \nabla f_{i}(x^{t})}^2 \\
    &\quad + (1 - b)^2\norm{h^{t} - \nabla f(x^{t})}^2.
  \end{align*}
  Due to Assumptions \ref{ass:nodes_lipschitz_constant} and \ref{ass:max_lipschitz_constant}, we have
  \begin{align*}
    &\ExpSub{B}{\ExpSub{\probavailable}{\norm{h^{t+1} - \nabla f(x^{t+1})}^2}} \\
    &\leq \left(\frac{2 L_{\max}^2}{n \probavailable B} + \frac{2 \left(\probavailable - \probpairaa\right) \widehat{L}^2}{n \probavailable^2}\right)\norm{x^{t+1} - x^{t}}^2  \\
    &\quad + \frac{2\left(\probavailable - \probpairaa\right)b^2}{n^2 \probavailable^2}\sum_{i=1}^n\norm{h^t_{i} - \nabla f_{i}(x^{t})}^2 + \frac{2b^2}{n^2 \probavailable B m}\sum_{i=1}^n\sum_{j=1}^m\norm{h^t_{ij} - \nabla f_{ij}(x^{t})}^2 \\
    &\quad + (1 - b)^2\norm{h^{t} - \nabla f(x^{t})}^2.
  \end{align*}
  Let us get the bound for the second inequality:
  \begin{align*}
    &\ExpSub{B}{\ExpSub{\probavailable}{\norm{h^{t+1}_i - \nabla f_i(x^{t+1})}^2}} \\
    &\overset{\eqref{auxiliary:variance_decomposition}}{=}\ExpSub{B}{\ExpSub{\probavailable}{ \norm{h^{t+1}_i - \left(\nabla f_i(x^{t+1}) + (1 - b)(h^t_i - \nabla f_i(x^{t}))\right)}^2}} \\
    &\quad +(1 - b)^2\norm{h^{t}_i - \nabla f_i(x^{t})}^2 \\
    &=\probavailable\ExpSub{B}{\norm{h^{t}_i + \frac{1}{\probavailable}k^{t+1}_i - \left(\nabla f_i(x^{t+1}) + (1 - b)(h^t_i - \nabla f_i(x^{t}))\right)}^2} \\
    &\quad +(1 - \probavailable)\norm{h^{t}_i - \left(\nabla f_i(x^{t+1}) + (1 - b)(h^t_i - \nabla f_i(x^{t}))\right)}^2 \\
    &\quad +(1 - b)^2\norm{h^{t}_i - \nabla f_i(x^{t})}^2 \\
    &\overset{\eqref{auxiliary:variance_decomposition}}{=}\frac{1}{\probavailable} \ExpSub{B}{\norm{k^{t+1}_i - \ExpSub{B}{k^{t+1}_i}}^2} \\
    &\quad +\frac{1 - \probavailable}{\probavailable} \norm{\nabla f_i(x^{t+1}) - \nabla f_i(x^{t}) - b(h^t_i - \nabla f_i(x^{t}))}^2 \\
    &\quad +(1 - b)^2\norm{h^{t}_i - \nabla f_i(x^{t})}^2.
  \end{align*}
  Let us use Lemma~\ref{lemma:sampling} with $r_i = 0,$ $s_i = \nabla f_{ij}(x^{t+1}) - \nabla f_{ij}(x^{t}) - b \left(h^t_{ij} - \nabla f_{ij}(x^{t})\right), $ $\probavailable = \frac{B}{m},$ and $\probpairaa = \frac{B(B-1)}{m(m-1)}$:
  \begin{align*}
    &\ExpSub{B}{\ExpSub{\probavailable}{\norm{h^{t+1}_i - \nabla f_i(x^{t+1})}^2}} \\
    &\leq\frac{1}{\probavailable} \left(\frac{m - B}{B m (m - 1)}\sum_{j=1}^m\norm{\nabla f_{ij}(x^{t+1}) - \nabla f_{ij}(x^{t}) - b \left(h^t_{ij} - \nabla f_{ij}(x^{t})\right)}^2\right) \\
    &\quad +\frac{1 - \probavailable}{\probavailable} \norm{\nabla f_i(x^{t+1}) - \nabla f_i(x^{t}) - b(h^t_i - \nabla f_i(x^{t}))}^2 \\
    &\quad +(1 - b)^2\norm{h^{t}_i - \nabla f_i(x^{t})}^2 \\
    &\leq\frac{1}{\probavailable B m} \sum_{j=1}^m\norm{\nabla f_{ij}(x^{t+1}) - \nabla f_{ij}(x^{t}) - b \left(h^t_{ij} - \nabla f_{ij}(x^{t})\right)}^2 \\
    &\quad +\frac{1 - \probavailable}{\probavailable} \norm{\nabla f_i(x^{t+1}) - \nabla f_i(x^{t}) - b(h^t_i - \nabla f_i(x^{t}))}^2 \\
    &\quad +(1 - b)^2\norm{h^{t}_i - \nabla f_i(x^{t})}^2 \\
    &\overset{\eqref{auxiliary:jensen_inequality}}{\leq}\frac{2}{\probavailable B m} \sum_{j=1}^m\norm{\nabla f_{ij}(x^{t+1}) - \nabla f_{ij}(x^{t})}^2 +\frac{2(1 - \probavailable)}{\probavailable} \norm{\nabla f_i(x^{t+1}) - \nabla f_i(x^{t})}^2 \\
    &\quad + \frac{2 b^2}{\probavailable B m} \sum_{j=1}^m\norm{h^t_{ij} - \nabla f_{ij}(x^{t})}^2 +\left(\frac{2 \left(1 - \probavailable\right) b^2}{\probavailable} + (1 - b)^2\right)\norm{h^{t}_i - \nabla f_i(x^{t})}^2 \\
    &\leq\left(\frac{2 L_{\max}^2}{\probavailable B} +\frac{2(1 - \probavailable) L_i^2}{\probavailable}\right) \norm{x^{t+1} - x^{t}}^2 \\
    &\quad + \frac{2 b^2}{\probavailable B m} \sum_{j=1}^m\norm{h^t_{ij} - \nabla f_{ij}(x^{t})}^2 +\left(\frac{2 \left(1 - \probavailable\right) b^2}{\probavailable} + (1 - b)^2\right)\norm{h^{t}_i - \nabla f_i(x^{t})}^2,
  \end{align*}
  where we used Assumptions \ref{ass:nodes_lipschitz_constant} and \ref{ass:max_lipschitz_constant}. We continue the proof by considering $\ExpSub{B}{\ExpSub{\probavailable}{\norm{h^{t+1}_{ij} - \nabla f_{ij}(x^{t+1})}^2}}$:
  \begin{align*}
    &\ExpSub{B}{\ExpSub{\probavailable}{\norm{h^{t+1}_{ij} - \nabla f_{ij}(x^{t+1})}^2}} \\
    &\overset{\eqref{auxiliary:variance_decomposition}}{=}\ExpSub{B}{\ExpSub{\probavailable}{\norm{h^{t+1}_{ij} - \left(\nabla f_{ij}(x^{t+1}) + (1 - b)(h^t_{ij} - \nabla f_{ij}(x^{t}))\right)}^2}} \\
    &\quad +(1 - b)^2\norm{h^{t}_{ij} - \nabla f_{ij}(x^{t})}^2 \\
    &=\frac{\probavailable B}{m} \ExpSub{B}{\norm{h^{t}_{ij} + \frac{m}{B\probavailable}\left(\nabla f_{ij}(x^{t+1}) - \nabla f_{ij}(x^{t}) - b \left(h^t_{ij} - \nabla f_{ij}(x^{t})\right)\right) - \left(\nabla f_{ij}(x^{t+1}) + (1 - b)(h^t_{ij} - \nabla f_{ij}(x^{t}))\right)}^2} \\
    &\quad + \left(1 - \frac{\probavailable B}{m}\right) \norm{h^{t}_{ij} - \left(\nabla f_{ij}(x^{t+1}) + (1 - b)(h^t_{ij} - \nabla f_{ij}(x^{t}))\right)}^2 \\
    &\quad +(1 - b)^2\norm{h^{t}_{ij} - \nabla f_{ij}(x^{t})}^2 \\
    &=\frac{\left(1 - \frac{\probavailable B}{m}\right)^2}{\frac{\probavailable B}{m}}\norm{\nabla f_{ij}(x^{t+1}) - \nabla f_{ij}(x^{t}) - b(h^t_{ij} - \nabla f_{ij}(x^{t}))}^2 \\
    &\quad + \left(1 - \frac{\probavailable B}{m}\right) \norm{\nabla f_{ij}(x^{t+1}) - \nabla f_{ij}(x^{t}) - b(h^t_{ij} - \nabla f_{ij}(x^{t}))}^2 \\
    &\quad +(1 - b)^2\norm{h^{t}_{ij} - \nabla f_{ij}(x^{t})}^2 \\
    &= \frac{\left(1 - \frac{\probavailable B}{m}\right)}{\frac{\probavailable B}{m}} \norm{\nabla f_{ij}(x^{t+1}) - \nabla f_{ij}(x^{t}) - b(h^t_{ij} - \nabla f_{ij}(x^{t}))}^2 \\
    &\quad +(1 - b)^2\norm{h^{t}_{ij} - \nabla f_{ij}(x^{t})}^2 \\
    &\overset{\eqref{auxiliary:jensen_inequality}}{\leq} \frac{2\left(1 - \frac{\probavailable B}{m}\right)}{\frac{\probavailable B}{m}} \norm{\nabla f_{ij}(x^{t+1}) - \nabla f_{ij}(x^{t})}^2 + \left(\frac{2\left(1 - \frac{\probavailable B}{m}\right) b^2}{\frac{\probavailable B}{m}} + (1 - b)^2\right) \norm{h^t_{ij} - \nabla f_{ij}(x^{t})}^2. \\
  \end{align*}
  It is left to consider Assumption \ref{ass:max_lipschitz_constant}:
  \begin{align*}
    &\ExpSub{B}{\ExpSub{\probavailable}{\norm{h^{t+1}_{ij} - \nabla f_{ij}(x^{t+1})}^2}} \\
    &\leq \frac{2\left(1 - \frac{\probavailable B}{m}\right) L_{\max}^2}{\frac{\probavailable B}{m}} \norm{x^{t+1} - x^{t}}^2 + \left(\frac{2\left(1 - \frac{\probavailable B}{m}\right) b^2}{\frac{\probavailable B}{m}} + (1 - b)^2\right) \norm{h^t_{ij} - \nabla f_{ij}(x^{t})}^2. \\
  \end{align*}
  Finally, we obtain the bound for the last inequality of the lemma:
  \begin{align*}
    &\ExpSub{B}{\norm{k^{t+1}_i}^2}\\
    &\overset{\eqref{auxiliary:variance_decomposition}}{=}\ExpSub{B}{\norm{k^{t+1}_i - \ExpSub{B}{k^{t+1}_i}}^2}\\
    &\quad + \norm{\nabla f_i(x^{t+1}) - \nabla f_i(x^{t}) - b(h^t_i - \nabla f_i(x^{t}))}^2.
  \end{align*}
  Using Lemma~\ref{lemma:sampling}, we get
  \begin{align*}
    &\ExpSub{B}{\norm{k^{t+1}_i}^2}\\
    &\leq\frac{m - B}{B m (m - 1)}\sum_{j=1}^m\norm{\nabla f_{ij}(x^{t+1}) - \nabla f_{ij}(x^{t}) - b \left(h^t_{ij} - \nabla f_{ij}(x^{t})\right)}^2\\
    &\quad + \norm{\nabla f_i(x^{t+1}) - \nabla f_i(x^{t}) - b(h^t_i - \nabla f_i(x^{t}))}^2 \\
    &\leq\frac{1}{B m}\sum_{j=1}^m\norm{\nabla f_{ij}(x^{t+1}) - \nabla f_{ij}(x^{t}) - b \left(h^t_{ij} - \nabla f_{ij}(x^{t})\right)}^2\\
    &\quad + \norm{\nabla f_i(x^{t+1}) - \nabla f_i(x^{t}) - b(h^t_i - \nabla f_i(x^{t}))}^2 \\
    &\overset{\eqref{auxiliary:jensen_inequality}}{\leq}\frac{2}{B m}\sum_{j=1}^m\norm{\nabla f_{ij}(x^{t+1}) - \nabla f_{ij}(x^{t})}^2 + 2 \norm{\nabla f_i(x^{t+1}) - \nabla f_i(x^{t})}^2\\
    &\quad + \frac{2 b^2}{B m}\sum_{j=1}^m\norm{h^t_{ij} - \nabla f_{ij}(x^{t})}^2 + 2 b^2 \norm{h^t_i - \nabla f_i(x^{t})}^2 \\
    &\leq\left(\frac{2 L_{\max}^2}{B} + 2 L_i^2\right)\norm{x^{t+1} - x^{t}}^2 \\
    &\quad + \frac{2 b^2}{B m}\sum_{j=1}^m\norm{h^t_{ij} - \nabla f_{ij}(x^{t})}^2 + 2 b^2 \norm{h^t_i - \nabla f_i(x^{t})}^2,
  \end{align*}
  where we used Assumptions \ref{ass:nodes_lipschitz_constant} and \ref{ass:max_lipschitz_constant}.
\end{proof}

\begin{restatable}{theorem}{CONVERGENCEFINITEMVR}
  \label{theorem:finitemvr}
  Suppose that Assumptions \ref{ass:lower_bound}, \ref{ass:lipschitz_constant}, \ref{ass:nodes_lipschitz_constant}, \ref{ass:max_lipschitz_constant}, \ref{ass:compressors}, and \ref{ass:partial_participation} hold. Let us take $a = \frac{\probavailable}{2 \omega + 1} ,$ $b = \frac{\frac{\probavailable B}{m}}{2 - \frac{\probavailable B}{m}},$
  $$\gamma \leq \left(L + \sqrt{\frac{148 \omega (2 \omega + 1)}{n \probavailable^2} \left(\widehat{L}^2 + \frac{L_{\max}^2}{B}\right) + \frac{72 m}{n \probavailable^2 B}\left(\left(1 - \frac{\probpairaa}{\probavailable}\right)\widehat{L}^2 + \frac{L_{\max}^2}{B}\right)}\right)^{-1},$$
  $g^{0}_i = h^{0}_i = \nabla f_i(x^0)$ for all $i \in [n]$ and $h^{0}_{ij} = \nabla f_{ij}(x^0)$ for all $i \in [n], j \in [m]$ in Algorithm~\ref{alg:main_algorithm} \algname{(\algorithmname-FINITE-MVR)}
  then $\Exp{\norm{\nabla f(\widehat{x}^T)}^2} \leq \frac{2 \Delta_0}{\gamma T}.$
\end{restatable}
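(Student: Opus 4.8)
The plan is to follow the same Lyapunov-function argument used for Theorem~\ref{theorem:page}, but with one extra layer in the potential to account for the per-summand estimators $h^{t}_{ij}$. Concretely, I would start from Lemma~\ref{lemma:main_lemma} (which already handles $f(x^{t+1})$, the compressor errors $\norm{g^{t+1}-h^{t+1}}^2$ and $\frac1n\sum_i\norm{g^{t+1}_i-h^{t+1}_i}^2$, and leaves a term $\frac{4\gamma\omega(2\omega+1)}{n\probavailable^2}\cdot\frac1n\sum_i\norm{k^{t+1}_i}^2$), then add to both sides the three quantities $\nu\norm{h^{t+1}-\nabla f(x^{t+1})}^2$, $\rho\cdot\frac1n\sum_i\norm{h^{t+1}_i-\nabla f_i(x^{t+1})}^2$ and $\tau\cdot\frac{1}{nm}\sum_{i,j}\norm{h^{t+1}_{ij}-\nabla f_{ij}(x^{t+1})}^2$ for constants $\nu,\rho,\tau\geq 0$ to be fixed, and bound them using items 1--4 of Lemma~\ref{lemma:finite_mvr}, invoking the law of total expectation.

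Next I would rearrange and pick the constants so the potential contracts. Taking $\nu=\frac{\gamma}{b}$ makes the coefficient of $\norm{h^{t}-\nabla f(x^{t})}^2$ equal to $\gamma+\nu(1-b)^2\leq\nu$. For the $h_i$-layer: since $b=\frac{\probavailable B/m}{2-\probavailable B/m}\leq\frac{\probavailable}{2-\probavailable}$ (because $B\leq m$), one has $\frac{2(1-\probavailable)b^2}{\probavailable}+(1-b)^2\leq 1-b$, so setting $\rho=\frac{8b\gamma\omega(2\omega+1)}{n\probavailable^2}+\frac{2\gamma(\probavailable-\probpairaa)}{n\probavailable^2}$ makes the $\frac1n\sum_i\norm{h^t_i-\nabla f_i(x^t)}^2$ coefficient $\leq\rho$ (the same $\rho$ as in the gradient proof). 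For the $h_{ij}$-layer: the stated $b$ is exactly the one making $\frac{2(1-\probavailable B/m)b^2}{\probavailable B/m}+(1-b)^2\leq 1-b$, so a suitable $\tau=\frac{1}{b}\bigl(\frac{8\gamma\omega(2\omega+1)b^2}{n\probavailable^2 B}+\frac{2\gamma b}{\probavailable B}+\rho\frac{2b^2}{\probavailable B}\bigr)$ closes the recursion for that term. The dependency is triangular — $\nu$ is self-contained, $\rho$ is fed only by the $k$- and $\nu$-contributions, $\tau$ by the $k$-, $\nu$- and $\rho$-contributions — so these are solved in that order.

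It then remains to control the coefficient of $\norm{x^{t+1}-x^t}^2$. Collecting its contributions from the $\frac1n\sum_i\norm{k^{t+1}_i}^2$ term (item 4) and from items 1--3 weighted by $\nu,\rho,\tau$, all of which are $\propto\gamma$, I would use the elementary bounds $b\geq\frac{\probavailable B}{2m}$ (hence $\frac1b\leq\frac{2m}{\probavailable B}$), $b\leq\frac{\probavailable B}{m}$, $\probpairaa\leq\probavailable^2$ and $1-\frac{\probavailable B}{m}\leq 1$ to absorb everything into $\gamma A$ with $A\leq\frac{148\omega(2\omega+1)}{n\probavailable^2}\bigl(\widehat{L}^2+\frac{L_{\max}^2}{B}\bigr)+\frac{72m}{n\probavailable^2 B}\bigl((1-\frac{\probpairaa}{\probavailable})\widehat{L}^2+\frac{L_{\max}^2}{B}\bigr)$, the constants $148,72$ being crude over-estimates. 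Lemma~\ref{lemma:gamma} together with the stated choice of $\gamma$ makes this coefficient nonnegative, so it may be dropped, leaving a recursion of the form \eqref{eq:private:good_recursion} with $C=0$ and $\Psi^t$ the sum of the four weighted error terms. Since $g^0_i=h^0_i=\nabla f_i(x^0)$ and $h^0_{ij}=\nabla f_{ij}(x^0)$, we have $\Psi^0=0$, and Lemma~\ref{lemma:good_recursion} yields $\Exp{\norm{\nabla f(\widehat{x}^T)}^2}\leq\frac{2\Delta_0}{\gamma T}$.

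The main obstacle is bookkeeping rather than any new idea: keeping the three-level hierarchy of error terms straight, and checking that each telescoping inequality really holds for the chosen $b,\nu,\rho,\tau$ — the subtlety is that a single $b$ must be tuned to the \emph{two} sampling scales $\probavailable$ and $B/m$ at once, which is exactly what makes both layer-inequalities valid — followed by the careful but routine simplification of the step-size constant. Conceptually nothing is needed beyond the \algname{\algorithmname-PAGE} argument; the extra $h_{ij}$ layer is the price of replacing the probabilistic full-gradient switch by a persistent per-summand memory.
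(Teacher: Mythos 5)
Your proposal follows essentially the same route as the paper's proof: the same Lyapunov function built from Lemma~\ref{lemma:main_lemma} and the four bounds of Lemma~\ref{lemma:finite_mvr}, the same choices $\nu=\frac{\gamma}{b}$ and $\rho$, a $\tau$ that coincides with the paper's $\delta$ up to pre-simplification (the paper first bounds the incoming contributions using $b\leq\probavailable$ and $\frac{\probavailable-\probpairaa}{\probavailable}\leq 1$ and then divides by $b$), and the same closing step via Lemmas~\ref{lemma:gamma} and~\ref{lemma:good_recursion} with $\Psi^0=0$. The only blemish is a dropped factor of $n$ in the $\frac{2\gamma b}{\probavailable B}$ term of your $\tau$, which is immaterial to the argument.
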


\begin{proof}
  Let us fix constants $\nu, \rho, \delta \in [0,\infty)$ that we will define later. Considering Lemma~\ref{lemma:main_lemma}, Lemma~\ref{lemma:finite_mvr}, and the law of total expectation, we obtain
    \begin{align*}
      &\Exp{f(x^{t + 1})} + \frac{\gamma (2 \omega + 1)}{\probavailable} \Exp{\norm{g^{t+1} - h^{t+1}}^2} + \frac{\gamma (\left(2 \omega + 1\right)\probavailable - \probpairaa)}{n \probavailable^2} \Exp{\frac{1}{n}\sum_{i=1}^n\norm{g^{t+1}_i - h^{t+1}_i}^2}\\
      &\quad  + \nu \Exp{\norm{h^{t+1} - \nabla f(x^{t+1})}^2} + \rho \Exp{\frac{1}{n}\sum_{i=1}^n\norm{h^{t+1}_i - \nabla f_i(x^{t+1})}^2}\\
      &\quad + \delta \Exp{\frac{1}{nm}\sum_{i=1}^n\sum_{j=1}^m\norm{h^{t+1}_{ij} - \nabla f_{ij}(x^{t+1})}^2} \\
      &\leq \Exp{f(x^t) - \frac{\gamma}{2}\norm{\nabla f(x^t)}^2 - \left(\frac{1}{2\gamma} - \frac{L}{2}\right)
      \norm{x^{t+1} - x^t}^2 + \gamma \norm{h^{t} - \nabla f(x^t)}^2}\nonumber\\
      &\quad + \frac{\gamma (2 \omega + 1)}{\probavailable}\Exp{\norm{g^{t} - h^t}^2}+ \frac{\gamma (\left(2 \omega + 1\right)\probavailable - \probpairaa)}{n \probavailable^2}\Exp{\frac{1}{n} \sum_{i=1}^n\norm{g^t_i - h^{t}_i}^2} \\
      &\quad + \frac{4 \gamma \omega (2 \omega + 1)}{n \probavailable^2} \Exp{\frac{1}{n} \sum_{i=1}^n\norm{k^{t+1}_i}^2} \\
      &\quad  + \nu \Exp{\norm{h^{t+1} - \nabla f(x^{t+1})}^2} + \rho \Exp{\frac{1}{n}\sum_{i=1}^n\norm{h^{t+1}_i - \nabla f_i(x^{t+1})}^2}\\
      &\quad + \delta \Exp{\frac{1}{nm}\sum_{i=1}^n\sum_{j=1}^m\norm{h^{t+1}_{ij} - \nabla f_{ij}(x^{t+1})}^2} \\
      &= \Exp{f(x^t) - \frac{\gamma}{2}\norm{\nabla f(x^t)}^2 - \left(\frac{1}{2\gamma} - \frac{L}{2}\right)
      \norm{x^{t+1} - x^t}^2 + \gamma \norm{h^{t} - \nabla f(x^t)}^2}\nonumber\\
      &\quad + \frac{\gamma (2 \omega + 1)}{\probavailable}\Exp{\norm{g^{t} - h^t}^2}+ \frac{\gamma (\left(2 \omega + 1\right)\probavailable - \probpairaa)}{n \probavailable^2}\Exp{\frac{1}{n} \sum_{i=1}^n\norm{g^t_i - h^{t}_i}^2} \\
      &\quad + \frac{4 \gamma \omega (2 \omega + 1)}{n \probavailable^2} \Exp{\ExpSub{B}{\frac{1}{n} \sum_{i=1}^n\norm{k^{t+1}_i}^2}} \\
      &\quad  + \nu \Exp{\ExpSub{B}{\ExpSub{\probavailable}{\norm{h^{t+1} - \nabla f(x^{t+1})}^2}}} \\
      &\quad + \rho \Exp{\ExpSub{B}{\ExpSub{\probavailable}{\frac{1}{n}\sum_{i=1}^n\norm{h^{t+1}_i - \nabla f_i(x^{t+1})}^2}}}\\
      &\quad + \delta \Exp{\ExpSub{B}{\ExpSub{\probavailable}{\frac{1}{nm}\sum_{i=1}^n\sum_{j=1}^m\norm{h^{t+1}_{ij} - \nabla f_{ij}(x^{t+1})}^2}}} \\
      &\leq \Exp{f(x^t) - \frac{\gamma}{2}\norm{\nabla f(x^t)}^2 - \left(\frac{1}{2\gamma} - \frac{L}{2}\right)
      \norm{x^{t+1} - x^t}^2 + \gamma \norm{h^{t} - \nabla f(x^t)}^2}\nonumber\\
      &\quad + \frac{\gamma (2 \omega + 1)}{\probavailable}\Exp{\norm{g^{t} - h^t}^2}+ \frac{\gamma (\left(2 \omega + 1\right)\probavailable - \probpairaa)}{n \probavailable^2}\Exp{\frac{1}{n} \sum_{i=1}^n\norm{g^t_i - h^{t}_i}^2} \\
      &\quad + \frac{4 \gamma \omega (2 \omega + 1)}{n \probavailable^2} \Exp{\left(\frac{2 L_{\max}^2}{B} + 2 \widehat{L}^2\right)\norm{x^{t+1} - x^{t}}^2 + \frac{2 b^2}{B m n}\sum_{i=1}^n \sum_{j=1}^m\norm{h^t_{ij} - \nabla f_{ij}(x^{t})}^2 + \frac{2 b^2}{n} \sum_{i=1}^n \norm{h^t_i - \nabla f_i(x^{t})}^2} \\
      &\quad  + \nu {\rm E}\Bigg(\left(\frac{2 L_{\max}^2}{n \probavailable B} + \frac{2 \left(\probavailable - \probpairaa\right) \widehat{L}^2}{n \probavailable^2}\right)\norm{x^{t+1} - x^{t}}^2  \\
      &\qquad\quad + \frac{2\left(\probavailable - \probpairaa\right)b^2}{n^2 \probavailable^2}\sum_{i=1}^n\norm{h^t_{i} - \nabla f_{i}(x^{t})}^2 + \frac{2b^2}{n^2 \probavailable B m}\sum_{i=1}^n\sum_{j=1}^m\norm{h^t_{ij} - \nabla f_{ij}(x^{t})}^2\\
      &\qquad\quad + (1 - b)^2\norm{h^{t} - \nabla f(x^{t})}^2\Bigg) \\
      &\quad + \rho {\rm E}\Bigg(\left(\frac{2 L_{\max}^2}{\probavailable B} +\frac{2(1 - \probavailable) \widehat{L}^2}{\probavailable}\right) \norm{x^{t+1} - x^{t}}^2 \\
      &\qquad\quad + \frac{2 b^2}{\probavailable B n m} \sum_{i=1}^n \sum_{j=1}^m\norm{h^t_{ij} - \nabla f_{ij}(x^{t})}^2 +\left(\frac{2 \left(1 - \probavailable\right) b^2}{\probavailable} + (1 - b)^2\right) \frac{1}{n} \sum_{i=1}^n \norm{h^{t}_i - \nabla f_i(x^{t})}^2\Bigg)\\
      &\quad + \delta {\rm E}\Bigg(\frac{2\left(1 - \frac{\probavailable B}{m}\right) L_{\max}^2}{\frac{\probavailable B}{m}} \norm{x^{t+1} - x^{t}}^2 \\
      &\qquad\quad + \left(\frac{2\left(1 - \frac{\probavailable B}{m}\right) b^2}{\frac{\probavailable B}{m}} + (1 - b)^2\right) \frac{1}{nm} \sum_{i=1}^n \sum_{j=1}^m \norm{h^t_{ij} - \nabla f_{ij}(x^{t})}^2\Bigg).\\
    \end{align*}
    Due to $b = \frac{\frac{\probavailable B}{m}}{2 - \frac{\probavailable B}{m}} \leq \frac{\probavailable}{2 - \probavailable},$ we have 
    $$\left(\frac{2\left(1 - \frac{\probavailable B}{m}\right) b^2}{\frac{\probavailable B}{m}} + (1 - b)^2\right) \leq 1 - b$$
    and
    $$\left(\frac{2 \left(1 - \probavailable\right) b^2}{\probavailable} + (1 - b)^2\right) \leq 1 - b.$$
    Moreover, we consider that $1 - \frac{\probavailable B}{m} \leq 1,$ therefore
    \begin{align*}
      &\Exp{f(x^{t + 1})} + \frac{\gamma (2 \omega + 1)}{\probavailable} \Exp{\norm{g^{t+1} - h^{t+1}}^2} + \frac{\gamma (\left(2 \omega + 1\right)\probavailable - \probpairaa)}{n \probavailable^2} \Exp{\frac{1}{n}\sum_{i=1}^n\norm{g^{t+1}_i - h^{t+1}_i}^2}\\
      &\quad  + \nu \Exp{\norm{h^{t+1} - \nabla f(x^{t+1})}^2} + \rho \Exp{\frac{1}{n}\sum_{i=1}^n\norm{h^{t+1}_i - \nabla f_i(x^{t+1})}^2}\\
      &\quad + \delta \Exp{\frac{1}{nm}\sum_{i=1}^n\sum_{j=1}^m\norm{h^{t+1}_{ij} - \nabla f_{ij}(x^{t+1})}^2} \\
      &\leq \Exp{f(x^t) - \frac{\gamma}{2}\norm{\nabla f(x^t)}^2 - \left(\frac{1}{2\gamma} - \frac{L}{2}\right)
      \norm{x^{t+1} - x^t}^2 + \gamma \norm{h^{t} - \nabla f(x^t)}^2}\nonumber\\
      &\quad + \frac{\gamma (2 \omega + 1)}{\probavailable}\Exp{\norm{g^{t} - h^t}^2}+ \frac{\gamma (\left(2 \omega + 1\right)\probavailable - \probpairaa)}{n \probavailable^2}\Exp{\frac{1}{n} \sum_{i=1}^n\norm{g^t_i - h^{t}_i}^2} \\
      &\quad + \frac{4 \gamma \omega (2 \omega + 1)}{n \probavailable^2} \Exp{\left(\frac{2 L_{\max}^2}{B} + 2 \widehat{L}^2\right)\norm{x^{t+1} - x^{t}}^2 + \frac{2 b^2}{B m n}\sum_{i=1}^n \sum_{j=1}^m\norm{h^t_{ij} - \nabla f_{ij}(x^{t})}^2 + \frac{2 b^2}{n} \sum_{i=1}^n \norm{h^t_i - \nabla f_i(x^{t})}^2} \\
      &\quad  + \nu {\rm E}\Bigg(\left(\frac{2 L_{\max}^2}{n \probavailable B} + \frac{2 \left(\probavailable - \probpairaa\right) \widehat{L}^2}{n \probavailable^2}\right)\norm{x^{t+1} - x^{t}}^2  \\
      &\qquad\quad + \frac{2\left(\probavailable - \probpairaa\right)b^2}{n^2 \probavailable^2}\sum_{i=1}^n\norm{h^t_{i} - \nabla f_{i}(x^{t})}^2 + \frac{2b^2}{n^2 \probavailable B m}\sum_{i=1}^n\sum_{j=1}^m\norm{h^t_{ij} - \nabla f_{ij}(x^{t})}^2 \\
      &\qquad\quad + (1 - b)^2\norm{h^{t} - \nabla f(x^{t})}^2\Bigg) \\
      &\quad + \rho {\rm E}\Bigg(\left(\frac{2 L_{\max}^2}{\probavailable B} +\frac{2(1 - \probavailable) \widehat{L}^2}{\probavailable}\right) \norm{x^{t+1} - x^{t}}^2 \\
      &\qquad\quad + \frac{2 b^2}{\probavailable B n m} \sum_{i=1}^n \sum_{j=1}^m\norm{h^t_{ij} - \nabla f_{ij}(x^{t})}^2 +\left(1 - b\right) \frac{1}{n} \sum_{i=1}^n \norm{h^{t}_i - \nabla f_i(x^{t})}^2\Bigg)\\
      &\quad + \delta {\rm E}\Bigg(\frac{2 m L_{\max}^2}{\probavailable B} \norm{x^{t+1} - x^{t}}^2 + \left(1 - b\right) \frac{1}{nm} \sum_{i=1}^n \sum_{j=1}^m \norm{h^t_{ij} - \nabla f_{ij}(x^{t})}^2\Bigg).
    \end{align*}
    After rearranging the terms, we get
    \begin{align*}
      &\Exp{f(x^{t + 1})} + \frac{\gamma (2 \omega + 1)}{\probavailable} \Exp{\norm{g^{t+1} - h^{t+1}}^2} + \frac{\gamma (\left(2 \omega + 1\right)\probavailable - \probpairaa)}{n \probavailable^2} \Exp{\frac{1}{n}\sum_{i=1}^n\norm{g^{t+1}_i - h^{t+1}_i}^2}\\
      &\quad  + \nu \Exp{\norm{h^{t+1} - \nabla f(x^{t+1})}^2} + \rho \Exp{\frac{1}{n}\sum_{i=1}^n\norm{h^{t+1}_i - \nabla f_i(x^{t+1})}^2}\\
      &\quad + \delta \Exp{\frac{1}{nm}\sum_{i=1}^n\sum_{j=1}^m\norm{h^{t+1}_{ij} - \nabla f_{ij}(x^{t+1})}^2} \\
      &\leq \Exp{f(x^t)} - \frac{\gamma}{2}\Exp{\norm{\nabla f(x^t)}^2} \\
      &\quad + \frac{\gamma (2 \omega + 1)}{\probavailable} \Exp{\norm{g^{t} - h^{t}}^2} + \frac{\gamma (\left(2 \omega + 1\right)\probavailable - \probpairaa)}{n \probavailable^2} \Exp{\frac{1}{n}\sum_{i=1}^n\norm{g^{t}_i - h^{t}_i}^2} \\
      &\quad - \Bigg(\frac{1}{2\gamma} - \frac{L}{2} - \frac{4 \gamma \omega (2 \omega + 1)}{n \probavailable^2} \left(\frac{2 L_{\max}^2}{B} + 2 \widehat{L}^2\right) \\
      &\qquad\quad - \nu\left(\frac{2 L_{\max}^2}{n \probavailable B} + \frac{2 \left(\probavailable - \probpairaa\right) \widehat{L}^2}{n \probavailable^2}\right) - \rho \left(\frac{2 L_{\max}^2}{\probavailable B} +\frac{2(1 - \probavailable) \widehat{L}^2}{\probavailable}\right) - \delta \frac{2 m L_{\max}^2}{\probavailable B} \Bigg) \Exp{\norm{x^{t+1} - x^t}^2} \\
      &\quad + \left(\gamma + \nu \left(1 - b\right)^2\right) \Exp{\norm{h^{t} - \nabla f(x^{t})}^2} \\
      &\quad + \Bigg(\frac{8 b^2 \gamma \omega (2 \omega + 1)}{n \probavailable^2} + \frac{2 \nu \left(\probavailable - \probpairaa\right) b^2}{n \probavailable^2} + \rho \left(1 - b\right)\Bigg)\Exp{\frac{1}{n}\sum_{i=1}^n\norm{h^{t}_i - \nabla f_i(x^{t})}^2} \\
      &\quad + \Bigg(\frac{8 b^2 \gamma \omega (2 \omega + 1)}{n \probavailable^2 B} + \frac{2 \nu b^2}{n \probavailable B} + \frac{2 \rho b^2}{\probavailable B} + \delta \left(1 - b\right)\Bigg)\Exp{\frac{1}{nm}\sum_{i=1}^n\sum_{j=1}^m\norm{h^{t}_{ij} - \nabla f_{ij}(x^{t})}^2}.
    \end{align*}
    Thus, if we take $\nu = \frac{\gamma}{b},$ then $\gamma + \nu \left(1 - b\right)^2 \leq \nu$ and
    \begin{align*}
      &\Exp{f(x^{t + 1})} + \frac{\gamma (2 \omega + 1)}{\probavailable} \Exp{\norm{g^{t+1} - h^{t+1}}^2} + \frac{\gamma (\left(2 \omega + 1\right)\probavailable - \probpairaa)}{n \probavailable^2} \Exp{\frac{1}{n}\sum_{i=1}^n\norm{g^{t+1}_i - h^{t+1}_i}^2}\\
      &\quad  + \frac{\gamma}{b} \Exp{\norm{h^{t+1} - \nabla f(x^{t+1})}^2} + \rho \Exp{\frac{1}{n}\sum_{i=1}^n\norm{h^{t+1}_i - \nabla f_i(x^{t+1})}^2}\\
      &\quad + \delta \Exp{\frac{1}{nm}\sum_{i=1}^n\sum_{j=1}^m\norm{h^{t+1}_{ij} - \nabla f_{ij}(x^{t+1})}^2} \\
      &\leq \Exp{f(x^t)} - \frac{\gamma}{2}\Exp{\norm{\nabla f(x^t)}^2} \\
      &\quad + \frac{\gamma (2 \omega + 1)}{\probavailable} \Exp{\norm{g^{t} - h^{t}}^2} + \frac{\gamma (\left(2 \omega + 1\right)\probavailable - \probpairaa)}{n \probavailable^2} \Exp{\frac{1}{n}\sum_{i=1}^n\norm{g^{t}_i - h^{t}_i}^2} \\
      &\quad - \Bigg(\frac{1}{2\gamma} - \frac{L}{2} - \frac{4 \gamma \omega (2 \omega + 1)}{n \probavailable^2} \left(\frac{2 L_{\max}^2}{B} + 2 \widehat{L}^2\right) \\
      &\qquad\quad - \left(\frac{2 \gamma L_{\max}^2}{b n \probavailable B} + \frac{2 \gamma \left(\probavailable - \probpairaa\right) \widehat{L}^2}{b n \probavailable^2}\right) - \rho \left(\frac{2 L_{\max}^2}{\probavailable B} +\frac{2(1 - \probavailable) \widehat{L}^2}{\probavailable}\right) - \delta \frac{2 m L_{\max}^2}{\probavailable B} \Bigg) \Exp{\norm{x^{t+1} - x^t}^2} \\
      &\quad + \frac{\gamma}{b} \Exp{\norm{h^{t} - \nabla f(x^{t})}^2} \\
      &\quad + \Bigg(\frac{8 b^2 \gamma \omega (2 \omega + 1)}{n \probavailable^2} + \frac{2 \gamma \left(\probavailable - \probpairaa\right) b}{n \probavailable^2} + \rho \left(1 - b\right)\Bigg)\Exp{\frac{1}{n}\sum_{i=1}^n\norm{h^{t}_i - \nabla f_i(x^{t})}^2} \\
      &\quad + \Bigg(\frac{8 b^2 \gamma \omega (2 \omega + 1)}{n \probavailable^2 B} + \frac{2 \gamma b}{n \probavailable B} + \frac{2 \rho b^2}{\probavailable B} + \delta \left(1 - b\right)\Bigg)\Exp{\frac{1}{nm}\sum_{i=1}^n\sum_{j=1}^m\norm{h^{t}_{ij} - \nabla f_{ij}(x^{t})}^2}.
    \end{align*}
    Next, if we take $\rho = \frac{8 b \gamma \omega (2 \omega + 1)}{n \probavailable^2} + \frac{2 \gamma \left(\probavailable - \probpairaa\right)}{n \probavailable^2},$ then
    $$\Bigg(\frac{8 b^2 \gamma \omega (2 \omega + 1)}{n \probavailable^2} + \frac{2 \gamma \left(\probavailable - \probpairaa\right) b}{n \probavailable^2} + \rho \left(1 - b\right)\Bigg) = \rho,$$ therefore
    \begin{align*}
      &\Exp{f(x^{t + 1})} + \frac{\gamma (2 \omega + 1)}{\probavailable} \Exp{\norm{g^{t+1} - h^{t+1}}^2} + \frac{\gamma (\left(2 \omega + 1\right)\probavailable - \probpairaa)}{n \probavailable^2} \Exp{\frac{1}{n}\sum_{i=1}^n\norm{g^{t+1}_i - h^{t+1}_i}^2}\\
      &\quad  + \frac{\gamma}{b} \Exp{\norm{h^{t+1} - \nabla f(x^{t+1})}^2} + \left(\frac{8 b \gamma \omega (2 \omega + 1)}{n \probavailable^2} + \frac{2 \gamma \left(\probavailable - \probpairaa\right)}{n \probavailable^2}\right) \Exp{\frac{1}{n}\sum_{i=1}^n\norm{h^{t+1}_i - \nabla f_i(x^{t+1})}^2}\\
      &\quad + \delta \Exp{\frac{1}{nm}\sum_{i=1}^n\sum_{j=1}^m\norm{h^{t+1}_{ij} - \nabla f_{ij}(x^{t+1})}^2} \\
      &\leq \Exp{f(x^t)} - \frac{\gamma}{2}\Exp{\norm{\nabla f(x^t)}^2} \\
      &\quad + \frac{\gamma (2 \omega + 1)}{\probavailable} \Exp{\norm{g^{t} - h^{t}}^2} + \frac{\gamma (\left(2 \omega + 1\right)\probavailable - \probpairaa)}{n \probavailable^2} \Exp{\frac{1}{n}\sum_{i=1}^n\norm{g^{t}_i - h^{t}_i}^2} \\
      &\quad - \Bigg(\frac{1}{2\gamma} - \frac{L}{2} - \frac{4 \gamma \omega (2 \omega + 1)}{n \probavailable^2} \left(\frac{2 L_{\max}^2}{B} + 2 \widehat{L}^2\right) \\
      &\qquad\quad - \left(\frac{2 \gamma L_{\max}^2}{b n \probavailable B} + \frac{2 \gamma \left(\probavailable - \probpairaa\right) \widehat{L}^2}{b n \probavailable^2}\right) - \left(\frac{8 b \gamma \omega (2 \omega + 1)}{n \probavailable^2} + \frac{2 \gamma \left(\probavailable - \probpairaa\right)}{n \probavailable^2}\right) \left(\frac{2 L_{\max}^2}{\probavailable B} +\frac{2(1 - \probavailable) \widehat{L}^2}{\probavailable}\right) \\
      &\qquad\quad - \delta \frac{2 m L_{\max}^2}{\probavailable B} \Bigg) \Exp{\norm{x^{t+1} - x^t}^2} \\
      &\quad + \frac{\gamma}{b} \Exp{\norm{h^{t} - \nabla f(x^{t})}^2} \\
      &\quad + \left(\frac{8 b \gamma \omega (2 \omega + 1)}{n \probavailable^2} + \frac{2 \gamma \left(\probavailable - \probpairaa\right)}{n \probavailable^2}\right)\Exp{\frac{1}{n}\sum_{i=1}^n\norm{h^{t}_i - \nabla f_i(x^{t})}^2} \\
      &\quad + \Bigg(\frac{8 b^2 \gamma \omega (2 \omega + 1)}{n \probavailable^2 B} + \frac{2 \gamma b}{n \probavailable B} + \frac{16 b^3 \gamma \omega (2 \omega + 1)}{n \probavailable^3 B} + \frac{4 b^2 \gamma \left(\probavailable - \probpairaa\right)}{n B \probavailable^3} + \delta \left(1 - b\right)\Bigg)\Exp{\frac{1}{nm}\sum_{i=1}^n\sum_{j=1}^m\norm{h^{t}_{ij} - \nabla f_{ij}(x^{t})}^2}.
    \end{align*}
    Due to $b \leq \probavailable$ and $\frac{\probavailable - \probpairaa}{\probavailable} \leq 1,$ we have
    \begin{align*}
      &\frac{8 b^2 \gamma \omega (2 \omega + 1)}{n \probavailable^2 B} + \frac{2 \gamma b}{n \probavailable B} + \frac{16 b^3 \gamma \omega (2 \omega + 1)}{n \probavailable^3 B} + \frac{4 b^2 \gamma \left(\probavailable - \probpairaa\right)}{n B \probavailable^3} \\
      &\leq \frac{8 b^2 \gamma \omega (2 \omega + 1)}{n \probavailable^2 B} + \frac{2 \gamma b}{n \probavailable B} + \frac{16 b^2 \gamma \omega (2 \omega + 1)}{n \probavailable^2 B} + \frac{4 \gamma b}{n \probavailable B} \\
      &=\frac{24 b^2 \gamma \omega (2 \omega + 1)}{n \probavailable^2 B} + \frac{6 \gamma b}{n \probavailable B}.
    \end{align*}
    Let us take $\delta = \frac{24 b \gamma \omega (2 \omega + 1)}{n \probavailable^2 B} + \frac{6 \gamma }{n \probavailable B}.$ Thus
    \begin{align*}
      \Bigg(\frac{8 b^2 \gamma \omega (2 \omega + 1)}{n \probavailable^2 B} + \frac{2 \gamma b}{n \probavailable B} + \frac{16 b^3 \gamma \omega (2 \omega + 1)}{n \probavailable^3 B} + \frac{4 b^2 \gamma \left(\probavailable - \probpairaa\right)}{n B \probavailable^3} + \delta \left(1 - b\right)\Bigg) \leq \delta
    \end{align*}
    and
    \begin{align*}
      &\Exp{f(x^{t + 1})} + \frac{\gamma (2 \omega + 1)}{\probavailable} \Exp{\norm{g^{t+1} - h^{t+1}}^2} + \frac{\gamma (\left(2 \omega + 1\right)\probavailable - \probpairaa)}{n \probavailable^2} \Exp{\frac{1}{n}\sum_{i=1}^n\norm{g^{t+1}_i - h^{t+1}_i}^2}\\
      &\quad  + \frac{\gamma}{b} \Exp{\norm{h^{t+1} - \nabla f(x^{t+1})}^2} + \left(\frac{8 b \gamma \omega (2 \omega + 1)}{n \probavailable^2} + \frac{2 \gamma \left(\probavailable - \probpairaa\right)}{n \probavailable^2}\right) \Exp{\frac{1}{n}\sum_{i=1}^n\norm{h^{t+1}_i - \nabla f_i(x^{t+1})}^2}\\
      &\quad + \left(\frac{24 b \gamma \omega (2 \omega + 1)}{n \probavailable^2 B} + \frac{6 \gamma }{n \probavailable B}\right) \Exp{\frac{1}{nm}\sum_{i=1}^n\sum_{j=1}^m\norm{h^{t+1}_{ij} - \nabla f_{ij}(x^{t+1})}^2} \\
      &\leq \Exp{f(x^t)} - \frac{\gamma}{2}\Exp{\norm{\nabla f(x^t)}^2} \\
      &\quad + \frac{\gamma (2 \omega + 1)}{\probavailable} \Exp{\norm{g^{t} - h^{t}}^2} + \frac{\gamma (\left(2 \omega + 1\right)\probavailable - \probpairaa)}{n \probavailable^2} \Exp{\frac{1}{n}\sum_{i=1}^n\norm{g^{t}_i - h^{t}_i}^2} \\
      &\quad - \Bigg(\frac{1}{2\gamma} - \frac{L}{2} - \frac{4 \gamma \omega (2 \omega + 1)}{n \probavailable^2} \left(\frac{2 L_{\max}^2}{B} + 2 \widehat{L}^2\right) \\
      &\qquad\quad - \left(\frac{2 \gamma L_{\max}^2}{b n \probavailable B} + \frac{2 \gamma \left(\probavailable - \probpairaa\right) \widehat{L}^2}{b n \probavailable^2}\right) - \left(\frac{8 b \gamma \omega (2 \omega + 1)}{n \probavailable^2} + \frac{2 \gamma \left(\probavailable - \probpairaa\right)}{n \probavailable^2}\right) \left(\frac{2 L_{\max}^2}{\probavailable B} +\frac{2(1 - \probavailable) \widehat{L}^2}{\probavailable}\right) \\
      &\qquad\quad - \left(\frac{24 b \gamma \omega (2 \omega + 1)}{n \probavailable^2 B} + \frac{6 \gamma }{n \probavailable B}\right) \frac{2 m L_{\max}^2}{\probavailable B} \Bigg) \Exp{\norm{x^{t+1} - x^t}^2} \\
      &\quad + \frac{\gamma}{b} \Exp{\norm{h^{t} - \nabla f(x^{t})}^2} \\
      &\quad + \left(\frac{8 b \gamma \omega (2 \omega + 1)}{n \probavailable^2} + \frac{2 \gamma \left(\probavailable - \probpairaa\right)}{n \probavailable^2}\right)\Exp{\frac{1}{n}\sum_{i=1}^n\norm{h^{t}_i - \nabla f_i(x^{t})}^2} \\
      &\quad + \left(\frac{24 b \gamma \omega (2 \omega + 1)}{n \probavailable^2 B} + \frac{6 \gamma }{n \probavailable B}\right)\Exp{\frac{1}{nm}\sum_{i=1}^n\sum_{j=1}^m\norm{h^{t}_{ij} - \nabla f_{ij}(x^{t})}^2}.
    \end{align*}

    Let us simplify the term near $\Exp{\norm{x^{t+1} - x^t}^2}.$ Due to $b \leq \probavailable$, $\frac{\probavailable - \probpairaa}{\probavailable} \leq 1,$ and $1 - \probavailable \leq 1,$ 
    we have 
    \begin{align*}
      &\frac{4 \gamma \omega (2 \omega + 1)}{n \probavailable^2} \left(\frac{2 L_{\max}^2}{B} + 2 \widehat{L}^2\right) \\
      &\quad + \left(\frac{2 \gamma L_{\max}^2}{b n \probavailable B} + \frac{2 \gamma \left(\probavailable - \probpairaa\right) \widehat{L}^2}{b n \probavailable^2}\right) \\
      &\quad + \left(\frac{8 b \gamma \omega (2 \omega + 1)}{n \probavailable^2} + \frac{2 \gamma \left(\probavailable - \probpairaa\right)}{n \probavailable^2}\right) \left(\frac{2 L_{\max}^2}{\probavailable B} +\frac{2(1 - \probavailable) \widehat{L}^2}{\probavailable}\right) \\
      &\quad + \left(\frac{24 b \gamma \omega (2 \omega + 1)}{n \probavailable^2 B} + \frac{6 \gamma }{n \probavailable B}\right) \frac{2 m L_{\max}^2}{\probavailable B} \\
      &\leq \frac{12 \gamma \omega (2 \omega + 1)}{n \probavailable^2} \left(\frac{2 L_{\max}^2}{B} + 2 \widehat{L}^2\right) \\
      &\quad + \left(\frac{6 \gamma L_{\max}^2}{b n \probavailable B} + \frac{6 \gamma \left(\probavailable - \probpairaa\right) \widehat{L}^2}{b n \probavailable^2}\right) \\
      &\quad + \left(\frac{24 b \gamma \omega (2 \omega + 1)}{n \probavailable^2 B} + \frac{6 \gamma }{n \probavailable B}\right) \frac{2 m L_{\max}^2}{\probavailable B}
    \end{align*}
    Considering that $b \leq \frac{\probavailable B}{m}$ and $b \geq \frac{\probavailable B}{2 m},$ we obtain
    \begin{align*}
      &\frac{4 \gamma \omega (2 \omega + 1)}{n \probavailable^2} \left(\frac{2 L_{\max}^2}{B} + 2 \widehat{L}^2\right) \\
      &\quad + \left(\frac{2 \gamma L_{\max}^2}{b n \probavailable B} + \frac{2 \gamma \left(\probavailable - \probpairaa\right) \widehat{L}^2}{b n \probavailable^2}\right) \\
      &\quad + \left(\frac{8 b \gamma \omega (2 \omega + 1)}{n \probavailable^2} + \frac{2 \gamma \left(\probavailable - \probpairaa\right)}{n \probavailable^2}\right) \left(\frac{2 L_{\max}^2}{\probavailable B} +\frac{2(1 - \probavailable) \widehat{L}^2}{\probavailable}\right) \\
      &\quad + \left(\frac{24 b \gamma \omega (2 \omega + 1)}{n \probavailable^2 B} + \frac{6 \gamma }{n \probavailable B}\right) \frac{2 m L_{\max}^2}{\probavailable B} \\
      &\leq \frac{36 \gamma \omega (2 \omega + 1)}{n \probavailable^2} \left(\frac{2 L_{\max}^2}{B} + 2 \widehat{L}^2\right) + \left(\frac{18 \gamma L_{\max}^2}{b n \probavailable B} + \frac{6 \gamma \left(\probavailable - \probpairaa\right) \widehat{L}^2}{b n \probavailable^2}\right) \\
      &\leq \frac{36 \gamma \omega (2 \omega + 1)}{n \probavailable^2} \left(\frac{2 L_{\max}^2}{B} + 2 \widehat{L}^2\right) + \left(\frac{36 m \gamma L_{\max}^2}{n \probavailable^2 B^2} + \frac{12 m \gamma \left(\probavailable - \probpairaa\right) \widehat{L}^2}{B n \probavailable^3}\right).
    \end{align*}
    All in all, we have
    \begin{align*}
      &\Exp{f(x^{t + 1})} + \frac{\gamma (2 \omega + 1)}{\probavailable} \Exp{\norm{g^{t+1} - h^{t+1}}^2} + \frac{\gamma (\left(2 \omega + 1\right)\probavailable - \probpairaa)}{n \probavailable^2} \Exp{\frac{1}{n}\sum_{i=1}^n\norm{g^{t+1}_i - h^{t+1}_i}^2}\\
      &\quad  + \frac{\gamma}{b} \Exp{\norm{h^{t+1} - \nabla f(x^{t+1})}^2} + \left(\frac{8 b \gamma \omega (2 \omega + 1)}{n \probavailable^2} + \frac{2 \gamma \left(\probavailable - \probpairaa\right)}{n \probavailable^2}\right) \Exp{\frac{1}{n}\sum_{i=1}^n\norm{h^{t+1}_i - \nabla f_i(x^{t+1})}^2}\\
      &\quad + \left(\frac{24 b \gamma \omega (2 \omega + 1)}{n \probavailable^2 B} + \frac{6 \gamma }{n \probavailable B}\right) \Exp{\frac{1}{nm}\sum_{i=1}^n\sum_{j=1}^m\norm{h^{t+1}_{ij} - \nabla f_{ij}(x^{t+1})}^2} \\
      &\leq \Exp{f(x^t)} - \frac{\gamma}{2}\Exp{\norm{\nabla f(x^t)}^2} \\
      &\quad + \frac{\gamma (2 \omega + 1)}{\probavailable} \Exp{\norm{g^{t} - h^{t}}^2} + \frac{\gamma (\left(2 \omega + 1\right)\probavailable - \probpairaa)}{n \probavailable^2} \Exp{\frac{1}{n}\sum_{i=1}^n\norm{g^{t}_i - h^{t}_i}^2} \\
      &\quad - \Bigg(\frac{1}{2\gamma} - \frac{L}{2} - \frac{36 \gamma \omega (2 \omega + 1)}{n \probavailable^2} \left(\frac{2 L_{\max}^2}{B} + 2 \widehat{L}^2\right) - \left(\frac{36 m \gamma L_{\max}^2}{n \probavailable^2 B^2} + \frac{12 m \gamma \left(\probavailable - \probpairaa\right) \widehat{L}^2}{B n \probavailable^3}\right) \Bigg) \Exp{\norm{x^{t+1} - x^t}^2} \\
      &\quad + \frac{\gamma}{b} \Exp{\norm{h^{t} - \nabla f(x^{t})}^2} \\
      &\quad + \left(\frac{8 b \gamma \omega (2 \omega + 1)}{n \probavailable^2} + \frac{2 \gamma \left(\probavailable - \probpairaa\right)}{n \probavailable^2}\right)\Exp{\frac{1}{n}\sum_{i=1}^n\norm{h^{t}_i - \nabla f_i(x^{t})}^2} \\
      &\quad + \left(\frac{24 b \gamma \omega (2 \omega + 1)}{n \probavailable^2 B} + \frac{6 \gamma }{n \probavailable B}\right)\Exp{\frac{1}{nm}\sum_{i=1}^n\sum_{j=1}^m\norm{h^{t}_{ij} - \nabla f_{ij}(x^{t})}^2}.
    \end{align*}
    Using Lemma~\ref{lemma:gamma} and the assumption about $\gamma,$ we get
    \begin{align*}
      &\Exp{f(x^{t + 1})} + \frac{\gamma (2 \omega + 1)}{\probavailable} \Exp{\norm{g^{t+1} - h^{t+1}}^2} + \frac{\gamma (\left(2 \omega + 1\right)\probavailable - \probpairaa)}{n \probavailable^2} \Exp{\frac{1}{n}\sum_{i=1}^n\norm{g^{t+1}_i - h^{t+1}_i}^2}\\
      &\quad  + \frac{\gamma}{b} \Exp{\norm{h^{t+1} - \nabla f(x^{t+1})}^2} + \left(\frac{8 b \gamma \omega (2 \omega + 1)}{n \probavailable^2} + \frac{2 \gamma \left(\probavailable - \probpairaa\right)}{n \probavailable^2}\right) \Exp{\frac{1}{n}\sum_{i=1}^n\norm{h^{t+1}_i - \nabla f_i(x^{t+1})}^2}\\
      &\quad + \left(\frac{24 b \gamma \omega (2 \omega + 1)}{n \probavailable^2 B} + \frac{6 \gamma }{n \probavailable B}\right) \Exp{\frac{1}{nm}\sum_{i=1}^n\sum_{j=1}^m\norm{h^{t+1}_{ij} - \nabla f_{ij}(x^{t+1})}^2} \\
      &\leq \Exp{f(x^t)} - \frac{\gamma}{2}\Exp{\norm{\nabla f(x^t)}^2} \\
      &\quad + \frac{\gamma (2 \omega + 1)}{\probavailable} \Exp{\norm{g^{t} - h^{t}}^2} + \frac{\gamma (\left(2 \omega + 1\right)\probavailable - \probpairaa)}{n \probavailable^2} \Exp{\frac{1}{n}\sum_{i=1}^n\norm{g^{t}_i - h^{t}_i}^2} \\
      &\quad + \frac{\gamma}{b} \Exp{\norm{h^{t} - \nabla f(x^{t})}^2} \\
      &\quad + \left(\frac{8 b \gamma \omega (2 \omega + 1)}{n \probavailable^2} + \frac{2 \gamma \left(\probavailable - \probpairaa\right)}{n \probavailable^2}\right)\Exp{\frac{1}{n}\sum_{i=1}^n\norm{h^{t}_i - \nabla f_i(x^{t})}^2} \\
      &\quad + \left(\frac{24 b \gamma \omega (2 \omega + 1)}{n \probavailable^2 B} + \frac{6 \gamma }{n \probavailable B}\right)\Exp{\frac{1}{nm}\sum_{i=1}^n\sum_{j=1}^m\norm{h^{t}_{ij} - \nabla f_{ij}(x^{t})}^2}.
    \end{align*}
    It is left to apply Lemma~\ref{lemma:good_recursion} with 
    \begin{eqnarray*}
      \Psi^t &=& \frac{(2 \omega + 1)}{\probavailable} \Exp{\norm{g^{t} - h^{t}}^2} + \frac{(\left(2 \omega + 1\right)\probavailable - \probpairaa)}{n \probavailable^2} \Exp{\frac{1}{n}\sum_{i=1}^n\norm{g^{t}_i - h^{t}_i}^2} \\
      &\quad& + \frac{1}{b} \Exp{\norm{h^{t} - \nabla f(x^{t})}^2} \\
      &\quad& + \left(\frac{8 b \omega (2 \omega + 1)}{n \probavailable^2} + \frac{2 \left(\probavailable - \probpairaa\right)}{n \probavailable^2}\right)\Exp{\frac{1}{n}\sum_{i=1}^n\norm{h^{t}_i - \nabla f_i(x^{t})}^2} \\
      &\quad& + \left(\frac{24 b \omega (2 \omega + 1)}{n \probavailable^2 B} + \frac{6}{n \probavailable B}\right)\Exp{\frac{1}{nm}\sum_{i=1}^n\sum_{j=1}^m\norm{h^{t}_{ij} - \nabla f_{ij}(x^{t})}^2}
    \end{eqnarray*}
    to conclude the proof.
\end{proof}

\subsection{Proof for \algname{\algorithmname-MVR}}

Let us denote $\nabla f_i(x^{t+1};\xi^{t+1}_i) \eqdef \frac{1}{B} \sum_{j=1}^B \nabla f_i(x^{t+1};\xi^{t+1}_{ij}).$

\begin{lemma}
  \label{lemma:gradient_mvr}
  Suppose that Assumptions \ref{ass:nodes_lipschitz_constant}, \ref{ass:stochastic_unbiased_and_variance_bounded}, \ref{ass:mean_square_smoothness} and \ref{ass:partial_participation} hold. For $h^{t+1}_i$ and $k^{t+1}_i$ from Algorithm~\ref{alg:main_algorithm} (\algname{\algorithmname-MVR}) we have
  \begin{enumerate}
  \item
      \begin{align*}
          &\ExpSub{k}{\ExpSub{\probavailable}{\norm{h^{t+1} - \nabla f(x^{t+1})}^2}} \\
          & \leq \frac{2 b^2 \sigma^2}{n \probavailable B} + \left(\frac{2 (1 - b)^2 L_{\sigma}^2}{n \probavailable B} + \frac{2\left(\probavailable - \probpairaa\right) \widehat{L}^2}{n \probavailable^2} \right) \norm{x^{t+1} - x^{t}}^2\\
          &\quad + \frac{2 \left(\probavailable - \probpairaa\right) b^2}{n^2 \probavailable^2} \sum_{i=1}^n \norm{h^t_i - \nabla f_i(x^{t})}^2 + \left(1 - b\right)^2 \norm{h^{t} - \nabla f(x^{t})}^2.
      \end{align*}
  \item
      \begin{align*}
          &\ExpSub{k}{\ExpSub{\probavailable}{\norm{h^{t+1}_i - \nabla f_i(x^{t+1})}^2}} \\
          & \leq \frac{2 b^2 \sigma^2}{\probavailable B}  + \left(\frac{2 (1 - b)^2 L_{\sigma}^2}{\probavailable B} + \frac{2(1 - \probavailable) L_i^2}{\probavailable}\right)\norm{x^{t+1} - x^{t}}^2 \\
          &\quad + \left(\frac{2 (1 - \probavailable) b^2}{\probavailable} + (1 - b)^2\right)\norm{h^t_i - \nabla f_i(x^{t})}^2, \quad \forall i \in [n].
      \end{align*}
  \item
      \begin{align*}
        &\ExpSub{k}{\norm{k^{t+1}_i}^2} \leq \frac{2 b^2 \sigma^2}{B} + \left(\frac{2 (1 - b)^2 L_{\sigma}^2}{B} + 2 L_i^2\right)\norm{x^{t+1} - x^{t}}^2 + 2 b^2 \norm{h^t_i - \nabla f_i(x^{t})}^2, \quad \forall i \in [n].
      \end{align*}
  \end{enumerate}
\end{lemma}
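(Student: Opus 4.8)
The plan is to follow the same template as the proofs of Lemma~\ref{lemma:gradient} and Lemma~\ref{lemma:finite_mvr}: first pin down the conditional mean of $k^{t+1}_i$ over the stochastic-gradient randomness, establish two building blocks, and then feed them into variance decompositions and Lemma~\ref{lemma:sampling}. Writing $\nabla f_i(x^{t+1};\xi^{t+1}_i)$ for the mini-batch average as in the statement, unbiasedness (Assumption~\ref{ass:stochastic_unbiased_and_variance_bounded}) gives $\ExpSub{k}{k^{t+1}_i} = \nabla f_i(x^{t+1}) - \nabla f_i(x^{t}) - b(h^t_i - \nabla f_i(x^{t}))$, so by \eqref{auxiliary:jensen_inequality} and Assumption~\ref{ass:nodes_lipschitz_constant} the first building block is $\norm{\ExpSub{k}{k^{t+1}_i}}^2 \le 2L_i^2\norm{x^{t+1}-x^{t}}^2 + 2b^2\norm{h^t_i - \nabla f_i(x^{t})}^2$.

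The heart of the argument is the second building block, the stochastic variance of $k^{t+1}_i$, which is where the momentum $b$ plays its variance-reduction role. I would rewrite
$$k^{t+1}_i - \ExpSub{k}{k^{t+1}_i} = (1-b)\,A_i + b\,C_i,$$
where $A_i = \nabla f_i(x^{t+1};\xi^{t+1}_i) - \nabla f_i(x^{t};\xi^{t+1}_i) - (\nabla f_i(x^{t+1}) - \nabla f_i(x^{t}))$ and $C_i = \nabla f_i(x^{t+1};\xi^{t+1}_i) - \nabla f_i(x^{t+1})$. Both are averages over the $B$ i.i.d.\,mini-batch samples of mean-zero (resp.\,unbiased-error) terms, so by independence $\ExpSub{k}{\norm{A_i}^2} = \tfrac1B\ExpSub{\xi}{\norm{\nabla f_i(x^{t+1};\xi) - \nabla f_i(x^{t};\xi) - \text{(its mean)}}^2} \le \tfrac{L_{\sigma}^2}{B}\norm{x^{t+1}-x^{t}}^2$, using \eqref{auxiliary:variance_decomposition} to drop the mean and Assumption~\ref{ass:mean_square_smoothness}, and $\ExpSub{k}{\norm{C_i}^2} \le \tfrac{\sigma^2}{B}$ by Assumption~\ref{ass:stochastic_unbiased_and_variance_bounded}. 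Then \eqref{auxiliary:jensen_inequality} yields $\ExpSub{k}{\norm{k^{t+1}_i - \ExpSub{k}{k^{t+1}_i}}^2} \le \tfrac{2(1-b)^2 L_{\sigma}^2}{B}\norm{x^{t+1}-x^{t}}^2 + \tfrac{2b^2\sigma^2}{B}$.

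With these in hand, Claim~3 is immediate from $\ExpSub{k}{\norm{k^{t+1}_i}^2} = \ExpSub{k}{\norm{k^{t+1}_i - \ExpSub{k}{k^{t+1}_i}}^2} + \norm{\ExpSub{k}{k^{t+1}_i}}^2$ (via \eqref{auxiliary:variance_decomposition}) by adding the two building blocks. For Claim~1, apply \eqref{auxiliary:variance_decomposition} over all the randomness (the mean term contributes $(1-b)^2\norm{h^{t}-\nabla f(x^{t})}^2$), then Lemma~\ref{lemma:sampling} with $r_i = h^t_i$ and $s_i = k^{t+1}_i$ (independent across $i$ since the $\xi^{t+1}_{ij}$ are), so $\tfrac{1}{n^2\probavailable}\sum_i\ExpSub{k}{\norm{k^{t+1}_i - \ExpSub{k}{k^{t+1}_i}}^2}$ and $\tfrac{\probavailable-\probpairaa}{n^2\probavailable^2}\sum_i\norm{\ExpSub{k}{k^{t+1}_i}}^2$ appear; plugging in the two bounds and using $\tfrac1n\sum_i L_i^2 = \widehat{L}^2$ gives the stated inequality. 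For Claim~2, take the variance decomposition over participation alone (conditional on $\xi$), which produces $\tfrac{1-\probavailable}{\probavailable}\norm{k^{t+1}_i}^2 + \norm{h^t_i + k^{t+1}_i - \nabla f_i(x^{t+1})}^2$; then take $\ExpSub{k}{\cdot}$ and decompose the second term again to get $\tfrac{1-\probavailable}{\probavailable}\ExpSub{k}{\norm{k^{t+1}_i}^2} + \ExpSub{k}{\norm{k^{t+1}_i - \ExpSub{k}{k^{t+1}_i}}^2} + (1-b)^2\norm{h^t_i - \nabla f_i(x^{t})}^2$, into which Claim~3 and the variance bound are substituted; the coefficients collapse ($\tfrac{1-\probavailable}{\probavailable}\cdot\tfrac{2b^2\sigma^2}{B} + \tfrac{2b^2\sigma^2}{B} = \tfrac{2b^2\sigma^2}{\probavailable B}$, etc.) to the stated form.

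The main obstacle is getting the variance-reduction decomposition right: one must group $k^{t+1}_i - \ExpSub{k}{k^{t+1}_i}$ so that the $(1-b)^2$ multiplies the mean-squared-smoothness term and the bare $b^2$ multiplies the $\sigma^2/B$ term, and be scrupulous about which expectation (over $\xi$ versus over participation — these are independent and only coupled through the sampling lemma) is being taken at each step. The rest is the routine bookkeeping already carried out in the earlier lemmas.
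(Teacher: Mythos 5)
Your proposal is correct and matches the paper's proof in all essentials: the same decomposition $k^{t+1}_i - \ExpSub{k}{k^{t+1}_i} = b\left(\nabla f_i(x^{t+1};\xi^{t+1}_i) - \nabla f_i(x^{t+1})\right) + (1-b)\left(\nabla f_i(x^{t+1};\xi^{t+1}_i) - \nabla f_i(x^{t};\xi^{t+1}_i) - (\nabla f_i(x^{t+1}) - \nabla f_i(x^{t}))\right)$, the same use of mini-batch independence with Assumptions~\ref{ass:stochastic_unbiased_and_variance_bounded} and \ref{ass:mean_square_smoothness}, and the same application of Lemma~\ref{lemma:sampling} with $r_i = h^t_i$, $s_i = k^{t+1}_i$ for the first claim. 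Your reorganization of the second claim (participation-variance first, yielding $\tfrac{1-\probavailable}{\probavailable}\ExpSub{k}{\norm{k^{t+1}_i}^2} + \ExpSub{k}{\norm{k^{t+1}_i - \ExpSub{k}{k^{t+1}_i}}^2}$, rather than the paper's $\tfrac{1}{\probavailable}\ExpSub{k}{\norm{k^{t+1}_i - \ExpSub{k}{k^{t+1}_i}}^2} + \tfrac{1-\probavailable}{\probavailable}\norm{\ExpSub{k}{k^{t+1}_i}}^2$) is algebraically identical by the variance decomposition \eqref{auxiliary:variance_decomposition} and produces the same coefficients.
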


\begin{proof}
  First, let us proof the bound for $\ExpSub{k}{\ExpSub{\probavailable}{\norm{h^{t+1} - \nabla f(x^{t+1})}^2}}$:
  \begin{align*}
      &\ExpSub{k}{\ExpSub{\probavailable}{\norm{h^{t+1} - \nabla f(x^{t+1})}^2}} \\
      &=\ExpSub{k}{\ExpSub{\probavailable}{\norm{h^{t+1} - \ExpSub{k}{\ExpSub{\probavailable}{h^{t+1}}}}^2}} + \norm{\ExpSub{k}{\ExpSub{\probavailable}{h^{t+1}}} - \nabla f(x^{t+1})}^2.
  \end{align*}
  Using
  \begin{align*}
      \ExpSub{k}{\ExpSub{\probavailable}{h^{t+1}_i}} = h^{t}_i + \ExpSub{k}{k^{t+1}_i} = h^{t}_i + \nabla f_i(x^{t+1}) - \nabla f_i(x^{t}) - b(h^t_i - \nabla f_i(x^{t}))
  \end{align*}
  and \eqref{auxiliary:variance_decomposition}, we have
  \begin{align*}
    &\ExpSub{k}{\ExpSub{\probavailable}{\norm{h^{t+1} - \nabla f(x^{t+1})}^2}} \\
    &=\ExpSub{k}{\ExpSub{\probavailable}{\norm{h^{t+1} - \ExpSub{k}{\ExpSub{\probavailable}{h^{t+1}}}}^2}} + \left(1 - b\right)^2 \norm{h^{t} - \nabla f(x^{t})}^2.
  \end{align*}
  We can use Lemma~\ref{lemma:sampling} with $r_i = h^{t}_i$ and $s_i = k^{t+1}_i$ to obtain
  \begin{align*}
    &\ExpSub{k}{\ExpSub{\probavailable}{\norm{h^{t+1} - \nabla f(x^{t+1})}^2}} \\
    &\leq \frac{1}{n^2 \probavailable}\sum_{i=1}^n\ExpSub{k}{\norm{k^{t+1}_i - \ExpSub{k}{k^{t+1}_i}}^2} +\frac{\probavailable - \probpairaa}{n^2 \probavailable^2}\sum_{i=1}^n\norm{\ExpSub{k}{k^{t+1}_i}}^2 + \left(1 - b\right)^2 \norm{h^{t} - \nabla f(x^{t})}^2\\
    &= \frac{1}{n^2 \probavailable}\sum_{i=1}^n\ExpSub{k}{\norm{\nabla f_i(x^{t+1};\xi^{t+1}_i) - \nabla f_i(x^{t};\xi^{t+1}_i) - b \left(h^t_i - \nabla f_i(x^{t};\xi^{t+1}_i)\right) \right.\right.\\
    &\left.\left. \quad- \left(\nabla f_i(x^{t+1}) - \nabla f_i(x^{t}) - b \left(h^t_i - \nabla f_i(x^{t})\right)\right)}^2} \\
    &\quad +\frac{\probavailable - \probpairaa}{n^2 \probavailable^2}\sum_{i=1}^n\norm{\nabla f_i(x^{t+1}) - \nabla f_i(x^{t}) - b \left(h^t_i - \nabla f_i(x^{t})\right)}^2 \\
    &\quad + \left(1 - b\right)^2 \norm{h^{t} - \nabla f(x^{t})}^2 \\
    &\overset{\eqref{auxiliary:jensen_inequality}}{\leq} \frac{2}{n^2 \probavailable}\sum_{i=1}^n \ExpSub{k}{\norm{b \left(\nabla f_i(x^{t+1};\xi^{t+1}_i) - \nabla f_i(x^{t+1})\right)}^2} \\
    & \quad + \frac{2}{n^2 \probavailable}\sum_{i=1}^n \ExpSub{k}{\norm{(1 - b)\left(\nabla f_i(x^{t+1};\xi^{t+1}_i) - \nabla f_i(x^{t};\xi^{t+1}_i) - \left(\nabla f_i(x^{t+1}) - \nabla f_i(x^{t})\right)\right)}^2} \\
    &\quad + \frac{\probavailable - \probpairaa}{n^2 \probavailable^2} \sum_{i=1}^n \norm{\nabla f_i(x^{t+1}) - \nabla f_i(x^{t}) - b(h^t_i - \nabla f_i(x^{t}))}^2 \\
    &\quad + \left(1 - b\right)^2 \norm{h^{t} - \nabla f(x^{t})}^2 \\
    &= \frac{2 b^2}{n^2 \probavailable}\sum_{i=1}^n \ExpSub{k}{\norm{\nabla f_i(x^{t+1};\xi^{t+1}_i) - \nabla f_i(x^{t+1})}^2} \\
    & \quad + \frac{2 (1 - b)^2}{n^2 \probavailable}\sum_{i=1}^n \ExpSub{k}{\norm{\nabla f_i(x^{t+1};\xi^{t+1}_i) - \nabla f_i(x^{t};\xi^{t+1}_i) - \left(\nabla f_i(x^{t+1}) - \nabla f_i(x^{t})\right)}^2} \\
    &\quad + \frac{\probavailable - \probpairaa}{n^2 \probavailable^2} \sum_{i=1}^n \norm{\nabla f_i(x^{t+1}) - \nabla f_i(x^{t}) - b(h^t_i - \nabla f_i(x^{t}))}^2 \\
    &\quad + \left(1 - b\right)^2 \norm{h^{t} - \nabla f(x^{t})}^2. \\
    &= \frac{2 b^2}{n^2 \probavailable B^2}\sum_{i=1}^n \sum_{j=1}^B \ExpSub{k}{\norm{\nabla f_i(x^{t+1};\xi^{t+1}_{ij}) - \nabla f_i(x^{t+1})}^2} \\
    & \quad + \frac{2 (1 - b)^2}{n^2 \probavailable}\sum_{i=1}^n \ExpSub{k}{\norm{\nabla f_i(x^{t+1};\xi^{t+1}_i) - \nabla f_i(x^{t};\xi^{t+1}_i) - \left(\nabla f_i(x^{t+1}) - \nabla f_i(x^{t})\right)}^2} \\
    &\quad + \frac{\probavailable - \probpairaa}{n^2 \probavailable^2} \sum_{i=1}^n \norm{\nabla f_i(x^{t+1}) - \nabla f_i(x^{t}) - b(h^t_i - \nabla f_i(x^{t}))}^2 \\
    &\quad + \left(1 - b\right)^2 \norm{h^{t} - \nabla f(x^{t})}^2.
  \end{align*}
  In the last equality, we use the independence of elements in the mini-batches. Due to Assumption~\ref{ass:stochastic_unbiased_and_variance_bounded}, we get
  \begin{align*}
    &\ExpSub{k}{\ExpSub{\probavailable}{\norm{h^{t+1} - \nabla f(x^{t+1})}^2}} \\
    &\leq \frac{2 b^2 \sigma^2}{n \probavailable B}\\
    & \quad + \frac{2 (1 - b)^2}{n^2 \probavailable}\sum_{i=1}^n \ExpSub{k}{\norm{\nabla f_i(x^{t+1};\xi^{t+1}_i) - \nabla f_i(x^{t};\xi^{t+1}_i) - \left(\nabla f_i(x^{t+1}) - \nabla f_i(x^{t})\right)}^2} \\
    &\quad + \frac{\probavailable - \probpairaa}{n^2 \probavailable^2} \sum_{i=1}^n \norm{\nabla f_i(x^{t+1}) - \nabla f_i(x^{t}) - b(h^t_i - \nabla f_i(x^{t}))}^2 \\
    &\quad + \left(1 - b\right)^2 \norm{h^{t} - \nabla f(x^{t})}^2 \\
    &\overset{\eqref{auxiliary:jensen_inequality}}{\leq} \frac{2 b^2 \sigma^2}{n \probavailable B}\\
    & \quad + \frac{2 (1 - b)^2}{n^2 \probavailable}\sum_{i=1}^n \ExpSub{k}{\norm{\nabla f_i(x^{t+1};\xi^{t+1}_i) - \nabla f_i(x^{t};\xi^{t+1}_i) - \left(\nabla f_i(x^{t+1}) - \nabla f_i(x^{t})\right)}^2} \\
    &\quad + \frac{2\left(\probavailable - \probpairaa\right)}{n^2 \probavailable^2} \sum_{i=1}^n \norm{\nabla f_i(x^{t+1}) - \nabla f_i(x^{t})}^2 + \frac{2 \left(\probavailable - \probpairaa\right) b^2}{n^2 \probavailable^2} \sum_{i=1}^n \norm{h^t_i - \nabla f_i(x^{t})}^2 \\
    &\quad + \left(1 - b\right)^2 \norm{h^{t} - \nabla f(x^{t})}^2. \\
    &= \frac{2 b^2 \sigma^2}{n \probavailable B}\\
    & \quad + \frac{2 (1 - b)^2}{n^2 \probavailable B^2}\sum_{i=1}^n \sum_{j=1}^B \ExpSub{k}{\norm{\nabla f_i(x^{t+1};\xi^{t+1}_{ij}) - \nabla f_i(x^{t};\xi^{t+1}_{ij}) - \left(\nabla f_i(x^{t+1}) - \nabla f_i(x^{t})\right)}^2} \\
    &\quad + \frac{2\left(\probavailable - \probpairaa\right)}{n^2 \probavailable^2} \sum_{i=1}^n \norm{\nabla f_i(x^{t+1}) - \nabla f_i(x^{t})}^2 + \frac{2 \left(\probavailable - \probpairaa\right) b^2}{n^2 \probavailable^2} \sum_{i=1}^n \norm{h^t_i - \nabla f_i(x^{t})}^2 \\
    &\quad + \left(1 - b\right)^2 \norm{h^{t} - \nabla f(x^{t})}^2, \\
  \end{align*}
  where we use the independence of elements in the mini-batches.
  Using Assumptions~\ref{ass:nodes_lipschitz_constant} and \ref{ass:mean_square_smoothness}, we obtain
  \begin{align*}
    &\ExpSub{k}{\ExpSub{\probavailable}{\norm{h^{t+1} - \nabla f(x^{t+1})}^2}} \\
    &\leq \frac{2 b^2 \sigma^2}{n \probavailable B} + \left(\frac{2 (1 - b)^2 L_{\sigma}^2}{n \probavailable B} + \frac{2\left(\probavailable - \probpairaa\right) \widehat{L}^2}{n \probavailable^2} \right) \norm{x^{t+1} - x^{t}}^2\\
    &\quad + \frac{2 \left(\probavailable - \probpairaa\right) b^2}{n^2 \probavailable^2} \sum_{i=1}^n \norm{h^t_i - \nabla f_i(x^{t})}^2 + \left(1 - b\right)^2 \norm{h^{t} - \nabla f(x^{t})}^2.
  \end{align*}
  Now, we prove the second inequality:
  \begin{align*}
    &\ExpSub{k}{\ExpSub{\probavailable}{\norm{h^{t+1}_i - \nabla f_i(x^{t+1})}^2}} \\
    &=\ExpSub{k}{\ExpSub{\probavailable}{\norm{h^{t+1}_i - \ExpSub{k}{\ExpSub{\probavailable}{h^{t+1}_i}}}^2}} \\
    &\quad + \norm{\ExpSub{k}{\ExpSub{\probavailable}{h^{t+1}_i}} - \nabla f_i(x^{t+1})}^2 \\
    &=\ExpSub{k}{\ExpSub{\probavailable}{\norm{h^{t+1}_i - \left(h^{t}_i + \nabla f_i(x^{t+1}) - \nabla f_i(x^{t}) - b(h^t_i - \nabla f_i(x^{t}))\right)}^2}} \\
    &\quad + \norm{h^{t}_i + \nabla f_i(x^{t+1}) - \nabla f_i(x^{t}) - b(h^t_i - \nabla f_i(x^{t})) - \nabla f_i(x^{t+1})}^2 \\
    &=\ExpSub{k}{\ExpSub{\probavailable}{\norm{h^{t+1}_i - \left(h^{t}_i + \nabla f_i(x^{t+1}) - \nabla f_i(x^{t}) - b(h^t_i - \nabla f_i(x^{t}))\right)}^2}} \\
    &\quad + (1 - b)^2\norm{h^t_i - \nabla f_i(x^{t})}^2 \\
    &=\probavailable \ExpSub{k}{\norm{h^{t}_i + \frac{1}{\probavailable}k^{t+1}_i - \left(h^{t}_i + \nabla f_i(x^{t+1}) - \nabla f_i(x^{t}) - b(h^t_i - \nabla f_i(x^{t}))\right)}^2} \\
    &\quad + (1 - \probavailable)\norm{h^{t}_i - \left(h^{t}_i + \nabla f_i(x^{t+1}) - \nabla f_i(x^{t}) - b(h^t_i - \nabla f_i(x^{t}))\right)}^2 \\
    &\quad + (1 - b)^2\norm{h^t_i - \nabla f_i(x^{t})}^2 \\
    &=\probavailable \ExpSub{k}{\norm{\frac{1}{\probavailable}k^{t+1}_i - \left(\nabla f_i(x^{t+1}) - \nabla f_i(x^{t}) - b(h^t_i - \nabla f_i(x^{t}))\right)}^2} \\
    &\quad + (1 - \probavailable)\norm{\nabla f_i(x^{t+1}) - \nabla f_i(x^{t}) - b(h^t_i - \nabla f_i(x^{t}))}^2 \\
    &\quad + (1 - b)^2\norm{h^t_i - \nabla f_i(x^{t})}^2 \\
    &\overset{\eqref{auxiliary:variance_decomposition}}{=}\frac{1}{\probavailable} \ExpSub{k}{\norm{k^{t+1}_i - \left(\nabla f_i(x^{t+1}) - \nabla f_i(x^{t}) - b(h^t_i - \nabla f_i(x^{t}))\right)}^2} \\
    &\quad + \frac{(1 - \probavailable)^2}{\probavailable} \norm{\nabla f_i(x^{t+1}) - \nabla f_i(x^{t}) - b(h^t_i - \nabla f_i(x^{t}))}^2 \\
    &\quad + (1 - \probavailable)\norm{\nabla f_i(x^{t+1}) - \nabla f_i(x^{t}) - b(h^t_i - \nabla f_i(x^{t}))}^2 \\
    &\quad + (1 - b)^2\norm{h^t_i - \nabla f_i(x^{t})}^2 \\
    &=\frac{1}{\probavailable} \ExpSub{k}{\norm{\nabla f_i(x^{t+1};\xi^{t+1}_{i}) - \nabla f_i(x^{t};\xi^{t+1}_{i}) - b \left(h^t_i - \nabla f_i(x^{t};\xi^{t+1}_{i})\right) - \left(\nabla f_i(x^{t+1}) - \nabla f_i(x^{t}) - b(h^t_i - \nabla f_i(x^{t}))\right)}^2} \\
    &\quad + \frac{1 - \probavailable}{\probavailable} \norm{\nabla f_i(x^{t+1}) - \nabla f_i(x^{t}) - b(h^t_i - \nabla f_i(x^{t}))}^2 \\
    &\quad + (1 - b)^2\norm{h^t_i - \nabla f_i(x^{t})}^2 \\
    &=\frac{1}{\probavailable} \ExpSub{k}{\norm{b \left(\nabla f_i(x^{t+1};\xi^{t+1}_{i}) - \nabla f_i(x^{t+1})\right) + (1 - b) \left(\nabla f_i(x^{t+1};\xi^{t+1}_{i}) - \nabla f_i(x^{t};\xi^{t+1}_{i}) - \left(\nabla f_i(x^{t+1}) - \nabla f_i(x^{t})\right)\right)}^2} \\
    &\quad + \frac{1 - \probavailable}{\probavailable} \norm{\nabla f_i(x^{t+1}) - \nabla f_i(x^{t}) - b(h^t_i - \nabla f_i(x^{t}))}^2 \\
    &\quad + (1 - b)^2\norm{h^t_i - \nabla f_i(x^{t})}^2 \\
    &\overset{\eqref{auxiliary:jensen_inequality}}{\leq} \frac{2 b^2}{\probavailable} \ExpSub{k}{\norm{\nabla f_i(x^{t+1};\xi^{t+1}_{i}) - \nabla f_i(x^{t+1})}^2} \\
    &\quad + \frac{2 (1 - b)^2}{\probavailable} \ExpSub{k}{\norm{\nabla f_i(x^{t+1};\xi^{t+1}_{i}) - \nabla f_i(x^{t};\xi^{t+1}_{i}) - \left(\nabla f_i(x^{t+1}) - \nabla f_i(x^{t})\right)}^2} \\
    &\quad + \frac{1 - \probavailable}{\probavailable} \norm{\nabla f_i(x^{t+1}) - \nabla f_i(x^{t}) - b(h^t_i - \nabla f_i(x^{t}))}^2 \\
    &\quad + (1 - b)^2\norm{h^t_i - \nabla f_i(x^{t})}^2.
  \end{align*}
  Considering the independence of elements in the mini-batch, we obtain
  \begin{align*}
    &\ExpSub{k}{\ExpSub{\probavailable}{\norm{h^{t+1}_i - \nabla f_i(x^{t+1})}^2}} \\
    &= \frac{2 b^2}{\probavailable B^2} \sum_{j=1}^B \ExpSub{k}{\norm{\nabla f_i(x^{t+1};\xi^{t+1}_{ij}) - \nabla f_i(x^{t+1})}^2} \\
    &\quad + \frac{2 (1 - b)^2}{\probavailable B^2} \sum_{j=1}^B \ExpSub{k}{\norm{\nabla f_i(x^{t+1};\xi^{t+1}_{ij}) - \nabla f_i(x^{t};\xi^{t+1}_{ij}) - \left(\nabla f_i(x^{t+1}) - \nabla f_i(x^{t})\right)}^2} \\
    &\quad + \frac{1 - \probavailable}{\probavailable} \norm{\nabla f_i(x^{t+1}) - \nabla f_i(x^{t}) - b(h^t_i - \nabla f_i(x^{t}))}^2 \\
    &\quad + (1 - b)^2\norm{h^t_i - \nabla f_i(x^{t})}^2. \\
    &\overset{\eqref{auxiliary:jensen_inequality}}{\leq} \frac{2 b^2}{\probavailable B^2} \sum_{j=1}^B \ExpSub{k}{\norm{\nabla f_i(x^{t+1};\xi^{t+1}_{ij}) - \nabla f_i(x^{t+1})}^2} \\
    &\quad + \frac{2 (1 - b)^2}{\probavailable B^2} \sum_{j=1}^B \ExpSub{k}{\norm{\nabla f_i(x^{t+1};\xi^{t+1}_{ij}) - \nabla f_i(x^{t};\xi^{t+1}_{ij}) - \left(\nabla f_i(x^{t+1}) - \nabla f_i(x^{t})\right)}^2} \\
    &\quad + \frac{2(1 - \probavailable)}{\probavailable} \norm{\nabla f_i(x^{t+1}) - \nabla f_i(x^{t})}^2 + \left(\frac{2 (1 - \probavailable) b^2}{\probavailable} + (1 - b)^2\right)\norm{h^t_i - \nabla f_i(x^{t})}^2 \\
  \end{align*}
  Next, we use Assumptions~\ref{ass:nodes_lipschitz_constant}, \ref{ass:mean_square_smoothness}, \ref{ass:stochastic_unbiased_and_variance_bounded}, to get
  \begin{align*}
    &\ExpSub{k}{\ExpSub{\probavailable}{\norm{h^{t+1}_i - \nabla f_i(x^{t+1})}^2}} \\
    &\leq \frac{2 b^2 \sigma^2}{\probavailable B}  + \left(\frac{2 (1 - b)^2 L_{\sigma}^2}{\probavailable B} + \frac{2(1 - \probavailable) L_i^2}{\probavailable}\right)\norm{x^{t+1} - x^{t}}^2 \\
    &\quad + \left(\frac{2 (1 - \probavailable) b^2}{\probavailable} + (1 - b)^2\right)\norm{h^t_i - \nabla f_i(x^{t})}^2.\\
  \end{align*}
  It is left to prove the bound for $\ExpSub{k}{\norm{k^{t+1}_i}^2}$:
  \begin{align*}
    &\ExpSub{k}{\norm{k^{t+1}_i}^2} \\
    &= \ExpSub{k}{\norm{\nabla f_i(x^{t+1};\xi^{t+1}_{i}) - \nabla f_i(x^{t};\xi^{t+1}_{i}) - b \left(h^t_i - \nabla f_i(x^{t};\xi^{t+1}_{i})\right)}^2} \\
    &\overset{\eqref{auxiliary:variance_decomposition}}{=} \ExpSub{k}{\norm{\nabla f_i(x^{t+1};\xi^{t+1}_{i}) - \nabla f_i(x^{t};\xi^{t+1}_{i}) - b \left(h^t_i - \nabla f_i(x^{t};\xi^{t+1}_{i})\right) - \left(\nabla f_i(x^{t+1}) - \nabla f_i(x^{t}) - b(h^t_i - \nabla f_i(x^{t}))\right)}^2} \\
    &\quad + \norm{\nabla f_i(x^{t+1}) - \nabla f_i(x^{t}) - b(h^t_i - \nabla f_i(x^{t}))}^2 \\
    &= \ExpSub{k}{\norm{b \left(\nabla f_i(x^{t+1};\xi^{t+1}_{i}) - \nabla f_i(x^{t+1})\right) + (1 - b) \left(\nabla f_i(x^{t+1};\xi^{t+1}_{i}) - \nabla f_i(x^{t};\xi^{t+1}_{i}) - \left(\nabla f_i(x^{t+1}) - \nabla f_i(x^{t})\right)\right)}^2} \\
    &\quad + \norm{\nabla f_i(x^{t+1}) - \nabla f_i(x^{t}) - b(h^t_i - \nabla f_i(x^{t}))}^2 \\
    &\overset{\eqref{auxiliary:jensen_inequality}}{\leq} 2 b^2 \ExpSub{k}{\norm{\nabla f_i(x^{t+1};\xi^{t+1}_{i}) - \nabla f_i(x^{t+1})}^2} \\
    &\quad + 2 (1 - b)^2 \ExpSub{k}{\norm{\nabla f_i(x^{t+1};\xi^{t+1}_{i}) - \nabla f_i(x^{t};\xi^{t+1}_{i}) - \left(\nabla f_i(x^{t+1}) - \nabla f_i(x^{t})\right)}^2} \\
    &\quad + 2 \norm{\nabla f_i(x^{t+1}) - \nabla f_i(x^{t})}^2 + 2 b^2 \norm{h^t_i - \nabla f_i(x^{t})}^2.
  \end{align*}
  Using Assumptions~\ref{ass:nodes_lipschitz_constant}, \ref{ass:mean_square_smoothness}, \ref{ass:stochastic_unbiased_and_variance_bounded} and the independence of elements in the mini-batch, we get
  \begin{align*}
    &\ExpSub{k}{\norm{k^{t+1}_i}^2} \\
    &\leq \frac{2 b^2 \sigma^2}{B} + \left(\frac{2 (1 - b)^2 L_{\sigma}^2}{B} + 2 L_i^2\right)\norm{x^{t+1} - x^{t}}^2 + 2 b^2 \norm{h^t_i - \nabla f_i(x^{t})}^2.
  \end{align*}
\end{proof}

\CONVERGENCEMVR*

\begin{proof}
  Let us fix constants $\nu, \rho \in [0,\infty)$ that we will define later. Considering Lemma~\ref{lemma:main_lemma}, Lemma~\ref{lemma:gradient_mvr}, and the law of total expectation, we obtain
    \begin{align*}
      &\Exp{f(x^{t + 1})} + \frac{\gamma (2 \omega + 1)}{\probavailable} \Exp{\norm{g^{t+1} - h^{t+1}}^2} + \frac{\gamma (\left(2 \omega + 1\right)\probavailable - \probpairaa)}{n \probavailable^2} \Exp{\frac{1}{n}\sum_{i=1}^n\norm{g^{t+1}_i - h^{t+1}_i}^2}\\
      &\quad  + \nu \Exp{\norm{h^{t+1} - \nabla f(x^{t+1})}^2} + \rho \Exp{\frac{1}{n}\sum_{i=1}^n\norm{h^{t+1}_i - \nabla f_i(x^{t+1})}^2}\\
      &\leq \Exp{f(x^t) - \frac{\gamma}{2}\norm{\nabla f(x^t)}^2 - \left(\frac{1}{2\gamma} - \frac{L}{2}\right)
      \norm{x^{t+1} - x^t}^2 + \gamma \norm{h^{t} - \nabla f(x^t)}^2}\nonumber\\
      &\quad + \frac{\gamma (2 \omega + 1)}{\probavailable}\Exp{\norm{g^{t} - h^t}^2}+ \frac{\gamma (\left(2 \omega + 1\right)\probavailable - \probpairaa)}{n \probavailable^2}\Exp{\frac{1}{n} \sum_{i=1}^n\norm{g^t_i - h^{t}_i}^2} \\
      &\quad + \frac{4 \gamma \omega (2 \omega + 1)}{n \probavailable^2} \Exp{\frac{1}{n} \sum_{i=1}^n\norm{k^{t+1}_i}^2} \\
      &\quad  + \nu \Exp{\norm{h^{t+1} - \nabla f(x^{t+1})}^2} + \rho \Exp{\frac{1}{n}\sum_{i=1}^n\norm{h^{t+1}_i - \nabla f_i(x^{t+1})}^2}\\
      &= \Exp{f(x^t) - \frac{\gamma}{2}\norm{\nabla f(x^t)}^2 - \left(\frac{1}{2\gamma} - \frac{L}{2}\right)
      \norm{x^{t+1} - x^t}^2 + \gamma \norm{h^{t} - \nabla f(x^t)}^2}\nonumber\\
      &\quad + \frac{\gamma (2 \omega + 1)}{\probavailable}\Exp{\norm{g^{t} - h^t}^2}+ \frac{\gamma (\left(2 \omega + 1\right)\probavailable - \probpairaa)}{n \probavailable^2}\Exp{\frac{1}{n} \sum_{i=1}^n\norm{g^t_i - h^{t}_i}^2} \\
      &\quad + \frac{4 \gamma \omega (2 \omega + 1)}{n \probavailable^2} \Exp{\ExpSub{k}{\frac{1}{n} \sum_{i=1}^n\norm{k^{t+1}_i}^2}} \\
      &\quad  + \nu \Exp{\ExpSub{B}{\ExpSub{\probavailable}{\norm{h^{t+1} - \nabla f(x^{t+1})}^2}}} \\
      &\quad + \rho \Exp{\ExpSub{B}{\ExpSub{\probavailable}{\frac{1}{n}\sum_{i=1}^n\norm{h^{t+1}_i - \nabla f_i(x^{t+1})}^2}}}\\
      &\leq \Exp{f(x^t) - \frac{\gamma}{2}\norm{\nabla f(x^t)}^2 - \left(\frac{1}{2\gamma} - \frac{L}{2}\right)
      \norm{x^{t+1} - x^t}^2 + \gamma \norm{h^{t} - \nabla f(x^t)}^2}\nonumber\\
      &\quad + \frac{\gamma (2 \omega + 1)}{\probavailable}\Exp{\norm{g^{t} - h^t}^2}+ \frac{\gamma (\left(2 \omega + 1\right)\probavailable - \probpairaa)}{n \probavailable^2}\Exp{\frac{1}{n} \sum_{i=1}^n\norm{g^t_i - h^{t}_i}^2} \\
      &\quad + \frac{4 \gamma \omega (2 \omega + 1)}{n \probavailable^2} \Exp{\frac{2 b^2 \sigma^2}{B} + \left(\frac{2 (1 - b)^2 L_{\sigma}^2}{B} + 2 \widehat{L}^2\right)\norm{x^{t+1} - x^{t}}^2 + 2 b^2 \frac{1}{n} \sum_{i=1}^n \norm{h^t_i - \nabla f_i(x^{t})}^2} \\
      &\quad  + \nu {\rm E}\Bigg(\frac{2 b^2 \sigma^2}{n \probavailable B} + \left(\frac{2 (1 - b)^2 L_{\sigma}^2}{n \probavailable B} + \frac{2\left(\probavailable - \probpairaa\right) \widehat{L}^2}{n \probavailable^2} \right) \norm{x^{t+1} - x^{t}}^2\\
      &\qquad\quad + \frac{2 \left(\probavailable - \probpairaa\right) b^2}{n^2 \probavailable^2} \sum_{i=1}^n \norm{h^t_i - \nabla f_i(x^{t})}^2 + \left(1 - b\right)^2 \norm{h^{t} - \nabla f(x^{t})}^2\Bigg) \\
      &\quad + \rho {\rm E}\Bigg(\frac{2 b^2 \sigma^2}{\probavailable B}  + \left(\frac{2 (1 - b)^2 L_{\sigma}^2}{\probavailable B} + \frac{2(1 - \probavailable) \widehat{L}^2}{\probavailable}\right)\norm{x^{t+1} - x^{t}}^2 \\
      &\qquad\quad + \left(\frac{2 (1 - \probavailable) b^2}{\probavailable} + (1 - b)^2\right) \frac{1}{n} \sum_{i=1}^n \norm{h^t_i - \nabla f_i(x^{t})}^2\Bigg).
    \end{align*}
    After rearranging the terms, we get
    \begin{align*}
      &\Exp{f(x^{t + 1})} + \frac{\gamma (2 \omega + 1)}{\probavailable} \Exp{\norm{g^{t+1} - h^{t+1}}^2} + \frac{\gamma (\left(2 \omega + 1\right)\probavailable - \probpairaa)}{n \probavailable^2} \Exp{\frac{1}{n}\sum_{i=1}^n\norm{g^{t+1}_i - h^{t+1}_i}^2}\\
      &\quad  + \nu \Exp{\norm{h^{t+1} - \nabla f(x^{t+1})}^2} + \rho \Exp{\frac{1}{n}\sum_{i=1}^n\norm{h^{t+1}_i - \nabla f_i(x^{t+1})}^2}\\
      &\leq \Exp{f(x^t)} - \frac{\gamma}{2}\Exp{\norm{\nabla f(x^t)}^2} \\
      &\quad + \frac{\gamma (2 \omega + 1)}{\probavailable} \Exp{\norm{g^{t} - h^{t}}^2} + \frac{\gamma (\left(2 \omega + 1\right)\probavailable - \probpairaa)}{n \probavailable^2} \Exp{\frac{1}{n}\sum_{i=1}^n\norm{g^{t}_i - h^{t}_i}^2} \\
      &\quad - \Bigg(\frac{1}{2\gamma} - \frac{L}{2} - \frac{4 \gamma \omega (2 \omega + 1)}{n \probavailable^2} \left(\frac{2 (1 - b)^2 L_{\sigma}^2}{B} + 2 \widehat{L}^2\right) \\
      &\qquad\quad - \nu\left(\frac{2 (1 - b)^2 L_{\sigma}^2}{n \probavailable B} + \frac{2\left(\probavailable - \probpairaa\right) \widehat{L}^2}{n \probavailable^2} \right) - \rho \left(\frac{2 (1 - b)^2 L_{\sigma}^2}{\probavailable B} + \frac{2(1 - \probavailable) \widehat{L}^2}{\probavailable}\right)\Bigg) \Exp{\norm{x^{t+1} - x^t}^2} \\
      &\quad + \left(\gamma + \nu \left(1 - b\right)^2\right) \Exp{\norm{h^{t} - \nabla f(x^{t})}^2} \\
      &\quad + \Bigg(\frac{8 b^2 \gamma \omega (2 \omega + 1)}{n \probavailable^2} + \frac{2 \nu \left(\probavailable - \probpairaa\right) b^2}{n \probavailable^2} + \rho \left(\frac{2 (1 - \probavailable) b^2}{\probavailable} + (1 - b)^2\right)\Bigg)\Exp{\frac{1}{n}\sum_{i=1}^n\norm{h^{t}_i - \nabla f_i(x^{t})}^2} \\
      &\quad + \left(\frac{8 b^2 \gamma \omega (2 \omega + 1)}{n \probavailable^2} + \nu \frac{2 b^2}{n \probavailable} + \rho \frac{2 b^2}{\probavailable}\right) \frac{\sigma^2}{B}.
    \end{align*}
    By taking $\nu = \frac{\gamma}{b},$ one can show that $\left(\gamma + \nu (1 - b)^2\right) \leq \nu,$ and
    \begin{align*}
      &\Exp{f(x^{t + 1})} + \frac{\gamma (2 \omega + 1)}{\probavailable} \Exp{\norm{g^{t+1} - h^{t+1}}^2} + \frac{\gamma (\left(2 \omega + 1\right)\probavailable - \probpairaa)}{n \probavailable^2} \Exp{\frac{1}{n}\sum_{i=1}^n\norm{g^{t+1}_i - h^{t+1}_i}^2}\\
      &\quad  + \frac{\gamma}{b} \Exp{\norm{h^{t+1} - \nabla f(x^{t+1})}^2} + \rho \Exp{\frac{1}{n}\sum_{i=1}^n\norm{h^{t+1}_i - \nabla f_i(x^{t+1})}^2}\\
      &\leq \Exp{f(x^t)} - \frac{\gamma}{2}\Exp{\norm{\nabla f(x^t)}^2} \\
      &\quad + \frac{\gamma (2 \omega + 1)}{\probavailable} \Exp{\norm{g^{t} - h^{t}}^2} + \frac{\gamma (\left(2 \omega + 1\right)\probavailable - \probpairaa)}{n \probavailable^2} \Exp{\frac{1}{n}\sum_{i=1}^n\norm{g^{t}_i - h^{t}_i}^2} \\
      &\quad - \Bigg(\frac{1}{2\gamma} - \frac{L}{2} - \frac{4 \gamma \omega (2 \omega + 1)}{n \probavailable^2} \left(\frac{2 (1 - b)^2 L_{\sigma}^2}{B} + 2 \widehat{L}^2\right) \\
      &\qquad\quad - \frac{\gamma}{b}\left(\frac{2 (1 - b)^2 L_{\sigma}^2}{n \probavailable B} + \frac{2\left(\probavailable - \probpairaa\right) \widehat{L}^2}{n \probavailable^2} \right) - \rho \left(\frac{2 (1 - b)^2 L_{\sigma}^2}{\probavailable B} + \frac{2(1 - \probavailable) \widehat{L}^2}{\probavailable}\right)\Bigg) \Exp{\norm{x^{t+1} - x^t}^2} \\
      &\quad + \frac{\gamma}{b} \Exp{\norm{h^{t} - \nabla f(x^{t})}^2} \\
      &\quad + \Bigg(\frac{8 b^2 \gamma \omega (2 \omega + 1)}{n \probavailable^2} + \frac{2 \gamma \left(\probavailable - \probpairaa\right) b}{n \probavailable^2} + \rho \left(\frac{2 (1 - \probavailable) b^2}{\probavailable} + (1 - b)^2\right)\Bigg)\Exp{\frac{1}{n}\sum_{i=1}^n\norm{h^{t}_i - \nabla f_i(x^{t})}^2} \\
      &\quad + \left(\frac{8 b^2 \gamma \omega (2 \omega + 1)}{n \probavailable^2} + \frac{2 \gamma b}{n \probavailable} + \rho \frac{2 b^2}{\probavailable}\right) \frac{\sigma^2}{B}.
    \end{align*}
    Note that $b \leq \frac{\probavailable}{2 - \probavailable},$ thus
    \begin{align*}
      &\Bigg(\frac{8 b^2 \gamma \omega (2 \omega + 1)}{n \probavailable^2} + \frac{2 \gamma \left(\probavailable - \probpairaa\right) b}{n \probavailable^2} + \rho \left(\frac{2 (1 - \probavailable) b^2}{\probavailable} + (1 - b)^2\right)\Bigg) \\
      &\leq \Bigg(\frac{8 b^2 \gamma \omega (2 \omega + 1)}{n \probavailable^2} + \frac{2 \gamma \left(\probavailable - \probpairaa\right) b}{n \probavailable^2} + \rho \left(1 - b\right)\Bigg).
    \end{align*}
    And if we take $\rho = \frac{8 b \gamma \omega (2 \omega + 1)}{n \probavailable^2} + \frac{2 \gamma \left(\probavailable - \probpairaa\right)}{n \probavailable^2},$ then
    \begin{align*}
      \Bigg(\frac{8 b^2 \gamma \omega (2 \omega + 1)}{n \probavailable^2} + \frac{2 \gamma \left(\probavailable - \probpairaa\right) b}{n \probavailable^2} + \rho \left(1 - b\right)\Bigg) \leq \rho,
    \end{align*}
    and 
    \begin{align*}
      &\Exp{f(x^{t + 1})} + \frac{\gamma (2 \omega + 1)}{\probavailable} \Exp{\norm{g^{t+1} - h^{t+1}}^2} + \frac{\gamma (\left(2 \omega + 1\right)\probavailable - \probpairaa)}{n \probavailable^2} \Exp{\frac{1}{n}\sum_{i=1}^n\norm{g^{t+1}_i - h^{t+1}_i}^2}\\
      &\quad  + \frac{\gamma}{b} \Exp{\norm{h^{t+1} - \nabla f(x^{t+1})}^2} + \left(\frac{8 b \gamma \omega (2 \omega + 1)}{n \probavailable^2} + \frac{2 \gamma \left(\probavailable - \probpairaa\right)}{n \probavailable^2}\right) \Exp{\frac{1}{n}\sum_{i=1}^n\norm{h^{t+1}_i - \nabla f_i(x^{t+1})}^2}\\
      &\leq \Exp{f(x^t)} - \frac{\gamma}{2}\Exp{\norm{\nabla f(x^t)}^2} \\
      &\quad + \frac{\gamma (2 \omega + 1)}{\probavailable} \Exp{\norm{g^{t} - h^{t}}^2} + \frac{\gamma (\left(2 \omega + 1\right)\probavailable - \probpairaa)}{n \probavailable^2} \Exp{\frac{1}{n}\sum_{i=1}^n\norm{g^{t}_i - h^{t}_i}^2} \\
      &\quad - \Bigg(\frac{1}{2\gamma} - \frac{L}{2} - \frac{4 \gamma \omega (2 \omega + 1)}{n \probavailable^2} \left(\frac{2 (1 - b)^2 L_{\sigma}^2}{B} + 2 \widehat{L}^2\right) \\
      &\qquad\quad - \frac{\gamma}{n \probavailable b}\left(\frac{2 (1 - b)^2 L_{\sigma}^2}{B} + 2\left(1 - \frac{\probpairaa}{\probavailable}\right) \widehat{L}^2 \right) \\
      &\qquad\quad - \left(\frac{8 b \gamma \omega (2 \omega + 1)}{n \probavailable^3} + \frac{2 \gamma \left(1 - \frac{\probpairaa}{\probavailable}\right)}{n \probavailable^2}\right) \left(\frac{2 (1 - b)^2 L_{\sigma}^2}{B} + 2(1 - \probavailable) \widehat{L}^2\right)\Bigg) \Exp{\norm{x^{t+1} - x^t}^2} \\
      &\quad + \frac{\gamma}{b} \Exp{\norm{h^{t} - \nabla f(x^{t})}^2} + \left(\frac{8 b \gamma \omega (2 \omega + 1)}{n \probavailable^2} + \frac{2 \gamma \left(\probavailable - \probpairaa\right)}{n \probavailable^2}\right)\Exp{\frac{1}{n}\sum_{i=1}^n\norm{h^{t}_i - \nabla f_i(x^{t})}^2} \\
      &\quad + \left(\frac{8 b^2 \gamma \omega (2 \omega + 1)}{n \probavailable^2} + \frac{2 \gamma b}{n \probavailable} + \left(\frac{8 b \gamma \omega (2 \omega + 1)}{n \probavailable^2} + \frac{2 \gamma \left(\probavailable - \probpairaa\right)}{n \probavailable^2}\right) \frac{2 b^2}{\probavailable}\right) \frac{\sigma^2}{B}.
    \end{align*}
    Let us simplify the inequality. First, due to $b \leq \probavailable$ and $\left(1 - \probavailable\right) \leq \left(1 - \frac{\probpairaa}{\probavailable}\right),$ we have
    \begin{align*}
      &\left(\frac{8 b \gamma \omega (2 \omega + 1)}{n \probavailable^3} + \frac{2 \gamma \left(1 - \frac{\probpairaa}{\probavailable}\right)}{n \probavailable^2}\right) \left(\frac{2 (1 - b)^2 L_{\sigma}^2}{B} + 2(1 - \probavailable) \widehat{L}^2\right) \\
      &=\frac{8 b \gamma \omega (2 \omega + 1)}{n \probavailable^3} \left(\frac{2 (1 - b)^2 L_{\sigma}^2}{B} + 2(1 - \probavailable) \widehat{L}^2\right) \\
      &\quad + \frac{2 \gamma \left(1 - \frac{\probpairaa}{\probavailable}\right)}{n \probavailable^2} \left(\frac{2 (1 - b)^2 L_{\sigma}^2}{B} + 2(1 - \probavailable) \widehat{L}^2\right) \\
      &\leq \frac{8 \gamma \omega (2 \omega + 1)}{n \probavailable^2} \left(\frac{2 (1 - b)^2 L_{\sigma}^2}{B} + 2 \widehat{L}^2\right) \\
      &\quad + \frac{2 \gamma}{n \probavailable b} \left(\frac{2 (1 - b)^2 L_{\sigma}^2}{B} + 2\left(1 - \frac{\probpairaa}{\probavailable}\right) \widehat{L}^2\right),
    \end{align*}
    therefore
    \begin{align*}
      &\Exp{f(x^{t + 1})} + \frac{\gamma (2 \omega + 1)}{\probavailable} \Exp{\norm{g^{t+1} - h^{t+1}}^2} + \frac{\gamma (\left(2 \omega + 1\right)\probavailable - \probpairaa)}{n \probavailable^2} \Exp{\frac{1}{n}\sum_{i=1}^n\norm{g^{t+1}_i - h^{t+1}_i}^2}\\
      &\quad  + \frac{\gamma}{b} \Exp{\norm{h^{t+1} - \nabla f(x^{t+1})}^2} + \left(\frac{8 b \gamma \omega (2 \omega + 1)}{n \probavailable^2} + \frac{2 \gamma \left(\probavailable - \probpairaa\right)}{n \probavailable^2}\right) \Exp{\frac{1}{n}\sum_{i=1}^n\norm{h^{t+1}_i - \nabla f_i(x^{t+1})}^2}\\
      &\leq \Exp{f(x^t)} - \frac{\gamma}{2}\Exp{\norm{\nabla f(x^t)}^2} \\
      &\quad + \frac{\gamma (2 \omega + 1)}{\probavailable} \Exp{\norm{g^{t} - h^{t}}^2} + \frac{\gamma (\left(2 \omega + 1\right)\probavailable - \probpairaa)}{n \probavailable^2} \Exp{\frac{1}{n}\sum_{i=1}^n\norm{g^{t}_i - h^{t}_i}^2} \\
      &\quad - \Bigg(\frac{1}{2\gamma} - \frac{L}{2} - \frac{12 \gamma \omega (2 \omega + 1)}{n \probavailable^2} \left(\frac{2 (1 - b)^2 L_{\sigma}^2}{B} + 2 \widehat{L}^2\right) \\
      &\qquad\quad - \frac{3 \gamma}{n \probavailable b}\left(\frac{2 (1 - b)^2 L_{\sigma}^2}{B} + 2\left(1 - \frac{\probpairaa}{\probavailable}\right) \widehat{L}^2 \right)\Bigg) \Exp{\norm{x^{t+1} - x^t}^2} \\
      &\quad + \frac{\gamma}{b} \Exp{\norm{h^{t} - \nabla f(x^{t})}^2} + \left(\frac{8 b \gamma \omega (2 \omega + 1)}{n \probavailable^2} + \frac{2 \gamma \left(\probavailable - \probpairaa\right)}{n \probavailable^2}\right)\Exp{\frac{1}{n}\sum_{i=1}^n\norm{h^{t}_i - \nabla f_i(x^{t})}^2} \\
      &\quad + \left(\frac{8 b^2 \gamma \omega (2 \omega + 1)}{n \probavailable^2} + \frac{2 \gamma b}{n \probavailable} + \left(\frac{8 b \gamma \omega (2 \omega + 1)}{n \probavailable^2} + \frac{2 \gamma \left(\probavailable - \probpairaa\right)}{n \probavailable^2}\right) \frac{2 b^2}{\probavailable}\right) \frac{\sigma^2}{B} \\
      &= \Exp{f(x^t)} - \frac{\gamma}{2}\Exp{\norm{\nabla f(x^t)}^2} \\
      &\quad + \frac{\gamma (2 \omega + 1)}{\probavailable} \Exp{\norm{g^{t} - h^{t}}^2} + \frac{\gamma (\left(2 \omega + 1\right)\probavailable - \probpairaa)}{n \probavailable^2} \Exp{\frac{1}{n}\sum_{i=1}^n\norm{g^{t}_i - h^{t}_i}^2} \\
      &\quad - \Bigg(\frac{1}{2\gamma} - \frac{L}{2} - \frac{24 \gamma \omega (2 \omega + 1)}{n \probavailable^2} \left(\frac{(1 - b)^2 L_{\sigma}^2}{B} + \widehat{L}^2\right) \\
      &\qquad\quad - \frac{6 \gamma}{n \probavailable b}\left(\frac{(1 - b)^2 L_{\sigma}^2}{B} + \left(1 - \frac{\probpairaa}{\probavailable}\right) \widehat{L}^2 \right)\Bigg) \Exp{\norm{x^{t+1} - x^t}^2} \\
      &\quad + \frac{\gamma}{b} \Exp{\norm{h^{t} - \nabla f(x^{t})}^2} + \left(\frac{8 b \gamma \omega (2 \omega + 1)}{n \probavailable^2} + \frac{2 \gamma \left(\probavailable - \probpairaa\right)}{n \probavailable^2}\right)\Exp{\frac{1}{n}\sum_{i=1}^n\norm{h^{t}_i - \nabla f_i(x^{t})}^2} \\
      &\quad + \left(\frac{8 b^2 \gamma \omega (2 \omega + 1)}{n \probavailable^2} + \frac{2 \gamma b}{n \probavailable} + \left(\frac{8 b \gamma \omega (2 \omega + 1)}{n \probavailable^2} + \frac{2 \gamma \left(\probavailable - \probpairaa\right)}{n \probavailable^2}\right) \frac{2 b^2}{\probavailable}\right) \frac{\sigma^2}{B}.
    \end{align*}
    Also, we can simplify the last term:
    \begin{align*}
      &\left(\frac{8 b \gamma \omega (2 \omega + 1)}{n \probavailable^2} + \frac{2 \gamma \left(\probavailable - \probpairaa\right)}{n \probavailable^2}\right) \frac{2 b^2}{\probavailable} \\
      &= \frac{16 b^3 \gamma \omega (2 \omega + 1)}{n \probavailable^3} + \frac{4 b^2 \gamma \left(1 - \frac{\probpairaa}{\probavailable}\right)}{n \probavailable^2} \\
      &\leq \frac{16 b^2 \gamma \omega (2 \omega + 1)}{n \probavailable^2} + \frac{4 b \gamma}{n \probavailable},
    \end{align*}
    thus
    \begin{align*}
      &\Exp{f(x^{t + 1})} + \frac{\gamma (2 \omega + 1)}{\probavailable} \Exp{\norm{g^{t+1} - h^{t+1}}^2} + \frac{\gamma (\left(2 \omega + 1\right)\probavailable - \probpairaa)}{n \probavailable^2} \Exp{\frac{1}{n}\sum_{i=1}^n\norm{g^{t+1}_i - h^{t+1}_i}^2}\\
      &\quad  + \frac{\gamma}{b} \Exp{\norm{h^{t+1} - \nabla f(x^{t+1})}^2} + \left(\frac{8 b \gamma \omega (2 \omega + 1)}{n \probavailable^2} + \frac{2 \gamma \left(\probavailable - \probpairaa\right)}{n \probavailable^2}\right) \Exp{\frac{1}{n}\sum_{i=1}^n\norm{h^{t+1}_i - \nabla f_i(x^{t+1})}^2}\\
      &\leq \Exp{f(x^t)} - \frac{\gamma}{2}\Exp{\norm{\nabla f(x^t)}^2} \\
      &\quad + \frac{\gamma (2 \omega + 1)}{\probavailable} \Exp{\norm{g^{t} - h^{t}}^2} + \frac{\gamma (\left(2 \omega + 1\right)\probavailable - \probpairaa)}{n \probavailable^2} \Exp{\frac{1}{n}\sum_{i=1}^n\norm{g^{t}_i - h^{t}_i}^2} \\
      &\quad - \Bigg(\frac{1}{2\gamma} - \frac{L}{2} - \frac{24 \gamma \omega (2 \omega + 1)}{n \probavailable^2} \left(\frac{(1 - b)^2 L_{\sigma}^2}{B} + \widehat{L}^2\right) \\
      &\qquad\quad - \frac{6 \gamma}{n \probavailable b}\left(\frac{(1 - b)^2 L_{\sigma}^2}{B} + \left(1 - \frac{\probpairaa}{\probavailable}\right) \widehat{L}^2 \right)\Bigg) \Exp{\norm{x^{t+1} - x^t}^2} \\
      &\quad + \frac{\gamma}{b} \Exp{\norm{h^{t} - \nabla f(x^{t})}^2} + \left(\frac{8 b \gamma \omega (2 \omega + 1)}{n \probavailable^2} + \frac{2 \gamma \left(\probavailable - \probpairaa\right)}{n \probavailable^2}\right)\Exp{\frac{1}{n}\sum_{i=1}^n\norm{h^{t}_i - \nabla f_i(x^{t})}^2} \\
      &\quad + \left(\frac{24 b^2 \gamma \omega (2 \omega + 1)}{n \probavailable^2} + \frac{6 \gamma b}{n \probavailable}\right) \frac{\sigma^2}{B}.
    \end{align*}
    Using Lemma~\ref{lemma:gamma} and the assumption about $\gamma,$ we get
    \begin{align*}
      &\Exp{f(x^{t + 1})} + \frac{\gamma (2 \omega + 1)}{\probavailable} \Exp{\norm{g^{t+1} - h^{t+1}}^2} + \frac{\gamma (\left(2 \omega + 1\right)\probavailable - \probpairaa)}{n \probavailable^2} \Exp{\frac{1}{n}\sum_{i=1}^n\norm{g^{t+1}_i - h^{t+1}_i}^2}\\
      &\quad  + \frac{\gamma}{b} \Exp{\norm{h^{t+1} - \nabla f(x^{t+1})}^2} + \left(\frac{8 b \gamma \omega (2 \omega + 1)}{n \probavailable^2} + \frac{2 \gamma \left(\probavailable - \probpairaa\right)}{n \probavailable^2}\right) \Exp{\frac{1}{n}\sum_{i=1}^n\norm{h^{t+1}_i - \nabla f_i(x^{t+1})}^2}\\
      &\leq \Exp{f(x^t)} - \frac{\gamma}{2}\Exp{\norm{\nabla f(x^t)}^2} \\
      &\quad + \frac{\gamma (2 \omega + 1)}{\probavailable} \Exp{\norm{g^{t} - h^{t}}^2} + \frac{\gamma (\left(2 \omega + 1\right)\probavailable - \probpairaa)}{n \probavailable^2} \Exp{\frac{1}{n}\sum_{i=1}^n\norm{g^{t}_i - h^{t}_i}^2} \\
      &\quad + \frac{\gamma}{b} \Exp{\norm{h^{t} - \nabla f(x^{t})}^2} + \left(\frac{8 b \gamma \omega (2 \omega + 1)}{n \probavailable^2} + \frac{2 \gamma \left(\probavailable - \probpairaa\right)}{n \probavailable^2}\right)\Exp{\frac{1}{n}\sum_{i=1}^n\norm{h^{t}_i - \nabla f_i(x^{t})}^2} \\
      &\quad + \left(\frac{24 b^2 \gamma \omega (2 \omega + 1)}{n \probavailable^2} + \frac{6 \gamma b}{n \probavailable}\right) \frac{\sigma^2}{B}.
    \end{align*}
    It is left to apply Lemma~\ref{lemma:good_recursion} with 
    \begin{eqnarray*}
      \Psi^t &=& \frac{(2 \omega + 1)}{\probavailable} \Exp{\norm{g^{t} - h^{t}}^2} + \frac{(\left(2 \omega + 1\right)\probavailable - \probpairaa)}{n \probavailable^2} \Exp{\frac{1}{n}\sum_{i=1}^n\norm{g^{t}_i - h^{t}_i}^2} \\
      &+& \frac{1}{b} \Exp{\norm{h^{t} - \nabla f(x^{t})}^2} + \left(\frac{8 b \omega (2 \omega + 1)}{n \probavailable^2} + \frac{2 \left(\probavailable - \probpairaa\right)}{n \probavailable^2}\right)\Exp{\frac{1}{n}\sum_{i=1}^n\norm{h^{t}_i - \nabla f_i(x^{t})}^2}
    \end{eqnarray*}
    and $C = \left(\frac{24 b^2 \omega (2 \omega + 1)}{\probavailable^2} + \frac{6b}{\probavailable}\right) \frac{\sigma^2}{n B}$
    to conclude the proof.
  \end{proof}

  \COROLLARYSTOCHASTIC*

  \begin{proof}
    Using the result from Theorem~\ref{theorem:stochastic}, we have
    \begin{align*}
        &\Exp{\norm{\nabla f(\widehat{x}^T)}^2} \\
        &\leq \frac{1}{T}\vast[2 \Delta_0\left(L + \sqrt{\frac{48 \omega (2 \omega + 1)}{n \probavailable^2} \left(\widehat{L}^2 + \frac{(1 - b)^2 L_{\sigma}^2}{B}\right) + \frac{12}{n \probavailable b}\left(\left(1 - \frac{\probpairaa}{\probavailable}\right) \widehat{L}^2 + \frac{(1 - b)^2 L_{\sigma}^2}{B}\right)}\right) \\
        &\quad + \frac{2}{b} \norm{h^{0} - \nabla f(x^{0})}^2 + \left(\frac{32 b \omega (2 \omega + 1)}{n \probavailable^2} + \frac{4 \left(1 - \frac{\probpairaa}{\probavailable}\right)}{n \probavailable}\right)\left(\frac{1}{n}\sum_{i=1}^n\norm{h^{0}_i - \nabla f_i(x^{0})}^2\right)\vast] \\
        &\quad+ \left(\frac{48 b^2 \omega (2 \omega + 1)}{\probavailable^2} + \frac{12 b}{\probavailable}\right) \frac{\sigma^2}{n B}
    \end{align*}
    We choose $b$ to ensure $\left(\frac{48 b^2 \omega (2 \omega + 1)}{\probavailable^2} + \frac{12 b}{\probavailable}\right) \frac{\sigma^2}{n B} = \Theta\left(\varepsilon\right).$ Note that $\frac{1}{b} = \Theta\left(\max\left\{\frac{\omega}{\probavailable} \sqrt{\frac{\sigma^2}{n \varepsilon B}}, \frac{\sigma^2}{\probavailable n \varepsilon B}\right\}\right) \leq \Theta\left(\max\left\{\frac{\omega^2}{\probavailable}, \frac{\sigma^2}{\probavailable n \varepsilon B}\right\}\right),$ thus
    \begin{align*}
        &\Exp{\norm{\nabla f(\widehat{x}^T)}^2} \\
        &= \cO\vast(\frac{1}{T}\vast[\Delta_0\left(L + \frac{\omega}{\probavailable \sqrt{n}} \left(\widehat{L} + \frac{L_{\sigma}}{\sqrt{B}}\right) + \sqrt{\frac{\sigma^2}{\probavailable^2 \varepsilon n^2 B}}\left(\mathbbm{1}_{\probavailable}\widehat{L} + \frac{L_{\sigma}}{\sqrt{B}}\right)\right)\\
        &\quad + \frac{1}{b} \norm{h^{0} - \nabla f(x^{0})}^2 + \left(\frac{b \omega^2}{n \probavailable^2} + \frac{1}{n \probavailable}\right)\left(\frac{1}{n}\sum_{i=1}^n\norm{h^{0}_i - \nabla f_i(x^{0})}^2\right)\vast] + \varepsilon\vast),
    \end{align*}
    where $\mathbbm{1}_{\probavailable} = \sqrt{1 - \frac{\probpairaa}{\probavailable}}.$
    It enough to take the following $T$ to get $\varepsilon$-solution.
    \begin{align*}
        &T=\cO\vast(\frac{1}{\varepsilon}\vast[\Delta_0\left(L + \frac{\omega}{\probavailable \sqrt{n}} \left(\widehat{L} + \frac{L_{\sigma}}{\sqrt{B}}\right) + \sqrt{\frac{\sigma^2}{\probavailable^2 \varepsilon n^2 B}}\left(\mathbbm{1}_{\probavailable}\widehat{L} + \frac{L_{\sigma}}{\sqrt{B}}\right)\right)\\
        &+ \frac{1}{b} \norm{h^{0} - \nabla f(x^{0})}^2 + \left(\frac{b \omega^2}{n \probavailable^2} + \frac{1}{n \probavailable}\right)\left(\frac{1}{n}\sum_{i=1}^n\norm{h^{0}_i - \nabla f_i(x^{0})}^2\right)\vast]\vast).
    \end{align*}
    Let us bound the norms:
    \begin{eqnarray*}
        \Exp{\norm{h^{0} - \nabla f(x^{0})}^2} &=& \Exp{\norm{\frac{1}{n} \sum_{i=1}^n \frac{1}{B_{\textnormal{init}}} \sum_{k = 1}^{B_{\textnormal{init}}} \nabla f_i(x^0; \xi^0_{ik}) - \nabla f(x^{0})}^2} \\
        &=& \frac{1}{n^2 B_{\textnormal{init}}^2}\sum_{i=1}^n \sum_{k = 1}^{B_{\textnormal{init}}} \Exp{\norm{\nabla f_i(x^0; \xi^0_{ik}) - \nabla f_i(x^{0})}^2}\\
        &\leq& \frac{\sigma^2}{n B_{\textnormal{init}}}.
    \end{eqnarray*}
    Using the same reasoning, one cat get $\frac{1}{n}\sum_{i=1}^n\Exp{\norm{h^{0}_i - \nabla f_i(x^{0})}^2} \leq \frac{\sigma^2}{B_{\textnormal{init}}}.$ 
    Combining all inequalities, we have

    \begin{align*}
      T&=\cO\vast(\frac{1}{\varepsilon}\vast[\Delta_0\left(L + \frac{\omega}{\probavailable \sqrt{n}} \left(\widehat{L} + \frac{L_{\sigma}}{\sqrt{B}}\right) + \sqrt{\frac{\sigma^2}{\probavailable^2 \varepsilon n^2 B}}\left(\mathbbm{1}_{\probavailable}\widehat{L} + \frac{L_{\sigma}}{\sqrt{B}}\right)\right)\\
      &\qquad + \frac{\sigma^2}{b n B_{\textnormal{init}}}  + \frac{b \omega^2 \sigma^2}{n \probavailable^2 B_{\textnormal{init}}} + \frac{\sigma^2}{n \probavailable B_{\textnormal{init}}}\vast]\vast).
    \end{align*}

    Using the choice of $B_{\textnormal{init}}$ and $b,$ we obtain

    \begin{align*}
      T&=\cO\vast(\frac{1}{\varepsilon}\vast[\Delta_0\left(L + \frac{\omega}{\probavailable \sqrt{n}} \left(\widehat{L} + \frac{L_{\sigma}}{\sqrt{B}}\right) + \sqrt{\frac{\sigma^2}{\probavailable^2 \varepsilon n^2 B}}\left(\mathbbm{1}_{\probavailable}\widehat{L} + \frac{L_{\sigma}}{\sqrt{B}}\right)\right)\\
      &\qquad + \frac{\sigma^2}{\sqrt{\probavailable} n B}  + \frac{b^2 \omega^2 \sigma^2}{n \probavailable^{5/2} B} + \frac{b \sigma^2}{\probavailable^{3/2} n B}\vast]\vast) \\
      &=\cO\vast(\frac{1}{\varepsilon}\vast[\Delta_0\left(L + \frac{\omega}{\probavailable \sqrt{n}} \left(\widehat{L} + \frac{L_{\sigma}}{\sqrt{B}}\right) + \sqrt{\frac{\sigma^2}{\probavailable^2 \varepsilon n^2 B}}\left(\mathbbm{1}_{\probavailable}\widehat{L} + \frac{L_{\sigma}}{\sqrt{B}}\right)\right)\\
      &\qquad + \frac{\sigma^2}{\sqrt{\probavailable} n B}  + \frac{\varepsilon}{\sqrt{\probavailable}}\vast]\vast) \\
      &=\cO\vast(\frac{\Delta_0}{\varepsilon}\vast[L + \frac{\omega}{\probavailable \sqrt{n}} \left(\widehat{L} + \frac{L_{\sigma}}{\sqrt{B}}\right) + \sqrt{\frac{\sigma^2}{\probavailable^2 \varepsilon n^2 B}}\left(\mathbbm{1}_{\probavailable}\widehat{L} + \frac{L_{\sigma}}{\sqrt{B}}\right)\vast] + \frac{\sigma^2}{\sqrt{\probavailable} n \varepsilon B}  + \frac{1}{\sqrt{\probavailable}} \vast).
    \end{align*}

    Using $\frac{\sigma^2}{n \varepsilon B} \geq 1,$ we can conclude the proof of the inequality. The number of stochastic gradients that each node calculates equals $B_{\textnormal{init}} + 2BT = \cO(B_{\textnormal{init}} + BT).$
\end{proof}

\COROLLARYSTOCHASTICRANDK*

\begin{proof}
  The communication complexity equals
  \begin{eqnarray*}
      \cO\left(d + K T\right) &=& \cO\left(d + \frac{\Delta_0}{\varepsilon}\vast[K L + K\frac{\omega}{\probavailable \sqrt{n}} \left(\widehat{L} + \frac{L_{\sigma}}{\sqrt{B}}\right) + K \sqrt{\frac{\sigma^2}{\probavailable^2 \varepsilon n^2 B}}\left(\mathbbm{1}_{\probavailable}\widehat{L} + \frac{L_{\sigma}}{\sqrt{B}}\right)\vast] + K\frac{\sigma^2}{\sqrt{\probavailable} n \varepsilon B}\right).
  \end{eqnarray*}
  Due to $B \leq \frac{L_{\sigma}^2}{\mathbbm{1}_{\probavailable}^2\widehat{L}^2}, $ we have $\mathbbm{1}_{\probavailable}\widehat{L} + \frac{L_{\sigma}}{\sqrt{B}} \leq \frac{2L_{\sigma}}{\sqrt{B}}$ and
  \begin{eqnarray*}
    \cO\left(d + K T\right) &=& \cO\left(d + \frac{\Delta_0}{\varepsilon}\vast[K L + K\frac{\omega}{\probavailable \sqrt{n}} \left(\widehat{L} + \frac{L_{\sigma}}{\sqrt{B}}\right) + K \sqrt{\frac{\sigma^2}{\probavailable^2 \varepsilon n^2 B}}\frac{L_{\sigma}}{\sqrt{B}}\vast] + K\frac{\sigma^2}{\sqrt{\probavailable} n \varepsilon B}\right).
\end{eqnarray*}
  From Theorem~\ref{theorem:rand_k}, we have $\omega + 1 = \frac{d}{K}.$ Since $K = \Theta\left(\frac{B d \sqrt{\varepsilon n}}{\sigma}\right) = \cO\left(\frac{d}{\probavailable\sqrt{n}}\right),$ the communication complexity equals
  \begin{eqnarray*}
      \cO\left(d + K T\right) &=& \cO\left(d + \frac{\Delta_0}{\varepsilon}\vast[\frac{d}{\probavailable\sqrt{n}}L + \frac{d}{\probavailable \sqrt{n}} \left(\widehat{L} + \frac{L_{\sigma}}{\sqrt{B}}\right) + \frac{d}{\probavailable \sqrt{n}} L_{\sigma}\vast] + \frac{d \sigma}{\sqrt{\probavailable} \sqrt{n \varepsilon}}\right) \\
      &=& \cO\left(\frac{d \sigma}{\sqrt{\probavailable} \sqrt{n \varepsilon}} + \frac{L_{\sigma} \Delta_0 d}{\probavailable \sqrt{n} \varepsilon}\right) \\
  \end{eqnarray*}
  And the expected number of stochastic gradient calculations per node equals
  \begin{align*}
      &\cO\left(B_{\textnormal{init}} + B T\right) \\
      &= \cO\left(\frac{\sigma^2}{\sqrt{\probavailable} n \varepsilon} +  \frac{B \omega}{\sqrt{\probavailable}} \sqrt{\frac{\sigma^2}{n \varepsilon B}} + \frac{\Delta_0}{\varepsilon}\vast[B L + B\frac{\omega}{\probavailable \sqrt{n}} \left(\widehat{L} + \frac{L_{\sigma}}{\sqrt{B}}\right) + B \sqrt{\frac{\sigma^2}{\probavailable^2 \varepsilon n^2 B}}\left(\mathbbm{1}_{\probavailable}\widehat{L} + \frac{L_{\sigma}}{\sqrt{B}}\right)\vast] + B\frac{\sigma^2}{\sqrt{\probavailable} n \varepsilon B}\right) \\
      &= \cO\left(\frac{\sigma^2}{\sqrt{\probavailable} n \varepsilon} +  \frac{B d}{K \sqrt{\probavailable}} \sqrt{\frac{\sigma^2}{n \varepsilon B}} + \frac{\Delta_0}{\varepsilon}\vast[B L + B\frac{d}{K \probavailable \sqrt{n}} \left(\widehat{L} + \frac{L_{\sigma}}{\sqrt{B}}\right) + B \sqrt{\frac{\sigma^2}{\probavailable^2 \varepsilon n^2 B}}\frac{L_{\sigma}}{\sqrt{B}}\vast] + \frac{\sigma^2}{\sqrt{\probavailable} n \varepsilon}\right) \\
      &= \cO\left(\frac{\sigma^2}{\sqrt{\probavailable} n \varepsilon} + \frac{\sigma^2}{\sqrt{\probavailable} n \varepsilon \sqrt{B}} + \frac{\Delta_0}{\varepsilon}\vast[\frac{\sigma}{\probavailable\sqrt{\varepsilon} n} L + \frac{\sigma}{\probavailable \sqrt{\varepsilon} n} \left(\widehat{L} + \frac{L_{\sigma}}{\sqrt{B}}\right) + \frac{\sigma}{\probavailable \sqrt{\varepsilon} n}L_{\sigma}\vast]\right) \\
      &= \cO\left(\frac{\sigma^2}{\sqrt{\probavailable} n \varepsilon} + \frac{L_{\sigma} \Delta_0 \sigma}{\probavailable \varepsilon^{\nicefrac{3}{2}} n}\right).
  \end{align*}
\end{proof}

\newpage
\section{Analysis of \algname{\algorithmname} under Polyak-\L ojasiewicz Condition}
\label{sec:pl_condition}

In this section, we provide the theoretical convergence rates of \algname{\algorithmname} under Polyak-\L ojasiewiczc Condition.

\begin{assumption}
  \label{ass:pl_condition}
  The function $f$ satisfy (Polyak-\L ojasiewicz) P\L-condition:
  \begin{align}
      \label{eq:pl_condition}
      \norm{\nabla f(x)}^2 \geq 2\mu(f(x) - f^*), \quad \forall x \in \R,
  \end{align}
  where $f^* = \inf_{x \in \R^d} f(x) > -\infty.$
\end{assumption}

Under Polyak-\L ojasiewicz condition, a (random) point $\widehat{x}$ is $\varepsilon$-solution, if $\Exp{f(\widehat{x})} - f^* \leq \varepsilon.$

We now provide the convergence rates of \algname{\algorithmname} under P\L-condition.

\subsection{Gradient Setting}

\begin{restatable}{theorem}{CONVERGENCEPL}
  \label{theorem:gradient_oracle_pl}
  Suppose that Assumption \ref{ass:lower_bound}, \ref{ass:lipschitz_constant}, \ref{ass:nodes_lipschitz_constant}, \ref{ass:compressors}, \ref{ass:partial_participation} and \ref{ass:pl_condition} hold. Let us take $a = \frac{\probavailable}{2 \omega + 1},$ $b = \frac{\probavailable}{2 - \probavailable},$ 
  $$\gamma \leq \min\left\{\left(L + \sqrt{\frac{200 \omega \left(2 \omega + 1\right)}{n \probavailable^2} + \frac{48}{n \probavailable^2}\left(1 - \frac{\probpairaa}{\probavailable}\right)}\widehat{L}\right)^{-1}, \frac{a}{4\mu}\right\}, $$ 
  and $h^{0}_i = g^{0}_i = \nabla f_i(x^0)$ for all $i \in [n]$ in Algorithm~\ref{alg:main_algorithm} \algname{(\algorithmname)},
  then $\Exp{f(x^{T})} - f^* \leq (1 - \gamma \mu)^{T} \Delta_0.$
\end{restatable}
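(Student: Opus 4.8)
The plan is to reproduce the Lyapunov argument behind Theorem~\ref{theorem:gradient_oracle}, but to extract a geometric contraction instead of a telescoping sum. First I would start from the descent estimate of Lemma~\ref{lemma:page_lemma}, the compression bounds \eqref{eq:compressor_global_error}--\eqref{eq:compressor_local_error}, and the three inequalities of Lemma~\ref{lemma:gradient} --- none of these uses the P\L{} property, so they apply verbatim --- and combine them exactly as in the proof of Theorem~\ref{theorem:gradient_oracle} into a one-step inequality
\begin{align*}
  \Exp{f(x^{t+1})} + \gamma \Psi^{t+1}
  \leq \Exp{f(x^t)} - \tfrac{\gamma}{2}\Exp{\norm{\nabla f(x^t)}^2}
  - \Gamma^t \Exp{\norm{x^{t+1}-x^t}^2} + \gamma \widetilde{\Psi}^{t},
\end{align*}
where $\Psi^t$ is a nonnegative linear combination of $\norm{g^t-h^t}^2$, $\tfrac1n\sum_i\norm{g^t_i-h^t_i}^2$, $\norm{h^t-\nabla f(x^t)}^2$ and $\tfrac1n\sum_i\norm{h^t_i-\nabla f_i(x^t)}^2$, and $\widetilde{\Psi}^t$ is the same combination with each coefficient shrunk. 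The only change from that proof is that I would take the four Lyapunov weights a constant factor (roughly $2$) larger --- e.g. $\kappa = \tfrac{2\gamma}{a}$ in front of $\norm{g^t-h^t}^2$ and $\nu = \tfrac{2\gamma}{b}$ in front of $\norm{h^t-\nabla f(x^t)}^2$ --- so that, after absorbing the $\gamma\norm{h^t-\nabla f(x^t)}^2$ term coming from Lemma~\ref{lemma:page_lemma} and the $\norm{k^{t+1}_i}^2$ cross-terms, the same arithmetic identities that in the non-P\L{} proof gave ``new weight $\le$ old weight'' now give ``new weight $\le (1-\gamma\mu)\cdot$ old weight'', i.e. $\widetilde{\Psi}^t \le (1-\gamma\mu)\Psi^t$. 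Enlarging the weights is precisely what inflates the constants in the stepsize bound from $48,16$ to $200,48$.

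Two quantitative ingredients then finish the argument. The condition $\gamma \le \big(L + \sqrt{\cdots}\,\widehat{L}\big)^{-1}$ ensures, via Lemma~\ref{lemma:gamma}, that $\Gamma^t \ge 0$, so the $\norm{x^{t+1}-x^t}^2$ term is simply dropped. The condition $\gamma \le \tfrac{a}{4\mu}$ ensures $\gamma\mu$ is small relative to every intrinsic contraction rate of the auxiliary sequences: the weighted error coefficients contract per step by factors of order $a(1-a)$ and $b(1-b)$ with $a = \tfrac{\probavailable}{2\omega+1}$, $b = \tfrac{\probavailable}{2-\probavailable}$, and since $a \le \probavailable$ and $b \ge \probavailable/2$ one has $a/4 \le b/2$, so $\gamma\mu \le a/4$ suffices for all four weights to absorb a factor $(1-\gamma\mu)$. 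Finally, Assumption~\ref{ass:pl_condition} gives $-\tfrac{\gamma}{2}\norm{\nabla f(x^t)}^2 \le -\gamma\mu\,(f(x^t)-f^*)$, so with the Lyapunov function $\Phi^t \eqdef f(x^t) - f^* + \gamma\Psi^t$ the one-step inequality becomes $\Exp{\Phi^{t+1}} \le (1-\gamma\mu)\Exp{\Phi^t}$. Unrolling gives $\Exp{\Phi^T} \le (1-\gamma\mu)^T \Phi^0$, and the initialization $g^0_i = h^0_i = \nabla f_i(x^0)$ makes all four auxiliary terms vanish, so $\Phi^0 = \Delta_0$; since $\Phi^T \ge f(x^T) - f^*$ this yields the claim.

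I expect the main obstacle to be the bookkeeping in the first step: one must choose the four Lyapunov weights and their constant multiples so that \emph{simultaneously} (a) each of the four auxiliary coefficients contracts by $(1-\gamma\mu)$, (b) the residual mass routed into $\norm{x^{t+1}-x^t}^2$ is covered by the stated bound on $\gamma$, and (c) the transfer of mass between the ``global'' errors ($\norm{g^t-h^t}^2$, $\norm{h^t-\nabla f(x^t)}^2$) and the ``local'' errors ($\tfrac1n\sum_i\norm{g^t_i-h^t_i}^2$, $\tfrac1n\sum_i\norm{h^t_i-\nabla f_i(x^t)}^2$) closes. This is the same juggling act as in Theorem~\ref{theorem:gradient_oracle}, now carried out with a contraction margin; the simplifications used there --- $b \ge \probavailable/2$, $a \le \probavailable$, and $\probpairaa \le \probavailable^2$ --- keep the cross-coefficients manageable.
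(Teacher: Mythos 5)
Your proposal is correct and follows essentially the same route as the paper's proof: the paper likewise enlarges the four Lyapunov weights to $\kappa=\tfrac{2\gamma}{a}$, $\nu=\tfrac{2\gamma}{b}$ (and correspondingly scaled $\eta,\rho$) so that each coefficient contracts by $1-\tfrac{a}{2}$ or $1-\tfrac{b}{2}$, uses $\gamma\mu\le\tfrac{a}{4}\le\min\{\tfrac{a}{2},\tfrac{b}{2}\}$ to convert these into $(1-\gamma\mu)$, absorbs the $\norm{x^{t+1}-x^t}^2$ mass via Lemma~\ref{lemma:gamma} with the inflated constants $200$ and $48$, and closes with the P\L{} inequality and the vanishing initialization exactly as you describe (this last step is packaged as Lemma~\ref{lemma:good_recursion_pl} with $C=0$). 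The only cosmetic difference is that the paper states the doubled-weight one-step bound as a separate generic lemma (Lemma~\ref{lemma:main_lemma_pl}) before the theorem-specific bookkeeping.
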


Let us provide bounds up to logarithmic factors and use $\widetilde{\cO}\left(\cdot\right)$ notation. The provided theorem states that to get $\varepsilon$-solution \algname{\algorithmname} have to run
\begin{align*}
  \widetilde{\cO}\left(\frac{\omega + 1}{\probavailable} + \frac{L}{\mu} + \frac{\omega \widehat{L}}{\probavailable \mu \sqrt{n}} + \frac{\widehat{L}}{\probavailable \mu \sqrt{n}}\right),
\end{align*}
communication rounds. The method \algname{DASHA} from \citep{tyurin2022dasha}, have to run
\begin{align*}
  \widetilde{\cO}\left(\omega + \frac{L}{\mu} + \frac{\omega \widehat{L}}{\mu \sqrt{n}}\right),
\end{align*}
communication rounds to get $\varepsilon$-solution. The difference is the same as in the general nonconvex case (see Section~\ref{sec:gradien_setting}). Up to Lipschitz constants factors, we get the degeneration up to $\nicefrac{1}{\probavailable}$ factor due to the partial participation.

\subsection{Finite-Sum Setting}

\begin{restatable}{theorem}{CONVERGENCEPLPAGE}
  \label{theorem:page_pl}
  Suppose that Assumption \ref{ass:lower_bound}, \ref{ass:lipschitz_constant}, \ref{ass:nodes_lipschitz_constant}, \ref{ass:compressors}, \ref{ass:max_lipschitz_constant}, \ref{ass:partial_participation}, and \ref{ass:pl_condition} hold. Let us take $a = \frac{\probavailable}{2 \omega + 1},$ probability $\probpage = \frac{B}{m + B}, $ $b = \frac{\probpage \probavailable}{2 - \probavailable},$
  {\scriptsize $$\gamma \leq \min\left\{\left(L + \sqrt{\frac{200 \omega (2 \omega + 1)}{n \probavailable^2} \left(\widehat{L}^2 + \frac{(1 - \probpage)L_{\max}^2}{B}\right) + \frac{48}{n \probavailable^2 \probpage}\left(\left(1 - \frac{\probpairaa}{\probavailable}\right) \widehat{L}^2 + \frac{(1 - \probpage)L_{\max}^2}{B}\right)}\right)^{-1}, \frac{a}{2\mu} , \frac{b}{2\mu}\right\},$$}
  and $h^{0}_i = g^{0}_i = \nabla f_i(x^0)$ for all $i \in [n]$ in Algorithm~\ref{alg:main_algorithm} \algname{(\algorithmname-PAGE)},
  then $\Exp{f(x^{T})} - f^* \leq (1 - \gamma \mu)^{T} \Delta_0.$
\end{restatable}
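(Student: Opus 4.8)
The plan is to mirror the Lyapunov-descent argument used for Theorem~\ref{theorem:page}, replacing the averaging step (Lemma~\ref{lemma:good_recursion}) by a one-step geometric contraction, exactly as Theorem~\ref{theorem:gradient_oracle_pl} does in the gradient setting. I would reuse Lemmas~\ref{lemma:main_lemma} and \ref{lemma:gradient_page} verbatim and work with the Lyapunov function
$$\Phi^t \eqdef f(x^t) - f^* + \gamma\,\Psi^t,$$
where $\Psi^t$ is the same nonnegative combination of the four error quantities $\norm{g^t-h^t}^2$, $\tfrac1n\sum_i\norm{g^t_i-h^t_i}^2$, $\norm{h^t-\nabla f(x^t)}^2$, $\tfrac1n\sum_i\norm{h^t_i-\nabla f_i(x^t)}^2$ (with weights $\tfrac{2\omega+1}{\probavailable}$, $\tfrac{(2\omega+1)\probavailable-\probpairaa}{n\probavailable^2}$, $\tfrac1b$, and $\tfrac{8b\omega(2\omega+1)}{n\probavailable^2\probpage}+\tfrac{2(\probavailable-\probpairaa)}{n\probavailable^2\probpage}$) already used in that proof. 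The target is the one-step inequality $\Exp{\Phi^{t+1}}\le(1-\gamma\mu)\Exp{\Phi^t}$; since the initialization $g^0_i=h^0_i=\nabla f_i(x^0)$ forces $\Psi^0=0$ and hence $\Phi^0=\Delta_0$ deterministically, and $f(x^T)-f^*\le\Phi^T$, iterating gives $\Exp{f(x^T)}-f^*\le(1-\gamma\mu)^T\Delta_0$ (the factor is in $[0,1)$ because $\gamma\mu\le a/2\le 1/2$).

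The one-step estimate is assembled from three ingredients. First, Lemma~\ref{lemma:page_lemma} together with the P\L\ inequality \eqref{eq:pl_condition} converts the descent into
$$\Exp{f(x^{t+1})}\le(1-\gamma\mu)\Exp{f(x^t)-f^*}+f^*-\Big(\tfrac{1}{2\gamma}-\tfrac{L}{2}\Big)\Exp{\norm{x^{t+1}-x^t}^2}+\gamma\Exp{\norm{g^t-h^t}^2}+\gamma\Exp{\norm{h^t-\nabla f(x^t)}^2}.$$
Second, the contractive recursions of Lemmas~\ref{lemma:main_lemma} and \ref{lemma:gradient_page} are added with the \emph{same} weights as in Theorem~\ref{theorem:page} and processed in the \emph{same} dependency order (the term $\tfrac1n\sum\norm{h_i-\nabla f_i}^2$ is innermost and gets the largest weight $\rho$, then $\norm{h-\nabla f}^2$ and $\tfrac1n\sum\norm{g_i-h_i}^2$, then $\norm{g-h}^2$). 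The only change from the non-P\L\ proof is that each ``new weight $\le$ old weight'' equality becomes a ``new weight $\le(1-\gamma\mu)\cdot$old weight'' inequality; since the self-contraction factors of the four recursions are bounded away from $1$ by amounts proportional to $a$ (the two compressor-error terms, via $\kappa,\eta\propto\gamma/a$) and to $b$ (the two gradient-estimator terms, via the already-verified bounds such as $\probpage(1-\tfrac{b}{\probpage})^2+(1-\probpage)\le1-b$ and $\tfrac{2(1-\probavailable)b^2}{\probavailable\probpage}+\probpage(1-\tfrac{b}{\probpage})^2+(1-\probpage)\le1-b$), these inequalities hold precisely because $\gamma\mu\le\tfrac{a}{2}$ and $\gamma\mu\le\tfrac{b}{2}$, which is what the new constraints $\gamma\le\tfrac{a}{2\mu}$ and $\gamma\le\tfrac{b}{2\mu}$ provide. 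Third, all $\norm{x^{t+1}-x^t}^2$ contributions produced by $\norm{k^{t+1}_i}^2$ and by the two $h$-recursions must fit under the budget $\tfrac{1}{2\gamma}-\tfrac{L}{2}$; this is discharged by Lemma~\ref{lemma:gamma} with $A$ the coefficient under the square root in the statement, and the inflated numerical constants ($200$ and $48$ in place of $48$ and $16$) are exactly the extra slack needed because the weights are no longer pinned down by tight equalities.

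The main obstacle is bookkeeping rather than a new idea: one must verify the four coefficient-matching inequalities with the $(1-\gamma\mu)$ factor and the single $\norm{x^{t+1}-x^t}^2$ absorption inequality \emph{simultaneously}, and check that the scalings $\kappa,\eta\propto\tfrac{\gamma}{a}$ and $\nu=\tfrac{\gamma}{b}$, $\rho\propto\tfrac{\gamma}{\probpage}$ remain mutually compatible. The tightest is the matching for $\tfrac1n\sum\norm{h_i-\nabla f_i}^2$, which sits at the bottom of the dependency chain and therefore absorbs cross-contributions from every other error term together with the P\L\ slack; tracking it is what forces the precise value of the constant and of the condition $\gamma\le\tfrac{b}{2\mu}$. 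Everything else is a line-by-line repetition of the proof of Theorem~\ref{theorem:page} for \algname{\algorithmname-PAGE}, with ``$\le$ old weight'' replaced throughout by ``$\le(1-\gamma\mu)\cdot$old weight''.
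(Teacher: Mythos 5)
Your overall strategy is the right one and matches the paper's: build a Lyapunov function $\Phi^t=f(x^t)-f^*+\gamma\Psi^t$, prove a one-step contraction $\Exp{\Phi^{t+1}}\le(1-\gamma\mu)\Exp{\Phi^t}$ via the P\L\ inequality (this is Lemma~\ref{lemma:good_recursion_pl}), note that the initialization forces $\Psi^0=0$, and unroll. However, there is a concrete gap in the key step: you keep the \emph{same} Lyapunov weights as in the non-P\L\ proof of Theorem~\ref{theorem:page} and assert that each coefficient-matching relation "new weight $\le$ old weight" can be upgraded to "new weight $\le(1-\gamma\mu)\cdot$old weight" because the self-contraction factors are bounded away from $1$. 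They are not, with those weights. For instance, with $\nu=\gamma/b$ the matching for $\norm{h^t-\nabla f(x^t)}^2$ is $\gamma+\nu(1-b)=\nu$ \emph{exactly} (zero slack), and with $\rho=\frac{8b\gamma\omega(2\omega+1)}{n\probavailable^2\probpage}+\frac{2\gamma(\probavailable-\probpairaa)}{n\probavailable^2\probpage}$ the incoming cross-term equals $b\rho$, so again $b\rho+(1-b)\rho=\rho$ with no room for the extra $(1-\gamma\mu)$ factor. Likewise $\kappa=\gamma/a$ gives $\gamma+\kappa(1-a)^2=\kappa(1-a+a^2)$, which is not bounded below $1$ by $\Theta(a)$ uniformly (at $a=1$, i.e.\ $\omega=0$, $\probavailable=1$, there is no slack at all). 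So the inequalities you need fail as stated.

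The fix — which is exactly what the paper does via Lemma~\ref{lemma:main_lemma_pl} and the proof of Theorem~\ref{theorem:page_pl} — is to \emph{double} the Lyapunov weights: $\kappa=\frac{2\gamma}{a}$, $\eta=\frac{4\gamma((2\omega+1)\probavailable-\probpairaa)}{n\probavailable^2}$, $\nu=\frac{2\gamma}{b}$, and $\rho=\frac{40b\gamma\omega(2\omega+1)}{n\probavailable^2\probpage}+\frac{8\gamma(\probavailable-\probpairaa)}{n\probavailable^2\probpage}$. With these choices each self-map contracts by a factor of at most $1-\frac{a}{2}$ or $1-\frac{b}{2}$ (e.g.\ $\gamma+\frac{2\gamma}{b}(1-b)=\frac{2\gamma}{b}(1-\frac{b}{2})$), and only then do the conditions $\gamma\le\frac{a}{2\mu}$ and $\gamma\le\frac{b}{2\mu}$ convert this slack into the required $(1-\gamma\mu)$ factor. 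The doubled weights then feed larger coefficients into the $\norm{x^{t+1}-x^t}^2$ budget, which is the actual origin of the inflated constants $200$ and $48$ in the stepsize bound (not merely "extra slack" from loose equalities). With this modification the rest of your argument goes through line by line.
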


The provided theorem states that to get $\varepsilon$-solution \algname{\algorithmname} have to run
\begin{align*}
  \widetilde{\cO}\left(\frac{\omega + 1}{\probavailable} + \frac{m}{\probavailable B} + \frac{L}{\mu} + \frac{\omega}{\probavailable \mu \sqrt{n}} \left(\widehat{L} + \frac{L_{\max}}{\sqrt{B}}\right) + \frac{\sqrt{m}}{\probavailable \mu \sqrt{n B}} \left(\widehat{L} + \frac{L_{\max}}{\sqrt{B}}\right)\right),
\end{align*}
communication rounds. The method \algname{DASHA-PAGE} from \citep{tyurin2022dasha}, have to run
\begin{align*}
  \widetilde{\cO}\left(\omega + \frac{m}{B} + \frac{L}{\mu} + \frac{\omega}{\mu \sqrt{n}} \left(\widehat{L} + \frac{L_{\max}}{\sqrt{B}}\right) + \frac{\sqrt{m}}{\mu \sqrt{n B}} \left(\frac{L_{\max}}{\sqrt{B}}\right)\right),
\end{align*}
communication rounds to get $\varepsilon$-solution. We can guarantee the degeneration up to $\nicefrac{1}{\probavailable}$ factor due to the partial participation only if $B = \cO\left(\frac{L_{\max}^2}{\widehat{L}^2}\right)$. The same conclusion we have in Section~\ref{sec:finite_sum_setting}.

\subsection{Stochastic Setting}

\begin{restatable}{theorem}{CONVERGENCEPLSTOCHASTIC}
  \label{theorem:stochastic_pl}
  Suppose that Assumption \ref{ass:lower_bound}, \ref{ass:lipschitz_constant}, \ref{ass:nodes_lipschitz_constant}, \ref{ass:compressors}, \ref{ass:stochastic_unbiased_and_variance_bounded}, \ref{ass:mean_square_smoothness}, \ref{ass:partial_participation} and \ref{ass:pl_condition} hold. Let us take $a = \frac{\probavailable}{2\omega + 1}$, $b~\in~\left(0, \frac{\probavailable}{2 - \probavailable}\right],$
  $$\gamma \leq \min\left\{\left(L + \sqrt{\frac{200 \omega (2 \omega + 1)}{n \probavailable^2} \left(\frac{(1 - b)^2 L_{\sigma}^2}{B} + \widehat{L}^2\right) + \frac{40}{n \probavailable b}\left(\frac{(1 - b)^2 L_{\sigma}^2}{B} + \left(1 - \frac{\probpairaa}{\probavailable}\right) \widehat{L}^2 \right)}\right)^{-1}, \frac{a}{2\mu} , \frac{b}{2\mu}\right\},$$
  and $h^{0}_i = g^{0}_i$ for all $i \in [n]$ in Algorithm~\ref{alg:main_algorithm} \algname{(\algorithmname-MVR)},
  then 
  \begin{align*}
      &\Exp{f(x^{T}) - f^*} \\
      &\leq (1 - \gamma \mu)^{T}\left(\Delta_0 + \frac{2 \gamma}{b} \norm{h^{0} - \nabla f(x^{0})}^2 + \left(\frac{40 \gamma b \omega (2 \omega + 1)}{n \probavailable^2} + \frac{8 \gamma \left(\probavailable - \probpairaa\right)}{n \probavailable^2}\right)\frac{1}{n}\sum_{i=1}^n\norm{h^{0}_i - \nabla f_i(x^{0})}^2\right) \\
      &\quad + \frac{1}{\mu}\left(\frac{100 b^2 \omega (2 \omega + 1)}{\probavailable^2} + \frac{20 b}{\probavailable}\right) \frac{\sigma^2}{n B}.
  \end{align*}
\end{restatable}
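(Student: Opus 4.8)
The plan is to mirror the nonconvex proof of Theorem~\ref{theorem:stochastic} but to carry a contraction factor through the Lyapunov function instead of merely unrolling. First I would invoke Lemma~\ref{lemma:main_lemma} and Lemma~\ref{lemma:gradient_mvr} exactly as in the proof of Theorem~\ref{theorem:stochastic}, together with the choices $a=\frac{\probavailable}{2\omega+1}$, $\nu=\frac{\gamma}{b}$, and $\rho=\frac{8b\gamma\omega(2\omega+1)}{n\probavailable^2}+\frac{2\gamma(\probavailable-\probpairaa)}{n\probavailable^2}$. The bookkeeping that collapses the $g$-error, $h$-error, and per-node $h_i$-error terms into a self-reproducing combination is the same; the only difference is that I must retain a $-\frac{\gamma}{2}\Exp{\norm{\nabla f(x^t)}^2}$ term and a negative multiple of $\Exp{\norm{x^{t+1}-x^t}^2}$ that the enlarged constants (here $200$ and $40$ in the definition of $\gamma$, versus $48$ and $12$) are chosen to absorb via Lemma~\ref{lemma:gamma}. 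The net effect is a one-step inequality of the shape
\begin{align*}
  \Exp{f(x^{t+1})-f^*} + \gamma\Psi^{t+1}
  &\leq \Exp{f(x^t)-f^*} - \tfrac{\gamma}{2}\Exp{\norm{\nabla f(x^t)}^2} + \gamma\Psi^{t} + \gamma C,
\end{align*}
with $\Psi^t$ the same nonnegative Lyapunov sequence as before and $C=\left(\frac{24b^2\omega(2\omega+1)}{\probavailable^2}+\frac{6b}{\probavailable}\right)\frac{\sigma^2}{nB}$ up to the constant inflation.

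Next I would convert this into a contraction. Using the P\L{} inequality \eqref{eq:pl_condition}, $\tfrac{\gamma}{2}\Exp{\norm{\nabla f(x^t)}^2}\geq \gamma\mu\Exp{f(x^t)-f^*}$, so the recursion becomes
\begin{align*}
  \Exp{f(x^{t+1})-f^*} + \gamma\Psi^{t+1}
  &\leq (1-\gamma\mu)\Exp{f(x^t)-f^*} + \gamma\Psi^{t} + \gamma C.
\end{align*}
The crucial remaining point is to show that the coefficient sequence $\Psi^t$ also contracts at rate $(1-\gamma\mu)$, i.e. that one can upgrade each ``$\leq\Psi^t$'' bound obtained for $\nu,\rho$ into ``$\leq(1-\gamma\mu)\Psi^t$''. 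This is exactly where the extra constraints $\gamma\leq\frac{a}{2\mu}$ and $\gamma\leq\frac{b}{2\mu}$ enter: they guarantee $1-a\leq 1-2\gamma\mu$ (used for the $g$-block, whose contraction factor is governed by $a$), $1-b\leq1-2\gamma\mu$ (for the $h$ and $h_i$ blocks), so that after the cross-term inflation the residual multipliers stay below $(1-\gamma\mu)$. Concretely I would recheck the three self-reproduction inequalities from the proof of Theorem~\ref{theorem:stochastic} — the ones establishing $\gamma+\nu(1-b)^2\leq\nu$, the $\rho$-inequality, and the intermediate $\kappa a$-step inside Lemma~\ref{lemma:main_lemma} — and verify that each leaves enough slack to replace $\Psi^t$ by $(1-\gamma\mu)\Psi^t$ when $\gamma\mu\leq a/2$ and $\gamma\mu\leq b/2$; this forces the slightly larger numerical constants ($200$, $40$) in the stepsize bound. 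Defining the combined potential $V^t\eqdef\Exp{f(x^t)-f^*}+\gamma\Psi^t$ then yields $V^{t+1}\leq(1-\gamma\mu)V^t+\gamma C$, hence $V^T\leq(1-\gamma\mu)^T V^0 + \frac{C}{\mu}$.

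Finally I would evaluate $V^0$. With the initialization $h^0_i=g^0_i$ the two $g$-blocks of $\Psi^0$ vanish, leaving $\Psi^0=\frac{1}{b}\norm{h^0-\nabla f(x^0)}^2+\left(\frac{8b\omega(2\omega+1)}{n\probavailable^2}+\frac{2(\probavailable-\probpairaa)}{n\probavailable^2}\right)\frac1n\sum_i\norm{h^0_i-\nabla f_i(x^0)}^2$; multiplying by $\gamma$ and inflating the constants to match the statement ($2\gamma/b$ and the $40,8$ coefficients) gives precisely the parenthetical factor multiplying $(1-\gamma\mu)^T$ in the theorem, and $\frac{C}{\mu}$ with the stated constants gives the additive $\frac{1}{\mu}\left(\frac{100b^2\omega(2\omega+1)}{\probavailable^2}+\frac{20b}{\probavailable}\right)\frac{\sigma^2}{nB}$ term. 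The main obstacle I anticipate is the second step: tracking all the cross-terms carefully enough to certify that every sub-block of $\Psi^t$ genuinely contracts at the common rate $(1-\gamma\mu)$ rather than at its own rate, since the blocks are coupled and one must choose a single contraction factor; this is a routine but delicate constant-chasing exercise, and it is the reason the numerical constants in $\gamma$ are larger here than in Theorem~\ref{theorem:stochastic}. Everything else — Lemma~\ref{lemma:main_lemma}, Lemma~\ref{lemma:gradient_mvr}, Lemma~\ref{lemma:gamma}, and the variance bounds on the initial batches — is reused verbatim.
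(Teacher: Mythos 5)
Your proposal matches the paper's proof in all essentials: the paper likewise reuses Lemma~\ref{lemma:gradient_mvr}, replaces Lemma~\ref{lemma:main_lemma} and Lemma~\ref{lemma:good_recursion} by their P\L{} counterparts (Lemmas~\ref{lemma:main_lemma_pl} and~\ref{lemma:good_recursion_pl}), and extracts the $(1-\gamma\mu)$ contraction of every block of $\Psi^t$ precisely from $\gamma\mu\leq a/2$ and $\gamma\mu\leq b/2$ before assembling $V^T\leq(1-\gamma\mu)^TV^0+C/\mu$. The one point worth making explicit is that the "slack" you invoke is created not by the larger stepsize constants but by doubling the Lyapunov weights at the outset ($\kappa=2\gamma/a$, $\nu=2\gamma/b$, and the $40,8$ coefficients in $\rho$ — which is why exactly these appear in the final bound), so that each self-reproduction inequality closes with a factor $1-a/2$ or $1-b/2$; starting from the nonconvex weights $\nu=\gamma/b$ the inequality $\gamma+\nu(1-b)^2\leq(1-\gamma\mu)\nu$ can fail for $b$ near $1$, and the enlarged constants $200,40$ in $\gamma$ are then only the downstream consequence of the doubled weights, as your closing remark about "inflating to $2\gamma/b$ and the $40,8$ coefficients" already anticipates.
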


The provided theorems states that to get $\varepsilon$-solution \algname{\algorithmname} have to run
\begin{align}
  \label{eq:pl_compare:new:mvr}
  \widetilde{\cO}\left(\frac{\omega + 1}{\probavailable} + \underbrace{\frac{\omega}{\probavailable}\sqrt{\frac{\sigma^2}{\mu n \varepsilon B}}}_{\mathcal{P}_2} + \frac{\sigma^2}{\probavailable \mu n \varepsilon B} + \frac{L}{\mu} + \frac{\omega}{\probavailable \mu \sqrt{n}} \left(\widehat{L} + \frac{L_{\sigma}}{\sqrt{B}}\right) + \underbrace{\frac{\sigma}{\probavailable n \mu^{\nicefrac{3}{2}} \sqrt{\varepsilon B}}\left(\widehat{L} + \frac{L_{\sigma}}{\sqrt{B}}\right)}_{\mathcal{P}_1}\right)
\end{align}
communication rounds. We take $b = \Theta\left(\min \left\{\ \frac{\probavailable}{\omega} \sqrt{\frac{\mu n \varepsilon B}{\sigma^2}}, \frac{\probavailable \mu n \varepsilon B}{\sigma^2}\right\}\right) \geq \Theta\left(\min \left\{\ \frac{\probavailable}{\omega^2}, \frac{\probavailable \mu n \varepsilon B}{\sigma^2}\right\}\right).$

The method \algname{DASHA-SYNC-MVR} from \citep{tyurin2022dasha}, have to run
\begin{align}
  \label{eq:pl_compare:old:mvr}
  \widetilde{\cO}\left(\omega + \frac{\sigma^2}{\mu n \varepsilon B} + \frac{L}{\mu} + \frac{\omega}{\mu \sqrt{n}} \left(\widehat{L} + \frac{L_{\sigma}}{\sqrt{B}}\right) + \frac{\sigma}{n \mu^{\nicefrac{3}{2}} \sqrt{\varepsilon B}}\left(\frac{L_{\sigma}}{\sqrt{B}}\right)\right)
\end{align}
communication rounds to get $\varepsilon$-solution\footnote{For simplicity, we omitted $\frac{d}{\zeta_{\cC}}$ term from the complexity in the stochastic setting, where $\zeta_{\cC}$ is defined in Definition~\ref{def:expected_density}. For instance, for the Rand$K$ compressor (see Definition~\ref{def:rand_k} and Theorem~\ref{theorem:rand_k}), $\zeta_{\cC} = K$ and $\frac{d}{\zeta_{\cC}} = \Theta\left(\omega\right).$}.

In the stochastic setting, the comparison is a little bit more complicated. As in the finite-sum setting, we have to take 
$B = \cO\left(\frac{L_{\sigma}^2}{\widehat{L}^2}\right)$ to guarantee the degeneration up to $\nicefrac{1}{\probavailable}$ of the term $\mathcal{P}_1$ from \eqref{eq:pl_compare:new:mvr}. However, \algname{\algorithmname-MVR} has also suboptimal term $\mathcal{P}_2$. This suboptimality is tightly connected with the suboptimality of $B_{\textnormal{init}}$ in the general nonconvex case, which we discuss in Section~\ref{sec:stochastic_setting}, and it also appears in the analysis of \algname{DASHA-MVR} \citep{tyurin2022dasha}. Let us provide the counterpart of Corollary~\ref{cor:stochastic:randk}. The corollary reveals that we can escape regimes when $\mathcal{P}_2$ is the bottleneck by choosing the parameters of the compressors.

\begin{restatable}{corollary}{CONVERGENCEPLSTOCHASTICRANDK}
  \label{cor:stochastic:pl:randk}
  Suppose that assumptions of Theorem~\ref{theorem:stochastic_pl} hold, 
  batch size $B \leq \min\left\{\frac{\sigma}{\probavailable\sqrt{\mu \varepsilon} n}, \frac{L_{\sigma}^2}{\widehat{L}^2}\right\},$ 
  we take Rand$K$ compressors with $K = \Theta\left(\frac{B d \sqrt{\mu \varepsilon n}}{\sigma}\right).$ Then
  the communication complexity equals 
  \begin{align*}
    \widetilde{\cO}\left(\frac{d \sigma}{\probavailable \sqrt{\mu \varepsilon n}} + \frac{d L_{\sigma}}{\probavailable \mu \sqrt{n}}\right),
  \end{align*}
  and the expected number of stochastic gradient calculations per node equals
  \begin{align*}
    \widetilde{\cO}\left(\frac{\sigma^2}{\probavailable \mu n \varepsilon} + \frac{\sigma L_{\sigma}}{\probavailable n \mu^{\nicefrac{3}{2}} \sqrt{\varepsilon}}\right).
  \end{align*}
\end{restatable}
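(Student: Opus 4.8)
The plan is to follow the derivation of Corollary~\ref{cor:stochastic:randk} almost verbatim, but starting from the P\L{} bound of Theorem~\ref{theorem:stochastic_pl} rather than the nonconvex bound. First I would turn Theorem~\ref{theorem:stochastic_pl} into a clean round-complexity statement: $\widetilde{\cO}(1/(\gamma\mu))$ communication rounds suffice to reach an $\varepsilon$-solution once $b$ is chosen so that the residual term $\tfrac{1}{\mu}\big(\tfrac{100 b^2\omega(2\omega+1)}{\probavailable^2}+\tfrac{20b}{\probavailable}\big)\tfrac{\sigma^2}{nB}$ is $\cO(\varepsilon)$, which forces $\tfrac1b=\Theta\big(\max\{\tfrac{\omega}{\probavailable}\sqrt{\tfrac{\sigma^2}{\mu n\varepsilon B}},\tfrac{\sigma^2}{\probavailable\mu n\varepsilon B}\}\big)$ — exactly the choice behind \eqref{eq:pl_compare:new:mvr} — and once $h^0_i=\tfrac{1}{B_{\textnormal{init}}}\sum_{k}\nabla f_i(x^0;\xi^0_{ik})$ with $B_{\textnormal{init}}=\Theta(\sqrt{\probavailable}B/b)$, so that, by Assumption~\ref{ass:stochastic_unbiased_and_variance_bounded} and independence of the samples, $\Exp{\norm{h^0-\nabla f(x^0)}^2}\le \sigma^2/(nB_{\textnormal{init}})$ and $\tfrac1n\sum_i\Exp{\norm{h^0_i-\nabla f_i(x^0)}^2}\le\sigma^2/B_{\textnormal{init}}$. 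Since the initialization norms sit inside the geometrically decaying factor $(1-\gamma\mu)^T$, they only cost a logarithmic number of extra rounds. Substituting $1/\gamma$ from the hypothesis of Theorem~\ref{theorem:stochastic_pl} into $T=\widetilde{\cO}(1/(\gamma\mu))$ and collecting terms reproduces \eqref{eq:pl_compare:new:mvr}, exactly as Corollary~\ref{cor:stochastic} is obtained from Theorem~\ref{theorem:stochastic}.

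\textbf{Using the batch-size constraint.} Because $B\le L_\sigma^2/\widehat L^2$ (hence also $B\le L_\sigma^2/(\mathbbm{1}_{\probavailable}^2\widehat L^2)$, as $\mathbbm{1}_{\probavailable}\le 1$), we have $\widehat L+L_\sigma/\sqrt B\le 2L_\sigma/\sqrt B$, so every occurrence of $\widehat L+L_\sigma/\sqrt B$ in \eqref{eq:pl_compare:new:mvr} may be replaced by $L_\sigma/\sqrt B$; in particular the term $\mathcal P_1$ collapses to $\tfrac{\sigma L_\sigma}{\probavailable n\mu^{3/2}B}$ and the term $\tfrac{\omega}{\probavailable\mu\sqrt n}(\widehat L+L_\sigma/\sqrt B)$ collapses to $\tfrac{\omega L_\sigma}{\probavailable\mu\sqrt{nB}}$.

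\textbf{Substituting Rand$K$.} By Theorem~\ref{theorem:rand_k}, $\omega+1=d/K$, and with $K=\Theta(Bd\sqrt{\mu\varepsilon n}/\sigma)$ the constraint $B\le \sigma/(\probavailable\sqrt{\mu\varepsilon}n)$ gives $K=\cO(d/(\probavailable\sqrt n))$ and $\omega+1=\Theta(\sigma/(B\sqrt{\mu\varepsilon n}))$. The communication complexity is $\widetilde{\cO}(d+KT)$ and the expected number of stochastic gradients per node is $\widetilde{\cO}(B_{\textnormal{init}}+BT)$; substituting $T$, $K$ and $\omega$ and simplifying each summand in turn — repeatedly using $B\ge 1$, $\probavailable\le 1$ and $\sigma^2/(n\varepsilon B)\ge 1$ — every term is dominated either by $\tfrac{d\sigma}{\probavailable\sqrt{\mu\varepsilon n}}$ or by $\tfrac{dL_\sigma}{\probavailable\mu\sqrt n}$ for the communication cost (the additive $d$ being absorbed into the first of these), and either by $\tfrac{\sigma^2}{\probavailable\mu n\varepsilon}$ or by $\tfrac{\sigma L_\sigma}{\probavailable n\mu^{3/2}\sqrt\varepsilon}$ for the gradient cost, which gives the claimed bounds.

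\textbf{Main obstacle.} The delicate part will be the bookkeeping in the last step: one must verify that the two ``suboptimal-looking'' contributions — the term $\mathcal P_2=\tfrac{\omega}{\probavailable}\sqrt{\sigma^2/(\mu n\varepsilon B)}$ and the $\omega$-term $\tfrac{\omega}{\probavailable\mu\sqrt n}(\widehat L+L_\sigma/\sqrt B)$ — once multiplied by $K$ (resp. $B$) and with $\omega=\Theta(\sigma/(B\sqrt{\mu\varepsilon n}))$ substituted, really are absorbed: e.g. $K\mathcal P_2\le \tfrac{d}{\probavailable}\sqrt{\sigma^2/(\mu n\varepsilon B)}\le \tfrac{d\sigma}{\probavailable\sqrt{\mu\varepsilon n}}$ using $K\omega\le d$ and $B\ge1$, and $B\mathcal P_2=\tfrac{\sigma^2}{\probavailable\mu n\varepsilon\sqrt B}\le\tfrac{\sigma^2}{\probavailable\mu n\varepsilon}$. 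One also has to check that the parameter choices are mutually consistent — $b\le\probavailable/(2-\probavailable)$ as required by Theorem~\ref{theorem:stochastic_pl}, $K\in[d]$, and $B_{\textnormal{init}}\ge 1$ — all of which follow from the imposed upper bounds on $B$ and $K$.
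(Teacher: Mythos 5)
Your proposal is correct and follows essentially the same route as the paper: start from the P\L{} round complexity \eqref{eq:pl_compare:new:mvr} (with $b$ chosen to make the variance floor $\cO(\varepsilon)$), use $B\leq L_{\sigma}^2/\widehat{L}^2$ to collapse $\widehat{L}+L_{\sigma}/\sqrt{B}$ to $L_{\sigma}/\sqrt{B}$, substitute $\omega+1=d/K$ and $K=\cO(d/(\probavailable\sqrt{n}))$, and absorb the remaining terms exactly as you outline. The only (immaterial) divergence is that the paper skips the mini-batch initialization entirely — it sets $h^0_i=g^0_i=0$ and hides the initialization error under the logarithm in $(1-\gamma\mu)^T$, so no $B_{\textnormal{init}}$ term appears — whereas you carry $B_{\textnormal{init}}=\Theta(\sqrt{\probavailable}B/b)$ and then verify it is dominated by $\sigma^2/(\probavailable\mu n\varepsilon)$; both are valid.
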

Up to Lipschitz constants, \algname{\algorithmname-MVR} has the state-of-the-art oracle complexity under P\L-condition (see \citep{PAGE}). Moreover, \algname{\algorithmname-MVR} has the state-of-the-art communication complexity of \algname{DASHA} for a small enough $\mu$.

\subsection{Proofs of Theorems}
The following proofs almost repeat the proofs from Section~\ref{sec:proof_of_theorems}. And one of the main changes is that instead of Lemma~\ref{lemma:good_recursion}, we use the following lemma.

\subsubsection{Standard Lemma under Polyak-\L ojasiewicz Condition}

\begin{lemma}
  \label{lemma:good_recursion_pl}
  Suppose that Assumptions \ref{ass:lower_bound} and \ref{ass:pl_condition} hold and
  \begin{align*}
      \Exp{f(x^{t+1})} + \gamma \Psi^{t+1} \leq \Exp{f(x^t)} - \frac{\gamma}{2}\Exp{\norm{\nabla f(x^t)}^2} + (1 - \gamma \mu)\gamma \Psi^{t} + \gamma C,
  \end{align*}
  where $\Psi^{t}$ is a sequence of numbers, $\Psi^{t} \geq 0$ for all $t \in [T]$, constant $C \geq 0,$ constant $\mu > 0,$ and constant $\gamma \in (0, 1 / \mu).$ Then 
  \begin{align}
      \label{eq:good_recursion_pl}
      \Exp{f(x^{T}) - f^*} \leq (1 - \gamma \mu)^{T}\left(\left(f(x^0) - f^*\right) + \gamma \Psi^{0}\right) + \frac{C}{\mu}.
  \end{align}
\end{lemma}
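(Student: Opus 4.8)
The plan is to unroll the one-step recursion and use the P\L-condition to absorb the gradient-norm term into a geometric contraction. First I would take the hypothesis
\[
  \Exp{f(x^{t+1})} + \gamma \Psi^{t+1} \leq \Exp{f(x^t)} - \frac{\gamma}{2}\Exp{\norm{\nabla f(x^t)}^2} + (1 - \gamma \mu)\gamma \Psi^{t} + \gamma C,
\]
subtract $f^*$ from both sides (legitimate since $f^* > -\infty$ by Assumption~\ref{ass:lower_bound}), and apply the P\L-inequality $\norm{\nabla f(x^t)}^2 \geq 2\mu (f(x^t) - f^*)$ to bound $-\frac{\gamma}{2}\Exp{\norm{\nabla f(x^t)}^2} \leq -\gamma \mu \Exp{f(x^t) - f^*}$. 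This turns the right-hand side into $(1 - \gamma\mu)\Exp{f(x^t) - f^*} + (1 - \gamma\mu)\gamma\Psi^t + \gamma C$, i.e., defining the Lyapunov function $V^t \eqdef \Exp{f(x^t) - f^*} + \gamma \Psi^t$, we obtain the clean contraction $V^{t+1} \leq (1 - \gamma\mu) V^t + \gamma C$.

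Next I would iterate this inequality from $t = 0$ to $t = T-1$. A standard induction (or direct summation of the geometric series) gives
\[
  V^{T} \leq (1 - \gamma\mu)^T V^0 + \gamma C \sum_{k=0}^{T-1} (1 - \gamma\mu)^k \leq (1 - \gamma\mu)^T V^0 + \gamma C \cdot \frac{1}{\gamma \mu} = (1 - \gamma\mu)^T V^0 + \frac{C}{\mu},
\]
where the bound on the partial geometric sum uses $0 < \gamma\mu < 1$ (guaranteed by the assumption $\gamma \in (0, 1/\mu)$) so that $\sum_{k=0}^{T-1}(1-\gamma\mu)^k \leq \sum_{k=0}^{\infty}(1-\gamma\mu)^k = \frac{1}{\gamma\mu}$. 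Finally, since $\Psi^T \geq 0$ we have $\Exp{f(x^T) - f^*} \leq V^T$, and substituting $V^0 = (f(x^0) - f^*) + \gamma\Psi^0$ yields exactly \eqref{eq:good_recursion_pl}.

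There is no real obstacle here; the only points requiring a little care are (i) making sure the sign manipulation with the P\L-condition goes through — it needs $f(x^t) - f^* \geq 0$, which holds, and $1 - \gamma\mu \geq 0$, which holds by hypothesis, so multiplying the P\L-bound by $\gamma$ and the telescoped factors preserves the inequality direction; and (ii) correctly bounding the finite geometric sum rather than the infinite one, though the infinite bound suffices and is cleaner. The nonnegativity assumption $\Psi^t \geq 0$ is used only at the very end to drop $\gamma\Psi^T$ from the left side. I would present this as a short induction on $T$ for the contraction $V^{T} \leq (1-\gamma\mu)^T V^0 + \frac{C}{\mu}$, which avoids any explicit summation.
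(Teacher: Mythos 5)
Your proof is correct and follows essentially the same route as the paper: subtract $f^*$, apply the P\L-inequality to convert $-\frac{\gamma}{2}\Exp{\norm{\nabla f(x^t)}^2}$ into $-\gamma\mu\Exp{f(x^t)-f^*}$, obtain the contraction of the Lyapunov quantity $\Exp{f(x^t)-f^*}+\gamma\Psi^t$, unroll with the geometric-sum bound $\sum_{k=0}^{T-1}(1-\gamma\mu)^k \leq \frac{1}{\gamma\mu}$, and finally drop $\gamma\Psi^T \geq 0$. No gaps.
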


\begin{proof}
  We subtract $f^*$ and use P\L-condition \eqref{eq:pl_condition} to get
  \begin{eqnarray*}
      \Exp{f(x^{t+1}) - f^*} + \gamma \Psi^{t+1} &\leq& \Exp{f(x^t) - f^*} - \frac{\gamma}{2}\Exp{\norm{\nabla f(x^t)}^2} + \gamma \Psi^{t} + \gamma C \\
      &\leq& (1 - \gamma \mu)\Exp{f(x^t) - f^*} + (1 - \gamma \mu)\gamma \Psi^{t} + \gamma C \\
      &=& (1 - \gamma \mu)\left(\Exp{f(x^t) - f^*} + \gamma \Psi^{t}\right) + \gamma C.
  \end{eqnarray*}
  Unrolling the inequality, we have
  \begin{eqnarray*}
      \Exp{f(x^{t+1}) - f^*} + \gamma \Psi^{t+1} &\leq& (1 - \gamma \mu)^{t+1}\left(\left(f(x^0) - f^*\right) + \gamma \Psi^{0}\right) + \gamma C \sum_{i = 0}^{t} (1 - \gamma \mu)^i \\
      &\leq& (1 - \gamma \mu)^{t+1}\left(\left(f(x^0) - f^*\right) + \gamma \Psi^{0}\right) + \frac{C}{\mu}.
  \end{eqnarray*}
  It is left to note that $\Psi^{t} \geq 0$ for all $t \in [T]$.
\end{proof}

\subsubsection{Generic Lemma}
We now provide the counterpart of Lemma~\ref{lemma:main_lemma}.

\begin{lemma}
  \label{lemma:main_lemma_pl}
  Suppose that Assumptions \ref{ass:lipschitz_constant}, \ref{ass:compressors}, \ref{ass:partial_participation} and \ref{ass:pl_condition} hold and let us take $a = \frac{\probavailable}{2 \omega + 1},$ then
  \begin{align*}
    &\Exp{f(x^{t + 1})} + \frac{2\gamma (2\omega + 1)}{\probavailable} \Exp{\norm{g^{t+1} - h^{t+1}}^2} + \frac{4 \gamma (\left(2 \omega + 1\right)\probavailable - \probpairaa)}{n \probavailable^2} \Exp{\frac{1}{n}\sum_{i=1}^n\norm{g^{t+1}_i - h^{t+1}_i}^2}\\
    &\leq \Exp{f(x^t) - \frac{\gamma}{2}\norm{\nabla f(x^t)}^2 - \left(\frac{1}{2\gamma} - \frac{L}{2}\right)
    \norm{x^{t+1} - x^t}^2 + \gamma \norm{h^{t} - \nabla f(x^t)}^2}\nonumber\\
    &\quad + \left(1 - \gamma \mu\right)\frac{2\gamma(2\omega + 1)}{\probavailable}\Exp{\norm{g^{t} - h^t}^2} + \left(1 - \gamma \mu\right)\frac{4 \gamma (\left(2 \omega + 1\right)\probavailable - \probpairaa)}{n \probavailable^2}\Exp{\frac{1}{n} \sum_{i=1}^n\norm{g^t_i - h^{t}_i}^2}\nonumber\\
    &\quad + \frac{10 \gamma (2 \omega + 1)\omega}{n \probavailable^2}\Exp{\frac{1}{n} \sum_{i=1}^n\norm{k^{t+1}_i}^2}.
  \end{align*}
\end{lemma}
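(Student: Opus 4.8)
The plan is to follow the proof of Lemma~\ref{lemma:main_lemma} almost verbatim, modifying only the two multipliers that define the Lyapunov function so that the compressor-error terms contract by a factor $(1-\gamma\mu)$ rather than being merely reproduced. As there, I would start from Lemma~\ref{lemma:page_lemma} applied to $x^{t+1}=x^t-\gamma g^t$, split $g^t-\nabla f(x^t)=(g^t-h^t)+(h^t-\nabla f(x^t))$ via \eqref{auxiliary:jensen_inequality}, obtaining the same descent estimate as the first display in the proof of Lemma~\ref{lemma:main_lemma}. Then, for $\kappa,\eta\ge0$ to be fixed, I would add $\kappa\Exp{\norm{g^{t+1}-h^{t+1}}^2}+\eta\Exp{\frac1n\sum_i\norm{g^{t+1}_i-h^{t+1}_i}^2}$ to both sides, bound these via \eqref{eq:compressor_global_error}--\eqref{eq:compressor_local_error}, apply the tower rule, and use the identity $\frac{a^2(2\omega+1-\probavailable)}{\probavailable}+(1-a)^2=1-a$ together with $(1-a)^2\le1-a$ (valid since $a=\frac{\probavailable}{2\omega+1}\le\probavailable\le1$), exactly as in Lemma~\ref{lemma:main_lemma}. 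This leaves on the right-hand side the coefficients $\gamma+\kappa(1-a)$, $\;\frac{\kappa a^2((2\omega+1)\probavailable-\probpairaa)}{n\probavailable^2}+\eta(1-a)$, and $\;\frac{2\kappa\omega}{n\probavailable}+\frac{2\eta\omega}{\probavailable}$ multiplying $\Exp{\norm{g^t-h^t}^2}$, $\Exp{\frac1n\sum_i\norm{g^t_i-h^t_i}^2}$, and $\Exp{\frac1n\sum_i\norm{k^{t+1}_i}^2}$, respectively.

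The one new ingredient is the choice $\kappa=\frac{2\gamma(2\omega+1)}{\probavailable}=\frac{2\gamma}{a}$ and $\eta=\frac{4\gamma((2\omega+1)\probavailable-\probpairaa)}{n\probavailable^2}$ — twice, resp.\ four times, the values used in Lemma~\ref{lemma:main_lemma} — which creates exactly the slack needed for contraction. Indeed $\gamma+\kappa(1-a)\le(1-\gamma\mu)\kappa$ is equivalent to $\gamma\le\kappa(a-\gamma\mu)$, which holds whenever $a-\gamma\mu\ge\frac{a}{2}$, i.e.\ $\gamma\le\frac{a}{2\mu}$; and, after substituting $\kappa=\frac{2\gamma}{a}$, the second coefficient becomes $\frac{2\gamma a((2\omega+1)\probavailable-\probpairaa)}{n\probavailable^2}+\eta(1-a)$, which is $\le(1-\gamma\mu)\eta$ for the same reason, since $\eta(a-\gamma\mu)\ge\frac{\eta a}{2}=\frac{2\gamma a((2\omega+1)\probavailable-\probpairaa)}{n\probavailable^2}$. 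Both conditions are implied by the stepsize bounds in the P\L\ theorems (where $\gamma\le\frac{a}{4\mu}$ or $\frac{a}{2\mu}$). The $\Exp{\frac1n\sum_i\norm{k^{t+1}_i}^2}$ coefficient is then bounded using $\probpairaa\ge0$, exactly as in Lemma~\ref{lemma:main_lemma}, by $\frac{10\gamma(2\omega+1)\omega}{n\probavailable^2}$, and collecting the terms yields the stated inequality.

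I expect the only real obstacle to be bookkeeping: checking that the single rescaling $(\kappa,\eta)=\big(\frac{2\gamma}{a},\,\frac{4\gamma((2\omega+1)\probavailable-\probpairaa)}{n\probavailable^2}\big)$ simultaneously absorbs the descent contribution $\gamma$, carries the cross-term feeding the per-node Lyapunov potential, and is compatible with $\gamma\le\frac{a}{2\mu}$, so that the $(1-\gamma\mu)$ factor appears on \emph{both} compressor-error terms at once. Beyond that, every step is identical to the proof of Lemma~\ref{lemma:main_lemma}.
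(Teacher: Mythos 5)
Your proposal is correct and follows essentially the same route as the paper's own proof: the same doubled/quadrupled Lyapunov multipliers $\kappa=\frac{2\gamma}{a}$ and $\eta=\frac{4\gamma((2\omega+1)\probavailable-\probpairaa)}{n\probavailable^2}$, the same contraction inequalities resolved via $\gamma\le\frac{a}{2\mu}$ (a condition the lemma statement omits but which the paper's proof also uses implicitly and which is supplied by the P\L\ theorems). The only quibble is the final constant: with these multipliers the coefficient of $\Exp{\frac{1}{n}\sum_i\norm{k^{t+1}_i}^2}$ is $\frac{2\kappa\omega}{n\probavailable}+\frac{2\eta\omega}{\probavailable}\le\frac{4\gamma(2\omega+1)\omega}{n\probavailable^2}+\frac{8\gamma(2\omega+1)\omega}{n\probavailable^2}=\frac{12\gamma(2\omega+1)\omega}{n\probavailable^2}$ rather than the stated $\frac{10\gamma(2\omega+1)\omega}{n\probavailable^2}$ — a harmless arithmetic slip that the paper itself also makes (it evaluates $\frac{4\gamma\omega}{an\probavailable}$ as $\frac{2\gamma(2\omega+1)\omega}{n\probavailable^2}$ instead of $\frac{4\gamma(2\omega+1)\omega}{n\probavailable^2}$), so your derivation inherits it rather than introduces it.
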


\begin{proof}
  Let us fix some constants $\kappa, \eta \in [0,\infty)$ that we will define later. Using the same reasoning as in Lemma~\ref{lemma:main_lemma}, we can get
  \begin{align*}
      &\Exp{f(x^{t + 1})} \nonumber \\
      &\quad  + \kappa \Exp{\norm{g^{t+1} - h^{t+1}}^2} + \eta \Exp{\frac{1}{n}\sum_{i=1}^n\norm{g^{t+1}_i - h^{t+1}_i}^2} \nonumber\\
      &\leq \Exp{f(x^t) - \frac{\gamma}{2}\norm{\nabla f(x^t)}^2 - \left(\frac{1}{2\gamma} - \frac{L}{2}\right)
      \norm{x^{t+1} - x^t}^2 + \gamma \norm{h^{t} - \nabla f(x^t)}^2}\nonumber\\
      &\quad + \left(\gamma + \kappa \left(1 - a\right)^2\right)\Exp{\norm{g^{t} - h^t}^2}\nonumber\\
      &\quad + \left(\frac{\kappa a^2 (\left(2 \omega + 1\right)\probavailable - \probpairaa)}{n \probavailable^2} + \eta\left(\frac{a^2(2\omega + 1 - \probavailable)}{\probavailable} + (1 - a)^2\right)\right)\Exp{\frac{1}{n} \sum_{i=1}^n\norm{g^t_i - h^{t}_i}^2}\nonumber\\
      &\quad + \left(\frac{2 \kappa \omega}{n \probavailable} + \frac{2 \eta \omega}{\probavailable}\right)\Exp{\frac{1}{n} \sum_{i=1}^n\norm{k^{t+1}_i}^2}.
  \end{align*}
  Let us take $\kappa = \frac{2\gamma}{a}.$ One can show that $\gamma + \kappa \left(1 - a\right)^2 \leq \left(1 - \frac{a}{2}\right)\kappa, $ and thus
  \begin{align*}
      &\Exp{f(x^{t + 1})} \\
      &\quad  + \frac{2\gamma}{a} \Exp{\norm{g^{t+1} - h^{t+1}}^2} + \eta \Exp{\frac{1}{n}\sum_{i=1}^n\norm{g^{t+1}_i - h^{t+1}_i}^2}\\
      &\leq \Exp{f(x^t) - \frac{\gamma}{2}\norm{\nabla f(x^t)}^2 - \left(\frac{1}{2\gamma} - \frac{L}{2}\right)
      \norm{x^{t+1} - x^t}^2 + \gamma \norm{h^{t} - \nabla f(x^t)}^2}\nonumber\\
      &\quad + \left(1 - \frac{a}{2}\right)\frac{2\gamma}{a}\Exp{\norm{g^{t} - h^t}^2}\nonumber\\
      &\quad + \left(\frac{2\gamma a (\left(2 \omega + 1\right)\probavailable - \probpairaa)}{n \probavailable^2} + \eta\left(\frac{a^2(2\omega + 1 - \probavailable)}{\probavailable} + (1 - a)^2\right)\right)\Exp{\frac{1}{n} \sum_{i=1}^n\norm{g^t_i - h^{t}_i}^2}\nonumber\\
      &\quad + \left(\frac{4 \gamma \omega}{a n \probavailable} + \frac{2 \eta \omega}{\probavailable}\right)\Exp{\frac{1}{n} \sum_{i=1}^n\norm{k^{t+1}_i}^2}.
  \end{align*}
  Considering the choice of $a$, one can show that $\left(\frac{a^2(2\omega + 1 - \probavailable)}{\probavailable} + (1 - a)^2\right) \leq 1 - a.$ If we take $\eta = \frac{4 \gamma (\left(2 \omega + 1\right)\probavailable - \probpairaa)}{n \probavailable^2},$ then $\left(\frac{2\gamma a (\left(2 \omega + 1\right)\probavailable - \probpairaa)}{n \probavailable^2} + \eta\left(\frac{a^2(2\omega + 1 - \probavailable)}{\probavailable} + (1 - a)^2\right)\right) \leq \left(1 - \frac{a}{2}\right)\eta$ and

  \begin{align*}
    &\Exp{f(x^{t + 1})} \\
    &\quad  + \frac{2\gamma (2\omega + 1)}{\probavailable} \Exp{\norm{g^{t+1} - h^{t+1}}^2} + \frac{4 \gamma (\left(2 \omega + 1\right)\probavailable - \probpairaa)}{n \probavailable^2} \Exp{\frac{1}{n}\sum_{i=1}^n\norm{g^{t+1}_i - h^{t+1}_i}^2}\\
    &\leq \Exp{f(x^t) - \frac{\gamma}{2}\norm{\nabla f(x^t)}^2 - \left(\frac{1}{2\gamma} - \frac{L}{2}\right)
    \norm{x^{t+1} - x^t}^2 + \gamma \norm{h^{t} - \nabla f(x^t)}^2}\nonumber\\
    &\quad + \left(1 - \frac{a}{2}\right)\frac{2\gamma(2\omega + 1)}{\probavailable}\Exp{\norm{g^{t} - h^t}^2} + \left(1 - \frac{a}{2}\right)\frac{4 \gamma (\left(2 \omega + 1\right)\probavailable - \probpairaa)}{n \probavailable^2}\Exp{\frac{1}{n} \sum_{i=1}^n\norm{g^t_i - h^{t}_i}^2}\nonumber\\
    &\quad + \left(\frac{2 \gamma (2 \omega + 1)\omega}{n \probavailable^2} + \frac{8 \gamma (\left(2 \omega + 1\right)\probavailable - \probpairaa) \omega}{n \probavailable^3}\right)\Exp{\frac{1}{n} \sum_{i=1}^n\norm{k^{t+1}_i}^2} \\
    &\leq \Exp{f(x^t) - \frac{\gamma}{2}\norm{\nabla f(x^t)}^2 - \left(\frac{1}{2\gamma} - \frac{L}{2}\right)
    \norm{x^{t+1} - x^t}^2 + \gamma \norm{h^{t} - \nabla f(x^t)}^2}\nonumber\\
    &\quad + \left(1 - \frac{a}{2}\right)\frac{2\gamma(2\omega + 1)}{\probavailable}\Exp{\norm{g^{t} - h^t}^2} + \left(1 - \frac{a}{2}\right)\frac{4 \gamma (\left(2 \omega + 1\right)\probavailable - \probpairaa)}{n \probavailable^2}\Exp{\frac{1}{n} \sum_{i=1}^n\norm{g^t_i - h^{t}_i}^2}\nonumber\\
    &\quad + \frac{10 \gamma (2 \omega + 1)\omega}{n \probavailable^2}\Exp{\frac{1}{n} \sum_{i=1}^n\norm{k^{t+1}_i}^2}.
  \end{align*}
  It it left to consider that $\gamma \leq \frac{a}{2\mu},$ and therefore $1 - \frac{a}{2} \leq 1 - \gamma \mu.$
\end{proof}

\subsubsection{Proof for \algname{\algorithmname} under P\L-condition}

\CONVERGENCEPL*

\begin{proof}
  Let us fix constants $\nu, \rho \in [0,\infty)$ that we will define later. Considering Lemma~\ref{lemma:main_lemma_pl}, Lemma~\ref{lemma:gradient}, and the law of total expectation, we obtain
    \begin{align*}
      &\Exp{f(x^{t + 1})} + \frac{2\gamma (2\omega + 1)}{\probavailable} \Exp{\norm{g^{t+1} - h^{t+1}}^2} + \frac{4 \gamma (\left(2 \omega + 1\right)\probavailable - \probpairaa)}{n \probavailable^2} \Exp{\frac{1}{n}\sum_{i=1}^n\norm{g^{t+1}_i - h^{t+1}_i}^2}\\
      &\quad  + \nu \Exp{\norm{h^{t+1} - \nabla f(x^{t+1})}^2} + \rho \Exp{\frac{1}{n}\sum_{i=1}^n\norm{h^{t+1}_i - \nabla f_i(x^{t+1})}^2}\\
      &\leq \Exp{f(x^t) - \frac{\gamma}{2}\norm{\nabla f(x^t)}^2 - \left(\frac{1}{2\gamma} - \frac{L}{2}\right)
      \norm{x^{t+1} - x^t}^2 + \gamma \norm{h^{t} - \nabla f(x^t)}^2}\nonumber\\
      &\quad + \left(1 - \gamma \mu\right)\frac{2\gamma(2\omega + 1)}{\probavailable}\Exp{\norm{g^{t} - h^t}^2} + \left(1 - \gamma \mu\right)\frac{4 \gamma (\left(2 \omega + 1\right)\probavailable - \probpairaa)}{n \probavailable^2}\Exp{\frac{1}{n} \sum_{i=1}^n\norm{g^t_i - h^{t}_i}^2} \\
      &\quad + \frac{10 \gamma \omega (2 \omega + 1)}{n \probavailable^2} \Exp{2\widehat{L}^2\norm{x^{t+1} - x^{t}}^2 + 2b^2 \frac{1}{n}\sum_{i=1}^n \norm{h^t_i - \nabla f_i(x^{t})}^2} \\
      &\quad + \nu \Exp{\frac{2\left(\probavailable - \probpairaa\right)\widehat{L}^2}{n \probavailable^2} \norm{x^{t+1} - x^{t}}^2 + \frac{2 b^2 \left(\probavailable - \probpairaa\right)}{n^2 \probavailable^2} \sum_{i=1}^n \norm{h^t_i - \nabla f_i(x^{t})}^2 + \left(1 - b\right)^2 \norm{h^{t} - \nabla f(x^{t})}^2} \\
      &\quad + \rho \Exp{\frac{2(1 - \probavailable)}{\probavailable} \widehat{L}^2 \norm{x^{t+1} - x^{t}}^2 + \left(\frac{2 b^2 (1 - \probavailable)}{\probavailable} + (1 - b)^2\right) \frac{1}{n}\sum_{i=1}^n \norm{h^{t}_i - \nabla f_i(x^{t})}^2}.
    \end{align*}
    After rearranging the terms, we get
    \begin{align*}
      &\Exp{f(x^{t + 1})} + \frac{2\gamma (2\omega + 1)}{\probavailable} \Exp{\norm{g^{t+1} - h^{t+1}}^2} + \frac{4 \gamma (\left(2 \omega + 1\right)\probavailable - \probpairaa)}{n \probavailable^2} \Exp{\frac{1}{n}\sum_{i=1}^n\norm{g^{t+1}_i - h^{t+1}_i}^2}\\
      &\quad  + \nu \Exp{\norm{h^{t+1} - \nabla f(x^{t+1})}^2} + \rho \Exp{\frac{1}{n}\sum_{i=1}^n\norm{h^{t+1}_i - \nabla f_i(x^{t+1})}^2}\\
      &\leq \Exp{f(x^t)} - \frac{\gamma}{2}\Exp{\norm{\nabla f(x^t)}^2} \\
      &\quad + \left(1 - \gamma \mu\right)\frac{2\gamma(2\omega + 1)}{\probavailable}\Exp{\norm{g^{t} - h^t}^2} + \left(1 - \gamma \mu\right)\frac{4 \gamma (\left(2 \omega + 1\right)\probavailable - \probpairaa)}{n \probavailable^2}\Exp{\frac{1}{n} \sum_{i=1}^n\norm{g^t_i - h^{t}_i}^2} \\
      &\quad - \left(\frac{1}{2\gamma} - \frac{L}{2} - \frac{20 \gamma \omega \left(2 \omega + 1\right) \widehat{L}^2}{n \probavailable^2} - \nu \frac{2\left(\probavailable - \probpairaa\right)\widehat{L}^2}{n \probavailable^2} - \rho \frac{2(1 - \probavailable) \widehat{L}^2}{\probavailable} \right) \Exp{\norm{x^{t+1} - x^t}^2} \\
      &\quad + \left(\gamma + \nu (1 - b)^2\right) \Exp{\norm{h^{t} - \nabla f(x^{t})}^2} \\
      &\quad + \left(\frac{20 b^2 \gamma \omega (2 \omega + 1)}{n \probavailable^2} + \nu \frac{2 b^2 \left(\probavailable - \probpairaa\right)}{n \probavailable^2} + \rho \left(\frac{2 b^2 (1 - \probavailable)}{\probavailable} + (1 - b)^2\right) \right)\Exp{\frac{1}{n}\sum_{i=1}^n\norm{h^{t}_i - \nabla f_i(x^{t})}^2}.
    \end{align*}
    By taking $\nu = \frac{2\gamma}{b},$ one can show that $\left(\gamma + \nu (1 - b)^2\right) \leq \left(1 - \frac{b}{2}\right)\nu,$ and
    \begin{align*}
      &\Exp{f(x^{t + 1})} + \frac{2\gamma (2\omega + 1)}{\probavailable} \Exp{\norm{g^{t+1} - h^{t+1}}^2} + \frac{4 \gamma (\left(2 \omega + 1\right)\probavailable - \probpairaa)}{n \probavailable^2} \Exp{\frac{1}{n}\sum_{i=1}^n\norm{g^{t+1}_i - h^{t+1}_i}^2}\\
      &\quad  + \frac{2\gamma}{b} \Exp{\norm{h^{t+1} - \nabla f(x^{t+1})}^2} + \rho \Exp{\frac{1}{n}\sum_{i=1}^n\norm{h^{t+1}_i - \nabla f_i(x^{t+1})}^2}\\
      &\leq \Exp{f(x^t)} - \frac{\gamma}{2}\Exp{\norm{\nabla f(x^t)}^2} \\
      &\quad + \left(1 - \gamma \mu\right)\frac{2\gamma(2\omega + 1)}{\probavailable}\Exp{\norm{g^{t} - h^t}^2} + \left(1 - \gamma \mu\right)\frac{4 \gamma (\left(2 \omega + 1\right)\probavailable - \probpairaa)}{n \probavailable^2}\Exp{\frac{1}{n} \sum_{i=1}^n\norm{g^t_i - h^{t}_i}^2} \\
      &\quad - \left(\frac{1}{2\gamma} - \frac{L}{2} - \frac{20 \gamma \omega \left(2 \omega + 1\right) \widehat{L}^2}{n \probavailable^2} - \frac{4\gamma\left(\probavailable - \probpairaa\right)\widehat{L}^2}{b n \probavailable^2} - \rho \frac{2(1 - \probavailable) \widehat{L}^2}{\probavailable} \right) \Exp{\norm{x^{t+1} - x^t}^2} \\
      &\quad + \left(1 - \frac{b}{2}\right)\frac{2\gamma}{b} \Exp{\norm{h^{t} - \nabla f(x^{t})}^2} \\
      &\quad + \left(\frac{20 b^2 \gamma \omega (2 \omega + 1)}{n \probavailable^2} + \frac{4 \gamma b \left(\probavailable - \probpairaa\right)}{n \probavailable^2} + \rho \left(\frac{2 b^2 (1 - \probavailable)}{\probavailable} + (1 - b)^2\right) \right)\Exp{\frac{1}{n}\sum_{i=1}^n\norm{h^{t}_i - \nabla f_i(x^{t})}^2}.
    \end{align*}
    Note that $b = \frac{\probavailable}{2 - \probavailable},$ thus
    \begin{align*}
      &\left(\frac{20 b^2 \gamma \omega (2 \omega + 1)}{n \probavailable^2} + \frac{4 \gamma b \left(\probavailable - \probpairaa\right)}{n \probavailable^2} + \rho \left(\frac{2 b^2 (1 - \probavailable)}{\probavailable} + (1 - b)^2\right) \right) \\
      &\leq \left(\frac{20 b^2 \gamma \omega (2 \omega + 1)}{n \probavailable^2} + \frac{4 \gamma b \left(\probavailable - \probpairaa\right)}{n \probavailable^2} + \rho \left(1 - b\right) \right).
    \end{align*}
    And if we take $\rho = \frac{40 b \gamma \omega (2 \omega + 1)}{n \probavailable^2} + \frac{8 \gamma \left(\probavailable - \probpairaa\right)}{n \probavailable^2},$ then
    \begin{align*}
      \left(\frac{20 b^2 \gamma \omega (2 \omega + 1)}{n \probavailable^2} + \frac{4 \gamma b \left(\probavailable - \probpairaa\right)}{n \probavailable^2} + \rho \left(1 - b\right) \right) \leq \left(1 - \frac{b}{2}\right)\rho,
    \end{align*}
    and 
    \begin{align*}
      &\Exp{f(x^{t + 1})} + \frac{2\gamma (2 \omega + 1)}{\probavailable} \Exp{\norm{g^{t+1} - h^{t+1}}^2} + \frac{4\gamma (\left(2 \omega + 1\right)\probavailable - \probpairaa)}{n \probavailable^2} \Exp{\frac{1}{n}\sum_{i=1}^n\norm{g^{t+1}_i - h^{t+1}_i}^2}\\
      &\quad  + \frac{2\gamma}{b} \Exp{\norm{h^{t+1} - \nabla f(x^{t+1})}^2} + \left(\frac{40 b \gamma \omega (2 \omega + 1)}{n \probavailable^2} + \frac{8 \gamma \left(\probavailable - \probpairaa\right)}{n \probavailable^2}\right) \Exp{\frac{1}{n}\sum_{i=1}^n\norm{h^{t+1}_i - \nabla f_i(x^{t+1})}^2}\\
      &\leq \Exp{f(x^t)} - \frac{\gamma}{2}\Exp{\norm{\nabla f(x^t)}^2} \\
      &\quad + \left(1 - \gamma \mu\right)\frac{2\gamma (2 \omega + 1)}{\probavailable} \Exp{\norm{g^{t} - h^{t}}^2} + \left(1 - \gamma \mu\right)\frac{4\gamma (\left(2 \omega + 1\right)\probavailable - \probpairaa)}{n \probavailable^2} \Exp{\frac{1}{n}\sum_{i=1}^n\norm{g^{t}_i - h^{t}_i}^2} \\
      &\quad - \Bigg(\frac{1}{2\gamma} - \frac{L}{2} - \frac{20 \gamma \omega \left(2 \omega + 1\right) \widehat{L}^2}{n \probavailable^2} - \frac{4\gamma \left(\probavailable - \probpairaa\right)\widehat{L}^2}{b n \probavailable^2} \\
      &\quad\qquad - \frac{80 b \gamma \omega (2 \omega + 1) (1 - \probavailable) \widehat{L}^2}{n \probavailable^3} - \frac{16 \gamma \left(\probavailable - \probpairaa\right) (1 - \probavailable) \widehat{L}^2}{n \probavailable^3} \Bigg) \Exp{\norm{x^{t+1} - x^t}^2} \\
      &\quad + \left(1 - \frac{b}{2}\right)\frac{2\gamma}{b} \Exp{\norm{h^{t} - \nabla f(x^{t})}^2} + \left(1 - \frac{b}{2}\right)\left(\frac{40 b \gamma \omega (2 \omega + 1)}{n \probavailable^2} + \frac{8 \gamma \left(\probavailable - \probpairaa\right)}{n \probavailable^2}\right)\Exp{\frac{1}{n}\sum_{i=1}^n\norm{h^{t}_i - \nabla f_i(x^{t})}^2}.
    \end{align*}
    Due to $\frac{\probavailable}{2} \leq b \leq \probavailable,$ we have
    \begin{align*}
      &\Exp{f(x^{t + 1})} + \frac{2\gamma (2 \omega + 1)}{\probavailable} \Exp{\norm{g^{t+1} - h^{t+1}}^2} + \frac{4\gamma (\left(2 \omega + 1\right)\probavailable - \probpairaa)}{n \probavailable^2} \Exp{\frac{1}{n}\sum_{i=1}^n\norm{g^{t+1}_i - h^{t+1}_i}^2}\\
      &\quad  + \frac{2\gamma}{b} \Exp{\norm{h^{t+1} - \nabla f(x^{t+1})}^2} + \left(\frac{40 b \gamma \omega (2 \omega + 1)}{n \probavailable^2} + \frac{8 \gamma \left(\probavailable - \probpairaa\right)}{n \probavailable^2}\right) \Exp{\frac{1}{n}\sum_{i=1}^n\norm{h^{t+1}_i - \nabla f_i(x^{t+1})}^2}\\
      &\leq \Exp{f(x^t)} - \frac{\gamma}{2}\Exp{\norm{\nabla f(x^t)}^2} \\
      &\quad + \left(1 - \gamma \mu\right)\frac{2\gamma (2 \omega + 1)}{\probavailable} \Exp{\norm{g^{t} - h^{t}}^2} + \left(1 - \gamma \mu\right)\frac{4\gamma (\left(2 \omega + 1\right)\probavailable - \probpairaa)}{n \probavailable^2} \Exp{\frac{1}{n}\sum_{i=1}^n\norm{g^{t}_i - h^{t}_i}^2} \\
      &\quad - \Bigg(\frac{1}{2\gamma} - \frac{L}{2} - \frac{100 \gamma \omega \left(2 \omega + 1\right) \widehat{L}^2}{n \probavailable^2} - \frac{24\gamma \left(\probavailable - \probpairaa\right)\widehat{L}^2}{n \probavailable^3}\Bigg) \Exp{\norm{x^{t+1} - x^t}^2} \\
      &\quad + \left(1 - \frac{b}{2}\right)\frac{2\gamma}{b} \Exp{\norm{h^{t} - \nabla f(x^{t})}^2} + \left(1 - \frac{b}{2}\right)\left(\frac{40 b \gamma \omega (2 \omega + 1)}{n \probavailable^2} + \frac{8 \gamma \left(\probavailable - \probpairaa\right)}{n \probavailable^2}\right)\Exp{\frac{1}{n}\sum_{i=1}^n\norm{h^{t}_i - \nabla f_i(x^{t})}^2}.
    \end{align*}
    Using Lemma~\ref{lemma:gamma} and the assumption about $\gamma,$ we get
    \begin{align*}
      &\Exp{f(x^{t + 1})} + \frac{2\gamma (2 \omega + 1)}{\probavailable} \Exp{\norm{g^{t+1} - h^{t+1}}^2} + \frac{4\gamma (\left(2 \omega + 1\right)\probavailable - \probpairaa)}{n \probavailable^2} \Exp{\frac{1}{n}\sum_{i=1}^n\norm{g^{t+1}_i - h^{t+1}_i}^2}\\
      &\quad  + \frac{2\gamma}{b} \Exp{\norm{h^{t+1} - \nabla f(x^{t+1})}^2} + \left(\frac{40 b \gamma \omega (2 \omega + 1)}{n \probavailable^2} + \frac{8 \gamma \left(\probavailable - \probpairaa\right)}{n \probavailable^2}\right) \Exp{\frac{1}{n}\sum_{i=1}^n\norm{h^{t+1}_i - \nabla f_i(x^{t+1})}^2}\\
      &\leq \Exp{f(x^t)} - \frac{\gamma}{2}\Exp{\norm{\nabla f(x^t)}^2} \\
      &\quad + \left(1 - \gamma \mu\right)\frac{2\gamma (2 \omega + 1)}{\probavailable} \Exp{\norm{g^{t} - h^{t}}^2} + \left(1 - \gamma \mu\right)\frac{4\gamma (\left(2 \omega + 1\right)\probavailable - \probpairaa)}{n \probavailable^2} \Exp{\frac{1}{n}\sum_{i=1}^n\norm{g^{t}_i - h^{t}_i}^2} \\
      &\quad + \left(1 - \frac{b}{2}\right)\frac{2\gamma}{b} \Exp{\norm{h^{t} - \nabla f(x^{t})}^2} + \left(1 - \frac{b}{2}\right)\left(\frac{40 b \gamma \omega (2 \omega + 1)}{n \probavailable^2} + \frac{8 \gamma \left(\probavailable - \probpairaa\right)}{n \probavailable^2}\right)\Exp{\frac{1}{n}\sum_{i=1}^n\norm{h^{t}_i - \nabla f_i(x^{t})}^2}.
    \end{align*}
    Note that $\gamma \leq \frac{a}{4\mu} \leq \frac{\probavailable}{4\mu} \leq \frac{b}{2\mu},$ thus $1 - \frac{b}{2} \leq 1 - \gamma \mu$ and 
    \begin{align*}
      &\Exp{f(x^{t + 1})} + \frac{2\gamma (2 \omega + 1)}{\probavailable} \Exp{\norm{g^{t+1} - h^{t+1}}^2} + \frac{4\gamma (\left(2 \omega + 1\right)\probavailable - \probpairaa)}{n \probavailable^2} \Exp{\frac{1}{n}\sum_{i=1}^n\norm{g^{t+1}_i - h^{t+1}_i}^2}\\
      &\quad  + \frac{2\gamma}{b} \Exp{\norm{h^{t+1} - \nabla f(x^{t+1})}^2} + \left(\frac{40 b \gamma \omega (2 \omega + 1)}{n \probavailable^2} + \frac{8 \gamma \left(\probavailable - \probpairaa\right)}{n \probavailable^2}\right) \Exp{\frac{1}{n}\sum_{i=1}^n\norm{h^{t+1}_i - \nabla f_i(x^{t+1})}^2}\\
      &\leq \Exp{f(x^t)} - \frac{\gamma}{2}\Exp{\norm{\nabla f(x^t)}^2} \\
      &\quad + \left(1 - \gamma \mu\right)\frac{2\gamma (2 \omega + 1)}{\probavailable} \Exp{\norm{g^{t} - h^{t}}^2} + \left(1 - \gamma \mu\right)\frac{4\gamma (\left(2 \omega + 1\right)\probavailable - \probpairaa)}{n \probavailable^2} \Exp{\frac{1}{n}\sum_{i=1}^n\norm{g^{t}_i - h^{t}_i}^2} \\
      &\quad + \left(1 - \gamma \mu\right)\frac{2\gamma}{b} \Exp{\norm{h^{t} - \nabla f(x^{t})}^2} + \left(1 - \gamma \mu\right)\left(\frac{40 b \gamma \omega (2 \omega + 1)}{n \probavailable^2} + \frac{8 \gamma \left(\probavailable - \probpairaa\right)}{n \probavailable^2}\right)\Exp{\frac{1}{n}\sum_{i=1}^n\norm{h^{t}_i - \nabla f_i(x^{t})}^2}.
    \end{align*}
    In the view of Lemma~\ref{lemma:good_recursion_pl} with
    \begin{eqnarray*}
        \Psi^t &=& \frac{2(2 \omega + 1)}{\probavailable} \Exp{\norm{g^{t} - h^{t}}^2} + \frac{4(\left(2 \omega + 1\right)\probavailable - \probpairaa)}{n \probavailable^2} \Exp{\frac{1}{n}\sum_{i=1}^n\norm{g^{t}_i - h^{t}_i}^2} \\
        &\quad +& \frac{2}{b} \Exp{\norm{h^{t} - \nabla f(x^{t})}^2} + \left(\frac{40 b \omega (2 \omega + 1)}{n \probavailable^2} + \frac{8 \gamma \left(\probavailable - \probpairaa\right)}{n \probavailable^2}\right)\Exp{\frac{1}{n}\sum_{i=1}^n\norm{h^{t}_i - \nabla f_i(x^{t})}^2},
    \end{eqnarray*}
    we can conclude the proof of the theorem.
\end{proof}

\subsubsection{Proof for \algname{\algorithmname-PAGE} under P\L-condition}

\CONVERGENCEPLPAGE*

\begin{proof}
  Let us fix constants $\nu, \rho \in [0,\infty)$ that we will define later. Considering Lemma~\ref{lemma:main_lemma_pl}, Lemma~\ref{lemma:gradient_page}, and the law of total expectation, we obtain
    \begin{align*}
      &\Exp{f(x^{t + 1})} + \frac{2\gamma (2\omega + 1)}{\probavailable} \Exp{\norm{g^{t+1} - h^{t+1}}^2} + \frac{4 \gamma (\left(2 \omega + 1\right)\probavailable - \probpairaa)}{n \probavailable^2} \Exp{\frac{1}{n}\sum_{i=1}^n\norm{g^{t+1}_i - h^{t+1}_i}^2}\\
      &\quad  + \nu \Exp{\norm{h^{t+1} - \nabla f(x^{t+1})}^2} + \rho \Exp{\frac{1}{n}\sum_{i=1}^n\norm{h^{t+1}_i - \nabla f_i(x^{t+1})}^2}\\
      &\leq \Exp{f(x^t) - \frac{\gamma}{2}\norm{\nabla f(x^t)}^2 - \left(\frac{1}{2\gamma} - \frac{L}{2}\right)
      \norm{x^{t+1} - x^t}^2 + \gamma \norm{h^{t} - \nabla f(x^t)}^2}\nonumber\\
      &\quad + \left(1 - \gamma \mu\right)\frac{2\gamma(2\omega + 1)}{\probavailable}\Exp{\norm{g^{t} - h^t}^2} + \left(1 - \gamma \mu\right)\frac{4 \gamma (\left(2 \omega + 1\right)\probavailable - \probpairaa)}{n \probavailable^2}\Exp{\frac{1}{n} \sum_{i=1}^n\norm{g^t_i - h^{t}_i}^2}\nonumber\\
      &\quad + \frac{10 \gamma (2 \omega + 1)\omega}{n \probavailable^2}\Exp{\frac{1}{n} \sum_{i=1}^n\norm{k^{t+1}_i}^2} \\
      &\quad  + \nu \Exp{\norm{h^{t+1} - \nabla f(x^{t+1})}^2} + \rho \Exp{\frac{1}{n}\sum_{i=1}^n\norm{h^{t+1}_i - \nabla f_i(x^{t+1})}^2}\\
      &\leq \Exp{f(x^t) - \frac{\gamma}{2}\norm{\nabla f(x^t)}^2 - \left(\frac{1}{2\gamma} - \frac{L}{2}\right)
      \norm{x^{t+1} - x^t}^2 + \gamma \norm{h^{t} - \nabla f(x^t)}^2}\nonumber\\
      &\quad + \left(1 - \gamma \mu\right)\frac{2\gamma(2\omega + 1)}{\probavailable}\Exp{\norm{g^{t} - h^t}^2} + \left(1 - \gamma \mu\right)\frac{4 \gamma (\left(2 \omega + 1\right)\probavailable - \probpairaa)}{n \probavailable^2}\Exp{\frac{1}{n} \sum_{i=1}^n\norm{g^t_i - h^{t}_i}^2}\nonumber\\
      &\quad + \frac{10 \gamma (2 \omega + 1)\omega}{n \probavailable^2}\Exp{\left(2 \widehat{L}^2 + \frac{(1 - \probpage)L_{\max}^2}{B}\right)\norm{x^{t+1} - x^{t}}^2 +  \frac{2b^2}{\probpage} \frac{1}{n}\sum_{i=1}^n \norm{h^t_i - \nabla f_i(x^{t})}^2} \\
      &\quad  + \nu {\rm E}\Bigg(\left(\frac{2 \left(\probavailable - \probpairaa\right) \widehat{L}^2}{n \probavailable^2} + \frac{(1 - \probpage)L_{\max}^2}{n \probavailable B}\right) \norm{x^{t+1} - x^{t}}^2\\
      &\qquad\quad + \frac{2\left(\probavailable - \probpairaa\right) b^2}{n^2 \probavailable^2 \probpage}\sum_{i=1}^n\norm{ h^t_i - \nabla f_i(x^{t})}^2 + \left(\probpage\left(1 - \frac{b}{\probpage}\right)^2 + (1 - \probpage)\right)\norm{h^{t} - \nabla f(x^{t})}^2\Bigg) \\
      &\quad + \rho {\rm E}\Bigg(\left(\frac{2\left(1 - \probavailable\right)\widehat{L}^2}{\probavailable} + \frac{(1 - \probpage)L_{\max}^2}{\probavailable B}\right) \norm{x^{t+1} - x^{t}}^2 \\
      &\qquad\quad +\left(\frac{2\left(1 - \probavailable\right)b^2}{\probavailable \probpage} + \probpage\left(1 - \frac{b}{\probpage}\right)^2 + (1 - \probpage)\right)\frac{1}{n}\sum_{i=1}^n\norm{h^{t}_i - \nabla f_i(x^{t})}^2\Bigg).
    \end{align*}
    After rearranging the terms, we get
    \begin{align*}
      &\Exp{f(x^{t + 1})} + \frac{2\gamma (2\omega + 1)}{\probavailable} \Exp{\norm{g^{t+1} - h^{t+1}}^2} + \frac{4 \gamma (\left(2 \omega + 1\right)\probavailable - \probpairaa)}{n \probavailable^2} \Exp{\frac{1}{n}\sum_{i=1}^n\norm{g^{t+1}_i - h^{t+1}_i}^2}\\
      &\quad  + \nu \Exp{\norm{h^{t+1} - \nabla f(x^{t+1})}^2} + \rho \Exp{\frac{1}{n}\sum_{i=1}^n\norm{h^{t+1}_i - \nabla f_i(x^{t+1})}^2}\\
      &\leq \Exp{f(x^t)} - \frac{\gamma}{2}\Exp{\norm{\nabla f(x^t)}^2} \\
      &\quad + \left(1 - \gamma \mu\right)\frac{2\gamma(2\omega + 1)}{\probavailable}\Exp{\norm{g^{t} - h^t}^2} + \left(1 - \gamma \mu\right)\frac{4 \gamma (\left(2 \omega + 1\right)\probavailable - \probpairaa)}{n \probavailable^2}\Exp{\frac{1}{n} \sum_{i=1}^n\norm{g^t_i - h^{t}_i}^2}\\
      &\quad - \Bigg(\frac{1}{2\gamma} - \frac{L}{2} - \frac{10 \gamma \omega (2 \omega + 1)}{n \probavailable^2} \left(2 \widehat{L}^2 + \frac{(1 - \probpage)L_{\max}^2}{B}\right) \\
      &\qquad\quad - \nu\left(\frac{2 \left(\probavailable - \probpairaa\right) \widehat{L}^2}{n \probavailable^2} + \frac{(1 - \probpage)L_{\max}^2}{n \probavailable B}\right) - \rho \left(\frac{2\left(1 - \probavailable\right)\widehat{L}^2}{\probavailable} + \frac{(1 - \probpage)L_{\max}^2}{\probavailable B}\right)\Bigg) \Exp{\norm{x^{t+1} - x^t}^2} \\
      &\quad + \left(\gamma + \nu \left(\probpage\left(1 - \frac{b}{\probpage}\right)^2 + (1 - \probpage)\right)\right) \Exp{\norm{h^{t} - \nabla f(x^{t})}^2} \\
      &\quad + \Bigg(\frac{20 b^2 \gamma \omega (2 \omega + 1)}{n \probavailable^2 \probpage} + \frac{2 \nu \left(\probavailable - \probpairaa\right) b^2}{n \probavailable^2 \probpage} \\
      &\qquad\quad+ \rho \left(\frac{2\left(1 - \probavailable\right)b^2}{\probavailable \probpage} + \probpage\left(1 - \frac{b}{\probpage}\right)^2 + (1 - \probpage)\right)\Bigg)\Exp{\frac{1}{n}\sum_{i=1}^n\norm{h^{t}_i - \nabla f_i(x^{t})}^2}.
    \end{align*}
    Due to $b = \frac{\probpage \probavailable}{2 - \probavailable} \leq \probpage,$ one can show that $\left(\probpage\left(1 - \frac{b}{\probpage}\right)^2 + (1 - \probpage)\right) \leq 1 - b.$ 
    Thus, if we take $\nu = \frac{2\gamma}{b},$ then
    $$\left(\gamma + \nu \left(\probpage\left(1 - \frac{b}{\probpage}\right)^2 + (1 - \probpage)\right)\right) \leq \gamma + \nu (1 - b) = \left(1 - \frac{b}{2}\right)\nu,$$ therefore
    \begin{align*}
      &\Exp{f(x^{t + 1})} + \frac{2\gamma (2\omega + 1)}{\probavailable} \Exp{\norm{g^{t+1} - h^{t+1}}^2} + \frac{4 \gamma (\left(2 \omega + 1\right)\probavailable - \probpairaa)}{n \probavailable^2} \Exp{\frac{1}{n}\sum_{i=1}^n\norm{g^{t+1}_i - h^{t+1}_i}^2}\\
      &\quad  + \frac{2\gamma}{b} \Exp{\norm{h^{t+1} - \nabla f(x^{t+1})}^2} + \rho \Exp{\frac{1}{n}\sum_{i=1}^n\norm{h^{t+1}_i - \nabla f_i(x^{t+1})}^2}\\
      &\leq \Exp{f(x^t)} - \frac{\gamma}{2}\Exp{\norm{\nabla f(x^t)}^2} \\
      &\quad + \left(1 - \gamma \mu\right)\frac{2\gamma(2\omega + 1)}{\probavailable}\Exp{\norm{g^{t} - h^t}^2} + \left(1 - \gamma \mu\right)\frac{4 \gamma (\left(2 \omega + 1\right)\probavailable - \probpairaa)}{n \probavailable^2}\Exp{\frac{1}{n} \sum_{i=1}^n\norm{g^t_i - h^{t}_i}^2}\\
      &\quad - \Bigg(\frac{1}{2\gamma} - \frac{L}{2} - \frac{10 \gamma \omega (2 \omega + 1)}{n \probavailable^2} \left(2 \widehat{L}^2 + \frac{(1 - \probpage)L_{\max}^2}{B}\right) \\
      &\qquad\quad - \frac{2\gamma}{b n \probavailable}\left(2 \left(1 - \frac{\probpairaa}{\probavailable}\right) \widehat{L}^2 + \frac{(1 - \probpage)L_{\max}^2}{B}\right) - \rho \left(\frac{2\left(1 - \probavailable\right)\widehat{L}^2}{\probavailable} + \frac{(1 - \probpage)L_{\max}^2}{\probavailable B}\right)\Bigg) \Exp{\norm{x^{t+1} - x^t}^2} \\
      &\quad + \left(1 - \frac{b}{2}\right)\frac{2\gamma}{b} \Exp{\norm{h^{t} - \nabla f(x^{t})}^2} \\
      &\quad + \Bigg(\frac{20 b^2 \gamma \omega (2 \omega + 1)}{n \probavailable^2 \probpage} + \frac{4 \gamma \left(\probavailable - \probpairaa\right) b}{n \probavailable^2 \probpage} \\
      &\qquad\quad+ \rho \left(\frac{2\left(1 - \probavailable\right)b^2}{\probavailable \probpage} + \probpage\left(1 - \frac{b}{\probpage}\right)^2 + (1 - \probpage)\right)\Bigg)\Exp{\frac{1}{n}\sum_{i=1}^n\norm{h^{t}_i - \nabla f_i(x^{t})}^2}.
    \end{align*}
    Next, with the choice of $b = \frac{\probpage \probavailable}{2 - \probavailable},$ we ensure that
    $$\left(\frac{2\left(1 - \probavailable\right)b^2}{\probavailable \probpage} + \probpage\left(1 - \frac{b}{\probpage}\right)^2 + (1 - \probpage)\right) \leq 1 - b.$$ If we take $\rho = \frac{40 b \gamma \omega (2 \omega + 1)}{n \probavailable^2 \probpage} + \frac{8 \gamma \left(\probavailable - \probpairaa\right)}{n \probavailable^2 \probpage},$ then
    $$\Bigg(\frac{20 b^2 \gamma \omega (2 \omega + 1)}{n \probavailable^2 \probpage} + \frac{4 \gamma \left(\probavailable - \probpairaa\right) b}{n \probavailable^2 \probpage} + \rho \left(\frac{2\left(1 - \probavailable\right)b^2}{\probavailable \probpage} + \probpage\left(1 - \frac{b}{\probpage}\right)^2 + (1 - \probpage)\right)\Bigg) \leq \left(1 - \frac{b}{2}\right)\rho,$$ therefore
    \begin{align*}
      &\Exp{f(x^{t + 1})} + \frac{2\gamma (2 \omega + 1)}{\probavailable} \Exp{\norm{g^{t+1} - h^{t+1}}^2} + \frac{4 \gamma (\left(2 \omega + 1\right)\probavailable - \probpairaa)}{n \probavailable^2} \Exp{\frac{1}{n}\sum_{i=1}^n\norm{g^{t+1}_i - h^{t+1}_i}^2}\\
      &\quad  + \frac{2\gamma}{b} \Exp{\norm{h^{t+1} - \nabla f(x^{t+1})}^2} + \left(\frac{40 b \gamma \omega (2 \omega + 1)}{n \probavailable^2 \probpage} + \frac{8 \gamma \left(\probavailable - \probpairaa\right)}{n \probavailable^2 \probpage}\right) \Exp{\frac{1}{n}\sum_{i=1}^n\norm{h^{t+1}_i - \nabla f_i(x^{t+1})}^2}\\
      &\leq \Exp{f(x^t)} - \frac{\gamma}{2}\Exp{\norm{\nabla f(x^t)}^2} \\
      &\quad + \left(1 - \gamma \mu\right)\frac{2 \gamma (2 \omega + 1)}{\probavailable} \Exp{\norm{g^{t} - h^{t}}^2} + \left(1 - \gamma \mu\right)\frac{4 \gamma (\left(2 \omega + 1\right)\probavailable - \probpairaa)}{n \probavailable^2} \Exp{\frac{1}{n}\sum_{i=1}^n\norm{g^{t}_i - h^{t}_i}^2} \\
      &\quad - \Bigg(\frac{1}{2\gamma} - \frac{L}{2} - \frac{10 \gamma \omega (2 \omega + 1)}{n \probavailable^2} \left(2 \widehat{L}^2 + \frac{(1 - \probpage)L_{\max}^2}{B}\right) \\
      &\qquad\quad - \frac{2 \gamma}{b n \probavailable}\left(2 \left(1 - \frac{\probpairaa}{\probavailable}\right) \widehat{L}^2 + \frac{(1 - \probpage)L_{\max}^2}{B}\right) \\
      &\qquad\quad- \left(\frac{40 b \gamma \omega (2 \omega + 1)}{n \probavailable^3 \probpage} + \frac{8 \gamma \left(1 - \frac{\probpairaa}{\probavailable}\right)}{n \probavailable^2 \probpage}\right) \left(2\left(1 - \probavailable\right)\widehat{L}^2 + \frac{(1 - \probpage)L_{\max}^2}{B}\right)\Bigg) \Exp{\norm{x^{t+1} - x^t}^2} \\
      &\quad + \left(1 - \frac{b}{2}\right)\frac{2\gamma}{b}\Exp{\norm{h^{t} - \nabla f(x^{t})}^2} + \left(1 - \frac{b}{2}\right)\left(\frac{40 b \gamma \omega (2 \omega + 1)}{n \probavailable^2 \probpage} + \frac{8 \gamma \left(\probavailable - \probpairaa\right)}{n \probavailable^2 \probpage}\right)\Exp{\frac{1}{n}\sum_{i=1}^n\norm{h^{t}_i - \nabla f_i(x^{t})}^2}.
    \end{align*}
    Let us simplify the inequality. First, due to $b \geq \frac{\probpage \probavailable}{2},$ we have
    $$\frac{2\gamma}{b n \probavailable}\left(2 \left(1 - \frac{\probpairaa}{\probavailable}\right) \widehat{L}^2 + \frac{(1 - \probpage)L_{\max}^2}{B}\right) \leq \frac{8 \gamma}{n \probavailable^2 \probpage}\left(\left(1 - \frac{\probpairaa}{\probavailable}\right) \widehat{L}^2 + \frac{(1 - \probpage)L_{\max}^2}{B}\right).$$
    Second, due to $b \leq \probavailable \probpage$ and $\probpairaa \leq \probavailable^2$, we get
    \begin{align*}
      &\left(\frac{40 b \gamma \omega (2 \omega + 1)}{n \probavailable^3 \probpage} + \frac{8 \gamma \left(1 - \frac{\probpairaa}{\probavailable}\right)}{n \probavailable^2 \probpage}\right) \left(2\left(1 - \probavailable\right)\widehat{L}^2 + \frac{(1 - \probpage)L_{\max}^2}{B}\right) \\
      &\leq \left(\frac{40 \gamma \omega (2 \omega + 1)}{n \probavailable^2} + \frac{8 \gamma \left(1 - \frac{\probpairaa}{\probavailable}\right)}{n \probavailable^2 \probpage}\right) \left(2\left(1 - \frac{\probpairaa}{\probavailable}\right)\widehat{L}^2 + \frac{(1 - \probpage)L_{\max}^2}{B}\right) \\
      &\leq \frac{80 \gamma \omega (2 \omega + 1)}{n \probavailable^2} \left(\left(1 - \frac{\probpairaa}{\probavailable}\right)\widehat{L}^2 + \frac{(1 - \probpage)L_{\max}^2}{B}\right) \\
      &\quad + \frac{16 \gamma \left(1 - \frac{\probpairaa}{\probavailable}\right)}{n \probavailable^2 \probpage} \left(\left(1 - \frac{\probpairaa}{\probavailable}\right)\widehat{L}^2 + \frac{(1 - \probpage)L_{\max}^2}{B}\right) \\
      &\leq \frac{80 \gamma \omega (2 \omega + 1)}{n \probavailable^2} \left(\widehat{L}^2 + \frac{(1 - \probpage)L_{\max}^2}{B}\right) \\
      &\quad + \frac{16 \gamma}{n \probavailable^2 \probpage} \left(\left(1 - \frac{\probpairaa}{\probavailable}\right)\widehat{L}^2 + \frac{(1 - \probpage)L_{\max}^2}{B}\right).
    \end{align*}
    Combining all bounds together, we obtain the following inequality:
    \begin{align*}
      &\Exp{f(x^{t + 1})} + \frac{2\gamma (2 \omega + 1)}{\probavailable} \Exp{\norm{g^{t+1} - h^{t+1}}^2} + \frac{4 \gamma (\left(2 \omega + 1\right)\probavailable - \probpairaa)}{n \probavailable^2} \Exp{\frac{1}{n}\sum_{i=1}^n\norm{g^{t+1}_i - h^{t+1}_i}^2}\\
      &\quad  + \frac{2\gamma}{b} \Exp{\norm{h^{t+1} - \nabla f(x^{t+1})}^2} + \left(\frac{40 b \gamma \omega (2 \omega + 1)}{n \probavailable^2 \probpage} + \frac{8 \gamma \left(\probavailable - \probpairaa\right)}{n \probavailable^2 \probpage}\right) \Exp{\frac{1}{n}\sum_{i=1}^n\norm{h^{t+1}_i - \nabla f_i(x^{t+1})}^2}\\
      &\leq \Exp{f(x^t)} - \frac{\gamma}{2}\Exp{\norm{\nabla f(x^t)}^2} \\
      &\quad + \left(1 - \gamma \mu\right)\frac{2 \gamma (2 \omega + 1)}{\probavailable} \Exp{\norm{g^{t} - h^{t}}^2} + \left(1 - \gamma \mu\right)\frac{4 \gamma (\left(2 \omega + 1\right)\probavailable - \probpairaa)}{n \probavailable^2} \Exp{\frac{1}{n}\sum_{i=1}^n\norm{g^{t}_i - h^{t}_i}^2} \\
      &\quad - \Bigg(\frac{1}{2\gamma} - \frac{L}{2} - \frac{100 \gamma \omega (2 \omega + 1)}{n \probavailable^2} \left(\widehat{L}^2 + \frac{(1 - \probpage)L_{\max}^2}{B}\right) \\
      &\qquad\quad - \frac{24 \gamma}{n \probavailable^2 \probpage}\left(\left(1 - \frac{\probpairaa}{\probavailable}\right) \widehat{L}^2 + \frac{(1 - \probpage)L_{\max}^2}{B}\right)\Bigg) \Exp{\norm{x^{t+1} - x^t}^2} \\
      &\quad + \left(1 - \frac{b}{2}\right)\frac{2\gamma}{b}\Exp{\norm{h^{t} - \nabla f(x^{t})}^2} + \left(1 - \frac{b}{2}\right)\left(\frac{40 b \gamma \omega (2 \omega + 1)}{n \probavailable^2 \probpage} + \frac{8 \gamma \left(\probavailable - \probpairaa\right)}{n \probavailable^2 \probpage}\right)\Exp{\frac{1}{n}\sum_{i=1}^n\norm{h^{t}_i - \nabla f_i(x^{t})}^2}.
    \end{align*}
    Using Lemma~\ref{lemma:gamma} and the assumption about $\gamma,$ we get
    \begin{align*}
      &\Exp{f(x^{t + 1})} + \frac{2\gamma (2 \omega + 1)}{\probavailable} \Exp{\norm{g^{t+1} - h^{t+1}}^2} + \frac{4 \gamma (\left(2 \omega + 1\right)\probavailable - \probpairaa)}{n \probavailable^2} \Exp{\frac{1}{n}\sum_{i=1}^n\norm{g^{t+1}_i - h^{t+1}_i}^2}\\
      &\quad  + \frac{2\gamma}{b} \Exp{\norm{h^{t+1} - \nabla f(x^{t+1})}^2} + \left(\frac{40 b \gamma \omega (2 \omega + 1)}{n \probavailable^2 \probpage} + \frac{8 \gamma \left(\probavailable - \probpairaa\right)}{n \probavailable^2 \probpage}\right) \Exp{\frac{1}{n}\sum_{i=1}^n\norm{h^{t+1}_i - \nabla f_i(x^{t+1})}^2}\\
      &\leq \Exp{f(x^t)} - \frac{\gamma}{2}\Exp{\norm{\nabla f(x^t)}^2} \\
      &\quad + \left(1 - \gamma \mu\right)\frac{2 \gamma (2 \omega + 1)}{\probavailable} \Exp{\norm{g^{t} - h^{t}}^2} + \left(1 - \gamma \mu\right)\frac{4 \gamma (\left(2 \omega + 1\right)\probavailable - \probpairaa)}{n \probavailable^2} \Exp{\frac{1}{n}\sum_{i=1}^n\norm{g^{t}_i - h^{t}_i}^2} \\
      &\quad + \left(1 - \frac{b}{2}\right)\frac{2\gamma}{b}\Exp{\norm{h^{t} - \nabla f(x^{t})}^2} + \left(1 - \frac{b}{2}\right)\left(\frac{40 b \gamma \omega (2 \omega + 1)}{n \probavailable^2 \probpage} + \frac{8 \gamma \left(\probavailable - \probpairaa\right)}{n \probavailable^2 \probpage}\right)\Exp{\frac{1}{n}\sum_{i=1}^n\norm{h^{t}_i - \nabla f_i(x^{t})}^2}.
    \end{align*}
    Note that $\gamma \leq \frac{b}{2\mu},$ thus $1 - \frac{b}{2} \leq 1 - \gamma \mu$ and 
    \begin{align*}
      &\Exp{f(x^{t + 1})} + \frac{2\gamma (2 \omega + 1)}{\probavailable} \Exp{\norm{g^{t+1} - h^{t+1}}^2} + \frac{4 \gamma (\left(2 \omega + 1\right)\probavailable - \probpairaa)}{n \probavailable^2} \Exp{\frac{1}{n}\sum_{i=1}^n\norm{g^{t+1}_i - h^{t+1}_i}^2}\\
      &\quad  + \frac{2\gamma}{b} \Exp{\norm{h^{t+1} - \nabla f(x^{t+1})}^2} + \left(\frac{40 b \gamma \omega (2 \omega + 1)}{n \probavailable^2 \probpage} + \frac{8 \gamma \left(\probavailable - \probpairaa\right)}{n \probavailable^2 \probpage}\right) \Exp{\frac{1}{n}\sum_{i=1}^n\norm{h^{t+1}_i - \nabla f_i(x^{t+1})}^2}\\
      &\leq \Exp{f(x^t)} - \frac{\gamma}{2}\Exp{\norm{\nabla f(x^t)}^2} \\
      &\quad + \left(1 - \gamma \mu\right)\frac{2 \gamma (2 \omega + 1)}{\probavailable} \Exp{\norm{g^{t} - h^{t}}^2} + \left(1 - \gamma \mu\right)\frac{4 \gamma (\left(2 \omega + 1\right)\probavailable - \probpairaa)}{n \probavailable^2} \Exp{\frac{1}{n}\sum_{i=1}^n\norm{g^{t}_i - h^{t}_i}^2} \\
      &\quad + \left(1 - \gamma \mu\right)\frac{2\gamma}{b}\Exp{\norm{h^{t} - \nabla f(x^{t})}^2} + \left(1 - \gamma \mu\right)\left(\frac{40 b \gamma \omega (2 \omega + 1)}{n \probavailable^2 \probpage} + \frac{8 \gamma \left(\probavailable - \probpairaa\right)}{n \probavailable^2 \probpage}\right)\Exp{\frac{1}{n}\sum_{i=1}^n\norm{h^{t}_i - \nabla f_i(x^{t})}^2}.
    \end{align*}
    It is left to apply Lemma~\ref{lemma:good_recursion_pl} with 
    \begin{align*}
      \Psi^t &= \frac{2 (2 \omega + 1)}{\probavailable} \Exp{\norm{g^{t} - h^{t}}^2} + \frac{4 (\left(2 \omega + 1\right)\probavailable - \probpairaa)}{n \probavailable^2} \Exp{\frac{1}{n}\sum_{i=1}^n\norm{g^{t}_i - h^{t}_i}^2} \\
      &+ \frac{2}{b}\Exp{\norm{h^{t} - \nabla f(x^{t})}^2} + \left(\frac{40 b \omega (2 \omega + 1)}{n \probavailable^2 \probpage} + \frac{8 \left(\probavailable - \probpairaa\right)}{n \probavailable^2 \probpage}\right)\Exp{\frac{1}{n}\sum_{i=1}^n\norm{h^{t}_i - \nabla f_i(x^{t})}^2}
    \end{align*}
    to conclude the proof.
  \end{proof}

  \subsubsection{Proof for \algname{\algorithmname-MVR} under P\L-condition}

  \CONVERGENCEPLSTOCHASTIC*

  \begin{align*}
    &\Exp{f(x^{t + 1})} + \frac{2\gamma (2\omega + 1)}{\probavailable} \Exp{\norm{g^{t+1} - h^{t+1}}^2} + \frac{4 \gamma (\left(2 \omega + 1\right)\probavailable - \probpairaa)}{n \probavailable^2} \Exp{\frac{1}{n}\sum_{i=1}^n\norm{g^{t+1}_i - h^{t+1}_i}^2}\\
    &\leq \Exp{f(x^t) - \frac{\gamma}{2}\norm{\nabla f(x^t)}^2 - \left(\frac{1}{2\gamma} - \frac{L}{2}\right)
    \norm{x^{t+1} - x^t}^2 + \gamma \norm{h^{t} - \nabla f(x^t)}^2}\nonumber\\
    &\quad + \left(1 - \gamma \mu\right)\frac{2\gamma(2\omega + 1)}{\probavailable}\Exp{\norm{g^{t} - h^t}^2} + \left(1 - \gamma \mu\right)\frac{4 \gamma (\left(2 \omega + 1\right)\probavailable - \probpairaa)}{n \probavailable^2}\Exp{\frac{1}{n} \sum_{i=1}^n\norm{g^t_i - h^{t}_i}^2}\nonumber\\
    &\quad + \frac{10 \gamma (2 \omega + 1)\omega}{n \probavailable^2}\Exp{\frac{1}{n} \sum_{i=1}^n\norm{k^{t+1}_i}^2}.
  \end{align*}

  \begin{proof}
    Let us fix constants $\nu, \rho \in [0,\infty)$ that we will define later. Considering Lemma~\ref{lemma:main_lemma_pl}, Lemma~\ref{lemma:gradient_mvr}, and the law of total expectation, we obtain
      \begin{align*}
        &\Exp{f(x^{t + 1})} + \frac{2\gamma (2\omega + 1)}{\probavailable} \Exp{\norm{g^{t+1} - h^{t+1}}^2} + \frac{4 \gamma (\left(2 \omega + 1\right)\probavailable - \probpairaa)}{n \probavailable^2} \Exp{\frac{1}{n}\sum_{i=1}^n\norm{g^{t+1}_i - h^{t+1}_i}^2}\\
        &\quad  + \nu \Exp{\norm{h^{t+1} - \nabla f(x^{t+1})}^2} + \rho \Exp{\frac{1}{n}\sum_{i=1}^n\norm{h^{t+1}_i - \nabla f_i(x^{t+1})}^2}\\
        &\leq \Exp{f(x^t) - \frac{\gamma}{2}\norm{\nabla f(x^t)}^2 - \left(\frac{1}{2\gamma} - \frac{L}{2}\right)
        \norm{x^{t+1} - x^t}^2 + \gamma \norm{h^{t} - \nabla f(x^t)}^2}\nonumber\\
        &\quad + \left(1 - \gamma \mu\right)\frac{2\gamma(2\omega + 1)}{\probavailable}\Exp{\norm{g^{t} - h^t}^2} + \left(1 - \gamma \mu\right)\frac{4 \gamma (\left(2 \omega + 1\right)\probavailable - \probpairaa)}{n \probavailable^2}\Exp{\frac{1}{n} \sum_{i=1}^n\norm{g^t_i - h^{t}_i}^2}\nonumber\\
        &\quad + \frac{10 \gamma (2 \omega + 1)\omega}{n \probavailable^2}\Exp{\frac{1}{n} \sum_{i=1}^n\norm{k^{t+1}_i}^2} \\
        &\quad  + \nu \Exp{\norm{h^{t+1} - \nabla f(x^{t+1})}^2} + \rho \Exp{\frac{1}{n}\sum_{i=1}^n\norm{h^{t+1}_i - \nabla f_i(x^{t+1})}^2}\\
        &\leq \Exp{f(x^t) - \frac{\gamma}{2}\norm{\nabla f(x^t)}^2 - \left(\frac{1}{2\gamma} - \frac{L}{2}\right)
        \norm{x^{t+1} - x^t}^2 + \gamma \norm{h^{t} - \nabla f(x^t)}^2}\nonumber\\
        &\quad + \left(1 - \gamma \mu\right)\frac{2 \gamma (2 \omega + 1)}{\probavailable}\Exp{\norm{g^{t} - h^t}^2}+ \left(1 - \gamma \mu\right)\frac{4\gamma (\left(2 \omega + 1\right)\probavailable - \probpairaa)}{n \probavailable^2}\Exp{\frac{1}{n} \sum_{i=1}^n\norm{g^t_i - h^{t}_i}^2} \\
        &\quad + \frac{10 \gamma \omega (2 \omega + 1)}{n \probavailable^2} \Exp{\frac{2 b^2 \sigma^2}{B} + \left(\frac{2 (1 - b)^2 L_{\sigma}^2}{B} + 2 \widehat{L}^2\right)\norm{x^{t+1} - x^{t}}^2 + 2 b^2 \frac{1}{n} \sum_{i=1}^n \norm{h^t_i - \nabla f_i(x^{t})}^2} \\
        &\quad  + \nu {\rm E}\Bigg(\frac{2 b^2 \sigma^2}{n \probavailable B} + \left(\frac{2 (1 - b)^2 L_{\sigma}^2}{n \probavailable B} + \frac{2\left(\probavailable - \probpairaa\right) \widehat{L}^2}{n \probavailable^2} \right) \norm{x^{t+1} - x^{t}}^2\\
        &\qquad\quad + \frac{2 \left(\probavailable - \probpairaa\right) b^2}{n^2 \probavailable^2} \sum_{i=1}^n \norm{h^t_i - \nabla f_i(x^{t})}^2 + \left(1 - b\right)^2 \norm{h^{t} - \nabla f(x^{t})}^2\Bigg) \\
        &\quad + \rho {\rm E}\Bigg(\frac{2 b^2 \sigma^2}{\probavailable B}  + \left(\frac{2 (1 - b)^2 L_{\sigma}^2}{\probavailable B} + \frac{2(1 - \probavailable) \widehat{L}^2}{\probavailable}\right)\norm{x^{t+1} - x^{t}}^2 \\
        &\qquad\quad + \left(\frac{2 (1 - \probavailable) b^2}{\probavailable} + (1 - b)^2\right) \frac{1}{n} \sum_{i=1}^n \norm{h^t_i - \nabla f_i(x^{t})}^2\Bigg).
      \end{align*}
      After rearranging the terms, we get
      \begin{align*}
        &\Exp{f(x^{t + 1})} + \left(1 - \gamma \mu\right)\frac{2\gamma (2 \omega + 1)}{\probavailable} \Exp{\norm{g^{t+1} - h^{t+1}}^2} + \left(1 - \gamma \mu\right)\frac{4\gamma (\left(2 \omega + 1\right)\probavailable - \probpairaa)}{n \probavailable^2} \Exp{\frac{1}{n}\sum_{i=1}^n\norm{g^{t+1}_i - h^{t+1}_i}^2}\\
        &\quad  + \nu \Exp{\norm{h^{t+1} - \nabla f(x^{t+1})}^2} + \rho \Exp{\frac{1}{n}\sum_{i=1}^n\norm{h^{t+1}_i - \nabla f_i(x^{t+1})}^2}\\
        &\leq \Exp{f(x^t)} - \frac{\gamma}{2}\Exp{\norm{\nabla f(x^t)}^2} \\
        &\quad + \left(1 - \gamma \mu\right)\frac{2\gamma (2 \omega + 1)}{\probavailable} \Exp{\norm{g^{t} - h^{t}}^2} + \left(1 - \gamma \mu\right)\frac{4\gamma (\left(2 \omega + 1\right)\probavailable - \probpairaa)}{n \probavailable^2} \Exp{\frac{1}{n}\sum_{i=1}^n\norm{g^{t}_i - h^{t}_i}^2} \\
        &\quad - \Bigg(\frac{1}{2\gamma} - \frac{L}{2} - \frac{10 \gamma \omega (2 \omega + 1)}{n \probavailable^2} \left(\frac{2 (1 - b)^2 L_{\sigma}^2}{B} + 2 \widehat{L}^2\right) \\
        &\qquad\quad - \nu\left(\frac{2 (1 - b)^2 L_{\sigma}^2}{n \probavailable B} + \frac{2\left(\probavailable - \probpairaa\right) \widehat{L}^2}{n \probavailable^2} \right) - \rho \left(\frac{2 (1 - b)^2 L_{\sigma}^2}{\probavailable B} + \frac{2(1 - \probavailable) \widehat{L}^2}{\probavailable}\right)\Bigg) \Exp{\norm{x^{t+1} - x^t}^2} \\
        &\quad + \left(\gamma + \nu \left(1 - b\right)^2\right) \Exp{\norm{h^{t} - \nabla f(x^{t})}^2} \\
        &\quad + \Bigg(\frac{20 b^2 \gamma \omega (2 \omega + 1)}{n \probavailable^2} + \frac{2 \nu \left(\probavailable - \probpairaa\right) b^2}{n \probavailable^2} + \rho \left(\frac{2 (1 - \probavailable) b^2}{\probavailable} + (1 - b)^2\right)\Bigg)\Exp{\frac{1}{n}\sum_{i=1}^n\norm{h^{t}_i - \nabla f_i(x^{t})}^2} \\
        &\quad + \left(\frac{20 b^2 \gamma \omega (2 \omega + 1)}{n \probavailable^2} + \nu \frac{2 b^2}{n \probavailable} + \rho \frac{2 b^2}{\probavailable}\right) \frac{\sigma^2}{B}.
      \end{align*}
      By taking $\nu = \frac{2\gamma}{b},$ one can show that $\left(\gamma + \nu (1 - b)^2\right) \leq \left(1 - \frac{b}{2}\right)\nu,$ and
      \begin{align*}
        &\Exp{f(x^{t + 1})} + \left(1 - \gamma \mu\right)\frac{2\gamma (2 \omega + 1)}{\probavailable} \Exp{\norm{g^{t+1} - h^{t+1}}^2} + \left(1 - \gamma \mu\right)\frac{4\gamma (\left(2 \omega + 1\right)\probavailable - \probpairaa)}{n \probavailable^2} \Exp{\frac{1}{n}\sum_{i=1}^n\norm{g^{t+1}_i - h^{t+1}_i}^2}\\
        &\quad  + \frac{2\gamma}{b} \Exp{\norm{h^{t+1} - \nabla f(x^{t+1})}^2} + \rho \Exp{\frac{1}{n}\sum_{i=1}^n\norm{h^{t+1}_i - \nabla f_i(x^{t+1})}^2}\\
        &\leq \Exp{f(x^t)} - \frac{\gamma}{2}\Exp{\norm{\nabla f(x^t)}^2} \\
        &\quad + \left(1 - \gamma \mu\right)\frac{2\gamma (2 \omega + 1)}{\probavailable} \Exp{\norm{g^{t} - h^{t}}^2} + \left(1 - \gamma \mu\right)\frac{4\gamma (\left(2 \omega + 1\right)\probavailable - \probpairaa)}{n \probavailable^2} \Exp{\frac{1}{n}\sum_{i=1}^n\norm{g^{t}_i - h^{t}_i}^2} \\
        &\quad - \Bigg(\frac{1}{2\gamma} - \frac{L}{2} - \frac{10 \gamma \omega (2 \omega + 1)}{n \probavailable^2} \left(\frac{2 (1 - b)^2 L_{\sigma}^2}{B} + 2 \widehat{L}^2\right) \\
        &\qquad\quad - \frac{2\gamma}{b}\left(\frac{2 (1 - b)^2 L_{\sigma}^2}{n \probavailable B} + \frac{2\left(\probavailable - \probpairaa\right) \widehat{L}^2}{n \probavailable^2} \right) - \rho \left(\frac{2 (1 - b)^2 L_{\sigma}^2}{\probavailable B} + \frac{2(1 - \probavailable) \widehat{L}^2}{\probavailable}\right)\Bigg) \Exp{\norm{x^{t+1} - x^t}^2} \\
        &\quad + \left(1 - \frac{b}{2}\right)\frac{2\gamma}{b} \Exp{\norm{h^{t} - \nabla f(x^{t})}^2} \\
        &\quad + \Bigg(\frac{20 b^2 \gamma \omega (2 \omega + 1)}{n \probavailable^2} + \frac{4 \gamma \left(\probavailable - \probpairaa\right) b}{n \probavailable^2} + \rho \left(\frac{2 (1 - \probavailable) b^2}{\probavailable} + (1 - b)^2\right)\Bigg)\Exp{\frac{1}{n}\sum_{i=1}^n\norm{h^{t}_i - \nabla f_i(x^{t})}^2} \\
        &\quad + \left(\frac{20 b^2 \gamma \omega (2 \omega + 1)}{n \probavailable^2} + \frac{4 \gamma b}{n \probavailable} + \rho \frac{2 b^2}{\probavailable}\right) \frac{\sigma^2}{B}.
      \end{align*}
      Note that $b \leq \frac{\probavailable}{2 - \probavailable},$ thus
      \begin{align*}
        &\Bigg(\frac{20 b^2 \gamma \omega (2 \omega + 1)}{n \probavailable^2} + \frac{4 \gamma \left(\probavailable - \probpairaa\right) b}{n \probavailable^2} + \rho \left(\frac{2 (1 - \probavailable) b^2}{\probavailable} + (1 - b)^2\right)\Bigg) \\
        &\leq \Bigg(\frac{20 b^2 \gamma \omega (2 \omega + 1)}{n \probavailable^2} + \frac{4 \gamma \left(\probavailable - \probpairaa\right) b}{n \probavailable^2} + \rho \left(1 - b\right)\Bigg).
      \end{align*}
      And if we take $\rho = \frac{40 b \gamma \omega (2 \omega + 1)}{n \probavailable^2} + \frac{8 \gamma \left(\probavailable - \probpairaa\right)}{n \probavailable^2},$ then
      \begin{align*}
        \Bigg(\frac{20 b^2 \gamma \omega (2 \omega + 1)}{n \probavailable^2} + \frac{4 \gamma \left(\probavailable - \probpairaa\right) b}{n \probavailable^2} + \rho \left(1 - b\right)\Bigg) \leq \rho,
      \end{align*}
      and 
      \begin{align*}
        &\Exp{f(x^{t + 1})} + \left(1 - \gamma \mu\right)\frac{2\gamma (2 \omega + 1)}{\probavailable} \Exp{\norm{g^{t+1} - h^{t+1}}^2} + \left(1 - \gamma \mu\right)\frac{4\gamma (\left(2 \omega + 1\right)\probavailable - \probpairaa)}{n \probavailable^2} \Exp{\frac{1}{n}\sum_{i=1}^n\norm{g^{t+1}_i - h^{t+1}_i}^2}\\
        &\quad  + \frac{2\gamma}{b} \Exp{\norm{h^{t+1} - \nabla f(x^{t+1})}^2} + \left(\frac{40 b \gamma \omega (2 \omega + 1)}{n \probavailable^2} + \frac{8 \gamma \left(\probavailable - \probpairaa\right)}{n \probavailable^2}\right) \Exp{\frac{1}{n}\sum_{i=1}^n\norm{h^{t+1}_i - \nabla f_i(x^{t+1})}^2}\\
        &\leq \Exp{f(x^t)} - \frac{\gamma}{2}\Exp{\norm{\nabla f(x^t)}^2} \\
        &\quad + \left(1 - \gamma \mu\right)\frac{2\gamma (2 \omega + 1)}{\probavailable} \Exp{\norm{g^{t} - h^{t}}^2} + \left(1 - \gamma \mu\right)\frac{4\gamma (\left(2 \omega + 1\right)\probavailable - \probpairaa)}{n \probavailable^2} \Exp{\frac{1}{n}\sum_{i=1}^n\norm{g^{t}_i - h^{t}_i}^2} \\
        &\quad - \Bigg(\frac{1}{2\gamma} - \frac{L}{2} - \frac{10 \gamma \omega (2 \omega + 1)}{n \probavailable^2} \left(\frac{2 (1 - b)^2 L_{\sigma}^2}{B} + 2 \widehat{L}^2\right) \\
        &\qquad\quad - \frac{2 \gamma}{n \probavailable b}\left(\frac{2 (1 - b)^2 L_{\sigma}^2}{B} + 2\left(1 - \frac{\probpairaa}{\probavailable}\right) \widehat{L}^2 \right) \\
        &\qquad\quad - \left(\frac{40 b \gamma \omega (2 \omega + 1)}{n \probavailable^3} + \frac{8 \gamma \left(1 - \frac{\probpairaa}{\probavailable}\right)}{n \probavailable^2}\right) \left(\frac{2 (1 - b)^2 L_{\sigma}^2}{B} + 2(1 - \probavailable) \widehat{L}^2\right)\Bigg) \Exp{\norm{x^{t+1} - x^t}^2} \\
        &\quad + \left(1 - \frac{b}{2}\right)\frac{2\gamma}{b} \Exp{\norm{h^{t} - \nabla f(x^{t})}^2} + \left(1 - \frac{b}{2}\right)\left(\frac{40 b \gamma \omega (2 \omega + 1)}{n \probavailable^2} + \frac{8 \gamma \left(\probavailable - \probpairaa\right)}{n \probavailable^2}\right)\Exp{\frac{1}{n}\sum_{i=1}^n\norm{h^{t}_i - \nabla f_i(x^{t})}^2} \\
        &\quad + \left(\frac{20 b^2 \gamma \omega (2 \omega + 1)}{n \probavailable^2} + \frac{4 \gamma b}{n \probavailable} + \left(\frac{40 b \gamma \omega (2 \omega + 1)}{n \probavailable^2} + \frac{8 \gamma \left(\probavailable - \probpairaa\right)}{n \probavailable^2}\right) \frac{2 b^2}{\probavailable}\right) \frac{\sigma^2}{B}.
      \end{align*}
      Let us simplify the inequality. First, due to $b \leq \probavailable$ and $\left(1 - \probavailable\right) \leq \left(1 - \frac{\probpairaa}{\probavailable}\right),$ we have
      \begin{align*}
        &\left(\frac{40 b \gamma \omega (2 \omega + 1)}{n \probavailable^3} + \frac{2 \gamma \left(1 - \frac{\probpairaa}{\probavailable}\right)}{n \probavailable^2}\right) \left(\frac{2 (1 - b)^2 L_{\sigma}^2}{B} + 8(1 - \probavailable) \widehat{L}^2\right) \\
        &=\frac{40 b \gamma \omega (2 \omega + 1)}{n \probavailable^3} \left(\frac{2 (1 - b)^2 L_{\sigma}^2}{B} + 2(1 - \probavailable) \widehat{L}^2\right) \\
        &\quad + \frac{8 \gamma \left(1 - \frac{\probpairaa}{\probavailable}\right)}{n \probavailable^2} \left(\frac{2 (1 - b)^2 L_{\sigma}^2}{B} + 2(1 - \probavailable) \widehat{L}^2\right) \\
        &\leq \frac{40 \gamma \omega (2 \omega + 1)}{n \probavailable^2} \left(\frac{2 (1 - b)^2 L_{\sigma}^2}{B} + 2 \widehat{L}^2\right) \\
        &\quad + \frac{8 \gamma}{n \probavailable b} \left(\frac{2 (1 - b)^2 L_{\sigma}^2}{B} + 2\left(1 - \frac{\probpairaa}{\probavailable}\right) \widehat{L}^2\right),
      \end{align*}
      therefore
      \begin{align*}
        &\Exp{f(x^{t + 1})} + \left(1 - \gamma \mu\right)\frac{2\gamma (2 \omega + 1)}{\probavailable} \Exp{\norm{g^{t+1} - h^{t+1}}^2} + \left(1 - \gamma \mu\right)\frac{4\gamma (\left(2 \omega + 1\right)\probavailable - \probpairaa)}{n \probavailable^2} \Exp{\frac{1}{n}\sum_{i=1}^n\norm{g^{t+1}_i - h^{t+1}_i}^2}\\
        &\quad  + \frac{2\gamma}{b} \Exp{\norm{h^{t+1} - \nabla f(x^{t+1})}^2} + \left(\frac{40 b \gamma \omega (2 \omega + 1)}{n \probavailable^2} + \frac{8 \gamma \left(\probavailable - \probpairaa\right)}{n \probavailable^2}\right) \Exp{\frac{1}{n}\sum_{i=1}^n\norm{h^{t+1}_i - \nabla f_i(x^{t+1})}^2}\\
        &\leq \Exp{f(x^t)} - \frac{\gamma}{2}\Exp{\norm{\nabla f(x^t)}^2} \\
        &\quad + \left(1 - \gamma \mu\right)\frac{2\gamma (2 \omega + 1)}{\probavailable} \Exp{\norm{g^{t} - h^{t}}^2} + \left(1 - \gamma \mu\right)\frac{4\gamma (\left(2 \omega + 1\right)\probavailable - \probpairaa)}{n \probavailable^2} \Exp{\frac{1}{n}\sum_{i=1}^n\norm{g^{t}_i - h^{t}_i}^2} \\
        &\quad - \Bigg(\frac{1}{2\gamma} - \frac{L}{2} - \frac{50 \gamma \omega (2 \omega + 1)}{n \probavailable^2} \left(\frac{2 (1 - b)^2 L_{\sigma}^2}{B} + 2 \widehat{L}^2\right) \\
        &\qquad\quad - \frac{10 \gamma}{n \probavailable b}\left(\frac{2 (1 - b)^2 L_{\sigma}^2}{B} + 2\left(1 - \frac{\probpairaa}{\probavailable}\right) \widehat{L}^2 \right)\Bigg) \Exp{\norm{x^{t+1} - x^t}^2} \\
        &\quad + \left(1 - \frac{b}{2}\right)\frac{2\gamma}{b} \Exp{\norm{h^{t} - \nabla f(x^{t})}^2} + \left(1 - \frac{b}{2}\right)\left(\frac{40 b \gamma \omega (2 \omega + 1)}{n \probavailable^2} + \frac{8 \gamma \left(\probavailable - \probpairaa\right)}{n \probavailable^2}\right)\Exp{\frac{1}{n}\sum_{i=1}^n\norm{h^{t}_i - \nabla f_i(x^{t})}^2} \\
        &\quad + \left(\frac{20 b^2 \gamma \omega (2 \omega + 1)}{n \probavailable^2} + \frac{4 \gamma b}{n \probavailable} + \left(\frac{40 b \gamma \omega (2 \omega + 1)}{n \probavailable^2} + \frac{8 \gamma \left(\probavailable - \probpairaa\right)}{n \probavailable^2}\right) \frac{2 b^2}{\probavailable}\right) \frac{\sigma^2}{B} \\
        &\leq \Exp{f(x^t)} - \frac{\gamma}{2}\Exp{\norm{\nabla f(x^t)}^2} \\
        &\quad + \left(1 - \gamma \mu\right)\frac{2\gamma (2 \omega + 1)}{\probavailable} \Exp{\norm{g^{t} - h^{t}}^2} + \left(1 - \gamma \mu\right)\frac{4\gamma (\left(2 \omega + 1\right)\probavailable - \probpairaa)}{n \probavailable^2} \Exp{\frac{1}{n}\sum_{i=1}^n\norm{g^{t}_i - h^{t}_i}^2} \\
        &\quad - \Bigg(\frac{1}{2\gamma} - \frac{L}{2} - \frac{100 \gamma \omega (2 \omega + 1)}{n \probavailable^2} \left(\frac{(1 - b)^2 L_{\sigma}^2}{B} + \widehat{L}^2\right) \\
        &\qquad\quad - \frac{20 \gamma}{n \probavailable b}\left(\frac{(1 - b)^2 L_{\sigma}^2}{B} + \left(1 - \frac{\probpairaa}{\probavailable}\right) \widehat{L}^2 \right)\Bigg) \Exp{\norm{x^{t+1} - x^t}^2} \\
        &\quad + \left(1 - \frac{b}{2}\right)\frac{2\gamma}{b} \Exp{\norm{h^{t} - \nabla f(x^{t})}^2} + \left(1 - \frac{b}{2}\right)\left(\frac{40 b \gamma \omega (2 \omega + 1)}{n \probavailable^2} + \frac{8 \gamma \left(\probavailable - \probpairaa\right)}{n \probavailable^2}\right)\Exp{\frac{1}{n}\sum_{i=1}^n\norm{h^{t}_i - \nabla f_i(x^{t})}^2} \\
        &\quad + \left(\frac{20 b^2 \gamma \omega (2 \omega + 1)}{n \probavailable^2} + \frac{4 \gamma b}{n \probavailable} + \left(\frac{40 b \gamma \omega (2 \omega + 1)}{n \probavailable^2} + \frac{8 \gamma \left(\probavailable - \probpairaa\right)}{n \probavailable^2}\right) \frac{2 b^2}{\probavailable}\right) \frac{\sigma^2}{B}.
      \end{align*}
      Also, we can simplify the last term:
      \begin{align*}
        &\left(\frac{40 b \gamma \omega (2 \omega + 1)}{n \probavailable^2} + \frac{8 \gamma \left(\probavailable - \probpairaa\right)}{n \probavailable^2}\right) \frac{2 b^2}{\probavailable} \\
        &= \frac{80 b^3 \gamma \omega (2 \omega + 1)}{n \probavailable^3} + \frac{16 b^2 \gamma \left(1 - \frac{\probpairaa}{\probavailable}\right)}{n \probavailable^2} \\
        &\leq \frac{80 b^2 \gamma \omega (2 \omega + 1)}{n \probavailable^2} + \frac{16 b \gamma}{n \probavailable},
      \end{align*}
      thus
      \begin{align*}
        &\Exp{f(x^{t + 1})} + \left(1 - \gamma \mu\right)\frac{2\gamma (2 \omega + 1)}{\probavailable} \Exp{\norm{g^{t+1} - h^{t+1}}^2} + \left(1 - \gamma \mu\right)\frac{4\gamma (\left(2 \omega + 1\right)\probavailable - \probpairaa)}{n \probavailable^2} \Exp{\frac{1}{n}\sum_{i=1}^n\norm{g^{t+1}_i - h^{t+1}_i}^2}\\
        &\quad  + \frac{2\gamma}{b} \Exp{\norm{h^{t+1} - \nabla f(x^{t+1})}^2} + \left(\frac{40 b \gamma \omega (2 \omega + 1)}{n \probavailable^2} + \frac{8 \gamma \left(\probavailable - \probpairaa\right)}{n \probavailable^2}\right) \Exp{\frac{1}{n}\sum_{i=1}^n\norm{h^{t+1}_i - \nabla f_i(x^{t+1})}^2}\\
        &\leq \Exp{f(x^t)} - \frac{\gamma}{2}\Exp{\norm{\nabla f(x^t)}^2} \\
        &\quad + \left(1 - \gamma \mu\right)\frac{2\gamma (2 \omega + 1)}{\probavailable} \Exp{\norm{g^{t} - h^{t}}^2} + \left(1 - \gamma \mu\right)\frac{4\gamma (\left(2 \omega + 1\right)\probavailable - \probpairaa)}{n \probavailable^2} \Exp{\frac{1}{n}\sum_{i=1}^n\norm{g^{t}_i - h^{t}_i}^2} \\
        &\quad - \Bigg(\frac{1}{2\gamma} - \frac{L}{2} - \frac{100 \gamma \omega (2 \omega + 1)}{n \probavailable^2} \left(\frac{(1 - b)^2 L_{\sigma}^2}{B} + \widehat{L}^2\right) \\
        &\qquad\quad - \frac{20 \gamma}{n \probavailable b}\left(\frac{(1 - b)^2 L_{\sigma}^2}{B} + \left(1 - \frac{\probpairaa}{\probavailable}\right) \widehat{L}^2 \right)\Bigg) \Exp{\norm{x^{t+1} - x^t}^2} \\
        &\quad + \left(1 - \frac{b}{2}\right)\frac{2\gamma}{b} \Exp{\norm{h^{t} - \nabla f(x^{t})}^2} + \left(1 - \frac{b}{2}\right)\left(\frac{40 b \gamma \omega (2 \omega + 1)}{n \probavailable^2} + \frac{8 \gamma \left(\probavailable - \probpairaa\right)}{n \probavailable^2}\right)\Exp{\frac{1}{n}\sum_{i=1}^n\norm{h^{t}_i - \nabla f_i(x^{t})}^2} \\
        &\quad + \left(\frac{100 b^2 \gamma \omega (2 \omega + 1)}{n \probavailable^2} + \frac{20 \gamma b}{n \probavailable}\right) \frac{\sigma^2}{B}.
      \end{align*}
      Using Lemma~\ref{lemma:gamma} and the assumption about $\gamma,$ we get
      \begin{align*}
        &\Exp{f(x^{t + 1})} + \left(1 - \gamma \mu\right)\frac{2\gamma (2 \omega + 1)}{\probavailable} \Exp{\norm{g^{t+1} - h^{t+1}}^2} + \left(1 - \gamma \mu\right)\frac{4\gamma (\left(2 \omega + 1\right)\probavailable - \probpairaa)}{n \probavailable^2} \Exp{\frac{1}{n}\sum_{i=1}^n\norm{g^{t+1}_i - h^{t+1}_i}^2}\\
        &\quad  + \frac{2\gamma}{b} \Exp{\norm{h^{t+1} - \nabla f(x^{t+1})}^2} + \left(\frac{40 b \gamma \omega (2 \omega + 1)}{n \probavailable^2} + \frac{8 \gamma \left(\probavailable - \probpairaa\right)}{n \probavailable^2}\right) \Exp{\frac{1}{n}\sum_{i=1}^n\norm{h^{t+1}_i - \nabla f_i(x^{t+1})}^2}\\
        &\leq \Exp{f(x^t)} - \frac{\gamma}{2}\Exp{\norm{\nabla f(x^t)}^2} \\
        &\quad + \left(1 - \gamma \mu\right)\frac{2\gamma (2 \omega + 1)}{\probavailable} \Exp{\norm{g^{t} - h^{t}}^2} + \left(1 - \gamma \mu\right)\frac{4\gamma (\left(2 \omega + 1\right)\probavailable - \probpairaa)}{n \probavailable^2} \Exp{\frac{1}{n}\sum_{i=1}^n\norm{g^{t}_i - h^{t}_i}^2} \\
        &\quad + \left(1 - \frac{b}{2}\right)\frac{2\gamma}{b} \Exp{\norm{h^{t} - \nabla f(x^{t})}^2} + \left(1 - \frac{b}{2}\right)\left(\frac{40 b \gamma \omega (2 \omega + 1)}{n \probavailable^2} + \frac{8 \gamma \left(\probavailable - \probpairaa\right)}{n \probavailable^2}\right)\Exp{\frac{1}{n}\sum_{i=1}^n\norm{h^{t}_i - \nabla f_i(x^{t})}^2} \\
        &\quad + \left(\frac{100 b^2 \gamma \omega (2 \omega + 1)}{n \probavailable^2} + \frac{20 \gamma b}{n \probavailable}\right) \frac{\sigma^2}{B}.
      \end{align*}
      Note that $\gamma \leq \frac{b}{2\mu},$ thus $1 - \frac{b}{2} \leq 1 - \gamma \mu$ and 
      \begin{align*}
        &\Exp{f(x^{t + 1})} + \left(1 - \gamma \mu\right)\frac{2\gamma (2 \omega + 1)}{\probavailable} \Exp{\norm{g^{t+1} - h^{t+1}}^2} + \left(1 - \gamma \mu\right)\frac{4\gamma (\left(2 \omega + 1\right)\probavailable - \probpairaa)}{n \probavailable^2} \Exp{\frac{1}{n}\sum_{i=1}^n\norm{g^{t+1}_i - h^{t+1}_i}^2}\\
        &\quad  + \frac{2\gamma}{b} \Exp{\norm{h^{t+1} - \nabla f(x^{t+1})}^2} + \left(\frac{40 b \gamma \omega (2 \omega + 1)}{n \probavailable^2} + \frac{8 \gamma \left(\probavailable - \probpairaa\right)}{n \probavailable^2}\right) \Exp{\frac{1}{n}\sum_{i=1}^n\norm{h^{t+1}_i - \nabla f_i(x^{t+1})}^2}\\
        &\leq \Exp{f(x^t)} - \frac{\gamma}{2}\Exp{\norm{\nabla f(x^t)}^2} \\
        &\quad + \left(1 - \gamma \mu\right)\frac{2\gamma (2 \omega + 1)}{\probavailable} \Exp{\norm{g^{t} - h^{t}}^2} + \left(1 - \gamma \mu\right)\frac{4\gamma (\left(2 \omega + 1\right)\probavailable - \probpairaa)}{n \probavailable^2} \Exp{\frac{1}{n}\sum_{i=1}^n\norm{g^{t}_i - h^{t}_i}^2} \\
        &\quad + \left(1 - \gamma \mu\right)\frac{2\gamma}{b} \Exp{\norm{h^{t} - \nabla f(x^{t})}^2} + \left(1 - \gamma \mu\right)\left(\frac{40 b \gamma \omega (2 \omega + 1)}{n \probavailable^2} + \frac{8 \gamma \left(\probavailable - \probpairaa\right)}{n \probavailable^2}\right)\Exp{\frac{1}{n}\sum_{i=1}^n\norm{h^{t}_i - \nabla f_i(x^{t})}^2} \\
        &\quad + \left(\frac{100 b^2 \gamma \omega (2 \omega + 1)}{n \probavailable^2} + \frac{20 \gamma b}{n \probavailable}\right) \frac{\sigma^2}{B}.
      \end{align*}
      It is left to apply Lemma~\ref{lemma:good_recursion_pl} with 
      \begin{eqnarray*}
        \Psi^t &=& \frac{2 (2 \omega + 1)}{\probavailable} \Exp{\norm{g^{t} - h^{t}}^2} + \frac{4 (\left(2 \omega + 1\right)\probavailable - \probpairaa)}{n \probavailable^2} \Exp{\frac{1}{n}\sum_{i=1}^n\norm{g^{t}_i - h^{t}_i}^2} \\
        &\quad +& \frac{2}{b} \Exp{\norm{h^{t} - \nabla f(x^{t})}^2} + \left(\frac{40 b \omega (2 \omega + 1)}{n \probavailable^2} + \frac{8 \left(\probavailable - \probpairaa\right)}{n \probavailable^2}\right)\Exp{\frac{1}{n}\sum_{i=1}^n\norm{h^{t}_i - \nabla f_i(x^{t})}^2}
      \end{eqnarray*}
      and $C = \left(\frac{100 b^2 \omega (2 \omega + 1)}{\probavailable^2} + \frac{20 b}{\probavailable}\right) \frac{\sigma^2}{n B}$
      to conclude the proof.
    \end{proof}

    \CONVERGENCEPLSTOCHASTICRANDK*


    \begin{proof}
      In the view of Theorem~\ref{theorem:stochastic_pl}, \algname{\algorithmname} have to run
      \begin{align*}
        \widetilde{\cO}\left(\frac{\omega + 1}{\probavailable} + \frac{\omega}{\probavailable}\sqrt{\frac{\sigma^2}{\mu n \varepsilon B}} + \frac{\sigma^2}{\probavailable \mu n \varepsilon B} + \frac{L}{\mu} + \frac{\omega}{\probavailable \mu \sqrt{n}} \left(\widehat{L} + \frac{L_{\sigma}}{\sqrt{B}}\right) + \frac{\sigma}{\probavailable n \mu^{\nicefrac{3}{2}} \sqrt{\varepsilon B}}\left(\widehat{L} + \frac{L_{\sigma}}{\sqrt{B}}\right)\right)
      \end{align*}
    communication rounds in the stochastic settings to get $\varepsilon$-solution. Note that $K = \cO\left(\frac{d}{\probavailable\sqrt{n}}\right).$ Moreover, we can skip the initialization procedure and initialize $h^0_i$ and $g^0_i$, for instance, with zeros because the initialization error is under a logarithm. Considering Theorem~\ref{theorem:rand_k}, the communication complexity equals 
    \begin{align*}
      &\widetilde{\cO}\left(K\frac{\omega + 1}{\probavailable} + K\frac{\omega}{\probavailable}\sqrt{\frac{\sigma^2}{\mu n \varepsilon B}} + K\frac{\sigma^2}{\probavailable \mu n \varepsilon B} + K\frac{L}{\mu} + K\frac{\omega}{\probavailable \mu \sqrt{n}} \left(\widehat{L} + \frac{L_{\sigma}}{\sqrt{B}}\right) + K\frac{\sigma}{\probavailable n \mu^{\nicefrac{3}{2}} \sqrt{\varepsilon B}}\left(\widehat{L} + \frac{L_{\sigma}}{\sqrt{B}}\right)\right) \\
      &=\widetilde{\cO}\left(K\frac{\omega + 1}{\probavailable} + K\frac{\omega}{\probavailable}\sqrt{\frac{\sigma^2}{\mu n \varepsilon B}} + K\frac{\sigma^2}{\probavailable \mu n \varepsilon B} + K\frac{L}{\mu} + K\frac{\omega}{\probavailable \mu \sqrt{n}} \left(\widehat{L} + \frac{L_{\sigma}}{\sqrt{B}}\right) + K\frac{\sigma L_{\sigma}}{\probavailable n \mu^{\nicefrac{3}{2}} \sqrt{\varepsilon} B}\right) \\
      &= \widetilde{\cO}\left(\frac{d}{\probavailable} + \frac{d}{\probavailable}\sqrt{\frac{\sigma^2}{\mu n \varepsilon B}} + \frac{K \sigma^2}{\probavailable \mu n \varepsilon B} + \frac{d L}{\probavailable \mu \sqrt{n}} + \frac{d}{\probavailable \mu \sqrt{n}} \left(\widehat{L} + \frac{L_{\sigma}}{\sqrt{B}}\right) + \frac{K\sigma L_{\sigma}}{\probavailable n \mu^{\nicefrac{3}{2}} \sqrt{\varepsilon} B}\right) \\
      &= \widetilde{\cO}\left(\frac{d}{\probavailable} + \frac{d \sigma}{\probavailable \sqrt{\mu n \varepsilon B}} + \frac{d \sigma}{\probavailable \sqrt{\mu \varepsilon n}} + \frac{d L}{\probavailable \mu \sqrt{n}} + \frac{d}{\probavailable \mu \sqrt{n}} \left(\widehat{L} + \frac{L_{\sigma}}{\sqrt{B}}\right) + \frac{d L_{\sigma}}{\probavailable \mu \sqrt{n}}\right) \\
      &= \widetilde{\cO}\left(\frac{d \sigma}{\probavailable \sqrt{\mu \varepsilon n}} + \frac{d L_{\sigma}}{\probavailable \mu \sqrt{n}}\right).
    \end{align*}
    The expected number of stochastic gradient calculations per node equals
    \begin{align*}
      &\widetilde{\cO}\left(B\frac{\omega + 1}{\probavailable} + B\frac{\omega}{\probavailable}\sqrt{\frac{\sigma^2}{\mu n \varepsilon B}} + B\frac{\sigma^2}{\probavailable \mu n \varepsilon B} + B\frac{L}{\mu} + B\frac{\omega}{\probavailable \mu \sqrt{n}} \left(\widehat{L} + \frac{L_{\sigma}}{\sqrt{B}}\right) + B\frac{\sigma}{\probavailable n \mu^{\nicefrac{3}{2}} \sqrt{\varepsilon B}}\left(\widehat{L} + \frac{L_{\sigma}}{\sqrt{B}}\right)\right) \\
      &=\widetilde{\cO}\left(B\frac{\omega + 1}{\probavailable} + B\frac{\omega}{\probavailable}\sqrt{\frac{\sigma^2}{\mu n \varepsilon B}} + B\frac{\sigma^2}{\probavailable \mu n \varepsilon B} + B\frac{L}{\mu} + B\frac{\omega}{\probavailable \mu \sqrt{n}} \left(\widehat{L} + \frac{L_{\sigma}}{\sqrt{B}}\right) + B\frac{\sigma}{\probavailable n \mu^{\nicefrac{3}{2}} \sqrt{\varepsilon B}}\left(\frac{L_{\sigma}}{\sqrt{B}}\right)\right) \\
      &=\widetilde{\cO}\left(\frac{Bd}{K\probavailable} + \frac{B d}{K \probavailable}\sqrt{\frac{\sigma^2}{\mu n \varepsilon B}} + \frac{\sigma^2}{\probavailable \mu n \varepsilon} + B\frac{L}{\mu} + \frac{B d}{K \probavailable \mu \sqrt{n}} \left(\widehat{L} + \frac{L_{\sigma}}{\sqrt{B}}\right) + \frac{\sigma L_{\sigma}}{\probavailable n \mu^{\nicefrac{3}{2}} \sqrt{\varepsilon}}\right) \\
      &=\widetilde{\cO}\left(\frac{\sigma}{\probavailable \sqrt{\mu \varepsilon n}} + \frac{\sigma^2}{\probavailable \mu \varepsilon n \sqrt{B}} + \frac{\sigma^2}{\probavailable \mu n \varepsilon} + \frac{\sigma L}{\probavailable \mu^{\nicefrac{3}{2}}\sqrt{\varepsilon} n} + \frac{\sigma}{\probavailable \mu^{\nicefrac{3}{2}} \sqrt{\varepsilon} n} \left(\widehat{L} + \frac{L_{\sigma}}{\sqrt{B}}\right) + \frac{\sigma L_{\sigma}}{\probavailable n \mu^{\nicefrac{3}{2}} \sqrt{\varepsilon}}\right) \\
      &=\widetilde{\cO}\left(\frac{\sigma^2}{\probavailable \mu n \varepsilon} + \frac{\sigma L_{\sigma}}{\probavailable n \mu^{\nicefrac{3}{2}} \sqrt{\varepsilon}}\right).
    \end{align*}
    \end{proof}

\newpage
  \section{Description of \algname{\algorithmname-SYNC-MVR}}
  \label{sec:main_algorithm_mvr_sync}

   By analogy to \citep{tyurin2022dasha}, we provide a ``synchronized'' version of the algorithm. With a small probability, participating nodes calculate and send a mega batch without compression. This helps us to resolve the suboptimality of \algname{\algorithmname-MVR} w.r.t.\,$\omega.$ Note that this suboptimality is not a problem. 
   We show in Corollary~\ref{cor:stochastic:randk} that \algname{\algorithmname-MVR} can have the optimal oracle complexity and SOTA communication complexity with the particular choices of parameters of the compressors.
  
  \begin{algorithm}[h]
      \caption{\algname{\algorithmname-SYNC-MVR}}
      \label{alg:main_algorithm_mvr_sync}
      \begin{algorithmic}[1]
      \STATE \textbf{Input:} starting point $x^0 \in \R^d$, stepsize $\gamma > 0$, momentum $a \in (0, 1]$, momentum $b \in (0, 1]$, probability $\probmega \in (0, 1]$, batch size $B'$ and $B$, probability $\probavailable \in (0, 1]$ that a node is \textit{participating}\textsuperscript{\red (a)}, number of iterations $T \geq 1$.
      \STATE Initialize $g^0_i$, $h^0_i$ on the nodes and $g^0 = \frac{1}{n}\sum_{i=1}^n g^0_i$ on the server
      \FOR{$t = 0, 1, \dots, T - 1$}
      \STATE $x^{t+1} = x^t - \gamma g^t$
      \STATE $c^{t+1} = 
      \begin{cases}
          1, \textnormal{with probability $\probmega$}, \\
          0, \textnormal{with probability $1 - \probmega$}
      \end{cases}$
      \STATE Broadcast $x^{t+1}, x^{t}$ to all \textit{participating}\textsuperscript{\red (a)} nodes
      \FOR{$i = 1, \dots, n$ in parallel}
          \IF{$i^{\textnormal{th}} \textnormal{ node is \textit{participating}}$\textsuperscript{\red (a)}}
              \IF{$c^{t+1} = 1$}
              \STATE Generate i.i.d.\,samples $\{\xi_{ik}^{t+1}\}_{k=1}^{B'}$ of size $B'$ from $\mathcal{D}_i.$
              \STATE $k^{t+1}_i = \frac{1}{B'} \sum_{k=1}^{B'} \nabla f_i(x^{t+1};\xi_{ik}^{t+1}) - \frac{1}{B'} \sum_{k=1}^{B'} \nabla f_i(x^{t};\xi_{ik}^{t+1})-~\frac{b}{\probmega}\left(h^t_i - \frac{1}{B'} \sum_{k=1}^{B'} \nabla f_i(x^{t};\xi_{ik}^{t+1})\right)$
              \STATE $m^{t+1}_i = \frac{1}{\probavailable}k^{t+1}_i - \frac{a}{\probavailable} \left(g^t_i - h^t_i\right)$ 
          \ELSE
              \STATE Generate i.i.d.\,samples $\{\xi^{t+1}_{ij}\}_{j=1}^B$ of size $B$ from $\mathcal{D}_i.$
              \STATE $k^{t+1}_i = \frac{1}{B} \sum_{j=1}^{B}\nabla f_i(x^{t+1};\xi^{t+1}_{ij}) - \frac{1}{B} \sum_{j=1}^{B}\nabla f_i(x^{t};\xi^{t+1}_{ij})$
              \STATE $m^{t+1}_i = \cC_i\left(\frac{1}{\probavailable}k^{t+1}_i - \frac{a}{\probavailable} \left(g^t_i - h^{t}_i\right)\right)$ 
          \ENDIF
          \STATE $h^{t+1}_i = h^t_i + \frac{1}{\probavailable}k^{t+1}_i$ 
          \STATE $g^{t+1}_i = g^t_i + m^{t+1}_i$
          \STATE Send $m^{t+1}_i$ to the server
      \ELSE
          \STATE $h^{t+1}_i = h^{t}_i$
          \STATE $m^{t+1}_i = 0$
          \STATE $g^{t+1}_i = g^{t}_i$
      \ENDIF
      \ENDFOR
      \STATE $g^{t+1} = g^t + \frac{1}{n} \sum_{i=1}^{n} m^{t+1}_i$
      \ENDFOR
      \STATE \textbf{Output:} $\hat{x}^T$ chosen uniformly at random from $\{x^t\}_{k=0}^{T-1}$

      {\red (a)}: For the formal description see Section~\ref{sec:partial_participation}.
      \end{algorithmic}
  \end{algorithm}

  In the following theorem, we provide the convergence rate of \algname{\algorithmname-SYNC-MVR}.

  \begin{restatable}{theorem}{CONVERGENCESYNCMVR}
    \label{theorem:sync_stochastic}
    Suppose that Assumptions \ref{ass:lower_bound}, \ref{ass:lipschitz_constant}, \ref{ass:nodes_lipschitz_constant}, \ref{ass:stochastic_unbiased_and_variance_bounded}, \ref{ass:mean_square_smoothness}, \ref{ass:compressors} and \ref{ass:partial_participation} hold. Let us take $a = \frac{\probavailable}{2\omega + 1}$, $b = \frac{\probmega \probavailable}{2 - \probavailable},$probability $\probmega \in (0, 1],$ batch size $B' \geq B \geq 1$ $$\gamma \leq \left(L + \sqrt{\frac{8 \left(2\omega + 1\right) \omega}{n \probavailable^2}\left(\widehat{L}^2 + \frac{L_{\sigma}^2}{B}\right) + \frac{16}{n \probmega \probavailable^2} \left(\left(1 - \frac{\probpairaa}{\probavailable}\right) \widehat{L}^2 + \frac{L_{\sigma}^2}{B}\right)}\right)^{-1},$$ and $h^{0}_i = g^{0}_i$ for all $i \in [n]$
    in Algorithm~\ref{alg:main_algorithm_mvr_sync}. Then 
    \begin{align*}
      \Exp{\norm{\nabla f(\widehat{x}^T)}^2} &\leq \frac{1}{T}\vast[\frac{2 \Delta_0}{\gamma} + \frac{4}{\probmega \probavailable} \norm{h^{0} - \nabla f(x^{0})}^2 + \frac{4 \left(1 - \frac{\probpairaa}{\probavailable}\right)}{n \probmega \probavailable}\frac{1}{n}\sum_{i=1}^n\norm{h^{0}_i - \nabla f_i(x^{0})}^2 \vast] \\
      &\quad + \frac{12 \sigma^2}{n B'}.
    \end{align*}
  \end{restatable}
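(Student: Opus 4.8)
The plan is to mirror the proof of Theorem~\ref{theorem:stochastic} (\algname{\algorithmname-MVR}), adjusting every step to the two–mode structure of Algorithm~\ref{alg:main_algorithm_mvr_sync}: with probability $\probmega$ each participating node sends an \emph{uncompressed} mega-batch of size $B'$ (with the $h$–momentum rescaled by $1/\probmega$), and with probability $1-\probmega$ it sends a compressed size–$B$ update with no $h$–momentum. I would run the same Lyapunov function as in the \algname{\algorithmname-MVR} proof,
\[
  \Phi^t \eqdef \Exp{f(x^t)}
  + \frac{\gamma(2\omega+1)}{\probavailable}\Exp{\norm{g^t-h^t}^2}
  + \frac{\gamma((2\omega+1)\probavailable-\probpairaa)}{n\probavailable^2}\Exp{\frac1n\sum_{i=1}^n\norm{g^t_i-h^t_i}^2}
  + \nu\,\Exp{\norm{h^t-\nabla f(x^t)}^2}
  + \rho\,\Exp{\frac1n\sum_{i=1}^n\norm{h^t_i-\nabla f_i(x^t)}^2},
\]
with $\nu$ a fixed multiple of $\gamma/b$ and $\rho$ chosen so the local $h$–error weight telescopes, and prove $\Phi^{t+1}\le \Phi^t-\tfrac{\gamma}{2}\Exp{\norm{\nabla f(x^t)}^2}+\gamma C$ with $C = 6\sigma^2/(nB')$, closing with Lemma~\ref{lemma:good_recursion} (whose $2C$ produces the stated $12\sigma^2/(nB')$) and using $h^0_i=g^0_i$ to kill the two $g$–error pieces of $\Psi^0$. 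The descent step on $\Exp{f(x^{t+1})}$ is taken verbatim from Lemma~\ref{lemma:page_lemma} together with the split $g^t-\nabla f(x^t)=(g^t-h^t)+(h^t-\nabla f(x^t))$ via \eqref{auxiliary:variance_decomposition}.

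Two auxiliary estimates must be re-derived. First, a ``synchronized'' analogue of Lemma~\ref{lemma:main_lemma} (equivalently of the compression bounds \eqref{eq:compressor_global_error}–\eqref{eq:compressor_local_error}): on the mega event there is no compressor, so $g^{t+1}_i-h^{t+1}_i=(1-a/\probavailable)(g^t_i-h^t_i)$ on participating nodes — an exact contraction that injects no new randomness and no $k^{t+1}_i$ — while on the non-mega event the usual $\cC_i$–analysis applies; conditioning on the switch $c^{t+1}$, invoking Lemma~\ref{lemma:sampling} inside each branch and averaging, the coefficient of $\tfrac1n\sum_i\norm{k^{t+1}_i}^2$ acquires a factor $1-\probmega$ and, crucially, carries no $\sigma^2$ floor (in the non-mega branch $k^{t+1}_i$ is a gradient difference, so its variance is $\le (L_\sigma^2/B)\norm{x^{t+1}-x^t}^2$ by Assumption~\ref{ass:mean_square_smoothness}). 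Second, a ``synchronized'' analogue of Lemma~\ref{lemma:gradient_mvr}: conditioning on $c^{t+1}$ one checks that $\ExpSub{\probavailable}{\ExpSub{k}{h^{t+1}_i}}$, averaged over $\probmega$, equals $h^t_i+\nabla f_i(x^{t+1})-\nabla f_i(x^t)-b(h^t_i-\nabla f_i(x^t))$ — exactly the \algname{\algorithmname-MVR} contraction — and then bounds the conditional variances with \eqref{auxiliary:variance_decomposition}, \eqref{auxiliary:jensen_inequality}, Lemma~\ref{lemma:sampling}, Assumption~\ref{ass:mean_square_smoothness} and Assumption~\ref{ass:stochastic_unbiased_and_variance_bounded}. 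The only $\sigma^2$ floor comes from the mega branch: it contributes $2b^2\sigma^2/(\probmega^2B')$ to $\ExpSub{k}{\norm{k^{t+1}_i}^2}$ and hence, through Lemma~\ref{lemma:sampling} and averaging over $\probmega$, a floor $2b^2\sigma^2/(n\probavailable\probmega B')$ in $\Var(h^{t+1})$; weighting it by $\nu\propto\gamma/b$ and using $b=\probmega\probavailable/(2-\probavailable)$ collapses this to $O(\gamma\sigma^2/(nB'))$, free of $b$, $\probmega$ and $\omega$ — this is precisely why the residual term in the theorem is the clean $12\sigma^2/(nB')$.

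With these two lemmas the recursion is assembled as in the proof of Theorem~\ref{theorem:stochastic}: take $a=\probavailable/(2\omega+1)$ so the $g$–error weights telescope; take $\nu$ the stated multiple of $\gamma/b$ (using $b\le\probavailable/(2-\probavailable)$) and then the indicated $\rho$ so the local $h$–error weight telescopes (using $\probpairaa\le\probavailable^2$, $1-\probmega\le1$, $1-\probavailable\le1$); collect the coefficient of $\norm{x^{t+1}-x^t}^2$; and annihilate it with Lemma~\ref{lemma:gamma} under the stated bound on $\gamma$, whose constants $8$ and $16$ are exactly what survives after pushing the crude bounds $b/2\le\probmega\probavailable/2\le b\le\probavailable$ and $\norm{k^{t+1}_i}^2\le 2(\widehat{L}^2+L_\sigma^2/B)\norm{x^{t+1}-x^t}^2$ (plus noise) through the two new lemmas. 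Finally Lemma~\ref{lemma:good_recursion} with $\Psi^t$ equal to the stated combination and $C=6\sigma^2/(nB')$ yields the claim, the $g$–error pieces of $\Psi^0$ vanishing by $h^0_i=g^0_i$.

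The main obstacle is bookkeeping of two kinds. The switch $c^{t+1}$ is a \emph{single} Bernoulli shared by all nodes, hence $\{h^{t+1}_i\}$ (or $\{g^{t+1}_i-h^{t+1}_i\}$) are not independent across $i$, so Lemma~\ref{lemma:sampling} cannot be applied directly; one must condition on $c^{t+1}$ first, apply the lemma inside each branch, and recombine — which forces a careful split of every variance term into a mega part and a non-mega part and a delicate matching of the powers of $\probmega$ and $b$ coming from the $b/\probmega$–rescaled momentum, so that the final noise term does not inherit an unwanted $1/\probmega$. The second is simply verifying that all telescoping inequalities close with the exact constants in the statement; this is routine but error-prone, and everything else — the descent lemma, the $a$–algebra, the $\gamma$–absorption — is identical in spirit to the already-established proofs.
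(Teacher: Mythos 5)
Your proposal is correct and follows essentially the same route as the paper's proof: the same Lyapunov function with weights $\kappa=\gamma/a$, $\nu=\gamma/b$, and a $\probmega$-rescaled $\rho$; the same two auxiliary lemmas obtained by conditioning on the shared switch $c^{t+1}$ before invoking Lemma~\ref{lemma:sampling} in each branch (with the mega branch contributing an exact $(1-a/\probavailable)$ contraction of $g^t_i-h^t_i$ and the only $\sigma^2$ floor, which collapses to $O(\gamma\sigma^2/(nB'))$ via $b\le\probmega\probavailable$); and the same closing via Lemma~\ref{lemma:gamma} and Lemma~\ref{lemma:good_recursion} with $C=6\sigma^2/(nB')$. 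All the key mechanisms you flag, including the non-independence across nodes induced by the single Bernoulli switch and the vanishing of the $g$-error pieces of $\Psi^0$ under $h^0_i=g^0_i$, are exactly how the paper handles them.
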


  First, we introduce the expected density of compressors \citep{MARINA,tyurin2022dasha}.

  \begin{definition}
    \label{def:expected_density}
    The expected density of the compressor $\cC_i$ is $\zeta_{\cC_i} \eqdef \sup_{x \in \R^d} \Exp{\norm{\cC_i(x)}_0}$, where $\norm{x}_0$ is the number of nonzero components of $x \in \R^d.$ Let  $\zeta_{\cC} = \max_{i \in [n]} \zeta_{\cC_i}.$
  \end{definition}

  Note that $\zeta_{\cC}$ is finite and $\zeta_{\cC} \leq d.$

  In the next corollary, we choose particular algorithm parameters to reveal the communication and oracle complexity.

  \begin{restatable}{corollary}{COROLLARYSYNCSTOCHASTIC}
    \label{cor:sync_stochastic}
    Suppose that assumptions from Theorem~\ref{theorem:sync_stochastic} hold, probability $\probmega = \min \left\{\ \frac{\zeta_{\cC}}{d}, \frac{n \varepsilon B}{\sigma^2}\right\},$ batch size 
    $B' = \Theta\left(\frac{\sigma^2}{n \varepsilon}\right),$
    and $h^{0}_i = g^{0}_i = \frac{1}{B_{\textnormal{init}}} \sum_{k = 1}^{B_{\textnormal{init}}} \nabla f_i(x^0; \xi^0_{ik})$ for all $i \in [n],$ initial batch size 
    $B_{\textnormal{init}} = \Theta\left(\frac{B}{\probmega \sqrt{\probavailable}}\right) = \Theta\left(\max \left\{\frac{B d}{ \sqrt{\probavailable} \zeta_{\cC}},  \frac{\sigma^2}{\sqrt{\probavailable} n \varepsilon} \right\}\right),$ 
    then \algname{\algorithmname-SYNC-MVR}
    needs
    $$T \eqdef \cO\left(\frac{\Delta_0}{\varepsilon}\vast[L + \left(\frac{\omega}{\probavailable\sqrt{n}} + \sqrt{\frac{d}{\probavailable^2 \zeta_{\cC} n}}\right) \left(\widehat{L} + \frac{L_{\sigma}}{\sqrt{B}}\right)  + \frac{\sigma}{\probavailable \sqrt{\varepsilon} n} \left(\frac{\widehat{L}}{\sqrt{B}} + \frac{L_{\sigma}}{B}\right)\vast] + \frac{\sigma^2}{\sqrt{\probavailable} n \varepsilon B}\right).$$
  
    communication rounds to get an $\varepsilon$-solution, the expected communication complexity is equal to $\cO\left(d + \zeta_{\cC} T\right),$ and the expected number of stochastic gradient calculations per node equals $\cO(B_{\textnormal{init}} + BT),$ where $\zeta_{\cC}$ is the expected density from Definition~\ref{def:expected_density}.
  \end{restatable}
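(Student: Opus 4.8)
The plan is to derive the corollary from Theorem~\ref{theorem:sync_stochastic} by substituting the prescribed parameters, following essentially the same steps as the proof of Corollary~\ref{cor:stochastic}. First I would pick $B' = \Theta(\sigma^2/(n\varepsilon))$ so that the standing noise term $\frac{12\sigma^2}{nB'}$ of Theorem~\ref{theorem:sync_stochastic} is at most $\varepsilon/2$. Next, using the mini-batch initialization $h^0_i = g^0_i = \frac{1}{B_{\textnormal{init}}}\sum_{k=1}^{B_{\textnormal{init}}}\nabla f_i(x^0;\xi^0_{ik})$ together with unbiasedness and bounded variance (Assumption~\ref{ass:stochastic_unbiased_and_variance_bounded}) and independence of the samples, I would bound $\Exp{\norm{h^0 - \nabla f(x^0)}^2} \le \sigma^2/(nB_{\textnormal{init}})$ and $\frac{1}{n}\sum_{i=1}^n\Exp{\norm{h^0_i - \nabla f_i(x^0)}^2} \le \sigma^2/B_{\textnormal{init}}$, exactly as in the proof of Corollary~\ref{cor:stochastic}. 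Plugging $B_{\textnormal{init}} = \Theta(B/(\probmega\sqrt{\probavailable}))$ into the two bracketed initialization terms of Theorem~\ref{theorem:sync_stochastic} makes each of them of order $\frac{\sigma^2}{\sqrt{\probavailable}\,nB}$ (the $\probmega$ and $\probavailable$ factors cancel up to the single surviving $\sqrt{\probavailable}$), so requiring $T \ge \Theta\bigl(\tfrac{1}{\varepsilon}[\tfrac{\Delta_0}{\gamma} + \tfrac{\sigma^2}{\sqrt{\probavailable}\,nB}]\bigr)$ drives $\Exp{\norm{\nabla f(\widehat{x}^T)}^2}$ below $\varepsilon$.

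The second step is to expand $1/\gamma$ into the advertised form. Using $\sqrt{x+y}\le\sqrt{x}+\sqrt{y}$ and $1-\probpairaa/\probavailable\le 1$, the stepsize bound of Theorem~\ref{theorem:sync_stochastic} yields $1/\gamma = \cO\bigl(L + \frac{\omega}{\probavailable\sqrt{n}}(\widehat{L}+\frac{L_\sigma}{\sqrt{B}}) + \frac{1}{\probavailable\sqrt{n\probmega}}(\widehat{L}+\frac{L_\sigma}{\sqrt{B}})\bigr)$. Substituting $\probmega = \min\{\zeta_{\cC}/d,\, n\varepsilon B/\sigma^2\}$ and using $\frac{1}{\sqrt{\probmega}} \le \sqrt{d/\zeta_{\cC}} + \frac{\sigma}{\sqrt{n\varepsilon B}}$ splits the last summand into $\sqrt{\frac{d}{\probavailable^2\zeta_{\cC}n}}(\widehat{L}+\frac{L_\sigma}{\sqrt{B}})$ and $\frac{\sigma}{\probavailable n\sqrt{\varepsilon}}(\frac{\widehat{L}}{\sqrt{B}}+\frac{L_\sigma}{B})$; collecting terms and multiplying by $\Delta_0/\varepsilon$ gives exactly the stated bound for $T$. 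I would also note that $B_{\textnormal{init}} = \Theta(B/(\probmega\sqrt{\probavailable}))$ equals $\Theta\bigl(\max\{Bd/(\sqrt{\probavailable}\zeta_{\cC}),\, \sigma^2/(\sqrt{\probavailable}n\varepsilon)\}\bigr)$ simply because $1/\probmega = \max\{d/\zeta_{\cC},\, \sigma^2/(n\varepsilon B)\}$.

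For the communication complexity I would argue per round: when $c^{t+1}=0$ (probability $1-\probmega$) every participating node sends $\cC_i(\cdot)$, which has at most $\zeta_{\cC}$ nonzero entries in expectation by Definition~\ref{def:expected_density}, whereas when $c^{t+1}=1$ (probability $\probmega$) it sends a dense $d$-dimensional vector; the expected cost is thus $\cO((1-\probmega)\zeta_{\cC} + \probmega d) = \cO(\zeta_{\cC})$ since $\probmega \le \zeta_{\cC}/d$. Adding the $\cO(d)$ cost of the initial broadcast yields the $\cO(d + \zeta_{\cC}T)$ claim. Similarly, the per-node oracle cost in a round is $\Theta(\probmega B' + (1-\probmega)B) = \cO(B)$, because $\probmega \le n\varepsilon B/\sigma^2$ and $B' = \Theta(\sigma^2/(n\varepsilon))$ imply $\probmega B' = \cO(B)$; adding the initialization cost $B_{\textnormal{init}}$ gives $\cO(B_{\textnormal{init}} + BT)$.

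Most of the work here is routine algebraic bookkeeping, and no new estimate beyond Theorem~\ref{theorem:sync_stochastic} is needed; the place that requires care is the distribution of the square root over the two $\probmega$ regimes $\{\zeta_{\cC}/d,\, n\varepsilon B/\sigma^2\}$, i.e.\ making sure the $1/\sqrt{\probmega}$ factor simultaneously produces the communication-type term $\sqrt{d/(\probavailable^2\zeta_{\cC}n)}(\widehat{L}+\frac{L_\sigma}{\sqrt{B}})$ and the variance-type term $\frac{\sigma}{\probavailable\sqrt{\varepsilon}n}(\frac{\widehat{L}}{\sqrt{B}}+\frac{L_\sigma}{B})$ with the correct Lipschitz factors, and that the single choice of $B_{\textnormal{init}}$ both controls the two bracketed initialization errors and keeps the expected per-round oracle cost at $\cO(B)$.
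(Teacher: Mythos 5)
Your proposal is correct and follows essentially the same route as the paper's proof: substitute the parameter choices into Theorem~\ref{theorem:sync_stochastic}, bound the initialization errors via the mini-batch variance $\sigma^2/B_{\textnormal{init}}$, expand $1/\gamma$ using $1/\sqrt{\probmega}\le\sqrt{d/\zeta_{\cC}}+\sigma/\sqrt{n\varepsilon B}$, and count per-round communication/oracle costs as $\cO((1-\probmega)\zeta_{\cC}+\probmega d)$ and $\cO((1-\probmega)B+\probmega B')$. No gaps.
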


  The main improvement of Corollary~\ref{cor:sync_stochastic} over Corollary~\ref{cor:stochastic} is the size of the initial batch size $B_{\textnormal{init}}$. However, Corollary~\ref{cor:stochastic:randk} reveals that we can avoid regimes when \algname{\algorithmname-MVR} is suboptimal.

  We also provide a theorem under P\L-condition (see Assumption~\ref{ass:pl_condition}).

  \begin{restatable}{theorem}{CONVERGENCESYNCPLMVR}
    \label{theorem:sync_stochastic_pl}
    Suppose that Assumptions \ref{ass:lower_bound}, \ref{ass:lipschitz_constant}, \ref{ass:nodes_lipschitz_constant}, \ref{ass:stochastic_unbiased_and_variance_bounded}, \ref{ass:mean_square_smoothness}, \ref{ass:compressors}, \ref{ass:partial_participation} and \ref{ass:pl_condition} hold. Let us take $a = \frac{\probavailable}{2\omega + 1}$, $b = \frac{\probmega \probavailable}{2 - \probavailable},$probability $\probmega \in (0, 1],$ batch size $B' \geq B \geq 1,$ $$\gamma \leq \min\left\{\left(L + \sqrt{\frac{16 \left(2 \omega + 1\right)\omega}{n \probavailable^2}\left(\frac{L_{\sigma}^2}{B} + \widehat{L}^2\right) + \left(\frac{48 L_{\sigma}^2}{n \probmega \probavailable^2 B} + \frac{24 \left(1 - \frac{\probpairaa}{\probavailable}\right) \widehat{L}^2}{n \probmega \probavailable^2}\right)}\right)^{-1}, \frac{a}{2\mu} , \frac{b}{2\mu}\right\},$$ and $h^{0}_i = g^{0}_i$ for all $i \in [n]$
    in Algorithm~\ref{alg:main_algorithm_mvr_sync}. Then 
    \begin{align*}
      &\Exp{f(x^{T}) - f^*} \\
      &\leq (1 - \gamma \mu)^{T}\left(\Delta_0 + \frac{2 \gamma}{b} \norm{h^{0} - \nabla f(x^{0})}^2 + \frac{8 \gamma \left(\probavailable - \probpairaa\right)}{n \probavailable^2 \probmega}\frac{1}{n}\sum_{i=1}^n\norm{h^{0}_i - \nabla f_i(x^{0})}^2\right) + \frac{20 \sigma^2}{\mu n B'}.
    \end{align*}
  \end{restatable}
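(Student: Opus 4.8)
The argument mirrors the proofs of Theorem~\ref{theorem:sync_stochastic} (the nonconvex counterpart for the same algorithm) and Theorem~\ref{theorem:stochastic_pl} (the P\L{} analysis of \algname{\algorithmname-MVR}): I would first establish a variance lemma tailored to Algorithm~\ref{alg:main_algorithm_mvr_sync}, then combine it with a \algname{SYNC}-adapted version of the generic P\L{} lemma (Lemma~\ref{lemma:main_lemma_pl}), construct a Lyapunov function, discard the $\norm{x^{t+1}-x^t}^2$ term via Lemma~\ref{lemma:gamma}, and finish with Lemma~\ref{lemma:good_recursion_pl}. The only genuinely new ingredient relative to the already-written proofs is bookkeeping the Bernoulli switch $c^{t+1}$ together with the three randomness sources already present.

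\textbf{Step 1: the variance lemma.} I would compute, conditionally on all past randomness, bounds on $\norm{h^{t+1}-\nabla f(x^{t+1})}^2$, on $\frac1n\sum_i\norm{h^{t+1}_i-\nabla f_i(x^{t+1})}^2$, and on $\frac1n\sum_i\norm{k^{t+1}_i}^2$, where the expectation is taken over $c^{t+1}$, the stochastic samples, the compressors, and the participation. Conditioning first on the switch and using the $\nicefrac{b}{\probmega}$ scaling in the mega-batch branch, the total expectation of $h^{t+1}_i$ equals $h^t_i+\nabla f_i(x^{t+1})-\nabla f_i(x^{t})-b(h^t_i-\nabla f_i(x^{t}))$, exactly as in \algname{\algorithmname-MVR}, so the bias term is again $(1-b)$-contractive. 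For the variance parts I would use \eqref{auxiliary:variance_decomposition}, Lemma~\ref{lemma:sampling} (for the participation averaging), \eqref{auxiliary:jensen_inequality}, Assumption~\ref{ass:mean_square_smoothness}, Assumption~\ref{ass:stochastic_unbiased_and_variance_bounded}, and Assumption~\ref{ass:nodes_lipschitz_constant} exactly as in the proof of Lemma~\ref{lemma:gradient_mvr}. The two branches split cleanly: the uncompressed mega-batch branch (weight $\probmega$, batch $B'$) contributes a $\nicefrac{\sigma^2}{(\probmega B')}$-scale floor and, crucially, \emph{no} compression variance, while the compressed mini-batch branch (weight $1-\probmega$, batch $B$) contributes the $\omega$-dependent compression terms and a $\nicefrac{\sigma^2}{B}$-scale floor; this is precisely why $\omega$ appears only multiplying $\widehat{L}^2+\nicefrac{L_\sigma^2}{B}$ in the stated step-size restriction and why the $\nicefrac{1}{\probmega}$ factors sit on the $h$-related terms.

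\textbf{Step 2: assembling the Lyapunov function.} Following Lemma~\ref{lemma:main_lemma_pl} I would combine the descent inequality (Lemma~\ref{lemma:page_lemma}) with the $g$-error recursion, then add weighted potentials $\nu\,\Exp{\norm{h^{t+1}-\nabla f(x^{t+1})}^2}$ and $\rho\,\Exp{\frac1n\sum_i\norm{h^{t+1}_i-\nabla f_i(x^{t+1})}^2}$ and plug in the Step~1 bounds. Taking $\nu=\nicefrac{2\gamma}{b}$ yields $\gamma+\nu(1-b)^2\le(1-\nicefrac{b}{2})\nu$, and taking $\rho=\tfrac{40 b\gamma\omega(2\omega+1)}{n\probavailable^2\probmega}+\tfrac{8\gamma(\probavailable-\probpairaa)}{n\probavailable^2\probmega}$ absorbs simultaneously the $(1-b)$-contracted part of the local $h$-error and the $\norm{k^{t+1}_i}^2$-feedback (which, by Step~1, is controlled by $\norm{h^t_i-\nabla f_i(x^t)}^2$ up to $\nicefrac{\sigma^2}{B}$), giving $\le(1-\nicefrac{b}{2})\rho$. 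The residual $\frac1n\sum_i\norm{k^{t+1}_i}^2$ floor and the $\norm{h^{t+1}}^2$ floor collapse to a single constant $C=\Theta\!\left(\nicefrac{\sigma^2}{(nB')}\right)$ after using $b=\tfrac{\probmega\probavailable}{2-\probavailable}\le\probmega$. Lemma~\ref{lemma:gamma} with the stated $\gamma$ kills the $\norm{x^{t+1}-x^t}^2$ coefficient; then $\gamma\le\nicefrac{a}{2\mu}$ and $\gamma\le\nicefrac{b}{2\mu}$ upgrade the $(1-\nicefrac{a}{2})$ and $(1-\nicefrac{b}{2})$ factors to $(1-\gamma\mu)$, so Lemma~\ref{lemma:good_recursion_pl} with
\[
  \Psi^t=\tfrac{2(2\omega+1)}{\probavailable}\Exp{\norm{g^t-h^t}^2}+\tfrac{4((2\omega+1)\probavailable-\probpairaa)}{n\probavailable^2}\Exp{\tfrac1n\textstyle\sum_i\norm{g^t_i-h^t_i}^2}+\tfrac{2}{b}\Exp{\norm{h^t-\nabla f(x^t)}^2}+\tfrac{\rho}{\gamma}\Exp{\tfrac1n\textstyle\sum_i\norm{h^t_i-\nabla f_i(x^t)}^2}
\]
and $C$ as above produces the claimed bound (the weights $\tfrac{2}{b}$ and $\tfrac{8(\probavailable-\probpairaa)}{n\probavailable^2\probmega}$ on the initialization terms come from the $h$-potentials, the $\omega$-part of $\rho$ being absorbed into the logarithmic $(1-\gamma\mu)^T$ factor).

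\textbf{Main obstacle.} I expect the bulk of the work to be the constant-chasing in Step~2: verifying the two inequalities $(\cdot)\le(1-\nicefrac{b}{2})\nu$ and $(\cdot)\le(1-\nicefrac{b}{2})\rho$ requires repeatedly using $\tfrac{\probmega\probavailable}{2}\le b\le\probmega\probavailable$, $\tfrac{\probavailable-\probpairaa}{\probavailable}\le1$, $1-\probavailable\le 1-\tfrac{\probpairaa}{\probavailable}$, and $\probpairaa\le\probavailable^2$, and the numerical coefficients (the $16$, $48$, $24$ in the step-size, the $8$ and $2$ in the final bound, and the residual constant $20$) must all be pinned down consistently; the coupling of $c^{t+1}$ with the compressor randomness in the $g$-sequence — handled by conditioning on $c^{t+1}$ inside the tower of expectations $\ExpSub{t+1}{\cdot}$ — is the conceptual point, but the surrounding algebra is where care is needed.
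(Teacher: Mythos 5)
Your architecture matches the paper's: the same two variance lemmas for Algorithm~\ref{alg:main_algorithm_mvr_sync}, the same Lyapunov construction with $\kappa=\nicefrac{2\gamma}{a}$, $\nu=\nicefrac{2\gamma}{b}$, elimination of the $\norm{x^{t+1}-x^t}^2$ coefficient via Lemma~\ref{lemma:gamma}, the upgrade of $(1-\nicefrac{a}{2})$ and $(1-\nicefrac{b}{2})$ to $(1-\gamma\mu)$ using $\gamma\le\nicefrac{a}{2\mu},\nicefrac{b}{2\mu}$, and Lemma~\ref{lemma:good_recursion_pl} at the end. However, there is a genuine gap in how you treat the compressed branch. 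In Algorithm~\ref{alg:main_algorithm_mvr_sync} the vector that gets compressed is $k^{t+1}_{i,2}=\frac{1}{B}\sum_{j}(\nabla f_i(x^{t+1};\xi^{t+1}_{ij})-\nabla f_i(x^{t};\xi^{t+1}_{ij}))$, a pure gradient difference with \emph{no} momentum correction; the momentum term $-\frac{b}{\probmega}(h^t_i-\cdots)$ lives only in the uncompressed mega-batch branch. Consequently $\ExpSub{k}{\norm{k^{t+1}_{i,2}}^2}\le(\nicefrac{L_\sigma^2}{B}+L_i^2)\norm{x^{t+1}-x^t}^2$ with no $\norm{h^t_i-\nabla f_i(x^t)}^2$ feedback and no $\sigma^2$ floor (the $\sigma^2$ cancels because both stochastic gradients use the same sample, and Assumption~\ref{ass:mean_square_smoothness} applies). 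Your Step~1 asserts the opposite (``controlled by $\norm{h^t_i-\nabla f_i(x^t)}^2$ up to $\nicefrac{\sigma^2}{B}$''), which is the structure of \algname{\algorithmname-MVR}, not of the SYNC variant.

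This error propagates into your choice $\rho=\frac{40b\gamma\omega(2\omega+1)}{n\probavailable^2\probmega}+\frac{8\gamma(\probavailable-\probpairaa)}{n\probavailable^2\probmega}$, whose $\omega$-dependent component exists only to absorb a $k$-feedback that is not there; the correct choice (as in the paper) is $\rho=\frac{8\gamma(\probavailable-\probpairaa)}{n\probavailable^2\probmega}$. The spurious $\omega$-term is not harmless: the local $h$-potential carries the floor $\frac{2b^2\sigma^2}{\probavailable\probmega B'}$, so your $\rho$ contributes an extra $\frac{80b^3\gamma\omega(2\omega+1)}{n\probavailable^3\probmega^2}\cdot\frac{\sigma^2}{B'}\le\frac{80\gamma\probmega\,\omega(2\omega+1)\sigma^2}{nB'}$ to the constant $C$, so your final additive term would be $\frac{\sigma^2}{\mu nB'}\bigl(20+\Theta(\probmega\omega^2)\bigr)$ rather than the claimed $\frac{20\sigma^2}{\mu nB'}$ — precisely defeating the purpose of the SYNC construction, which is to make the variance floor $\omega$-free. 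It likewise inflates the initialization coefficient beyond the stated $\frac{8\gamma(\probavailable-\probpairaa)}{n\probavailable^2\probmega}$, and the ``absorbed into the logarithmic factor'' remark does not apply since the theorem's bound has no logarithm. Once you correct the bound on $\norm{k^{t+1}_{i,2}}^2$ and drop the $\omega$-part of $\rho$, the rest of your plan goes through and reproduces the paper's proof.
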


  Let us provide bounds up to logarithmic factors and use $\widetilde{\cO}\left(\cdot\right)$ notation.

  \begin{restatable}{corollary}{COROLLARYSYNCPLSTOCHASTIC}
    Suppose that assumptions from Theorem~\ref{theorem:sync_stochastic_pl} hold, probability $\probmega = \min \left\{\ \frac{\zeta_{\cC}}{d}, \frac{\mu n \varepsilon B}{\sigma^2}\right\},$ batch size 
    $B' = \Theta\left(\frac{\sigma^2}{\mu n \varepsilon}\right)$ 
    then \algname{\algorithmname-SYNC-MVR}
    needs
    $$T \eqdef \widetilde{\cO}\left(\frac{\omega + 1}{\probavailable} + \frac{d}{\probavailable\zeta_{\cC}} + \frac{\sigma^2}{\probavailable \mu n \varepsilon B} + \frac{L}{\mu} + \frac{\omega}{\probavailable \mu \sqrt{n}} \left(\frac{L_{\sigma}}{\sqrt{B}} + \widehat{L}\right) + \left(\frac{\sqrt{d}}{\probavailable \mu \sqrt{\zeta_{\cC} n}} + \frac{\sigma}{\probavailable n \mu^{\nicefrac{3}{2}} \sqrt{\varepsilon B}}\right)\left(\frac{L_{\sigma}}{\sqrt{B}} + \widehat{L}\right) \right).$$
    communication rounds to get an $\varepsilon$-solution, the expected communication complexity is equal to $\widetilde{\cO}\left(\zeta_{\cC} T\right),$ and the expected number of stochastic gradient calculations per node equals $\widetilde{\cO}(BT),$ where $\zeta_{\cC}$ is the expected density from Definition~\ref{def:expected_density}.
  \end{restatable}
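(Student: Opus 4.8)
The plan is to derive the corollary directly from Theorem~\ref{theorem:sync_stochastic_pl} by substituting the stated choices of $\probmega$, $B'$, $\gamma$ and then converting the linear rate $(1-\gamma\mu)^T$ into a bound on $T$ (up to logarithmic factors, absorbing the multiplicative initialization terms into the $\widetilde{\cO}$). First I would note that by Theorem~\ref{theorem:sync_stochastic_pl}, to reach an $\varepsilon$-solution it suffices to take $T = \widetilde{\cO}\left(\frac{1}{\gamma\mu}\right)$ provided the additive term $\frac{20\sigma^2}{\mu n B'}$ is at most $\Theta(\varepsilon)$; the choice $B' = \Theta\left(\frac{\sigma^2}{\mu n \varepsilon}\right)$ ensures exactly this. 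The multiplicative correction $\Delta_0 + \frac{2\gamma}{b}\norm{h^0-\nabla f(x^0)}^2 + \dots$ sits inside the geometric factor, so it only contributes a $\log$ and is swallowed by $\widetilde{\cO}$ — this is the step the text flags as ``initialization error is under a logarithm.''

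Next I would expand $\frac{1}{\gamma\mu}$ using the explicit stepsize constraint
$$\gamma \leq \min\left\{\left(L + \sqrt{\tfrac{16(2\omega+1)\omega}{n\probavailable^2}\left(\tfrac{L_\sigma^2}{B}+\widehat L^2\right) + \tfrac{48 L_\sigma^2}{n\probmega\probavailable^2 B} + \tfrac{24(1-\probpairaa/\probavailable)\widehat L^2}{n\probmega\probavailable^2}}\right)^{-1}, \tfrac{a}{2\mu}, \tfrac{b}{2\mu}\right\},$$
so that $\frac{1}{\gamma\mu}$ becomes the sum of $\frac{L}{\mu}$, the square-root term divided by $\mu$, $\frac{1}{a} = \frac{2\omega+1}{\probavailable}$, and $\frac{1}{b} = \frac{2-\probavailable}{\probmega\probavailable}$. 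I would then plug in $\probmega = \min\{\zeta_\cC/d,\ \mu n\varepsilon B/\sigma^2\}$, so that $\frac{1}{b} = \Theta\left(\frac{1}{\probmega\probavailable}\right) = \Theta\left(\max\left\{\frac{d}{\probavailable\zeta_\cC},\ \frac{\sigma^2}{\probavailable\mu n\varepsilon B}\right\}\right)$, which produces the two terms $\frac{d}{\probavailable\zeta_\cC}$ and $\frac{\sigma^2}{\probavailable\mu n\varepsilon B}$. For the square-root term, I would split it as $\sqrt{\frac{\omega^2}{n\probavailable^2}(\cdots)} + \sqrt{\frac{1}{n\probmega\probavailable^2}(\cdots)}$, use $\sqrt{x+y}\le\sqrt x+\sqrt y$, recognize $\sqrt{\frac{L_\sigma^2/B + \widehat L^2}{\cdots}}$ as $\frac{L_\sigma}{\sqrt B}+\widehat L$ up to constants, and substitute $\frac{1}{\probmega} = \max\{d/\zeta_\cC,\ \sigma^2/(\mu n\varepsilon B)\}$ again; dividing by $\mu$ gives the $\frac{\omega}{\probavailable\mu\sqrt n}(\cdots)$ term and the $\left(\frac{\sqrt d}{\probavailable\mu\sqrt{\zeta_\cC n}} + \frac{\sigma}{\probavailable n\mu^{3/2}\sqrt{\varepsilon B}}\right)(\cdots)$ term, matching the claimed $T$.

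Finally I would tally the cost per round. Since compression is applied only on the $c^{t+1}=0$ branch and the synchronization step (sending a mega-batch of size $B'$ without compression) happens with probability $\probmega$, the expected number of coordinates communicated per round is $\Theta\left((1-\probmega)\zeta_\cC + \probmega d\right) = \Theta(\zeta_\cC)$ by the choice $\probmega \le \zeta_\cC/d$, giving expected communication $\widetilde{\cO}(\zeta_\cC T)$; I'd note the initialization broadcast of $d$ coordinates is again under the logarithm. For the oracle complexity, the $c^{t+1}=1$ branch draws $B'$ samples with probability $\probmega$ and the $c^{t+1}=0$ branch draws $B$ samples, so the expected per-round count is $\Theta(\probmega B' + B) = \Theta(B)$ because $\probmega B' = \Theta(B)$ by construction, yielding $\widetilde{\cO}(BT)$. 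The main obstacle I anticipate is purely bookkeeping rather than conceptual: carefully tracking which of the two arguments of each $\max$ or $\min$ dominates in each regime while expanding $1/(\gamma\mu)$, and verifying that every cross-term produced by the $\sqrt{x+y}\le\sqrt x+\sqrt y$ splits and by $\probmega = \min\{\cdot,\cdot\}$ is genuinely dominated by one of the six listed terms in $T$ (in particular that the $\omega$-dependence from $1/a$ and from the square-root term does not produce a stray $\frac{\omega\sqrt d}{\probavailable\sqrt{\zeta_\cC n}}$ that survives — it should collapse since $\omega+1 \le d/\zeta_\cC$ in the relevant compressor regime, e.g.\ Rand$K$).
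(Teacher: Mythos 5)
Your proposal is correct and follows essentially the same route as the paper, which itself only remarks that the proof "almost repeats" that of Corollary~\ref{cor:sync_stochastic}: substitute the choices of $\probmega$ and $B'$ into Theorem~\ref{theorem:sync_stochastic_pl}, take $T = \widetilde{\cO}(1/(\gamma\mu))$ with the initialization factor absorbed into the logarithm (equivalently, initialize $h^0_i = g^0_i = 0$), expand $1/(\gamma\mu)$ via $1/a$, $1/b$, and the two separate summands under the square root, and tally the expected per-round costs $(1-\probmega)\zeta_{\cC} + \probmega d = \Theta(\zeta_{\cC})$ and $\probmega B' + B = \Theta(B)$. Your worry about a stray $\omega\sqrt{d/\zeta_{\cC}}$ cross-term is moot since the $\omega$-dependent and $1/\probmega$-dependent contributions appear as separate summands inside the square root in the stepsize bound, so $\sqrt{x+y}\le\sqrt{x}+\sqrt{y}$ cleanly separates them exactly as you describe.
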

  The proof of this corollary almost repeats the proof of Corollary~\ref{cor:sync_stochastic}. Note that we can skip the initialization procedure and initialize $h^0_i$ and $g^0_i$, for instance, with zeros because the initialization error is under a logarithm.

  Let us assume that $\frac{d}{\zeta_{\cC}} = \Theta\left(\omega\right)$ (holds for the Rand$K$ compressor), then the convergence rate of \algname{\algorithmname-SYNC-MVR} is 
  \begin{align}
    \label{eq:pl_compare:new:sync_mvr}
    \widetilde{\cO}\left(\frac{\omega + 1}{\probavailable} + \frac{\sigma^2}{\probavailable \mu n \varepsilon B} + \frac{L}{\mu} + \frac{\omega}{\probavailable \mu \sqrt{n}} \left(\frac{L_{\sigma}}{\sqrt{B}} + \widehat{L}\right) +  \frac{\sigma}{\probavailable n \mu^{\nicefrac{3}{2}} \sqrt{\varepsilon B}}\left(\frac{L_{\sigma}}{\sqrt{B}} + \widehat{L}\right) \right).
  \end{align}
  Comparing \eqref{eq:pl_compare:new:sync_mvr} with the rate of \algname{\algorithmname-MVR} \eqref{eq:pl_compare:new:mvr}, one can see that \algname{\algorithmname-SYNC-MVR} improves the suboptimal term $\mathcal{P}_2$ from \eqref{eq:pl_compare:new:mvr}. However, Corollary~\ref{cor:stochastic:pl:randk} reveals that we can escape these suboptimal regimes by choosing the parameter $K$ of Rand$K$ compressors in a particular way.

\subsection{Proof for \algname{\algorithmname-SYNC-MVR}}
In this section, we provide the proof of the convergence rate for \algname{\algorithmname-SYNC-MVR}. There are four different sources of randomness in Algorithm~\ref{alg:main_algorithm_mvr_sync}: the first one from random samples $\xi^{t+1}_{\cdot}$, the second one from compressors $\{\cC_i\}_{i=1}^n$, the third one from availability of nodes, and the fourth one from $c^{t+1}.$ We define $\ExpSub{k}{\cdot}$, $\ExpSub{\cC}{\cdot},$ $\ExpSub{\probavailable}{\cdot}$ and $\ExpSub{\probmega}{\cdot}$ to be conditional expectations w.r.t.\,$\xi^{t+1}_{\cdot}$, $\{\cC_i\}_{i=1}^n, $ availability, and $c^{t+1},$ accordingly, conditioned on all previous randomness. Moreover, we define $\ExpSub{t+1}{\cdot}$ to be a conditional expectation w.r.t. all randomness in iteration $t+1$ conditioned on all previous randomness. 

Let us denote
\begin{align*}
    &k^{t+1}_{i, 1} \eqdef \frac{1}{B'} \sum_{k=1}^{B'} \nabla f_i(x^{t+1};\xi_{ik}^{t+1}) - \frac{1}{B'} \sum_{k=1}^{B'} \nabla f_i(x^{t};\xi_{ik}^{t+1})-~\frac{b}{\probmega}\left(h^t_i - \frac{1}{B'} \sum_{k=1}^{B'} \nabla f_i(x^{t};\xi_{ik}^{t+1})\right), \\
    &k^{t+1}_{i, 2} \eqdef \frac{1}{B} \sum_{j=1}^{B}\nabla f_i(x^{t+1};\xi^{t+1}_{ij}) - \frac{1}{B} \sum_{j=1}^{B}\nabla f_i(x^{t};\xi^{t+1}_{ij}), \\
    &h^{t+1}_{i,1} \eqdef \begin{cases}
        h^t_i + \frac{1}{\probavailable} k^{t+1}_{i, 1},& i^{\textnormal{th}} \textnormal{ node is \textit{participating}}, \\
        h^t_i, & \textnormal{otherwise},
    \end{cases}  \\
    &h^{t+1}_{i,2} \eqdef \begin{cases}
        h^t_i + \frac{1}{\probavailable} k^{t+1}_{i, 2},& i^{\textnormal{th}} \textnormal{ node is \textit{participating}}, \\
        h^t_i, & \textnormal{otherwise},
    \end{cases}  \\
    &g^{t+1}_{i,1} \eqdef \begin{cases}
      g^{t}_i + \frac{1}{\probavailable}k^{t+1}_{i, 1} - \frac{a}{\probavailable} \left(g^t_i - h^t_i\right),& i^{\textnormal{th}} \textnormal{ node is \textit{participating}}, \\
      g^t_i, & \textnormal{otherwise},
    \end{cases}  \\
    &g^{t+1}_{i,2} \eqdef \begin{cases}
        g^t_i + \cC_i\left(\frac{1}{\probavailable}k^{t+1}_{i, 2} - \frac{a}{\probavailable} \left(g^t_i - h^{t}_i\right)\right),& i^{\textnormal{th}} \textnormal{ node is \textit{participating}}, \\
        g^t_i, & \textnormal{otherwise},
    \end{cases}  \\
\end{align*}
$h^{t+1}_{1} \eqdef \frac{1}{n}\sum_{i=1}^n h^{t+1}_{i,1},$ $h^{t+1}_{2} \eqdef \frac{1}{n}\sum_{i=1}^n h^{t+1}_{i,2},$ $g^{t+1}_{1} \eqdef \frac{1}{n}\sum_{i=1}^n g^{t+1}_{i,1},$ and $g^{t+1}_{2} \eqdef \frac{1}{n}\sum_{i=1}^n g^{t+1}_{i,2}.$ Note, that
\begin{align*}
  &h^{t+1} = \begin{cases}
    h^{t+1}_{1},& c^{t+1} = 1, \\
    h^{t+1}_{2},& c^{t+1} = 0,
\end{cases}
\end{align*}
and
\begin{align*}
  &g^{t+1} = \begin{cases}
    g^{t+1}_{1},& c^{t+1} = 1, \\
    g^{t+1}_{2},& c^{t+1} = 0 
\end{cases}
\end{align*}

First, we will prove two lemmas.

\begin{lemma}
  \label{lemma:sync:prob_compressor}
  Suppose that Assumptions \ref{ass:nodes_lipschitz_constant}, \ref{ass:stochastic_unbiased_and_variance_bounded}, \ref{ass:compressors} and \ref{ass:partial_participation} hold and let us consider sequences $\{g^{t+1}_i\}_{i=1}^n$ and $\{h^{t+1}_i\}_{i=1}^n$ from Algorithm~\ref{alg:main_algorithm_mvr_sync}, then
  \begin{align*}
    &\ExpSub{\cC}{\ExpSub{\probavailable}{\ExpSub{\probmega}{\norm{g^{t+1} - h^{t+1}}^2}}} \\
    &\leq \frac{2 \left(1 - \probmega\right) \omega}{n^2 \probavailable}\sum_{i=1}^n\norm{k^{t+1}_{i, 2}}^2 + \left(\frac{\left(\probavailable - \probpairaa\right)a^2}{n^2 \probavailable^2} + \frac{2 \left(1 - \probmega\right) a^2 \omega}{n^2 \probavailable}\right)\sum_{i=1}^n\norm{g^{t}_i - h^{t}_i}^2 \\
    &\quad + (1 - a)^2\norm{g^{t} - h^{t}}^2,
  \end{align*}
  and
  \begin{align*}
    &\ExpSub{\cC}{\ExpSub{\probavailable}{\ExpSub{\probmega}{\norm{g^{t+1}_i - h^{t+1}_i}^2}}} \\
    &\leq \frac{2\left(1 - \probmega\right) \omega}{\probavailable} \norm{k^{t+1}_{i, 2}}^2 + \left(\frac{(1 - \probavailable) a^2}{\probavailable} + \frac{2 \left(1 - \probmega\right) a^2 \omega}{\probavailable}\right)\norm{g^{t}_{i} - h^{t}_{i}}^2 \\
    &\quad + (1 - a)^2 \norm{g^{t}_{i} - h^{t}_{i}}^2, \quad \forall i \in [n].
  \end{align*}
\end{lemma}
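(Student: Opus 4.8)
The plan is to mirror the proof of the earlier ``compressor error'' lemma (equations \eqref{eq:compressor_global_error} and \eqref{eq:compressor_local_error}), but with an extra layer of conditioning on the coin flip $c^{t+1}$. First I would split according to $c^{t+1}$: with probability $\probmega$ we are in the ``mega-batch'' branch where $g^{t+1}=g^{t+1}_1$, $h^{t+1}=h^{t+1}_1$ and no compressor is applied, and with probability $1-\probmega$ we are in the ``compressed'' branch where $g^{t+1}=g^{t+1}_2$, $h^{t+1}=h^{t+1}_2$ and $\cC_i$ is applied. So $\ExpSub{\cC}{\ExpSub{\probavailable}{\ExpSub{\probmega}{\norm{g^{t+1}-h^{t+1}}^2}}} = \probmega\,\ExpSub{\probavailable}{\norm{g^{t+1}_1-h^{t+1}_1}^2} + (1-\probmega)\,\ExpSub{\cC}{\ExpSub{\probavailable}{\norm{g^{t+1}_2-h^{t+1}_2}^2}}$.

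Next I would handle each branch by the variance decomposition \eqref{auxiliary:variance_decomposition} around the conditional mean. A key structural observation, exactly as in Lemma for \eqref{eq:compressor_global_error}, is that in both branches $\ExpSub{\cC}{\ExpSub{\probavailable}{g^{t+1}_i - h^{t+1}_i}} = (1-a)(g^t_i - h^t_i)$: in the mega branch because there is no compressor, $g^{t+1}_{i,1}-h^{t+1}_{i,1}$ (after taking $\ExpSub{\probavailable}{\cdot}$) equals $g^t_i - h^t_i - a(g^t_i - h^t_i)$; in the compressed branch because $\cC_i$ is unbiased. Hence the ``mean'' term contributes $(1-a)^2\norm{g^t-h^t}^2$ in both cases, which after averaging over $c^{t+1}$ gives the clean $(1-a)^2\norm{g^t-h^t}^2$ term in the statement. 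For the ``variance'' terms I would apply Lemma~\ref{lemma:sampling} to the sampling over participating nodes. In the mega branch, $s_i = k^{t+1}_{i,1}-a(g^t_i-h^t_i)$ deterministically given past randomness (conditioning only on $\probavailable$), so Lemma~\ref{lemma:sampling} yields only the $\frac{\probavailable-\probpairaa}{n^2\probavailable^2}\sum\norm{k^{t+1}_{i,1}-a(g^t_i-h^t_i)}^2$-type term with no leading $\frac{1}{n^2\probavailable}$ variance term. Then I use \eqref{auxiliary:jensen_inequality} to split $\norm{k^{t+1}_{i,1}-a(g^t_i-h^t_i)}^2 \le 2\norm{k^{t+1}_{i,1}}^2 + 2a^2\norm{g^t_i-h^t_i}^2$; but note the statement's bound has no $k^{t+1}_{i,1}$ term, so I must absorb/bound $\norm{k^{t+1}_{i,1}}^2$ — this is where I would need to be careful about what ``earlier results'' are available. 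In the compressed branch, I follow exactly the computation leading to \eqref{eq:compressor_global_error}: use independence of compressors, Lemma~\ref{lemma:sampling} with $s_i = \probavailable\cC_i(\tfrac{1}{\probavailable}k^{t+1}_{i,2}-\tfrac{a}{\probavailable}(g^t_i-h^t_i)) - k^{t+1}_{i,2}$, then Assumption~\ref{ass:compressors} to turn the $\cC_i$-variance into $\omega\norm{\tfrac{1}{\probavailable}k^{t+1}_{i,2}-\tfrac{a}{\probavailable}(g^t_i-h^t_i)}^2$, then \eqref{auxiliary:jensen_inequality}. This produces the $\frac{2(1-\probmega)\omega}{n^2\probavailable}\sum\norm{k^{t+1}_{i,2}}^2$ term and an $a^2\omega$-weighted $\norm{g^t_i-h^t_i}^2$ term. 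Collecting the $\norm{g^t_i-h^t_i}^2$ coefficients from both branches — the $\frac{\probavailable-\probpairaa}{n^2\probavailable^2}$ piece partly from the mega branch (times $\probmega$) plus $(\probavailable-\probpairaa)/(n^2\probavailable^2)$ from the compressed branch, and the $\frac{2(1-\probmega)a^2\omega}{n^2\probavailable}$ piece — and bounding coarsely (e.g. $\probmega \le 1$) gives the claimed coefficient $\frac{(\probavailable-\probpairaa)a^2}{n^2\probavailable^2} + \frac{2(1-\probmega)a^2\omega}{n^2\probavailable}$. The local inequality is proved identically without the averaging over nodes, using the single-node variance computation as in \eqref{eq:compressor_local_error} (hence the $a^2(1-\probavailable)^2/\probavailable + a^2(1-\probavailable)$ combining to $a^2(1-\probavailable)/\probavailable$ in the $\probavailable$-variance part).

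The main obstacle I anticipate is the treatment of the mega-batch branch: after Lemma~\ref{lemma:sampling} and \eqref{auxiliary:jensen_inequality} one is left with a $\propto \norm{k^{t+1}_{i,1}}^2$ contribution that does \emph{not} appear in the target bound, whereas the compressed branch keeps its $\norm{k^{t+1}_{i,2}}^2$ term. The resolution must be that the $b/\probmega$ scaling inside $k^{t+1}_{i,1}$ and the choice $b = \probmega\probavailable/(2-\probavailable)$ together make the $\norm{k^{t+1}_{i,1}}^2$ coefficient small enough to be dropped, or that the mega-batch variance term is deliberately not expanded into $\norm{k^{t+1}_{i,1}}^2$ but instead bounded differently — most likely one keeps $\norm{k^{t+1}_{i,1}-a(g^t_i-h^t_i)}^2$ intact until a later lemma bounds $\norm{k^{t+1}_{i,1}}^2$ in terms of $\norm{x^{t+1}-x^t}^2$, $\sigma^2/B'$ and $\norm{h^t_i-\nabla f_i(x^t)}^2$. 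I would therefore present this lemma as stated, with the understanding that the $k^{t+1}_{i,2}$ term is retained for downstream use and the mega-branch $k^{t+1}_{i,1}$-contribution is controlled by the smallness of $\probmega$ (via $b\le\probmega\probavailable$), and flag that the precise bookkeeping of these coefficients — rather than any conceptual difficulty — is the delicate part. The rest is routine algebra identical in spirit to the proof of the lemma giving \eqref{eq:compressor_global_error}–\eqref{eq:compressor_local_error}.
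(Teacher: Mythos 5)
Your overall skeleton (split on the coin $c^{t+1}$, compute the conditional mean $(1-a)(g^t_i-h^t_i)$ in both branches, apply the variance decomposition and then Lemma~\ref{lemma:sampling} to each branch, and treat the compressed branch exactly as in the proof of \eqref{eq:compressor_global_error}) matches the paper's proof, and your handling of the compressed branch is correct. However, there is a genuine error in your treatment of the mega-batch branch, and it is precisely the point on which the lemma turns. You take $s_i = k^{t+1}_{i,1}-a(g^t_i-h^t_i)$ and then worry about an orphaned $\norm{k^{t+1}_{i,1}}^2$ contribution that the stated bound does not contain. That contribution does not exist: in the mega branch \emph{both} $h^{t+1}_{i,1}$ and $g^{t+1}_{i,1}$ are incremented by the \emph{same} vector $\frac{1}{\probavailable}k^{t+1}_{i,1}$ (no compressor is applied to $m^{t+1}_i$), so in the difference
\begin{align*}
g^{t+1}_{i,1}-h^{t+1}_{i,1} \;=\; \Bigl(g^t_i + \tfrac{1}{\probavailable}k^{t+1}_{i,1} - \tfrac{a}{\probavailable}(g^t_i-h^t_i)\Bigr) - \Bigl(h^t_i + \tfrac{1}{\probavailable}k^{t+1}_{i,1}\Bigr) \;=\; (g^t_i-h^t_i) - \tfrac{a}{\probavailable}(g^t_i-h^t_i)
\end{align*}
the term $k^{t+1}_{i,1}$ cancels identically. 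The correct application of Lemma~\ref{lemma:sampling} therefore uses $r_i = g^t_i-h^t_i$ and the \emph{deterministic} $s_i = -a(g^t_i-h^t_i)$ (this is also the only choice consistent with the conditional mean $(1-a)(g^t_i-h^t_i)$ that you yourself computed), so the leading variance term vanishes and the mega branch contributes only $\frac{a^2(\probavailable-\probpairaa)}{n^2\probavailable^2}\sum_i\norm{g^t_i-h^t_i}^2$, with no Jensen step and no $k^{t+1}_{i,1}$ anywhere.

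Consequently both of your proposed ``resolutions'' are wrong: the absence of the $k^{t+1}_{i,1}$ term is not due to the smallness of $b/\probmega$ or $b=\probmega\probavailable/(2-\probavailable)$ (this lemma makes no use of the value of $b$ at all), nor is it deferred to a later lemma bounding $\norm{k^{t+1}_{i,1}}^2$; it is pure algebraic cancellation inside $g^{t+1}_{i,1}-h^{t+1}_{i,1}$. If you carried out your plan as written you would be left with an extra $\frac{2\probmega(\probavailable-\probpairaa)}{n^2\probavailable^2}\sum_i\norm{k^{t+1}_{i,1}}^2$ term that cannot be discarded, and you would fail to prove the lemma as stated. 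The same correction applies to the per-node inequality: in the mega branch the single-node variance is $\frac{(1-\probavailable)a^2}{\probavailable}\norm{g^t_i-h^t_i}^2$ coming solely from the participation randomness acting on $-a(g^t_i-h^t_i)$. Once this is fixed, the rest of your argument (collecting coefficients across the two branches and bounding $\probmega\le 1$ where convenient) goes through as in the paper.
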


\begin{proof}
  First, we get the bound for $\ExpSub{t+1}{\norm{g^{t+1} - h^{t+1}}^2}$:
  \begin{align*}
    &\ExpSub{\cC}{\ExpSub{\probavailable}{\ExpSub{\probmega}{\norm{g^{t+1} - h^{t+1}}^2}}} \\
    &= \probmega \ExpSub{\probavailable}{\norm{g^{t+1}_1 - h^{t+1}_1}^2} + \left(1 - \probmega\right) \ExpSub{\cC}{\ExpSub{\probavailable}{\norm{g^{t+1}_2 - h^{t+1}_2}^2}}.
  \end{align*}
  Using
  \begin{align*}
    \ExpSub{\probavailable}{g^{t+1}_{i,1} - h^{t+1}_{i,1}} = g^{t}_i + k^{t+1}_{i, 1} - a\left(g^t_i - h^t_i\right) - h^t_i - k^{t+1}_{i, 1} = \left(1 - a\right) \left(g^t_i - h^t_i\right)
  \end{align*}
  and 
  \begin{align*}
    \ExpSub{\cC}{\ExpSub{\probavailable}{g^{t+1}_{i,2} - h^{t+1}_{i,2}}} = g^t_i + k^{t+1}_{i, 2} - a \left(g^t_i - h^{t}_i\right) - h^t_i - k^{t+1}_{i, 2} = (1 - a) \left(g^t_i - h^{t}_i\right),
  \end{align*}
  we have
  \begin{align*}
    &\ExpSub{\cC}{\ExpSub{\probavailable}{\ExpSub{\probmega}{\norm{g^{t+1} - h^{t+1}}^2}}} \\
    &\overset{\eqref{auxiliary:variance_decomposition}}{=} \probmega \ExpSub{\probavailable}{\norm{g^{t+1}_1 - h^{t+1}_1 - \ExpSub{\probavailable}{g^{t+1}_1 - h^{t+1}_1}}^2} \\
    &\quad + \left(1 - \probmega\right) \ExpSub{\cC}{\ExpSub{\probavailable}{\norm{g^{t+1}_2 - h^{t+1}_2 - \ExpSub{\probavailable}{g^{t+1}_2 - h^{t+1}_2}}^2}} \\
    &\quad + (1 - a)^2 \norm{g^{t} - h^{t}}^2.
  \end{align*}
  We can use Lemma~\ref{lemma:sampling} two times with i) $r_i = g^{t}_i - h^{t}_i$ and $s_i = -a\left(g^{t}_i - h^{t}_i\right)$ and ii) $r_i = g^{t}_i - h^{t}_i$ and $s_i = \probavailable\cC_i\left(\frac{1}{\probavailable}k^{t+1}_{i, 2} - \frac{a}{\probavailable} \left(g^t_i - h^{t}_i\right)\right) - k^{t+1}_{i, 2}$, to obtain
  \begin{align*}
    &\ExpSub{\cC}{\ExpSub{\probavailable}{\ExpSub{\probmega}{\norm{g^{t+1} - h^{t+1}}^2}}} \\
    &\leq \frac{\probmega a^2 \left(\probavailable - \probpairaa\right)}{n^2 \probavailable^2}\sum_{i=1}^n\norm{g^{t}_i - h^{t}_i}^2 \\
    &\quad + \left(1 - \probmega\right) \left(\frac{1}{n^2 \probavailable}\sum_{i=1}^n\ExpSub{\cC}{\norm{\probavailable\cC_i\left(\frac{1}{\probavailable}k^{t+1}_{i, 2} - \frac{a}{\probavailable} \left(g^t_i - h^{t}_i\right)\right) - \left(k^{t+1}_{i, 2} - a\left(g^t_i - h^{t}_i\right)\right)}^2}\right)\\
    &\quad + \left(1 - \probmega\right) \left(\frac{a^2 \left(\probavailable - \probpairaa\right)}{n^2 \probavailable^2}\sum_{i=1}^n\norm{g^t_i - h^{t}_i}^2\right) \\
    &\quad + (1 - a)^2\norm{g^{t} - h^{t}}^2 \\
    &= \frac{a^2\left(\probavailable - \probpairaa\right)}{n^2 \probavailable^2}\sum_{i=1}^n\norm{g^{t}_i - h^{t}_i}^2 \\
    &\quad + \left(1 - \probmega\right) \left(\frac{\probavailable}{n^2}\sum_{i=1}^n\ExpSub{\cC}{\norm{\cC_i\left(\frac{1}{\probavailable}k^{t+1}_{i, 2} - \frac{a}{\probavailable} \left(g^t_i - h^{t}_i\right)\right) - \left(\frac{1}{\probavailable}k^{t+1}_{i, 2} - \frac{a}{\probavailable}\left(g^t_i - h^{t}_i\right)\right)}^2}\right)\\
    &\quad + (1 - a)^2\norm{g^{t} - h^{t}}^2 \\
    &\leq \frac{a^2\left(\probavailable - \probpairaa\right)}{n^2 \probavailable^2}\sum_{i=1}^n\norm{g^{t}_i - h^{t}_i}^2 \\
    &\quad + \frac{\left(1 - \probmega\right) \probavailable \omega}{n^2}\sum_{i=1}^n\norm{\frac{1}{\probavailable}k^{t+1}_{i, 2} - \frac{a}{\probavailable}\left(g^t_i - h^{t}_i\right)}^2\\
    &\quad + (1 - a)^2\norm{g^{t} - h^{t}}^2 \\
    &= \frac{a^2\left(\probavailable - \probpairaa\right)}{n^2 \probavailable^2}\sum_{i=1}^n\norm{g^{t}_i - h^{t}_i}^2 \\
    &\quad + \frac{\left(1 - \probmega\right) \omega}{n^2 \probavailable}\sum_{i=1}^n\norm{k^{t+1}_{i, 2} - a\left(g^t_i - h^{t}_i\right)}^2\\
    &\quad + (1 - a)^2\norm{g^{t} - h^{t}}^2.
  \end{align*}
  In the last inequality, we use Assumption~\ref{ass:compressors}. Next, using \eqref{auxiliary:jensen_inequality}, we have
  \begin{align*}
    &\ExpSub{\cC}{\ExpSub{\probavailable}{\ExpSub{\probmega}{\norm{g^{t+1} - h^{t+1}}^2}}} \\
    &\leq \frac{2 \left(1 - \probmega\right) \omega}{n^2 \probavailable}\sum_{i=1}^n\norm{k^{t+1}_{i, 2}}^2 + \left(\frac{\left(\probavailable - \probpairaa\right)a^2}{n^2 \probavailable^2} + \frac{2 \left(1 - \probmega\right) \omega a^2}{n^2 \probavailable}\right)\sum_{i=1}^n\norm{g^{t}_i - h^{t}_i}^2 \\
    &\quad + (1 - a)^2\norm{g^{t} - h^{t}}^2.
  \end{align*}
  The second inequality can be proved almost in the same way:
  \begin{align*}
    &\ExpSub{\cC}{\ExpSub{\probavailable}{\ExpSub{\probmega}{\norm{g^{t+1}_i - h^{t+1}_i}^2}}} \\
    &= \probmega \ExpSub{\probavailable}{\norm{g^{t+1}_{i,1} - h^{t+1}_{i,1}}^2} + \left(1 - \probmega\right) \ExpSub{\cC}{\ExpSub{\probavailable}{\norm{g^{t+1}_{i,2} - h^{t+1}_{i,2}}^2}} \\
    &\overset{\eqref{auxiliary:variance_decomposition}}{=} \probmega \ExpSub{\probavailable}{\norm{g^{t+1}_{i,1} - h^{t+1}_{i,1} - (1 - a)\left(g^{t}_{i} - h^{t}_{i}\right)}^2} + \left(1 - \probmega\right) \ExpSub{\cC}{\ExpSub{\probavailable}{\norm{g^{t+1}_{i,2} - h^{t+1}_{i,2}}^2}} \\
    &\quad + \probmega (1 - a)^2 \norm{g^{t}_{i} - h^{t}_{i}}^2 \\
    &= \frac{\probmega(1 - \probavailable) a^2}{\probavailable}\norm{g^{t}_{i} - h^{t}_{i}}^2 + \left(1 - \probmega\right) \ExpSub{\cC}{\ExpSub{\probavailable}{\norm{g^{t+1}_{i,2} - h^{t+1}_{i,2}}^2}}\\
    &\quad + \probmega (1 - a)^2 \norm{g^{t}_{i} - h^{t}_{i}}^2 \\
    &\overset{\eqref{auxiliary:variance_decomposition}}{=} \frac{\probmega(1 - \probavailable) a^2}{\probavailable}  \norm{g^{t}_{i} - h^{t}_{i}}^2 + \left(1 - \probmega\right) \ExpSub{\cC}{\ExpSub{\probavailable}{\norm{g^{t+1}_{i,2} - h^{t+1}_{i,2} - (1 - a)\left(g^{t}_{i} - h^{t}_{i}\right)}^2}}\\
    &\quad + (1 - a)^2 \norm{g^{t}_{i} - h^{t}_{i}}^2 \\
    &= \frac{\probmega(1 - \probavailable) a^2}{\probavailable}  \norm{g^{t}_{i} - h^{t}_{i}}^2 \\
    &\quad + \left(1 - \probmega\right) \probavailable \ExpSub{\cC}{\norm{g^t_i + \cC_i\left(\frac{1}{\probavailable}k^{t+1}_{i, 2} - \frac{a}{\probavailable} \left(g^t_i - h^{t}_i\right)\right) - \left(h^t_i + \frac{1}{\probavailable} k^{t+1}_{i, 2}\right) - (1 - a)\left(g^{t}_{i} - h^{t}_{i}\right)}^2} \\
    &\quad + \left(1 - \probmega\right) \left(1 - \probavailable\right)\norm{g^{t}_{i} - h^{t}_{i} - (1 - a)\left(g^{t}_{i} - h^{t}_{i}\right)}^2\\
    &\quad + (1 - a)^2 \norm{g^{t}_{i} - h^{t}_{i}}^2 \\
    &= \frac{\probmega(1 - \probavailable) a^2}{\probavailable}  \norm{g^{t}_{i} - h^{t}_{i}}^2 \\
    &\quad + \left(1 - \probmega\right) \probavailable \ExpSub{\cC}{\norm{\cC_i\left(\frac{1}{\probavailable}k^{t+1}_{i, 2} - \frac{a}{\probavailable} \left(g^t_i - h^{t}_i\right)\right) - \left(\frac{1}{\probavailable} k^{t+1}_{i, 2} - a\left(g^{t}_{i} - h^{t}_{i}\right)\right)}^2} \\
    &\quad + \left(1 - \probmega\right) \left(1 - \probavailable\right) a^2 \norm{g^{t}_{i} - h^{t}_{i}}^2\\
    &\quad + (1 - a)^2 \norm{g^{t}_{i} - h^{t}_{i}}^2 \\
    &\overset{\eqref{auxiliary:variance_decomposition}}{=} \left(\frac{\probmega(1 - \probavailable) a^2}{\probavailable} + \frac{\left(1 - \probmega\right) \left(1 - \probavailable\right) a^2}{\probavailable}\right) \norm{g^{t}_{i} - h^{t}_{i}}^2 \\
    &\quad + \left(1 - \probmega\right) \probavailable \ExpSub{\cC}{\norm{\cC_i\left(\frac{1}{\probavailable}k^{t+1}_{i, 2} - \frac{a}{\probavailable} \left(g^t_i - h^{t}_i\right)\right) - \left(\frac{1}{\probavailable} k^{t+1}_{i, 2} - \frac{a}{\probavailable}\left(g^{t}_{i} - h^{t}_{i}\right)\right)}^2} \\
    &\quad + (1 - a)^2 \norm{g^{t}_{i} - h^{t}_{i}}^2 \\
    &= \frac{(1 - \probavailable) a^2}{\probavailable} \norm{g^{t}_{i} - h^{t}_{i}}^2 \\
    &\quad + \left(1 - \probmega\right) \probavailable \ExpSub{\cC}{\norm{\cC_i\left(\frac{1}{\probavailable}k^{t+1}_{i, 2} - \frac{a}{\probavailable} \left(g^t_i - h^{t}_i\right)\right) - \left(\frac{1}{\probavailable} k^{t+1}_{i, 2} - \frac{a}{\probavailable}\left(g^{t}_{i} - h^{t}_{i}\right)\right)}^2} \\
    &\quad + (1 - a)^2 \norm{g^{t}_{i} - h^{t}_{i}}^2 \\
    &\leq \frac{(1 - \probavailable) a^2}{\probavailable}\norm{g^{t}_{i} - h^{t}_{i}}^2 \\
    &\quad + \frac{\left(1 - \probmega\right) \omega}{\probavailable} \norm{k^{t+1}_{i, 2} - a \left(g^t_i - h^{t}_i\right)}^2 \\
    &\quad + (1 - a)^2 \norm{g^{t}_{i} - h^{t}_{i}}^2 \\
    &\overset{\eqref{auxiliary:jensen_inequality}}{\leq} \frac{2\left(1 - \probmega\right) \omega}{\probavailable} \norm{k^{t+1}_{i, 2}}^2 + \left(\frac{(1 - \probavailable) a^2}{\probavailable} + \frac{2 \left(1 - \probmega\right) a^2 \omega}{\probavailable}\right)\norm{g^{t}_{i} - h^{t}_{i}}^2 \\
    &\quad + (1 - a)^2 \norm{g^{t}_{i} - h^{t}_{i}}^2.
  \end{align*}
\end{proof}

\begin{lemma}
  \label{lemma:sync_mvr}
  Suppose that Assumptions \ref{ass:nodes_lipschitz_constant}, \ref{ass:stochastic_unbiased_and_variance_bounded}, \ref{ass:mean_square_smoothness} and \ref{ass:partial_participation} hold and let us consider sequence $\{h^{t+1}_i\}_{i=1}^n$ from Algorithm~\ref{alg:main_algorithm_mvr_sync}, then
  \begin{align*}
    &\ExpSub{k}{\ExpSub{\probavailable}{\ExpSub{\probmega}{\norm{h^{t+1} - \nabla f(x^{t+1})}^2}}} \nonumber\\
    &\leq \frac{2 b^2 \sigma^2}{n \probmega \probavailable B'} + \left(\frac{2 \probmega L_{\sigma}^2}{n \probavailable B'}\left(1 - \frac{b}{\probmega}\right)^2 + \frac{\left(1 - \probmega\right)L_{\sigma}^2}{n \probavailable B} + \frac{2\left(\probavailable - \probpairaa\right) \widehat{L}^2}{n \probavailable^2}\right)\norm{x^{t+1} - x^{t}}^2 \nonumber\\
    &\quad + \frac{2\left(\probavailable - \probpairaa\right) b^2}{n^2 \probavailable^2 \probmega}\sum_{i=1}^n\norm{h^t_i -  \nabla f_i(x^{t})}^2 + \left(\probmega \left(1 - \frac{b}{\probmega}\right)^2 + (1 - \probmega)\right) \norm{h^{t} - \nabla f(x^{t})}^2,
  \end{align*}
  \begin{align*}
    &\ExpSub{k}{\ExpSub{\probavailable}{\ExpSub{\probmega}{\norm{h^{t+1}_i - \nabla f_i(x^{t+1})}^2}}} \\
    &\leq \frac{2 b^2 \sigma^2}{\probavailable \probmega B'} + \left(\frac{2 \probmega L_{\sigma}^2}{\probavailable B'}\left(1 - \frac{b}{\probmega}\right)^2 + \frac{(1 - \probmega) L_{\sigma}^2}{\probavailable B} + \frac{2(1 - \probavailable) L_{i}^2}{\probavailable}\right) \norm{x^{t+1} - x^{t}}^2 \\
    &\quad + \frac{2\left(1 - \probavailable\right)b^2}{\probmega \probavailable} \norm{h^t_i - \nabla f_i(x^{t})}^2 + \left(\probmega \left(1 - \frac{b}{\probmega}\right)^2 + (1 - \probmega)\right) \norm{h^{t}_i - \nabla f_i(x^{t})}^2, \quad \forall i \in [n],
  \end{align*}
  and 
  \begin{align*}
    &\ExpSub{k}{\norm{k^{t+1}_{i, 2}}^2} \leq \left(\frac{L_{\sigma}^2}{B} + L_{i}^2\right) \norm{x^{t+1} - x^{t}}^2,  \quad \forall i \in [n],
  \end{align*}
\end{lemma}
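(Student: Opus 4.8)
The plan is to follow the template of the proofs of Lemmas~\ref{lemma:gradient_page} and~\ref{lemma:gradient_mvr} almost verbatim: in Algorithm~\ref{alg:main_algorithm_mvr_sync} the vector $h^{t+1}$ is never compressed (the compressor acts only on $g$), so all three bounds are driven purely by the partial-participation and stochastic-gradient randomness. The first step is to condition on the Bernoulli switch $c^{t+1}$, which is independent of the availability and of the samples, and write $\ExpSub{k}{\ExpSub{\probavailable}{\ExpSub{\probmega}{X}}} = \probmega\,\ExpSub{k}{\ExpSub{\probavailable}{X\mid c^{t+1}=1}} + (1-\probmega)\,\ExpSub{k}{\ExpSub{\probavailable}{X\mid c^{t+1}=0}}$, so that the left-hand sides split into $\probmega$ times a mega-batch contribution built from $h^{t+1}_{1}$, $k^{t+1}_{\cdot,1}$ and $(1-\probmega)$ times a compressed-branch contribution built from $h^{t+1}_{2}$, $k^{t+1}_{\cdot,2}$.

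Inside each branch I would apply the variance decomposition \eqref{auxiliary:variance_decomposition} relative to the full conditional mean, recording $\ExpSub{\probavailable}{h^{t+1}_{i,c}} = h^t_i + k^{t+1}_{i,c}$, $\ExpSub{k}{k^{t+1}_{i,1}} = \nabla f_i(x^{t+1}) - \nabla f_i(x^{t}) - \tfrac{b}{\probmega}\bigl(h^t_i - \nabla f_i(x^{t})\bigr)$ and $\ExpSub{k}{k^{t+1}_{i,2}} = \nabla f_i(x^{t+1}) - \nabla f_i(x^{t})$. The ``bias'' term of the decomposition then contributes $(1-\tfrac{b}{\probmega})^2\norm{h^{t}-\nabla f(x^{t})}^2$ on the $\probmega$-branch and $\norm{h^{t}-\nabla f(x^{t})}^2$ on the $(1-\probmega)$-branch, which reassemble to the stated $\bigl(\probmega(1-\tfrac{b}{\probmega})^2 + (1-\probmega)\bigr)\norm{h^{t}-\nabla f(x^{t})}^2$ (and analogously per-node). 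For the ``variance'' term I would invoke Lemma~\ref{lemma:sampling} with $r_i = h^t_i$ and $s_i = k^{t+1}_{i,c}$ (the $s_i$ are independent across $i$, each using its own fresh samples), which simultaneously produces the availability factors $\tfrac{\probavailable-\probpairaa}{\probavailable^2}$, $\tfrac{1-\probavailable}{\probavailable}$ on $\norm{\ExpSub{k}{k^{t+1}_{i,c}}}^2$ and the residual stochastic variance $\tfrac1{\probavailable}\ExpSub{k}{\norm{k^{t+1}_{i,c}-\ExpSub{k}{k^{t+1}_{i,c}}}^2}$. The former is bounded via \eqref{auxiliary:jensen_inequality} and Assumption~\ref{ass:nodes_lipschitz_constant} (yielding the $\widehat{L}^2$, resp. $L_i^2$, and $\tfrac{b^2}{\probmega^2}$-weighted pieces); the latter is the only non-mechanical step: on the mega-batch branch I would split $k^{t+1}_{i,1}-\ExpSub{k}{k^{t+1}_{i,1}}$ as $\tfrac{b}{\probmega}\cdot\tfrac1{B'}\sum_k\bigl(\nabla f_i(x^{t+1};\xi^{t+1}_{ik}) - \nabla f_i(x^{t+1})\bigr)$ plus $(1-\tfrac{b}{\probmega})\cdot\tfrac1{B'}\sum_k\bigl((\nabla f_i(x^{t+1};\xi^{t+1}_{ik}) - \nabla f_i(x^{t};\xi^{t+1}_{ik})) - (\nabla f_i(x^{t+1}) - \nabla f_i(x^{t}))\bigr)$, apply \eqref{auxiliary:jensen_inequality}, use independence of the $B'$ samples, and bound the first piece by $\tfrac{b^2}{\probmega^2}\tfrac{\sigma^2}{B'}$ through Assumption~\ref{ass:stochastic_unbiased_and_variance_bounded} and the second by $(1-\tfrac{b}{\probmega})^2\tfrac{L_\sigma^2}{B'}\norm{x^{t+1}-x^{t}}^2$ through Assumption~\ref{ass:mean_square_smoothness}. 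The compressed branch is the same but simpler: $k^{t+1}_{i,2}-\ExpSub{k}{k^{t+1}_{i,2}}$ has variance $\le\tfrac{L_\sigma^2}{B}\norm{x^{t+1}-x^{t}}^2$ directly (no $\sigma^2$-term), and $\norm{\ExpSub{k}{k^{t+1}_{i,2}}}^2\le L_i^2\norm{x^{t+1}-x^{t}}^2$. Reassembling the $\probmega$-weighted sum and using $\probmega\le1$ to drop the prefactor on the compressed-branch $\widehat{L}^2$-term gives the first two inequalities; the third is exactly the compressed-branch computation in isolation (apply \eqref{auxiliary:variance_decomposition} to $k^{t+1}_{i,2}$, bound the mean via Assumption~\ref{ass:nodes_lipschitz_constant} and the variance via mini-batch independence, Assumption~\ref{ass:mean_square_smoothness} and $\Var\le\Exp{\norm{\cdot}^2}$).

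I expect the only genuine obstacle to be bookkeeping: keeping the three independent randomness layers (switch $c^{t+1}$, availability, samples) cleanly separated so that Lemma~\ref{lemma:sampling} is applied with the right $s_i$, and then tracking the powers of $b$, $\probmega$, $\probavailable$ and the factors of $2$ introduced by each $\norm{x+y}^2\le2\norm{x}^2+2\norm{y}^2$ split through to the precise constants stated. The mega-batch fluctuation split above is the same variance-reduction identity already used in Lemmas~\ref{lemma:gradient_mvr} and~\ref{lemma:sync:prob_compressor}, so it can be imported with the obvious modifications rather than rediscovered.
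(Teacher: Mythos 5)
Your proposal matches the paper's proof essentially step for step: the conditioning on $c^{t+1}$, the variance decomposition with the conditional means $\ExpSub{\probavailable}{h^{t+1}_{i,c}}=h^t_i+k^{t+1}_{i,c}$, the two applications of Lemma~\ref{lemma:sampling} with $r_i=h^t_i$ and $s_i=k^{t+1}_{i,c}$, the split of $k^{t+1}_{i,1}-\ExpSub{k}{k^{t+1}_{i,1}}$ into the $\tfrac{b}{\probmega}$-weighted stochastic-noise piece and the $(1-\tfrac{b}{\probmega})$-weighted mean-square-smoothness piece, and the direct variance computation for the compressed branch and for the third inequality. This is correct and is the same argument as in the paper.
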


\begin{proof}
  First, we prove the bound for $\ExpSub{k}{\ExpSub{\probavailable}{\ExpSub{\probmega}{\norm{h^{t+1} - \nabla f(x^{t+1})}^2}}}$.
  Using
  \begin{align*}
    &\ExpSub{k}{\ExpSub{\probavailable}{h^{t+1}_{i,1}}} \\
    &= h^t_i + \ExpSub{k}{\frac{1}{B'} \sum_{k=1}^{B'} \nabla f_i(x^{t+1};\xi_{ik}^{t+1}) - \frac{1}{B'} \sum_{k=1}^{B'} \nabla f_i(x^{t};\xi_{ik}^{t+1})-~\frac{b}{\probmega}\left(h^t_i - \frac{1}{B'} \sum_{k=1}^{B'} \nabla f_i(x^{t};\xi_{ik}^{t+1})\right)} \\
    &=h^t_i + \nabla f_i(x^{t+1}) - \nabla f_i(x^{t})-\frac{b}{\probmega}\left(h^t_i - \nabla f_i(x^{t})\right)
  \end{align*}
  and 
  \begin{align*}
    &\ExpSub{k}{\ExpSub{\probavailable}{h^{t+1}_{i,2}}} \\
    &= h^t_i + \ExpSub{k}{\frac{1}{B} \sum_{j=1}^{B}\nabla f_i(x^{t+1};\xi^{t+1}_{ij}) - \frac{1}{B} \sum_{j=1}^{B}\nabla f_i(x^{t};\xi^{t+1}_{ij})} \\
    &= h^t_i + \nabla f_i(x^{t+1}) - \nabla f_i(x^{t}),
  \end{align*}
  we have
  \begin{align*}
    &\ExpSub{k}{\ExpSub{\probavailable}{\ExpSub{\probmega}{\norm{h^{t+1} - \nabla f(x^{t+1})}^2}}} \\
    &=\probmega \ExpSub{k}{\ExpSub{\probavailable}{\norm{h^{t+1}_{1} - \nabla f(x^{t+1})}^2}} + (1 - \probmega) \ExpSub{k}{\ExpSub{\probavailable}{\norm{h^{t+1}_{2} - \nabla f(x^{t+1})}^2}} \\
    &\overset{\eqref{auxiliary:variance_decomposition}}{=}\probmega \ExpSub{k}{\ExpSub{\probavailable}{\norm{h^{t+1}_{1} - \ExpSub{k}{\ExpSub{\probavailable}{h^{t+1}_{1}}}}^2}} + (1 - \probmega) \ExpSub{k}{\ExpSub{\probavailable}{\norm{h^{t+1}_{2} - \ExpSub{k}{\ExpSub{\probavailable}{h^{t+1}_{2}}}}^2}} \\
    &\quad + \left(\probmega \left(1 - \frac{b}{\probmega}\right)^2 + (1 - \probmega)\right) \norm{h^{t} - \nabla f(x^{t})}^2.
  \end{align*}
  We can use Lemma~\ref{lemma:sampling} two times with i) $r_i = h^{t}_i$ and $s_i = k^{t+1}_{i, 1}$ and ii) $r_i = h^{t}_i$ and $s_i = k^{t+1}_{i, 2}$, to obtain
  \begin{align}
    &\ExpSub{k}{\ExpSub{\probavailable}{\ExpSub{\probmega}{\norm{h^{t+1} - \nabla f(x^{t+1})}^2}}} \nonumber\\
    &\leq \probmega \left(\frac{1}{n^2 \probavailable}\sum_{i=1}^n\ExpSub{k}{\norm{k^{t+1}_{i, 1} - \ExpSub{k}{k^{t+1}_{i, 1}}}^2} +\frac{\probavailable - \probpairaa}{n^2 \probavailable^2}\sum_{i=1}^n\norm{\nabla f_i(x^{t+1}) - \nabla f_i(x^{t})- \frac{b}{\probmega}\left(h^t_i -  \nabla f_i(x^{t})\right)}^2\right) \nonumber\\
    &\quad + (1 - \probmega) \left(\frac{1}{n^2 \probavailable}\sum_{i=1}^n\ExpSub{k}{\norm{k^{t+1}_{i, 2} - \ExpSub{k}{k^{t+1}_{i, 2}}}^2} +\frac{\probavailable - \probpairaa}{n^2 \probavailable^2}\sum_{i=1}^n\norm{\nabla f_i(x^{t+1}) - \nabla f_i(x^{t})}^2\right) \nonumber\\
    &\quad + \left(\probmega \left(1 - \frac{b}{\probmega}\right)^2 + (1 - \probmega)\right) \norm{h^{t} - \nabla f(x^{t})}^2 \nonumber\\
    &\overset{\eqref{auxiliary:jensen_inequality}}{\leq} \frac{\probmega}{n^2 \probavailable}\sum_{i=1}^n\ExpSub{k}{\norm{k^{t+1}_{i, 1} - \ExpSub{k}{k^{t+1}_{i, 1}}}^2} \nonumber\\
    &\quad +  \frac{1 - \probmega}{n^2 \probavailable}\sum_{i=1}^n\ExpSub{k}{\norm{k^{t+1}_{i, 2} - \ExpSub{k}{k^{t+1}_{i, 2}}}^2} \nonumber\\
    &\quad + \frac{2\left(\probavailable - \probpairaa\right)}{n^2 \probavailable^2}\sum_{i=1}^n\norm{\nabla f_i(x^{t+1}) - \nabla f_i(x^{t})}^2 \nonumber\\
    &\quad + \frac{2\left(\probavailable - \probpairaa\right) b^2}{n^2 \probavailable^2 \probmega}\sum_{i=1}^n\norm{h^t_i -  \nabla f_i(x^{t})}^2 + \left(\probmega \left(1 - \frac{b}{\probmega}\right)^2 + (1 - \probmega)\right) \norm{h^{t} - \nabla f(x^{t})}^2. \label{eq:sync_mvr:h}
  \end{align}
  Let us consider $\ExpSub{k}{\norm{k^{t+1}_{i, 1} - \ExpSub{k}{k^{t+1}_{i, 1}}}^2}.$
  \begin{align*}
    &\ExpSub{k}{\norm{k^{t+1}_{i, 1} - \ExpSub{k}{k^{t+1}_{i, 1}}}^2} \\
    &=\ExpSub{k}{\norm{\frac{1}{B'} \sum_{k=1}^{B'} \nabla f_i(x^{t+1};\xi_{ik}^{t+1}) - \frac{1}{B'} \sum_{k=1}^{B'} \nabla f_i(x^{t};\xi_{ik}^{t+1})-~\frac{b}{\probmega}\left(h^t_i - \frac{1}{B'} \sum_{k=1}^{B'} \nabla f_i(x^{t};\xi_{ik}^{t+1})\right) \right.\right.\\
    &\qquad\qquad -\left.\left. \left(\nabla f_i(x^{t+1}) - \nabla f_i(x^{t})- \frac{b}{\probmega}\left(h^t_i -  \nabla f_i(x^{t})\right)\right)}^2} \\
    &=\ExpSub{k}{\norm{\frac{1}{B'} \sum_{k=1}^{B'} \nabla f_i(x^{t+1};\xi_{ik}^{t+1}) - \frac{1}{B'} \sum_{k=1}^{B'} \nabla f_i(x^{t};\xi_{ik}^{t+1})+\frac{b}{\probmega}\left(\frac{1}{B'} \sum_{k=1}^{B'} \nabla f_i(x^{t};\xi_{ik}^{t+1})\right) \right.\right.\\
    &\qquad\qquad -\left.\left. \left(\nabla f_i(x^{t+1}) - \nabla f_i(x^{t}) + \frac{b}{\probmega}\left(\nabla f_i(x^{t})\right)\right)}^2} \\
    &=\frac{1}{B'^2} \sum_{k=1}^{B'} \ExpSub{k}{\norm{\frac{b}{\probmega}\left(\nabla f_i(x^{t+1};\xi_{ik}^{t+1}) - \nabla f_i(x^{t+1})\right) \right.\right.\\
    &\qquad\qquad +\left.\left. \left(1 - \frac{b}{\probmega}\right)\left(\nabla f_i(x^{t+1};\xi_{ik}^{t+1}) - \nabla f_i(x^{t};\xi_{ik}^{t+1}) - \left(\nabla f_i(x^{t+1}) - \nabla f_i(x^{t})\right)\right)}^2},
  \end{align*}
  where we used independence of the mini-batch samples. Using \eqref{auxiliary:jensen_inequality}, we get
  \begin{align*}
    &\ExpSub{k}{\norm{k^{t+1}_{i, 1} - \ExpSub{k}{k^{t+1}_{i, 1}}}^2} \\
    &\leq \frac{2 b^2}{B'^2 \probmega^2} \sum_{k=1}^{B'} \ExpSub{k}{\norm{\nabla f_i(x^{t+1};\xi_{ik}^{t+1}) - \nabla f_i(x^{t+1})}^2}\\
    &\quad + \frac{2}{B'^2}\left(1 - \frac{b}{\probmega}\right)^2 \sum_{k=1}^{B'} \ExpSub{k}{\norm{\nabla f_i(x^{t+1};\xi_{ik}^{t+1}) - \nabla f_i(x^{t};\xi_{ik}^{t+1}) - \left(\nabla f_i(x^{t+1}) - \nabla f_i(x^{t})\right)}^2}.
  \end{align*}
  Due to Assumptions~\ref{ass:stochastic_unbiased_and_variance_bounded} and \ref{ass:mean_square_smoothness}, we have
  \begin{align}
    \ExpSub{k}{\norm{k^{t+1}_{i, 1} - \ExpSub{k}{k^{t+1}_{i, 1}}}^2} \leq \frac{2 b^2 \sigma^2}{B' \probmega^2} + \frac{2 L_{\sigma}^2}{B'}\left(1 - \frac{b}{\probmega}\right)^2 \norm{x^{t+1} - x^{t}}^2 \label{eq:sync_mvr:k_1}.
  \end{align}
  Next, we estimate the bound for $\ExpSub{k}{\norm{k^{t+1}_{i, 2} - \ExpSub{k}{k^{t+1}_{i, 2}}}^2}.$
  \begin{align*}
    &\ExpSub{k}{\norm{k^{t+1}_{i, 2} - \ExpSub{k}{k^{t+1}_{i, 2}}}^2} \\
    &=\ExpSub{k}{\norm{\frac{1}{B} \sum_{j=1}^{B}\nabla f_i(x^{t+1};\xi^{t+1}_{ij}) - \frac{1}{B} \sum_{j=1}^{B}\nabla f_i(x^{t};\xi^{t+1}_{ij}) - \left(\nabla f_i(x^{t+1}) - \nabla f_i(x^{t})\right)}^2} \\
    &=\frac{1}{B^2} \sum_{j=1}^{B} \ExpSub{k}{\norm{\nabla f_i(x^{t+1};\xi^{t+1}_{ij}) - \nabla f_i(x^{t};\xi^{t+1}_{ij}) - \left(\nabla f_i(x^{t+1}) - \nabla f_i(x^{t})\right)}^2}. \\
  \end{align*}
  Due to Assumptions \ref{ass:mean_square_smoothness}, we have
  \begin{align}
    &\ExpSub{k}{\norm{k^{t+1}_{i, 2} - \ExpSub{k}{k^{t+1}_{i, 2}}}^2} \leq\frac{L_{\sigma}^2}{B} \norm{x^{t+1} - x^{t}}^2. \label{eq:sync_mvr:k_2}
  \end{align}
  Plugging \eqref{eq:sync_mvr:k_1} and \eqref{eq:sync_mvr:k_2} into \eqref{eq:sync_mvr:h}, we obtain
  \begin{align*}
    &\ExpSub{k}{\ExpSub{\probavailable}{\ExpSub{\probmega}{\norm{h^{t+1} - \nabla f(x^{t+1})}^2}}} \nonumber\\
    &\leq \frac{\probmega}{n \probavailable}\left(\frac{2 b^2 \sigma^2}{B' \probmega^2} + \frac{2 L_{\sigma}^2}{B'}\left(1 - \frac{b}{\probmega}\right)^2 \norm{x^{t+1} - x^{t}}^2\right) \nonumber\\
    &\quad +  \frac{\left(1 - \probmega\right)L_{\sigma}^2}{n \probavailable B} \norm{x^{t+1} - x^{t}}^2 \nonumber\\
    &\quad + \frac{2\left(\probavailable - \probpairaa\right)}{n^2 \probavailable^2}\sum_{i=1}^n\norm{\nabla f_i(x^{t+1}) - \nabla f_i(x^{t})}^2 \nonumber\\
    &\quad + \frac{2\left(\probavailable - \probpairaa\right) b^2}{n^2 \probavailable^2 \probmega}\sum_{i=1}^n\norm{h^t_i -  \nabla f_i(x^{t})}^2 + \left(\probmega \left(1 - \frac{b}{\probmega}\right)^2 + (1 - \probmega)\right) \norm{h^{t} - \nabla f(x^{t})}^2.
  \end{align*}
  Using Assumption~\ref{ass:nodes_lipschitz_constant}, we get
  \begin{align*}
    &\ExpSub{k}{\ExpSub{\probavailable}{\ExpSub{\probmega}{\norm{h^{t+1} - \nabla f(x^{t+1})}^2}}} \nonumber\\
    &\leq \frac{2 b^2 \sigma^2}{n \probmega \probavailable B'} + \left(\frac{2 \probmega L_{\sigma}^2}{n \probavailable B'}\left(1 - \frac{b}{\probmega}\right)^2 + \frac{\left(1 - \probmega\right)L_{\sigma}^2}{n \probavailable B} + \frac{2\left(\probavailable - \probpairaa\right) \widehat{L}^2}{n \probavailable^2}\right)\norm{x^{t+1} - x^{t}}^2 \nonumber\\
    &\quad + \frac{2\left(\probavailable - \probpairaa\right) b^2}{n^2 \probavailable^2 \probmega}\sum_{i=1}^n\norm{h^t_i -  \nabla f_i(x^{t})}^2 + \left(\probmega \left(1 - \frac{b}{\probmega}\right)^2 + (1 - \probmega)\right) \norm{h^{t} - \nabla f(x^{t})}^2.
  \end{align*}
  Using almost the same derivations, we can prove the second inequality:
  \begin{align*}
    &\ExpSub{k}{\ExpSub{\probavailable}{\ExpSub{\probmega}{\norm{h^{t+1}_i - \nabla f_i(x^{t+1})}^2}}} \\
    &=\probmega \ExpSub{k}{\ExpSub{\probavailable}{\norm{h^{t+1}_{i,1} - \nabla f_i(x^{t+1})}^2}} + (1 - \probmega) \ExpSub{k}{\ExpSub{\probavailable}{\norm{h^{t+1}_{i,2} - \nabla f_i(x^{t+1})}^2}} \\
    &\overset{\eqref{auxiliary:variance_decomposition}}{=}\probmega \ExpSub{k}{\ExpSub{\probavailable}{\norm{h^{t+1}_{i,1} - \ExpSub{k}{\ExpSub{\probavailable}{h^{t+1}_{i,1}}}}^2}} + (1 - \probmega) \ExpSub{k}{\ExpSub{\probavailable}{\norm{h^{t+1}_{i,2} - \ExpSub{k}{\ExpSub{\probavailable}{h^{t+1}_{i,2}}}}^2}} \\
    &\quad + \left(\probmega \left(1 - \frac{b}{\probmega}\right)^2 + (1 - \probmega)\right) \norm{h^{t}_i - \nabla f_i(x^{t})}^2 \\
    &=\probmega \probavailable \ExpSub{k}{\norm{h^t_i + \frac{1}{\probavailable} k^{t+1}_{i, 1} - \left(h^t_i + \ExpSub{k}{k^{t+1}_{i, 1}}\right)}^2} \\
    &\quad + \probmega\left(1 - \probavailable\right) \norm{h^{t}_{i} - \left(h^t_i + \ExpSub{k}{k^{t+1}_{i, 1}}\right)}^2 \\
    &\quad + (1 - \probmega) \probavailable \ExpSub{k}{\norm{h^t_i + \frac{1}{\probavailable} k^{t+1}_{i, 2} - \left(h^t_i +  \ExpSub{k}{k^{t+1}_{i, 2}}\right)}^2} \\
    &\quad + (1 - \probmega) (1 - \probavailable)\norm{h^t_i - \left(h^t_i +  \ExpSub{k}{k^{t+1}_{i, 2}}\right)}^2 \\
    &\quad + \left(\probmega \left(1 - \frac{b}{\probmega}\right)^2 + (1 - \probmega)\right) \norm{h^{t}_i - \nabla f_i(x^{t})}^2 \\
    &=\probmega \probavailable \ExpSub{k}{\norm{\frac{1}{\probavailable} k^{t+1}_{i, 1} - \ExpSub{k}{k^{t+1}_{i, 1}}}^2} \\
    &\quad + \probmega\left(1 - \probavailable\right) \norm{\nabla f_i(x^{t+1}) - \nabla f_i(x^{t}) - \frac{b}{\probmega} \left(h^t_i - \nabla f_i(x^{t})\right)}^2 \\
    &\quad + (1 - \probmega) \probavailable \ExpSub{k}{\norm{\frac{1}{\probavailable} k^{t+1}_{i, 2} - \ExpSub{k}{k^{t+1}_{i, 2}}}^2} \\
    &\quad + (1 - \probmega) (1 - \probavailable)\norm{\nabla f_i(x^{t+1}) - \nabla f_i(x^{t})}^2 \\
    &\quad + \left(\probmega \left(1 - \frac{b}{\probmega}\right)^2 + (1 - \probmega)\right) \norm{h^{t}_i - \nabla f_i(x^{t})}^2 \\
    &\overset{\eqref{auxiliary:variance_decomposition}}{=}\frac{\probmega}{\probavailable} \ExpSub{k}{\norm{k^{t+1}_{i, 1} - \ExpSub{k}{k^{t+1}_{i, 1}}}^2} \\
    &\quad + \frac{(1 - \probmega)}{\probavailable} \ExpSub{k}{\norm{k^{t+1}_{i, 2} - \ExpSub{k}{k^{t+1}_{i, 2}}}^2} \\
    &\quad + \frac{\probmega\left(1 - \probavailable\right)}{\probavailable} \norm{\nabla f_i(x^{t+1}) - \nabla f_i(x^{t}) - \frac{b}{\probmega} \left(h^t_i - \nabla f_i(x^{t})\right)}^2 \\
    &\quad + \frac{(1 - \probmega)(1 - \probavailable)}{\probavailable} \norm{\nabla f_i(x^{t+1}) - \nabla f_i(x^{t})}^2 \\
    &\quad + \left(\probmega \left(1 - \frac{b}{\probmega}\right)^2 + (1 - \probmega)\right) \norm{h^{t}_i - \nabla f_i(x^{t})}^2 \\
    &\overset{\eqref{auxiliary:jensen_inequality}}{\leq} \frac{\probmega}{\probavailable} \ExpSub{k}{\norm{k^{t+1}_{i, 1} - \ExpSub{k}{k^{t+1}_{i, 1}}}^2} \\
    &\quad + \frac{(1 - \probmega)}{\probavailable} \ExpSub{k}{\norm{k^{t+1}_{i, 2} - \ExpSub{k}{k^{t+1}_{i, 2}}}^2} \\
    &\quad + \frac{2(1 - \probavailable)}{\probavailable} \norm{\nabla f_i(x^{t+1}) - \nabla f_i(x^{t})}^2 \\
    &\quad + \frac{2\left(1 - \probavailable\right)b^2}{\probmega \probavailable} \norm{h^t_i - \nabla f_i(x^{t})}^2 + \left(\probmega \left(1 - \frac{b}{\probmega}\right)^2 + (1 - \probmega)\right) \norm{h^{t}_i - \nabla f_i(x^{t})}^2.
  \end{align*}
  Using \eqref{eq:sync_mvr:k_1} and \eqref{eq:sync_mvr:k_2}, we get
  \begin{align*}
    &\ExpSub{k}{\ExpSub{\probavailable}{\ExpSub{\probmega}{\norm{h^{t+1}_i - \nabla f_i(x^{t+1})}^2}}} \\
    &\leq \frac{2 b^2 \sigma^2}{\probavailable \probmega B'} + \frac{2 \probmega L_{\sigma}^2}{\probavailable B'}\left(1 - \frac{b}{\probmega}\right)^2 \norm{x^{t+1} - x^{t}}^2 \\
    &\quad + \frac{(1 - \probmega) L_{\sigma}^2}{\probavailable B} \norm{x^{t+1} - x^{t}}^2 \\
    &\quad + \frac{2(1 - \probavailable)}{\probavailable} \norm{\nabla f_i(x^{t+1}) - \nabla f_i(x^{t})}^2 \\
    &\quad + \frac{2\left(1 - \probavailable\right)b^2}{\probmega \probavailable} \norm{h^t_i - \nabla f_i(x^{t})}^2 + \left(\probmega \left(1 - \frac{b}{\probmega}\right)^2 + (1 - \probmega)\right) \norm{h^{t}_i - \nabla f_i(x^{t})}^2.
  \end{align*}
  Next, due to Assumption~\ref{ass:nodes_lipschitz_constant}, we obtain
  \begin{align*}
    &\ExpSub{k}{\ExpSub{\probavailable}{\ExpSub{\probmega}{\norm{h^{t+1}_i - \nabla f_i(x^{t+1})}^2}}} \\
    &\leq \frac{2 b^2 \sigma^2}{\probavailable \probmega B'} + \left(\frac{2 \probmega L_{\sigma}^2}{\probavailable B'}\left(1 - \frac{b}{\probmega}\right)^2 + \frac{(1 - \probmega) L_{\sigma}^2}{\probavailable B} + \frac{2(1 - \probavailable) L_{i}^2}{\probavailable}\right) \norm{x^{t+1} - x^{t}}^2 \\
    &\quad + \frac{2\left(1 - \probavailable\right)b^2}{\probmega \probavailable} \norm{h^t_i - \nabla f_i(x^{t})}^2 + \left(\probmega \left(1 - \frac{b}{\probmega}\right)^2 + (1 - \probmega)\right) \norm{h^{t}_i - \nabla f_i(x^{t})}^2.
  \end{align*}
  The third inequality can be proved with the help of \eqref{eq:sync_mvr:k_2} and Assumption~\ref{ass:nodes_lipschitz_constant}.
  \begin{align*}
    &\ExpSub{k}{\norm{k^{t+1}_{i, 2}}^2} \\
    &\overset{\eqref{auxiliary:variance_decomposition}}{=}\ExpSub{k}{\norm{k^{t+1}_{i, 2} - \ExpSub{k}{k^{t+1}_{i, 2}}}^2} + \norm{\nabla f_i(x^{t+1}) - \nabla f_i(x^{t})}^2 \\
    &\leq \frac{L_{\sigma}^2}{B} \norm{x^{t+1} - x^{t}}^2 + \norm{\nabla f_i(x^{t+1}) - \nabla f_i(x^{t})}^2 \\
    &\leq \left(\frac{L_{\sigma}^2}{B} + L_{i}^2\right) \norm{x^{t+1} - x^{t}}^2.
  \end{align*}
\end{proof}

\CONVERGENCESYNCMVR*

\begin{proof}
  Due to Lemma \ref{lemma:page_lemma} and the update step from Line~\ref{alg:main_algorithm:x_update} in Algorithm~\ref{alg:main_algorithm_mvr_sync}, we have
  \begin{align*}
    &\ExpSub{t+1}{f(x^{t + 1})} \nonumber\\
    &\leq \ExpSub{t+1}{f(x^t) - \frac{\gamma}{2}\norm{\nabla f(x^t)}^2 - \left(\frac{1}{2\gamma} - \frac{L}{2}\right)
      \norm{x^{t+1} - x^t}^2 + \frac{\gamma}{2}\norm{g^{t} - \nabla f(x^t)}^2} \nonumber \\
      &= \ExpSub{t+1}{f(x^t) - \frac{\gamma}{2}\norm{\nabla f(x^t)}^2 - \left(\frac{1}{2\gamma} - \frac{L}{2}\right)
      \norm{x^{t+1} - x^t}^2 + \frac{\gamma}{2}\norm{g^{t} - h^t + h^t - \nabla f(x^t)}^2} \nonumber \\
      &\overset{\eqref{auxiliary:variance_decomposition}}{\leq} \ExpSub{t+1}{f(x^t) - \frac{\gamma}{2}\norm{\nabla f(x^t)}^2 - \left(\frac{1}{2\gamma} - \frac{L}{2}\right)
      \norm{x^{t+1} - x^t}^2 + \gamma\left(\norm{g^{t} - h^t}^2 + \norm{h^t - \nabla f(x^t)}^2}\right). \nonumber \\
      \nonumber
  \end{align*}
  Let us fix constants $\kappa, \eta, \nu, \rho \in [0,\infty)$ that we will define later. Considering Lemma~\ref{lemma:sync:prob_compressor}, Lemma~\ref{lemma:sync_mvr}, and the law of total expectation, we obtain
  \begin{align*}
    &\Exp{f(x^{t + 1})} + \kappa \Exp{\norm{g^{t+1} - h^{t+1}}^2} + \eta \Exp{\frac{1}{n}\sum_{i=1}^n\norm{g^{t+1}_i - h^{t+1}_i}^2}\\
    &\quad  + \nu \Exp{\norm{h^{t+1} - \nabla f(x^{t+1})}^2} + \rho \Exp{\frac{1}{n}\sum_{i=1}^n\norm{h^{t+1}_i - \nabla f_i(x^{t+1})}^2}\\
    &\leq \Exp{f(x^t) - \frac{\gamma}{2}\norm{\nabla f(x^t)}^2 - \left(\frac{1}{2\gamma} - \frac{L}{2}\right)
    \norm{x^{t+1} - x^t}^2 + \gamma\left(\norm{g^{t} - h^t}^2 + \norm{h^t - \nabla f(x^t)}^2\right)}\nonumber\\
    &\quad + \kappa \Exp{\ExpSub{k}{\ExpSub{\cC}{\ExpSub{\probavailable}{\ExpSub{\probmega}{\norm{g^{t+1} - h^{t+1}}^2}}}}} \\
    &\quad + \eta \Exp{\ExpSub{k}{\ExpSub{\cC}{\ExpSub{\probavailable}{\ExpSub{\probmega}{\frac{1}{n}\sum_{i=1}^n\norm{g^{t+1}_i - h^{t+1}_i}^2}}}}}\\
    &\quad  + \nu \Exp{\ExpSub{k}{\ExpSub{\probavailable}{\ExpSub{\probmega}{\norm{h^{t+1} - \nabla f(x^{t+1})}^2}}}} \\
    &\quad + \rho \Exp{\ExpSub{k}{\ExpSub{\probavailable}{\ExpSub{\probmega}{\frac{1}{n}\sum_{i=1}^n\norm{h^{t+1}_i - \nabla f_i(x^{t+1})}^2}}}} \\
    &\leq \Exp{f(x^t) - \frac{\gamma}{2}\norm{\nabla f(x^t)}^2 - \left(\frac{1}{2\gamma} - \frac{L}{2}\right)
    \norm{x^{t+1} - x^t}^2 + \gamma\left(\norm{g^{t} - h^t}^2 + \norm{h^t - \nabla f(x^t)}^2\right)}\nonumber\\
    &\quad  + \kappa {\rm E}\Bigg(\frac{2 \left(1 - \probmega\right) \omega}{n \probavailable}\left(\frac{L_{\sigma}^2}{B} + \widehat{L}^2\right) \norm{x^{t+1} - x^{t}}^2 \\
    &\qquad\quad + \left(\frac{\left(\probavailable - \probpairaa\right)a^2}{n^2 \probavailable^2} + \frac{2 \left(1 - \probmega\right) a^2 \omega}{n^2 \probavailable}\right)\sum_{i=1}^n\norm{g^{t}_i - h^{t}_i}^2 + (1 - a)^2\norm{g^{t} - h^{t}}^2\Bigg) \\
      &\quad  + \eta {\rm E}\Bigg(\frac{2\left(1 - \probmega\right) \omega}{\probavailable} \left(\frac{L_{\sigma}^2}{B} + \widehat{L}^2\right) \norm{x^{t+1} - x^{t}}^2 \\
      &\qquad\quad+ \left(\frac{(1 - \probavailable) a^2}{\probavailable} + \frac{2 \left(1 - \probmega\right) a^2 \omega}{\probavailable}\right)\frac{1}{n}\sum_{i=1}^n\norm{g^{t}_{i} - h^{t}_{i}}^2 + (1 - a)^2 \norm{g^{t}_{i} - h^{t}_{i}}^2\Bigg) \\
      &\quad  + \nu {\rm E}\Bigg(\frac{2 b^2 \sigma^2}{n \probmega \probavailable B'} + \left(\frac{2 \probmega L_{\sigma}^2}{n \probavailable B'}\left(1 - \frac{b}{\probmega}\right)^2 + \frac{\left(1 - \probmega\right)L_{\sigma}^2}{n \probavailable B} + \frac{2\left(\probavailable - \probpairaa\right) \widehat{L}^2}{n \probavailable^2}\right)\norm{x^{t+1} - x^{t}}^2\\
      &\qquad\quad + \frac{2\left(\probavailable - \probpairaa\right) b^2}{n^2 \probavailable^2 \probmega}\sum_{i=1}^n\norm{h^t_i -  \nabla f_i(x^{t})}^2 + \left(\probmega \left(1 - \frac{b}{\probmega}\right)^2 + (1 - \probmega)\right) \norm{h^{t} - \nabla f(x^{t})}^2\Bigg) \\
      &\quad  + \rho {\rm E}\Bigg(\frac{2 b^2 \sigma^2}{\probavailable \probmega B'} + \left(\frac{2 \probmega L_{\sigma}^2}{\probavailable B'}\left(1 - \frac{b}{\probmega}\right)^2 + \frac{(1 - \probmega) L_{\sigma}^2}{\probavailable B} + \frac{2(1 - \probavailable) \widehat{L}^2}{\probavailable}\right) \norm{x^{t+1} - x^{t}}^2\\
      &\qquad\quad + \frac{2\left(1 - \probavailable\right)b^2}{n \probmega \probavailable} \sum_{i=1}^n\norm{h^t_i - \nabla f_i(x^{t})}^2 + \left(\probmega \left(1 - \frac{b}{\probmega}\right)^2 + (1 - \probmega)\right) \frac{1}{n} \sum_{i=1}^n \norm{h^{t}_i - \nabla f_i(x^{t})}^2\Bigg).
  \end{align*}

  Let us simplify the last inequality. Since $B' \geq B$ and $b = \frac{\probmega \probavailable}{2 - \probavailable} \leq \probmega,$ we have $1 - \probmega \leq 1,$ $$\frac{2 \probmega L_{\sigma}^2}{\probavailable B'}\left(1 - \frac{b}{\probmega}\right)^2 \leq \frac{2 \probmega L_{\sigma}^2}{\probavailable B},$$ $$\left(\probmega \left(1 - \frac{b}{\probmega}\right)^2 + (1 - \probmega)\right) \leq 1 - b,$$ and $$\left(\frac{2\left(1 - \probavailable\right)b^2}{\probmega \probavailable} + \probmega \left(1 - \frac{b}{\probmega}\right)^2 + (1 - \probmega)\right) \leq 1 - b.$$
  Thus
  \begin{align*}
    &\Exp{f(x^{t + 1})} + \kappa \Exp{\norm{g^{t+1} - h^{t+1}}^2} + \eta \Exp{\frac{1}{n}\sum_{i=1}^n\norm{g^{t+1}_i - h^{t+1}_i}^2}\\
    &\quad  + \nu \Exp{\norm{h^{t+1} - \nabla f(x^{t+1})}^2} + \rho \Exp{\frac{1}{n}\sum_{i=1}^n\norm{h^{t+1}_i - \nabla f_i(x^{t+1})}^2}\\
    &\leq \Exp{f(x^t) - \frac{\gamma}{2}\norm{\nabla f(x^t)}^2 - \left(\frac{1}{2\gamma} - \frac{L}{2}\right)
    \norm{x^{t+1} - x^t}^2 + \gamma\left(\norm{g^{t} - h^t}^2 + \norm{h^t - \nabla f(x^t)}^2\right)}\nonumber\\
    &\quad  + \kappa {\rm E}\Bigg(\frac{2 \omega}{n \probavailable}\left(\frac{L_{\sigma}^2}{B} + \widehat{L}^2\right) \norm{x^{t+1} - x^{t}}^2 \\
    &\qquad\quad + \frac{\left(\left(2 \omega + 1\right)\probavailable - \probpairaa\right)a^2}{n^2 \probavailable^2}\sum_{i=1}^n\norm{g^{t}_i - h^{t}_i}^2 + (1 - a)^2\norm{g^{t} - h^{t}}^2\Bigg) \\
      &\quad  + \eta {\rm E}\Bigg(\frac{2 \omega}{\probavailable} \left(\frac{L_{\sigma}^2}{B} + \widehat{L}^2\right) \norm{x^{t+1} - x^{t}}^2 \\
      &\qquad\quad+ \frac{(2\omega + 1 - \probavailable) a^2}{\probavailable}\frac{1}{n}\sum_{i=1}^n\norm{g^{t}_{i} - h^{t}_{i}}^2 + (1 - a)^2 \norm{g^{t}_{i} - h^{t}_{i}}^2\Bigg) \\
      &\quad  + \nu {\rm E}\Bigg(\frac{2 b^2 \sigma^2}{n \probmega \probavailable B'} + \left(\frac{2 L_{\sigma}^2}{n \probavailable B} + \frac{2\left(\probavailable - \probpairaa\right) \widehat{L}^2}{n \probavailable^2}\right)\norm{x^{t+1} - x^{t}}^2\\
      &\qquad\quad + \frac{2\left(\probavailable - \probpairaa\right) b^2}{n^2 \probavailable^2 \probmega}\sum_{i=1}^n\norm{h^t_i -  \nabla f_i(x^{t})}^2 + \left(1 - b\right) \norm{h^{t} - \nabla f(x^{t})}^2\Bigg) \\
      &\quad  + \rho {\rm E}\Bigg(\frac{2 b^2 \sigma^2}{\probavailable \probmega B'} + \left(\frac{2 L_{\sigma}^2}{\probavailable B} + \frac{2(1 - \probavailable) \widehat{L}^2}{\probavailable}\right) \norm{x^{t+1} - x^{t}}^2\\
      &\qquad\quad + \left(1 - b\right) \frac{1}{n} \sum_{i=1}^n \norm{h^{t}_i - \nabla f_i(x^{t})}^2\Bigg).
  \end{align*}
  After rearranging the terms, we get
  \begin{align*}
    &\Exp{f(x^{t + 1})} + \kappa \Exp{\norm{g^{t+1} - h^{t+1}}^2} + \eta \Exp{\frac{1}{n}\sum_{i=1}^n\norm{g^{t+1}_i - h^{t+1}_i}^2}\\
    &\quad  + \nu \Exp{\norm{h^{t+1} - \nabla f(x^{t+1})}^2} + \rho \Exp{\frac{1}{n}\sum_{i=1}^n\norm{h^{t+1}_i - \nabla f_i(x^{t+1})}^2}\\
    &\leq \Exp{f(x^t)} - \frac{\gamma}{2}\Exp{\norm{\nabla f(x^t)}^2} \\
    &\quad - \Bigg(\frac{1}{2\gamma} - \frac{L}{2} - \frac{2 \kappa  \omega}{n \probavailable}\left(\frac{L_{\sigma}^2}{B} + \widehat{L}^2\right) - \frac{2\eta \omega}{\probavailable} \left(\frac{L_{\sigma}^2}{B} + \widehat{L}^2\right) \\
    &\qquad\quad - \nu \left(\frac{2 L_{\sigma}^2}{n \probavailable B} + \frac{2\left(\probavailable - \probpairaa\right) \widehat{L}^2}{n \probavailable^2}\right) - \rho \left(\frac{2 L_{\sigma}^2}{\probavailable B} + \frac{2(1 - \probavailable) \widehat{L}^2}{\probavailable}\right)\Bigg) \Exp{\norm{x^{t+1} - x^t}^2} \\
    &\quad + \left(\gamma + \kappa \left(1 - a\right)^2\right) \Exp{\norm{g^{t} - h^{t}}^2} \\
    &\quad + \Bigg(\kappa \frac{\left(\left(2 \omega + 1\right)\probavailable - \probpairaa\right)a^2}{n \probavailable^2} + \eta \left(\frac{(2\omega + 1 - \probavailable) a^2}{\probavailable} + (1 - a)^2\right)\Bigg)\Exp{\frac{1}{n}\sum_{i=1}^n\norm{g^{t}_i - h^{t}_i}^2} \\
    &\quad + \left(\gamma + \nu \left(1 - b\right)\right) \Exp{\norm{h^{t} - \nabla f(x^{t})}^2} \\
    &\quad + \Bigg(\nu \frac{2\left(\probavailable - \probpairaa\right) b^2}{n \probavailable^2 \probmega} + \rho (1 - b)\Bigg)\Exp{\frac{1}{n}\sum_{i=1}^n\norm{h^{t}_i - \nabla f_i(x^{t})}^2} \\
    &\quad + \left(\frac{2 \nu b^2}{n \probmega \probavailable} + \frac{2 \rho b^2}{\probavailable \probmega}\right) \frac{\sigma^2}{B'}.
  \end{align*}
  Let us take $\kappa = \frac{\gamma}{a},$ thus $\gamma + \kappa \left(1 - a\right)^2 \leq \kappa$ and
  \begin{align*}
    &\Exp{f(x^{t + 1})} + \frac{\gamma}{a} \Exp{\norm{g^{t+1} - h^{t+1}}^2} + \eta \Exp{\frac{1}{n}\sum_{i=1}^n\norm{g^{t+1}_i - h^{t+1}_i}^2}\\
    &\quad  + \nu \Exp{\norm{h^{t+1} - \nabla f(x^{t+1})}^2} + \rho \Exp{\frac{1}{n}\sum_{i=1}^n\norm{h^{t+1}_i - \nabla f_i(x^{t+1})}^2}\\
    &\leq \Exp{f(x^t)} - \frac{\gamma}{2}\Exp{\norm{\nabla f(x^t)}^2} \\
    &\quad - \Bigg(\frac{1}{2\gamma} - \frac{L}{2} - \frac{2 \gamma \omega}{a n \probavailable}\left(\frac{L_{\sigma}^2}{B} + \widehat{L}^2\right) - \frac{2\eta \omega}{\probavailable} \left(\frac{L_{\sigma}^2}{B} + \widehat{L}^2\right) \\
    &\qquad\quad - \nu \left(\frac{2 L_{\sigma}^2}{n \probavailable B} + \frac{2\left(\probavailable - \probpairaa\right) \widehat{L}^2}{n \probavailable^2}\right) - \rho \left(\frac{2 L_{\sigma}^2}{\probavailable B} + \frac{2(1 - \probavailable) \widehat{L}^2}{\probavailable}\right)\Bigg) \Exp{\norm{x^{t+1} - x^t}^2} \\
    &\quad + \frac{\gamma}{a} \Exp{\norm{g^{t} - h^{t}}^2} \\
    &\quad + \Bigg( \frac{\gamma\left(\left(2 \omega + 1\right)\probavailable - \probpairaa\right)a}{n \probavailable^2} + \eta \left(\frac{(2\omega + 1 - \probavailable) a^2}{\probavailable} + (1 - a)^2\right)\Bigg)\Exp{\frac{1}{n}\sum_{i=1}^n\norm{g^{t}_i - h^{t}_i}^2} \\
    &\quad + \left(\gamma + \nu \left(1 - b\right)\right) \Exp{\norm{h^{t} - \nabla f(x^{t})}^2} \\
    &\quad + \Bigg(\nu \frac{2\left(\probavailable - \probpairaa\right) b^2}{n \probavailable^2 \probmega} + \rho (1 - b)\Bigg)\Exp{\frac{1}{n}\sum_{i=1}^n\norm{h^{t}_i - \nabla f_i(x^{t})}^2} \\
    &\quad + \left(\frac{2 \nu b^2}{n \probmega \probavailable} + \frac{2 \rho b^2}{\probavailable \probmega}\right) \frac{\sigma^2}{B'}.
  \end{align*}
  Next, since $a = \frac{\probavailable}{2\omega + 1},$ we have $\left(\frac{(2\omega + 1 - \probavailable) a^2}{\probavailable} + (1 - a)^2\right) \leq 1 - a.$ We the choice $\eta = \frac{\gamma\left(\left(2 \omega + 1\right)\probavailable - \probpairaa\right)}{n \probavailable^2},$ we guarantee $\frac{\gamma\left(\left(2 \omega + 1\right)\probavailable - \probpairaa\right)a}{n \probavailable^2} + \eta \left(\frac{(2\omega + 1 - \probavailable) a^2}{\probavailable} + (1 - a)^2\right) \leq \eta$ and 
  \begin{align*}
    &\Exp{f(x^{t + 1})} + \frac{\gamma \left(2 \omega + 1\right)}{\probavailable} \Exp{\norm{g^{t+1} - h^{t+1}}^2} + \frac{\gamma\left(\left(2 \omega + 1\right)\probavailable - \probpairaa\right)}{n \probavailable^2} \Exp{\frac{1}{n}\sum_{i=1}^n\norm{g^{t+1}_i - h^{t+1}_i}^2}\\
    &\quad  + \nu \Exp{\norm{h^{t+1} - \nabla f(x^{t+1})}^2} + \rho \Exp{\frac{1}{n}\sum_{i=1}^n\norm{h^{t+1}_i - \nabla f_i(x^{t+1})}^2}\\
    &\leq \Exp{f(x^t)} - \frac{\gamma}{2}\Exp{\norm{\nabla f(x^t)}^2} \\
    &\quad - \Bigg(\frac{1}{2\gamma} - \frac{L}{2} - \frac{2 \gamma \left(2\omega + 1\right) \omega}{n \probavailable^2}\left(\frac{L_{\sigma}^2}{B} + \widehat{L}^2\right) - \frac{2\gamma\left(\left(2 \omega + 1\right)\probavailable - \probpairaa\right)\omega}{n \probavailable^3} \left(\frac{L_{\sigma}^2}{B} + \widehat{L}^2\right) \\
    &\qquad\quad - \nu \left(\frac{2 L_{\sigma}^2}{n \probavailable B} + \frac{2\left(\probavailable - \probpairaa\right) \widehat{L}^2}{n \probavailable^2}\right) - \rho \left(\frac{2 L_{\sigma}^2}{\probavailable B} + \frac{2(1 - \probavailable) \widehat{L}^2}{\probavailable}\right)\Bigg) \Exp{\norm{x^{t+1} - x^t}^2} \\
    &\quad + \frac{\gamma \left(2 \omega + 1\right)}{\probavailable} \Exp{\norm{g^{t} - h^{t}}^2} + \frac{\gamma\left(\left(2 \omega + 1\right)\probavailable - \probpairaa\right)}{n \probavailable^2}\Exp{\frac{1}{n}\sum_{i=1}^n\norm{g^{t}_i - h^{t}_i}^2} \\
    &\quad + \left(\gamma + \nu \left(1 - b\right)\right) \Exp{\norm{h^{t} - \nabla f(x^{t})}^2} \\
    &\quad + \Bigg(\nu \frac{2\left(\probavailable - \probpairaa\right) b^2}{n \probavailable^2 \probmega} + \rho (1 - b)\Bigg)\Exp{\frac{1}{n}\sum_{i=1}^n\norm{h^{t}_i - \nabla f_i(x^{t})}^2} \\
    &\quad + \left(\frac{2 \nu b^2}{n \probmega \probavailable} + \frac{2 \rho b^2}{\probavailable \probmega}\right) \frac{\sigma^2}{B'} \\
    &\leq \Exp{f(x^t)} - \frac{\gamma}{2}\Exp{\norm{\nabla f(x^t)}^2} \\
    &\quad - \Bigg(\frac{1}{2\gamma} - \frac{L}{2} - \frac{4 \gamma \left(2\omega + 1\right) \omega}{n \probavailable^2}\left(\frac{L_{\sigma}^2}{B} + \widehat{L}^2\right) \\
    &\qquad\quad - \nu \left(\frac{2 L_{\sigma}^2}{n \probavailable B} + \frac{2\left(\probavailable - \probpairaa\right) \widehat{L}^2}{n \probavailable^2}\right) - \rho \left(\frac{2 L_{\sigma}^2}{\probavailable B} + \frac{2(1 - \probavailable) \widehat{L}^2}{\probavailable}\right)\Bigg) \Exp{\norm{x^{t+1} - x^t}^2} \\
    &\quad + \frac{\gamma \left(2 \omega + 1\right)}{\probavailable} \Exp{\norm{g^{t} - h^{t}}^2} + \frac{\gamma\left(\left(2 \omega + 1\right)\probavailable - \probpairaa\right)}{n \probavailable^2}\Exp{\frac{1}{n}\sum_{i=1}^n\norm{g^{t}_i - h^{t}_i}^2} \\
    &\quad + \left(\gamma + \nu \left(1 - b\right)\right) \Exp{\norm{h^{t} - \nabla f(x^{t})}^2} \\
    &\quad + \Bigg(\nu \frac{2\left(\probavailable - \probpairaa\right) b^2}{n \probavailable^2 \probmega} + \rho (1 - b)\Bigg)\Exp{\frac{1}{n}\sum_{i=1}^n\norm{h^{t}_i - \nabla f_i(x^{t})}^2} \\
    &\quad + \left(\frac{2 \nu b^2}{n \probmega \probavailable} + \frac{2 \rho b^2}{\probavailable \probmega}\right) \frac{\sigma^2}{B'},
  \end{align*}
  where simplified the term using $\probpairaa \geq 0.$ Let us take $\nu = \frac{\gamma}{b}$ to obtain
  \begin{align*}
    &\Exp{f(x^{t + 1})} + \frac{\gamma \left(2 \omega + 1\right)}{\probavailable} \Exp{\norm{g^{t+1} - h^{t+1}}^2} + \frac{\gamma\left(\left(2 \omega + 1\right)\probavailable - \probpairaa\right)}{n \probavailable^2} \Exp{\frac{1}{n}\sum_{i=1}^n\norm{g^{t+1}_i - h^{t+1}_i}^2}\\
    &\quad  + \frac{\gamma}{b} \Exp{\norm{h^{t+1} - \nabla f(x^{t+1})}^2} + \rho \Exp{\frac{1}{n}\sum_{i=1}^n\norm{h^{t+1}_i - \nabla f_i(x^{t+1})}^2}\\
    &\leq \Exp{f(x^t)} - \frac{\gamma}{2}\Exp{\norm{\nabla f(x^t)}^2} \\
    &\quad - \Bigg(\frac{1}{2\gamma} - \frac{L}{2} - \frac{4 \gamma \left(2\omega + 1\right) \omega}{n \probavailable^2}\left(\frac{L_{\sigma}^2}{B} + \widehat{L}^2\right) \\
    &\qquad\quad - \left(\frac{2 \gamma L_{\sigma}^2}{b n \probavailable B} + \frac{2\gamma\left(\probavailable - \probpairaa\right) \widehat{L}^2}{b n \probavailable^2}\right) - \rho \left(\frac{2 L_{\sigma}^2}{\probavailable B} + \frac{2(1 - \probavailable) \widehat{L}^2}{\probavailable}\right)\Bigg) \Exp{\norm{x^{t+1} - x^t}^2} \\
    &\quad + \frac{\gamma \left(2 \omega + 1\right)}{\probavailable} \Exp{\norm{g^{t} - h^{t}}^2} + \frac{\gamma\left(\left(2 \omega + 1\right)\probavailable - \probpairaa\right)}{n \probavailable^2}\Exp{\frac{1}{n}\sum_{i=1}^n\norm{g^{t}_i - h^{t}_i}^2} \\
    &\quad + \frac{\gamma}{b} \Exp{\norm{h^{t} - \nabla f(x^{t})}^2} \\
    &\quad + \Bigg(\frac{2 \gamma \left(\probavailable - \probpairaa\right) b}{n \probavailable^2 \probmega} + \rho (1 - b)\Bigg)\Exp{\frac{1}{n}\sum_{i=1}^n\norm{h^{t}_i - \nabla f_i(x^{t})}^2} \\
    &\quad + \left(\frac{2 \gamma b}{n \probmega \probavailable} + \frac{2 \rho b^2}{\probavailable \probmega}\right) \frac{\sigma^2}{B'}.
  \end{align*}
  Next, we take $\rho = \frac{2 \gamma \left(\probavailable - \probpairaa\right)}{n \probavailable^2 \probmega},$ thus
  \begin{align*}
    &\Exp{f(x^{t + 1})} + \frac{\gamma \left(2 \omega + 1\right)}{\probavailable} \Exp{\norm{g^{t+1} - h^{t+1}}^2} + \frac{\gamma\left(\left(2 \omega + 1\right)\probavailable - \probpairaa\right)}{n \probavailable^2} \Exp{\frac{1}{n}\sum_{i=1}^n\norm{g^{t+1}_i - h^{t+1}_i}^2}\\
    &\quad  + \frac{\gamma}{b} \Exp{\norm{h^{t+1} - \nabla f(x^{t+1})}^2} + \frac{2 \gamma \left(\probavailable - \probpairaa\right)}{n \probavailable^2 \probmega} \Exp{\frac{1}{n}\sum_{i=1}^n\norm{h^{t+1}_i - \nabla f_i(x^{t+1})}^2}\\
    &\leq \Exp{f(x^t)} - \frac{\gamma}{2}\Exp{\norm{\nabla f(x^t)}^2} \\
    &\quad - \Bigg(\frac{1}{2\gamma} - \frac{L}{2} - \frac{4 \gamma \left(2\omega + 1\right) \omega}{n \probavailable^2}\left(\frac{L_{\sigma}^2}{B} + \widehat{L}^2\right) \\
    &\qquad\quad - \left(\frac{2 \gamma L_{\sigma}^2}{b n \probavailable B} + \frac{2\gamma\left(\probavailable - \probpairaa\right) \widehat{L}^2}{b n \probavailable^2}\right) - \left(\frac{2 \gamma \left(\probavailable - \probpairaa\right)}{n \probavailable^2 \probmega}\right) \left(\frac{2 L_{\sigma}^2}{\probavailable B} + \frac{2(1 - \probavailable) \widehat{L}^2}{\probavailable}\right)\Bigg) \Exp{\norm{x^{t+1} - x^t}^2} \\
    &\quad + \frac{\gamma \left(2 \omega + 1\right)}{\probavailable} \Exp{\norm{g^{t} - h^{t}}^2} + \frac{\gamma\left(\left(2 \omega + 1\right)\probavailable - \probpairaa\right)}{n \probavailable^2}\Exp{\frac{1}{n}\sum_{i=1}^n\norm{g^{t}_i - h^{t}_i}^2} \\
    &\quad + \frac{\gamma}{b} \Exp{\norm{h^{t} - \nabla f(x^{t})}^2} + \frac{2 \gamma \left(\probavailable - \probpairaa\right)}{n \probavailable^2 \probmega}\Exp{\frac{1}{n}\sum_{i=1}^n\norm{h^{t}_i - \nabla f_i(x^{t})}^2} \\
    &\quad + \left(\frac{2 \gamma b}{n \probmega \probavailable} + \frac{4 \gamma \left(\probavailable - \probpairaa\right) b^2}{n \probavailable^3 \probmega^2}\right) \frac{\sigma^2}{B'}.
  \end{align*}
  Since $\frac{\probmega \probavailable}{2} \leq b \leq \probmega \probavailable$ and $1 - \probavailable \leq 1 - \frac{\probpairaa}{\probavailable}\leq 1,$ we get 
  \begin{align*}
    &\Exp{f(x^{t + 1})} + \frac{\gamma \left(2 \omega + 1\right)}{\probavailable} \Exp{\norm{g^{t+1} - h^{t+1}}^2} + \frac{\gamma\left(\left(2 \omega + 1\right)\probavailable - \probpairaa\right)}{n \probavailable^2} \Exp{\frac{1}{n}\sum_{i=1}^n\norm{g^{t+1}_i - h^{t+1}_i}^2}\\
    &\quad  + \frac{\gamma}{b} \Exp{\norm{h^{t+1} - \nabla f(x^{t+1})}^2} + \frac{2 \gamma \left(\probavailable - \probpairaa\right)}{n \probavailable^2 \probmega} \Exp{\frac{1}{n}\sum_{i=1}^n\norm{h^{t+1}_i - \nabla f_i(x^{t+1})}^2}\\
    &\leq \Exp{f(x^t)} - \frac{\gamma}{2}\Exp{\norm{\nabla f(x^t)}^2} \\
    &\quad - \Bigg(\frac{1}{2\gamma} - \frac{L}{2} - \frac{4 \gamma \left(2\omega + 1\right) \omega}{n \probavailable^2}\left(\frac{L_{\sigma}^2}{B} + \widehat{L}^2\right) \\
    &\qquad\quad - \left(\frac{4 \gamma L_{\sigma}^2}{n \probmega \probavailable^2 B} + \frac{4\gamma\left(\probavailable - \probpairaa\right) \widehat{L}^2}{n \probmega \probavailable^3}\right) - \left(\frac{4 \gamma L_{\sigma}^2}{n \probmega \probavailable^2 B} + \frac{4 \gamma (1 - \probavailable) \widehat{L}^2}{n \probmega \probavailable^2}\right)\Bigg) \Exp{\norm{x^{t+1} - x^t}^2} \\
    &\quad + \frac{\gamma \left(2 \omega + 1\right)}{\probavailable} \Exp{\norm{g^{t} - h^{t}}^2} + \frac{\gamma\left(\left(2 \omega + 1\right)\probavailable - \probpairaa\right)}{n \probavailable^2}\Exp{\frac{1}{n}\sum_{i=1}^n\norm{g^{t}_i - h^{t}_i}^2} \\
    &\quad + \frac{\gamma}{b} \Exp{\norm{h^{t} - \nabla f(x^{t})}^2} + \frac{2 \gamma \left(\probavailable - \probpairaa\right)}{n \probavailable^2 \probmega}\Exp{\frac{1}{n}\sum_{i=1}^n\norm{h^{t}_i - \nabla f_i(x^{t})}^2} \\
    &\quad + \frac{6 \gamma \sigma^2}{n B'} \\
    &\leq \Exp{f(x^t)} - \frac{\gamma}{2}\Exp{\norm{\nabla f(x^t)}^2} \\
    &\quad - \Bigg(\frac{1}{2\gamma} - \frac{L}{2} - \frac{4 \gamma \left(2\omega + 1\right) \omega}{n \probavailable^2}\left(\frac{L_{\sigma}^2}{B} + \widehat{L}^2\right) - \left(\frac{8 \gamma L_{\sigma}^2}{n \probmega \probavailable^2 B} + \frac{8\gamma\left(1 - \frac{\probpairaa}{\probavailable}\right) \widehat{L}^2}{n \probmega \probavailable^2}\right)\Bigg) \Exp{\norm{x^{t+1} - x^t}^2} \\
    &\quad + \frac{\gamma \left(2 \omega + 1\right)}{\probavailable} \Exp{\norm{g^{t} - h^{t}}^2} + \frac{\gamma\left(\left(2 \omega + 1\right)\probavailable - \probpairaa\right)}{n \probavailable^2}\Exp{\frac{1}{n}\sum_{i=1}^n\norm{g^{t}_i - h^{t}_i}^2} \\
    &\quad + \frac{\gamma}{b} \Exp{\norm{h^{t} - \nabla f(x^{t})}^2} + \frac{2 \gamma \left(\probavailable - \probpairaa\right)}{n \probavailable^2 \probmega}\Exp{\frac{1}{n}\sum_{i=1}^n\norm{h^{t}_i - \nabla f_i(x^{t})}^2} \\
    &\quad + \frac{6 \gamma \sigma^2}{n B'}.
  \end{align*}
  Using Lemma~\ref{lemma:gamma} and the assumption about $\gamma,$ we get
  \begin{align*}
    &\Exp{f(x^{t + 1})} + \frac{\gamma \left(2 \omega + 1\right)}{\probavailable} \Exp{\norm{g^{t+1} - h^{t+1}}^2} + \frac{\gamma\left(\left(2 \omega + 1\right)\probavailable - \probpairaa\right)}{n \probavailable^2} \Exp{\frac{1}{n}\sum_{i=1}^n\norm{g^{t+1}_i - h^{t+1}_i}^2}\\
    &\quad  + \frac{\gamma}{b} \Exp{\norm{h^{t+1} - \nabla f(x^{t+1})}^2} + \frac{2 \gamma \left(\probavailable - \probpairaa\right)}{n \probavailable^2 \probmega} \Exp{\frac{1}{n}\sum_{i=1}^n\norm{h^{t+1}_i - \nabla f_i(x^{t+1})}^2}\\
    &\leq \Exp{f(x^t)} - \frac{\gamma}{2}\Exp{\norm{\nabla f(x^t)}^2} \\
    &\quad + \frac{\gamma \left(2 \omega + 1\right)}{\probavailable} \Exp{\norm{g^{t} - h^{t}}^2} + \frac{\gamma\left(\left(2 \omega + 1\right)\probavailable - \probpairaa\right)}{n \probavailable^2}\Exp{\frac{1}{n}\sum_{i=1}^n\norm{g^{t}_i - h^{t}_i}^2} \\
    &\quad + \frac{\gamma}{b} \Exp{\norm{h^{t} - \nabla f(x^{t})}^2} + \frac{2 \gamma \left(\probavailable - \probpairaa\right)}{n \probavailable^2 \probmega}\Exp{\frac{1}{n}\sum_{i=1}^n\norm{h^{t}_i - \nabla f_i(x^{t})}^2} \\
    &\quad + \frac{6 \gamma \sigma^2}{n B'}.
  \end{align*}
  It is left to apply Lemma~\ref{lemma:good_recursion} with 
    \begin{eqnarray*}
      \Psi^t &=& \frac{\left(2 \omega + 1\right)}{\probavailable} \Exp{\norm{g^{t} - h^{t}}^2} + \frac{\left(\left(2 \omega + 1\right)\probavailable - \probpairaa\right)}{n \probavailable^2}\Exp{\frac{1}{n}\sum_{i=1}^n\norm{g^{t}_i - h^{t}_i}^2} \\
      &+& \frac{1}{b} \Exp{\norm{h^{t} - \nabla f(x^{t})}^2} + \frac{2 \left(1 - \frac{\probpairaa}{\probavailable}\right)}{n \probavailable \probmega}\Exp{\frac{1}{n}\sum_{i=1}^n\norm{h^{t}_i - \nabla f_i(x^{t})}^2}
    \end{eqnarray*}
    and $C = \frac{6 \sigma^2}{n B'}$
    to conclude the proof.
\end{proof}

\COROLLARYSYNCSTOCHASTIC*

\begin{proof}
  Due to the choice of $B',$ we have
  \begin{align*}
    \Exp{\norm{\nabla f(\widehat{x}^T)}^2} &\leq \frac{1}{T}\vast[2 \Delta_0\left(L + \sqrt{\frac{8 \left(2\omega + 1\right) \omega}{n \probavailable^2}\left(\widehat{L}^2 + \frac{L_{\sigma}^2}{B}\right) + \frac{16}{n \probmega \probavailable^2} \left(\left(1 - \frac{\probpairaa}{\probavailable}\right) \widehat{L}^2 + \frac{L_{\sigma}^2}{B}\right)}\right) \\
    &\quad + \frac{4}{\probmega \probavailable} \norm{h^{0} - \nabla f(x^{0})}^2 + \frac{4 \left(1 - \frac{\probpairaa}{\probavailable}\right)}{n \probmega \probavailable}\frac{1}{n}\sum_{i=1}^n\norm{h^{0}_i - \nabla f_i(x^{0})}^2 \vast] \\
    &\quad + \frac{2 \varepsilon}{3}.
  \end{align*}
  Using 
  \begin{align*}
    \Exp{\norm{h^{0} - \nabla f(x^{0})}^2} = \Exp{\norm{\frac{1}{n} \sum_{i=1}^n \frac{1}{B_{\textnormal{init}}} \sum_{k = 1}^{B_{\textnormal{init}}} \nabla f_i(x^0; \xi^0_{ik}) - \nabla f(x^{0})}^2} \leq \frac{\sigma^2}{n B_{\textnormal{init}}}
  \end{align*}
  and 
  \begin{align*}
    \frac{1}{n^2} \sum_{i=1}^n\Exp{\norm{h^{0}_i - \nabla f_i(x^{0})}^2} = \frac{1}{n^2} \sum_{i=1}^n \Exp{\norm{\frac{1}{B_{\textnormal{init}}} \sum_{k = 1}^{B_{\textnormal{init}}} \nabla f_i(x^0; \xi^0_{ik}) - \nabla f_i(x^{0})}^2} \leq \frac{\sigma^2}{n B_{\textnormal{init}}},
  \end{align*}
  we have
  \begin{align*}
    \Exp{\norm{\nabla f(\widehat{x}^T)}^2} &\leq \frac{1}{T}\vast[2 \Delta_0\left(L + \sqrt{\frac{8 \left(2\omega + 1\right) \omega}{n \probavailable^2}\left(\widehat{L}^2 + \frac{L_{\sigma}^2}{B}\right) + \frac{16}{n \probmega \probavailable^2} \left(\left(1 - \frac{\probpairaa}{\probavailable}\right) \widehat{L}^2 + \frac{L_{\sigma}^2}{B}\right)}\right) \\
    &\quad + \frac{8 \sigma^2}{n \probmega \probavailable B_{\textnormal{init}}} \vast] \\
    &\quad + \frac{2 \varepsilon}{3}.
  \end{align*}
  Therefore, we can take the following $T$ to get $\varepsilon$--solution.
  \begin{align*}
    T = \cO\left(\frac{1}{\varepsilon}\vast[\Delta_0\left(L + \sqrt{\frac{\omega^2}{n \probavailable^2}\left(\widehat{L}^2 + \frac{L_{\sigma}^2}{B}\right) + \frac{1}{n \probmega \probavailable^2} \left(\widehat{L}^2 + \frac{L_{\sigma}^2}{B}\right)}\right) + \frac{\sigma^2}{n \probmega \probavailable B_{\textnormal{init}}} \vast]\right)
  \end{align*}
  Considering the choice of $\probmega$ and $B_{\textnormal{init}},$ we obtain
  \begin{align*}
    T &=\cO\left(\frac{1}{\varepsilon}\vast[\Delta_0\left(L + \left(\frac{\omega}{\probavailable\sqrt{n}} + \sqrt{\frac{d}{\probavailable^2 \zeta_{\cC} n}}\right) \left(\widehat{L} + \frac{L_{\sigma}}{\sqrt{B}}\right)  + \frac{\sigma}{\probavailable \sqrt{\varepsilon} n} \left(\frac{\widehat{L}}{\sqrt{B}} + \frac{L_{\sigma}}{B}\right)\right) + \frac{\sigma^2}{n \probmega \probavailable B_{\textnormal{init}}} \vast]\right) \\
    &=\cO\left(\frac{\Delta_0}{\varepsilon}\vast[L + \left(\frac{\omega}{\probavailable\sqrt{n}} + \sqrt{\frac{d}{\probavailable^2 \zeta_{\cC} n}}\right) \left(\widehat{L} + \frac{L_{\sigma}}{\sqrt{B}}\right)  + \frac{\sigma}{\probavailable \sqrt{\varepsilon} n} \left(\frac{\widehat{L}}{\sqrt{B}} + \frac{L_{\sigma}}{B}\right)\vast] + \frac{\sigma^2}{\sqrt{\probavailable} n \varepsilon B}\right).
  \end{align*}

  The expected communication complexity equals $\cO\left(d + \probmega d + (1 - \probmega) \zeta_{\cC} \right) = \cO\left(d + \zeta_{\cC} \right)$ and the expected number of stochastic gradient calculations per node equals $\cO\left(B_{\textnormal{init}} + \probmega B' + (1 - \probmega) B\right) = \cO\left(B_{\textnormal{init}} + B\right).$
\end{proof}

\CONVERGENCESYNCPLMVR*

\begin{proof}
  Let us fix constants $\kappa, \eta, \nu, \rho \in [0,\infty)$ that we will define later. As in the proof of Theorem~\ref{theorem:sync_stochastic}, we can get
  \begin{align*}
    &\Exp{f(x^{t + 1})} + \kappa \Exp{\norm{g^{t+1} - h^{t+1}}^2} + \eta \Exp{\frac{1}{n}\sum_{i=1}^n\norm{g^{t+1}_i - h^{t+1}_i}^2}\\
    &\quad  + \nu \Exp{\norm{h^{t+1} - \nabla f(x^{t+1})}^2} + \rho \Exp{\frac{1}{n}\sum_{i=1}^n\norm{h^{t+1}_i - \nabla f_i(x^{t+1})}^2}\\
    &\leq \Exp{f(x^t)} - \frac{\gamma}{2}\Exp{\norm{\nabla f(x^t)}^2} \\
    &\quad - \Bigg(\frac{1}{2\gamma} - \frac{L}{2} - \frac{2 \kappa  \omega}{n \probavailable}\left(\frac{L_{\sigma}^2}{B} + \widehat{L}^2\right) - \frac{2\eta \omega}{\probavailable} \left(\frac{L_{\sigma}^2}{B} + \widehat{L}^2\right) \\
    &\qquad\quad - \nu \left(\frac{2 L_{\sigma}^2}{n \probavailable B} + \frac{2\left(\probavailable - \probpairaa\right) \widehat{L}^2}{n \probavailable^2}\right) - \rho \left(\frac{2 L_{\sigma}^2}{\probavailable B} + \frac{2(1 - \probavailable) \widehat{L}^2}{\probavailable}\right)\Bigg) \Exp{\norm{x^{t+1} - x^t}^2} \\
    &\quad + \left(\gamma + \kappa \left(1 - a\right)^2\right) \Exp{\norm{g^{t} - h^{t}}^2} \\
    &\quad + \Bigg(\kappa \frac{\left(\left(2 \omega + 1\right)\probavailable - \probpairaa\right)a^2}{n \probavailable^2} + \eta \left(\frac{(2\omega + 1 - \probavailable) a^2}{\probavailable} + (1 - a)^2\right)\Bigg)\Exp{\frac{1}{n}\sum_{i=1}^n\norm{g^{t}_i - h^{t}_i}^2} \\
    &\quad + \left(\gamma + \nu \left(1 - b\right)\right) \Exp{\norm{h^{t} - \nabla f(x^{t})}^2} \\
    &\quad + \Bigg(\nu \frac{2\left(\probavailable - \probpairaa\right) b^2}{n \probavailable^2 \probmega} + \rho (1 - b)\Bigg)\Exp{\frac{1}{n}\sum_{i=1}^n\norm{h^{t}_i - \nabla f_i(x^{t})}^2} \\
    &\quad + \left(\frac{2 \nu b^2}{n \probmega \probavailable} + \frac{2 \rho b^2}{\probavailable \probmega}\right) \frac{\sigma^2}{B'}.
  \end{align*}
  Let us take $\kappa = \frac{2\gamma}{a},$ thus $\gamma + \kappa \left(1 - a\right)^2 \leq \left(1 - \frac{a}{2}\right)\kappa$ and
  \begin{align*}
    &\Exp{f(x^{t + 1})} + \frac{2\gamma}{a} \Exp{\norm{g^{t+1} - h^{t+1}}^2} + \eta \Exp{\frac{1}{n}\sum_{i=1}^n\norm{g^{t+1}_i - h^{t+1}_i}^2}\\
    &\quad  + \nu \Exp{\norm{h^{t+1} - \nabla f(x^{t+1})}^2} + \rho \Exp{\frac{1}{n}\sum_{i=1}^n\norm{h^{t+1}_i - \nabla f_i(x^{t+1})}^2}\\
    &\leq \Exp{f(x^t)} - \frac{\gamma}{2}\Exp{\norm{\nabla f(x^t)}^2} \\
    &\quad - \Bigg(\frac{1}{2\gamma} - \frac{L}{2} - \frac{4 \gamma \omega}{a n \probavailable}\left(\frac{L_{\sigma}^2}{B} + \widehat{L}^2\right) - \frac{2\eta \omega}{\probavailable} \left(\frac{L_{\sigma}^2}{B} + \widehat{L}^2\right) \\
    &\qquad\quad - \nu \left(\frac{2 L_{\sigma}^2}{n \probavailable B} + \frac{2\left(\probavailable - \probpairaa\right) \widehat{L}^2}{n \probavailable^2}\right) - \rho \left(\frac{2 L_{\sigma}^2}{\probavailable B} + \frac{2(1 - \probavailable) \widehat{L}^2}{\probavailable}\right)\Bigg) \Exp{\norm{x^{t+1} - x^t}^2} \\
    &\quad + \left(1 - \frac{a}{2}\right)\frac{2\gamma}{a} \Exp{\norm{g^{t} - h^{t}}^2} \\
    &\quad + \Bigg( \frac{2\gamma\left(\left(2 \omega + 1\right)\probavailable - \probpairaa\right)a}{n \probavailable^2} + \eta \left(\frac{(2\omega + 1 - \probavailable) a^2}{\probavailable} + (1 - a)^2\right)\Bigg)\Exp{\frac{1}{n}\sum_{i=1}^n\norm{g^{t}_i - h^{t}_i}^2} \\
    &\quad + \left(\gamma + \nu \left(1 - b\right)\right) \Exp{\norm{h^{t} - \nabla f(x^{t})}^2} \\
    &\quad + \Bigg(\nu \frac{2\left(\probavailable - \probpairaa\right) b^2}{n \probavailable^2 \probmega} + \rho (1 - b)\Bigg)\Exp{\frac{1}{n}\sum_{i=1}^n\norm{h^{t}_i - \nabla f_i(x^{t})}^2} \\
    &\quad + \left(\frac{2 \nu b^2}{n \probmega \probavailable} + \frac{2 \rho b^2}{\probavailable \probmega}\right) \frac{\sigma^2}{B'}.
  \end{align*}
  Next, since $a = \frac{\probavailable}{2\omega + 1},$ we have $\left(\frac{(2\omega + 1 - \probavailable) a^2}{\probavailable} + (1 - a)^2\right) \leq 1 - a.$ We the choice $\eta = \frac{2\gamma\left(\left(2 \omega + 1\right)\probavailable - \probpairaa\right)}{n \probavailable^2},$ we guarantee $\frac{\gamma\left(\left(2 \omega + 1\right)\probavailable - \probpairaa\right)a}{n \probavailable^2} + \eta \left(\frac{(2\omega + 1 - \probavailable) a^2}{\probavailable} + (1 - a)^2\right) \leq \left(1 - \frac{a}{2}\right)\eta$ and 
  \begin{align*}
    &\Exp{f(x^{t + 1})} + \frac{2\gamma(2\omega + 1)}{\probavailable} \Exp{\norm{g^{t+1} - h^{t+1}}^2} + \frac{2\gamma\left(\left(2 \omega + 1\right)\probavailable - \probpairaa\right)}{n \probavailable^2} \Exp{\frac{1}{n}\sum_{i=1}^n\norm{g^{t+1}_i - h^{t+1}_i}^2}\\
    &\quad  + \nu \Exp{\norm{h^{t+1} - \nabla f(x^{t+1})}^2} + \rho \Exp{\frac{1}{n}\sum_{i=1}^n\norm{h^{t+1}_i - \nabla f_i(x^{t+1})}^2}\\
    &\leq \Exp{f(x^t)} - \frac{\gamma}{2}\Exp{\norm{\nabla f(x^t)}^2} \\
    &\quad - \Bigg(\frac{1}{2\gamma} - \frac{L}{2} - \frac{8 \gamma \left(2 \omega + 1\right)\omega}{n \probavailable^2}\left(\frac{L_{\sigma}^2}{B} + \widehat{L}^2\right) \\
    &\qquad\quad - \nu \left(\frac{2 L_{\sigma}^2}{n \probavailable B} + \frac{2\left(\probavailable - \probpairaa\right) \widehat{L}^2}{n \probavailable^2}\right) - \rho \left(\frac{2 L_{\sigma}^2}{\probavailable B} + \frac{2(1 - \probavailable) \widehat{L}^2}{\probavailable}\right)\Bigg) \Exp{\norm{x^{t+1} - x^t}^2} \\
    &\quad + \left(1 - \frac{a}{2}\right)\frac{2\gamma(2\omega + 1)}{\probavailable} \Exp{\norm{g^{t} - h^{t}}^2} \\
    &\quad + \left(1 - \frac{a}{2}\right)\frac{2\gamma\left(\left(2 \omega + 1\right)\probavailable - \probpairaa\right)}{n \probavailable^2}\Exp{\frac{1}{n}\sum_{i=1}^n\norm{g^{t}_i - h^{t}_i}^2} \\
    &\quad + \left(\gamma + \nu \left(1 - b\right)\right) \Exp{\norm{h^{t} - \nabla f(x^{t})}^2} \\
    &\quad + \Bigg(\nu \frac{2\left(\probavailable - \probpairaa\right) b^2}{n \probavailable^2 \probmega} + \rho (1 - b)\Bigg)\Exp{\frac{1}{n}\sum_{i=1}^n\norm{h^{t}_i - \nabla f_i(x^{t})}^2} \\
    &\quad + \left(\frac{2 \nu b^2}{n \probmega \probavailable} + \frac{2 \rho b^2}{\probavailable \probmega}\right) \frac{\sigma^2}{B'},
  \end{align*}
  where simplified the term using $\probpairaa \geq 0.$ Let us take $\nu = \frac{2\gamma}{b}$ to obtain
  \begin{align*}
    &\Exp{f(x^{t + 1})} + \frac{2\gamma(2\omega + 1)}{\probavailable} \Exp{\norm{g^{t+1} - h^{t+1}}^2} + \frac{2\gamma\left(\left(2 \omega + 1\right)\probavailable - \probpairaa\right)}{n \probavailable^2} \Exp{\frac{1}{n}\sum_{i=1}^n\norm{g^{t+1}_i - h^{t+1}_i}^2}\\
    &\quad  + \frac{2\gamma}{b} \Exp{\norm{h^{t+1} - \nabla f(x^{t+1})}^2} + \rho \Exp{\frac{1}{n}\sum_{i=1}^n\norm{h^{t+1}_i - \nabla f_i(x^{t+1})}^2}\\
    &\leq \Exp{f(x^t)} - \frac{\gamma}{2}\Exp{\norm{\nabla f(x^t)}^2} \\
    &\quad - \Bigg(\frac{1}{2\gamma} - \frac{L}{2} - \frac{8 \gamma \left(2 \omega + 1\right)\omega}{n \probavailable^2}\left(\frac{L_{\sigma}^2}{B} + \widehat{L}^2\right) \\
    &\qquad\quad - \left(\frac{4 \gamma L_{\sigma}^2}{b n \probavailable B} + \frac{4 \gamma\left(\probavailable - \probpairaa\right) \widehat{L}^2}{b n \probavailable^2}\right) - \rho \left(\frac{2 L_{\sigma}^2}{\probavailable B} + \frac{2(1 - \probavailable) \widehat{L}^2}{\probavailable}\right)\Bigg) \Exp{\norm{x^{t+1} - x^t}^2} \\
    &\quad + \left(1 - \frac{a}{2}\right)\frac{2\gamma(2\omega + 1)}{\probavailable} \Exp{\norm{g^{t} - h^{t}}^2} + \left(1 - \frac{a}{2}\right)\frac{2\gamma\left(\left(2 \omega + 1\right)\probavailable - \probpairaa\right)}{n \probavailable^2}\Exp{\frac{1}{n}\sum_{i=1}^n\norm{g^{t}_i - h^{t}_i}^2} \\
    &\quad + \left(1 - \frac{b}{2}\right)\frac{2\gamma}{b} \Exp{\norm{h^{t} - \nabla f(x^{t})}^2} \\
    &\quad + \Bigg(\frac{4 \gamma \left(\probavailable - \probpairaa\right) b}{n \probavailable^2 \probmega} + \rho (1 - b)\Bigg)\Exp{\frac{1}{n}\sum_{i=1}^n\norm{h^{t}_i - \nabla f_i(x^{t})}^2} \\
    &\quad + \left(\frac{4 \gamma b}{n \probmega \probavailable} + \frac{2 \rho b^2}{\probavailable \probmega}\right) \frac{\sigma^2}{B'},
  \end{align*}
  Next, we take $\rho = \frac{8 \gamma \left(\probavailable - \probpairaa\right)}{n \probavailable^2 \probmega},$ thus
  \begin{align*}
    &\Exp{f(x^{t + 1})} + \frac{2\gamma(2\omega + 1)}{\probavailable} \Exp{\norm{g^{t+1} - h^{t+1}}^2} + \frac{2\gamma\left(\left(2 \omega + 1\right)\probavailable - \probpairaa\right)}{n \probavailable^2} \Exp{\frac{1}{n}\sum_{i=1}^n\norm{g^{t+1}_i - h^{t+1}_i}^2}\\
    &\quad  + \frac{2\gamma}{b} \Exp{\norm{h^{t+1} - \nabla f(x^{t+1})}^2} + \frac{8 \gamma \left(\probavailable - \probpairaa\right)}{n \probavailable^2 \probmega} \Exp{\frac{1}{n}\sum_{i=1}^n\norm{h^{t+1}_i - \nabla f_i(x^{t+1})}^2}\\
    &\leq \Exp{f(x^t)} - \frac{\gamma}{2}\Exp{\norm{\nabla f(x^t)}^2} \\
    &\quad - \Bigg(\frac{1}{2\gamma} - \frac{L}{2} - \frac{8 \gamma \left(2 \omega + 1\right)\omega}{n \probavailable^2}\left(\frac{L_{\sigma}^2}{B} + \widehat{L}^2\right) \\
    &\qquad\quad - \left(\frac{4 \gamma L_{\sigma}^2}{b n \probavailable B} + \frac{4 \gamma\left(\probavailable - \probpairaa\right) \widehat{L}^2}{b n \probavailable^2}\right) - \left(\frac{8 \gamma \left(\probavailable - \probpairaa\right)}{n \probavailable^2 \probmega}\right) \left(\frac{2 L_{\sigma}^2}{\probavailable B} + \frac{2(1 - \probavailable) \widehat{L}^2}{\probavailable}\right)\Bigg) \Exp{\norm{x^{t+1} - x^t}^2} \\
    &\quad + \left(1 - \frac{a}{2}\right)\frac{2\gamma(2\omega + 1)}{\probavailable} \Exp{\norm{g^{t} - h^{t}}^2} + \left(1 - \frac{a}{2}\right)\frac{2\gamma\left(\left(2 \omega + 1\right)\probavailable - \probpairaa\right)}{n \probavailable^2}\Exp{\frac{1}{n}\sum_{i=1}^n\norm{g^{t}_i - h^{t}_i}^2} \\
    &\quad + \left(1 - \frac{b}{2}\right)\frac{2\gamma}{b} \Exp{\norm{h^{t} - \nabla f(x^{t})}^2} + \left(1 - \frac{b}{2}\right)\frac{8 \gamma \left(\probavailable - \probpairaa\right)}{n \probavailable^2 \probmega}\Exp{\frac{1}{n}\sum_{i=1}^n\norm{h^{t}_i - \nabla f_i(x^{t})}^2} \\
    &\quad + \left(\frac{4 \gamma b}{n \probmega \probavailable} + \frac{16 \gamma \left(\probavailable - \probpairaa\right) b^2}{n \probavailable^3 \probmega^2}\right) \frac{\sigma^2}{B'},
  \end{align*}
  Since $\frac{\probmega \probavailable}{2} \leq b \leq \probmega \probavailable$ and $1 - \probavailable \leq 1 - \frac{\probpairaa}{\probavailable}\leq 1,$ we get 
  \begin{align*}
    &\Exp{f(x^{t + 1})} + \frac{2\gamma(2\omega + 1)}{\probavailable} \Exp{\norm{g^{t+1} - h^{t+1}}^2} + \frac{2\gamma\left(\left(2 \omega + 1\right)\probavailable - \probpairaa\right)}{n \probavailable^2} \Exp{\frac{1}{n}\sum_{i=1}^n\norm{g^{t+1}_i - h^{t+1}_i}^2}\\
    &\quad  + \frac{2\gamma}{b} \Exp{\norm{h^{t+1} - \nabla f(x^{t+1})}^2} + \frac{8 \gamma \left(\probavailable - \probpairaa\right)}{n \probavailable^2 \probmega} \Exp{\frac{1}{n}\sum_{i=1}^n\norm{h^{t+1}_i - \nabla f_i(x^{t+1})}^2}\\
    &\leq \Exp{f(x^t)} - \frac{\gamma}{2}\Exp{\norm{\nabla f(x^t)}^2} \\
    &\quad - \Bigg(\frac{1}{2\gamma} - \frac{L}{2} - \frac{8 \gamma \left(2 \omega + 1\right)\omega}{n \probavailable^2}\left(\frac{L_{\sigma}^2}{B} + \widehat{L}^2\right) \\
    &\qquad\quad - \left(\frac{8 \gamma L_{\sigma}^2}{n \probmega \probavailable^2 B} + \frac{8 \gamma\left(\probavailable - \probpairaa\right) \widehat{L}^2}{n \probmega \probavailable^3 }\right) - \left(\frac{16 \gamma L_{\sigma}^2}{n \probmega \probavailable^2 B} + \frac{16 \gamma (1 - \probavailable) \widehat{L}^2}{n \probmega \probavailable^2}\right)\Bigg) \Exp{\norm{x^{t+1} - x^t}^2} \\
    &\quad + \left(1 - \frac{a}{2}\right)\frac{2\gamma(2\omega + 1)}{\probavailable} \Exp{\norm{g^{t} - h^{t}}^2} + \left(1 - \frac{a}{2}\right)\frac{2\gamma\left(\left(2 \omega + 1\right)\probavailable - \probpairaa\right)}{n \probavailable^2}\Exp{\frac{1}{n}\sum_{i=1}^n\norm{g^{t}_i - h^{t}_i}^2} \\
    &\quad + \left(1 - \frac{b}{2}\right)\frac{2\gamma}{b} \Exp{\norm{h^{t} - \nabla f(x^{t})}^2} + \left(1 - \frac{b}{2}\right)\frac{8 \gamma \left(\probavailable - \probpairaa\right)}{n \probavailable^2 \probmega}\Exp{\frac{1}{n}\sum_{i=1}^n\norm{h^{t}_i - \nabla f_i(x^{t})}^2} \\
    &\quad + \frac{20 \gamma \sigma^2}{n B'} \\
    &\leq \Exp{f(x^t)} - \frac{\gamma}{2}\Exp{\norm{\nabla f(x^t)}^2} \\
    &\quad - \Bigg(\frac{1}{2\gamma} - \frac{L}{2} - \frac{8 \gamma \left(2 \omega + 1\right)\omega}{n \probavailable^2}\left(\frac{L_{\sigma}^2}{B} + \widehat{L}^2\right) - \left(\frac{24 \gamma L_{\sigma}^2}{n \probmega \probavailable^2 B} + \frac{24 \gamma \left(1 - \frac{\probpairaa}{\probavailable}\right) \widehat{L}^2}{n \probmega \probavailable^2}\right)\Bigg) \Exp{\norm{x^{t+1} - x^t}^2} \\
    &\quad + \left(1 - \frac{a}{2}\right)\frac{2\gamma(2\omega + 1)}{\probavailable} \Exp{\norm{g^{t} - h^{t}}^2} + \left(1 - \frac{a}{2}\right)\frac{2\gamma\left(\left(2 \omega + 1\right)\probavailable - \probpairaa\right)}{n \probavailable^2}\Exp{\frac{1}{n}\sum_{i=1}^n\norm{g^{t}_i - h^{t}_i}^2} \\
    &\quad + \left(1 - \frac{b}{2}\right)\frac{2\gamma}{b} \Exp{\norm{h^{t} - \nabla f(x^{t})}^2} + \left(1 - \frac{b}{2}\right)\frac{8 \gamma \left(\probavailable - \probpairaa\right)}{n \probavailable^2 \probmega}\Exp{\frac{1}{n}\sum_{i=1}^n\norm{h^{t}_i - \nabla f_i(x^{t})}^2} \\
    &\quad + \frac{20 \gamma \sigma^2}{n B'}.
  \end{align*}
  Using Lemma~\ref{lemma:gamma} and the assumption about $\gamma,$ we get
  \begin{align*}
    &\Exp{f(x^{t + 1})} + \frac{2\gamma(2\omega + 1)}{\probavailable} \Exp{\norm{g^{t+1} - h^{t+1}}^2} + \frac{2\gamma\left(\left(2 \omega + 1\right)\probavailable - \probpairaa\right)}{n \probavailable^2} \Exp{\frac{1}{n}\sum_{i=1}^n\norm{g^{t+1}_i - h^{t+1}_i}^2}\\
    &\quad  + \frac{2\gamma}{b} \Exp{\norm{h^{t+1} - \nabla f(x^{t+1})}^2} + \frac{8 \gamma \left(\probavailable - \probpairaa\right)}{n \probavailable^2 \probmega} \Exp{\frac{1}{n}\sum_{i=1}^n\norm{h^{t+1}_i - \nabla f_i(x^{t+1})}^2}\\
    &\leq \Exp{f(x^t)} - \frac{\gamma}{2}\Exp{\norm{\nabla f(x^t)}^2} \\
    &\quad + \left(1 - \frac{a}{2}\right)\frac{2\gamma(2\omega + 1)}{\probavailable} \Exp{\norm{g^{t} - h^{t}}^2} + \left(1 - \frac{a}{2}\right)\frac{2\gamma\left(\left(2 \omega + 1\right)\probavailable - \probpairaa\right)}{n \probavailable^2}\Exp{\frac{1}{n}\sum_{i=1}^n\norm{g^{t}_i - h^{t}_i}^2} \\
    &\quad + \left(1 - \frac{b}{2}\right)\frac{2\gamma}{b} \Exp{\norm{h^{t} - \nabla f(x^{t})}^2} + \left(1 - \frac{b}{2}\right)\frac{8 \gamma \left(\probavailable - \probpairaa\right)}{n \probavailable^2 \probmega}\Exp{\frac{1}{n}\sum_{i=1}^n\norm{h^{t}_i - \nabla f_i(x^{t})}^2} \\
    &\quad + \frac{20 \gamma \sigma^2}{n B'}.
  \end{align*}
  Due to $\gamma \leq \frac{a}{2\mu}$ and $\gamma \leq \frac{b}{2\mu},$ we have
  \begin{align*}
    &\Exp{f(x^{t + 1})} + \frac{2\gamma(2\omega + 1)}{\probavailable} \Exp{\norm{g^{t+1} - h^{t+1}}^2} + \frac{2\gamma\left(\left(2 \omega + 1\right)\probavailable - \probpairaa\right)}{n \probavailable^2} \Exp{\frac{1}{n}\sum_{i=1}^n\norm{g^{t+1}_i - h^{t+1}_i}^2}\\
    &\quad  + \frac{2\gamma}{b} \Exp{\norm{h^{t+1} - \nabla f(x^{t+1})}^2} + \frac{8 \gamma \left(\probavailable - \probpairaa\right)}{n \probavailable^2 \probmega} \Exp{\frac{1}{n}\sum_{i=1}^n\norm{h^{t+1}_i - \nabla f_i(x^{t+1})}^2}\\
    &\leq \Exp{f(x^t)} - \frac{\gamma}{2}\Exp{\norm{\nabla f(x^t)}^2} \\
    &\quad + \left(1 - \gamma \mu\right)\frac{2\gamma(2\omega + 1)}{\probavailable} \Exp{\norm{g^{t} - h^{t}}^2} + \left(1 - \gamma \mu\right)\frac{2\gamma\left(\left(2 \omega + 1\right)\probavailable - \probpairaa\right)}{n \probavailable^2}\Exp{\frac{1}{n}\sum_{i=1}^n\norm{g^{t}_i - h^{t}_i}^2} \\
    &\quad + \left(1 - \gamma \mu\right)\frac{2\gamma}{b} \Exp{\norm{h^{t} - \nabla f(x^{t})}^2} + \left(1 - \gamma \mu\right)\frac{8 \gamma \left(\probavailable - \probpairaa\right)}{n \probavailable^2 \probmega}\Exp{\frac{1}{n}\sum_{i=1}^n\norm{h^{t}_i - \nabla f_i(x^{t})}^2} \\
    &\quad + \frac{20 \gamma \sigma^2}{n B'}.
  \end{align*}
  It is left to apply Lemma~\ref{lemma:good_recursion_pl} with 
    \begin{eqnarray*}
      \Psi^t &=& \frac{2(2\omega + 1)}{\probavailable} \Exp{\norm{g^{t} - h^{t}}^2} + \frac{2\left(\left(2 \omega + 1\right)\probavailable - \probpairaa\right)}{n \probavailable^2}\Exp{\frac{1}{n}\sum_{i=1}^n\norm{g^{t}_i - h^{t}_i}^2} \\
      &\quad +& \frac{2}{b} \Exp{\norm{h^{t} - \nabla f(x^{t})}^2} + \frac{8 \left(\probavailable - \probpairaa\right)}{n \probavailable^2 \probmega}\Exp{\frac{1}{n}\sum_{i=1}^n\norm{h^{t}_i - \nabla f_i(x^{t})}^2}
    \end{eqnarray*}
    and $C = \frac{20 \sigma^2}{n B'}$
    to conclude the proof.
\end{proof}

\end{document}